\newcommand{\listappendixname}{Appendix Contents}
\definecolor{darkgreen}{rgb}{0, 0.5, 0}
\Crefname{equation}{Eq.}{Eqs.}
\newtheorem{theorem}{Theorem}[section]
\newtheorem{lemma}[theorem]{Lemma}
\newtheorem{proposition}[theorem]{Proposition}
\newtheorem{corollary}[theorem]{Corollary}
\newtheorem{definition}{Definition}[section]
\newtheorem*{lemma*}{Lemma}
\newtheorem*{proposition*}{Proposition}
\newtheorem*{proof*}{Proof}
\newtheorem*{corollary*}{Corollary}
\title{ 
Rethinking the Diffusion Models for Numerical Tabular Data Imputation from the Perspective of Wasserstein Gradient Flow 
}
\author{
  Zhichao Chen/Ziciu Can$^{1}$ \quad Haoxuan Li$^{2}$ \quad Fangyikang Wang$^{1}$ \quad Odin Zhang$^{1}$ \quad Hu Xu$^{1}$ \\ 
  \textbf{Xiaoyu Jiang$^{1}$ \quad Zhihuan Song$^{1}$ \quad Eric H. Wang$^{1}$\thanks{Corresponding author.}}\\
  $^1$Zhejiang University \quad
  $^2$Peking University \\
}
\begin{document}

\maketitle

\begin{abstract}
Diffusion models (DMs) have gained attention in Missing Data Imputation (MDI), but there remain two long-neglected issues to be addressed:
(1). Inaccurate Imputation, which arises from inherently sample-diversification-pursuing generative process of DMs.
(2). Difficult Training, which stems from intricate design required for the mask matrix in model training stage.
To address these concerns within the realm of numerical tabular datasets\footnote{Datasets organized in a table format where each entry is a numerical value.}, we introduce a novel principled approach termed \uline{K}ernelized \uline{N}egative \uline{E}ntropy-regularized \uline{W}asserstein gradient flow \uline{Imp}utation (KnewImp). 
Specifically, based on Wasserstein gradient flow (WGF) framework, we first prove that issue (1) stems from the cost functionals implicitly maximized in DM-based MDI are equivalent to the MDI's objective plus diversification-promoting non-negative terms. 
Based on this, we then design a novel cost functional with diversification-discouraging negative entropy and derive our KnewImp approach within WGF framework and reproducing kernel Hilbert space. After that, we prove that the imputation procedure of KnewImp can be derived from another cost functional related to the joint distribution, eliminating the need for the mask matrix and hence naturally addressing issue (2). Extensive experiments demonstrate that our proposed KnewImp approach significantly outperforms existing state-of-the-art methods. 

\end{abstract}
\vspace{-0.4cm}
\section{Introduction}
The imputation of missing values from observational data is crucial for constructing machine learning models with broad applications across various fields, including e-commerce~\cite{li2024removing,li2023multiple,wang2022escm2}, healthcare~\cite{tashiro2021csdi}, and process industry~\cite{10508098}. Recently, diffusion models (DMs) have emerged as a powerful tool for missing data imputation (MDI), celebrated for their excellent capability to model data distributions and generate high-quality synthetic data~\cite{ouyang2023missdiff,song2020score,yang2023diffusion}. These models excel by approximating the (Stein) score function of the conditional distribution between missing and observed data, thereby reformulating the imputation problem as a generative task grounded in the learned score function. 

Although DMs have shown considerable success in MDI tasks, they face significant challenges that result from model inference and training: (1). Inaccurate Imputation: While DM-based approaches treat MDI as a conditional generative task by sampling from the learned score function, it is important to note that the primary evaluation metric for MDI focuses on accuracy~\cite{jarrett2022hyperimpute,muzellec2020missing}, rather than the sample diversification typically emphasized in generative tasks. Consequently, the inference objectives implicitly pursued for DMs may not align well with the specific needs of the MDI task. (2). Difficult Training: The training of diffusion models is complicated by the unknown nature of the ground-truth values of missing data. Previous methods~\cite{tashiro2021csdi,devlin-etal-2019-bert} have sought to address this by masking parts of the available observational data and then imputing these masked entries as a means to construct model training. In this procedure, the design of mask matrices is essential, as pointed out by reference~\cite{tashiro2021csdi}, and hence results in a training difficulty due to complex mask matrix selection mechanism.

The cornerstone for mitigating issue (1) lies in identifying the cost functional that is `secretly maximized' during the DM-based MDI procedure, delineating its relationship to the `vanilla' MDI's cost functional, refining it where it proves deficient, and redeveloping the corresponding imputation procedure for this enhanced functional. Building on this, the resolution to issue (2) involves bypassing the use of the conditional distribution in the imputation phase. In other words, maintaining the existing imputation procedures while transforming the cost functional into an equivalent form that merely contains the joint distribution.
To tackle these challenges in the realm of numerical tabular data, we introduce our approach named \uline{K}ernelized \uline{N}egative \uline{E}ntropy-regularized \uline{W}asserstein Gradient Flow \uline{Imp}utation (KnewImp). Specifically, to address issue (1), we unify the DMs' generative processes into the Wasserstein Gradient Flow (WGF) framework, recover their cost functionals, and validate their connections between MDI's objective. Based on this analysis, we then introduce a novel negative entropy-regularized (NER) cost functional, and establish a new easy-to-implement and closed-form MDI procedure similar to DMs' generative processes within the WGF framework and reproducing kernel Hilbert space (RKHS). After that, to circumvent the use of the mask matrix and address issue (2), we further develop a novel cost functional concerned with joint distribution, proving it serves as a lower bound to the NER functional, with a constant gap that preserves the imputation procedure within the WGF framework, and consequently streamlining the model training procedure.

In summary, the contributions of this manuscript can be summarized as follows:
\begin{itemize}[leftmargin=*]
  \item{
    We elucidate the inaccurate imputation issue for DM-based MDI approaches by revealing the relationship between their cost functional and identical MDI's objective. 
    Based on this, we propose KnewImp by designing a novel NER functional and obtaining corresponding 
    imputation procedure. 
  }
  \item{We demonstrate that the conditional distribution-related NER functional, 
  can be seamlessly transformed into another joint distribution-related NER functional, maintaining a constant gap. Consequently, we bypass the design of a mask matrix during the model training stage of KnewImp without explicitly altering its imputation procedure. }
  \item{We empirically validate the superiority of our proposed KnewImp method over state-of-the-art models through rigorous testing on various numerical tabular datasets.}
\end{itemize}

\section{Preliminaries}
\subsection{Missing Data Imputation }\label{subsec:pre-MDI}
Building upon previous works in MDI~\cite{muzellec2020missing,zhao2023transformed}, our objective is articulated as follows: Consider a numerical tabular dataset represented by a matrix $\boldsymbol{X} \in \mathbb{R}^{\mathrm{N} \times \mathrm{D}}$, comprising $\mathrm{N}$ samples each of dimension $\mathrm{D}$, where certain entries are missing. Accompanying $\boldsymbol{X}$ is a binary mask matrix $\boldsymbol{M} \in \{0,1\}^{\mathrm{N} \times \mathrm{D}}$, where entry $\boldsymbol{M}[i,d]$ is set as $0$ if $\boldsymbol{X}[i,d]$ is missing, thereby assigned as $\boldsymbol{X}[i,d] = \texttt{NaN}$ (denoting `not a number'), and $1$ otherwise. Hence, the matrix $\boldsymbol{X}$ is expressed as $\boldsymbol{X} = \boldsymbol{X}^{(\text{obs})} \odot \boldsymbol{M} + \texttt{NaN} \odot (\mathbbm{1}_{\mathrm{N} \times \mathrm{D}} - \boldsymbol{M})$,
where $\boldsymbol{X}^{(\text{obs})}$ represents the matrix with observed entries, $\odot$ denotes the Hadamard product, and $\mathbbm{1}_{\mathrm{N} \times \mathrm{D}}$ is a matrix of ones sized ${\mathrm{N} \times \mathrm{D}}$. The task at hand is to impute the missing entries in $\boldsymbol{X}$, yielding an estimation $\hat{\boldsymbol{X}}$ using the imputed matrix ${\boldsymbol{X}}^{(\text{imp})}$, formulated as $\hat{\boldsymbol{X}} = \boldsymbol{X}^{(\text{obs})} \odot \boldsymbol{M} + {\boldsymbol{X}}^{(\text{imp})} \odot (\mathbbm{1}_{\mathrm{N} \times \mathrm{D}} - \boldsymbol{M})$. Notably, according to reference~\cite{rubin1976inference}, missing data can be classified into three categories: Missing Completely at Random (MCAR), Missing at Random (MAR), and Missing Not at Random (MNAR)\footnote{Detailed information about these missing mechanisms is given in~\Cref{appendix-subsec:missingSimulation}.}, and we mainly restrict our discussion scope on numerical tabular data with MAR and MCAR settings.

\subsection{Diffusion Models and its application for Missing Data Imputation}\label{subsec:diffusionKnowledge}
According to reference~\cite{song2020score}, DMs begin by corrupting data towards a tractable noise distribution, typically a standard Gaussian, and then reverse this process to generate samples. Specifically, the forward corruption or diffusion process can be described as a discretization of the stochastic differential equation (SDE) along time $\tau$: $\mathrm{d}\boldsymbol{X}_\tau=f(\boldsymbol{X}_\tau,\tau)\mathrm{d}\tau + g_\tau\mathrm{d}W_{\tau}$, where $f(\boldsymbol{X}_\tau,\tau)$ is drift term, $g_\tau$ is volatility term, and $\mathrm{d}W{\tau}$ is standard Wiener process. The solution to this SDE forms a continuous trajectory of random variables ${\boldsymbol{X}_\tau}\vert_{\tau=0}^{\mathrm{T}}$. The density function $q_\tau$ of these variables is governed by the Fokker-Planck-Kolmogorov (FPK) equation $\frac{\partial q_\tau}{\partial \tau}=-\nabla\cdot(q_\tau f(\boldsymbol{X}\tau,\tau))+\frac{1}{2}g^2_\tau\nabla\cdot\nabla q_\tau$, as per Theorem 5.4 in reference~\cite{sarkka2019applied}. According to reference~\cite{anderson1982reverse}, the reverse process for sample generation is described by: $\mathrm{d}\boldsymbol{X}_\tau=[f(\boldsymbol{X}_\tau,\tau)-g^2_\tau\nabla{\log{p(\boldsymbol{X}_\tau)}}]\mathrm{d}\tau + g_\tau\mathrm{d}W_{\tau}$, where $\nabla{\log{p(\boldsymbol{X}_\tau)}}$ represents the score function and learned via neural networks during DM training phase.

Based on this, 
DMs approach MDI as a generative problem, and the score function $\nabla{\log{p(\boldsymbol{X})}}$ in the reverse process is replaced with conditional distribution $\nabla_{\boldsymbol{X}^{\text{(miss)}}}\log{p(\boldsymbol{X}^{\text{(miss)}}\vert\boldsymbol{X}^{\text{(obs)}} )}$~\cite{tashiro2021csdi}. Therefore, the challenge in DM-based MDI is to obtain an estimation $\nabla_{\boldsymbol{X}^{\text{(miss)}}}\log\hat{p}(\boldsymbol{X}^{\text{(miss)}}\vert\boldsymbol{X}^{\text{(obs)}} )$ that effectively approximates $\nabla_{\boldsymbol{X}^{\text{(miss)}}}\log{p(\boldsymbol{X}^{\text{(miss)}}\vert\boldsymbol{X}^{\text{(obs)}} )}$. However, constructing model training remains challenging due to ground truth $\boldsymbol{X}^{\text{(miss)}}$ is unknown. To alleviate this issue, previous DM-based MDI approaches necessitate the design of a mask matrix to obscure portions of the observational data; despite practical efficacy, the selection of mask mechanism, which determines the effectiveness of $\nabla\log\hat{p}(\boldsymbol{X}^{\text{(miss)}}\vert\boldsymbol{X}^{\text{(obs)}})$, remain challenges and hence may result in training difficulty.

\subsection{Wasserstein Gradient Flow}\label{subsec:GradinWasserstein}
The Wasserstein space $\mathcal{P}_2(\mathbb{R}^\mathrm{D})\coloneqq\{p\in\mathcal{P}_2(\mathbb{R}^\mathrm{D}): \int{\Vert x \Vert^2\mathrm{d}p(x)} <\infty \}$ is a metric space where distances between probability distributions are quantified using the 2-Wasserstein distance, defined as $(\mathop{\inf}_{\gamma\in\Gamma(p, q)}\int{\Vert x - y \Vert^2\mathrm{d}\gamma(x,y)})^{\frac{1}{2}}$. In this space, gradient flows resemble the steepest descent curves similar to those in classical Euclidean spaces. Specifically, for a cost functional $\mathcal{F}_{\text{cost}}: \mathcal{P}_2(\mathbb{R}^\mathrm{D})\rightarrow\mathbb{R}$, a gradient flow in Wasserstein space is an absolute continuous trajectory $(q_\tau)_{\tau>0}$ that seeks to minimize $\mathcal{F}_{\text{cost}}$ as efficiently as possible, as described in \cite{santambrogio2017euclidean}. This dynamic process is governed by the celebrated continuity equation $\frac{\partial q_\tau}{\partial \tau}=-\nabla\cdot(v_\tau q_\tau)$, where $v_\tau: \mathbb{R}^{\mathrm{D}}\rightarrow\mathbb{R}^{\mathrm{D}}$ is a time-dependent velocity field~\cite{ambrosio2005gradient}. Additionally, the evolution of sample $\boldsymbol{X}$ over time $\tau$ in $\mathcal{P}_2(\mathbb{R}^\mathrm{D})$ can be delineated by the ordinary differential equation (ODE) 
expressed as $\frac{\mathrm{d}\boldsymbol{X}}{\mathrm{d}\tau}=v_\tau(\boldsymbol{X})$.

\vspace{-0.1cm}
\section{Proposed Approach}
This section proposes our \uline{K}ernelized \uline{N}egative \uline{E}ntropy-regularized \uline{W}asserstein gradient flow \uline{Imp}utation (KnewImp) approach. 
We first define the MDI's objective oriented towards maximization and unify the DM-based MDI approaches within WGF framework. In this procedure, we prove that addressing MDI through the generative processes of DMs maximizes a diversification-promoting upper bound of MDI's objective. 
Building on this foundational analysis, we introduce a novel diversification-discouraging negative entropy-regularized (NER) cost functional that acts as a lower bound for the MDI's objective, ensuring precise imputation through maximizing the MDI's lower bound.
Based on this, we then develop the imputation procedure that features a closed-form, easily implementable expression within the WGF framework and RKHS.
Further, we establish that our NER functional, associated with the conditional distribution $p(\boldsymbol{X}^{\text{(miss)}}\vert\boldsymbol{X}^{\text{(obs)}} )$, is equivalent to another functional concerned with the joint distribution $p(\boldsymbol{X}^{\text{(miss)}},\boldsymbol{X}^{\text{(obs)}} )$, adjusted by a constant. This equivalence maintains the same velocity field but effectively eliminates the need for a mask matrix in the training stage. Finally, we conclude this section by outlining the procedure of KnewImp approach that encapsulates these innovations.


\subsection{Unifying DM-based MDI within WGF framework}\label{subsec:DiffMDIRelationship}

Drawing from previous works~\cite{mattei2019miwae,tashiro2021csdi}, the objective function for the MDI task can be defined as:
\begin{equation}\label{eq:impProblem}
  \boldsymbol{X}^{(\text{imp})} = \mathop{\arg\max}_{\boldsymbol{X}^{(\text{miss})}} \indent \log{ {p}( \boldsymbol{X}^{(\text{miss})}\vert \boldsymbol{X}^{(\text{obs})}) }.
\end{equation}
From the perspective of generative models, $\boldsymbol{X}^{(\text{imp})}$ is considered as samples drawn from a certain distribution $r(\boldsymbol{X}^{(\text{miss})})$ (we name it `proposal distribution'), and results in the following reformulation:
\begin{equation}\label{eq:objectiveMDIProcess}
  \mathop{\arg\max}_{\boldsymbol{X}^{(\text{miss})}} \indent{\log{ {p}( \boldsymbol{X}^{(\text{miss})}\vert \boldsymbol{X}^{(\text{obs})}) }},\boldsymbol{X}^{(\text{miss})}\sim r(\boldsymbol{X}^{(\text{miss})}) \Rightarrow \mathop{\arg\max}_{\boldsymbol{X}^{(\text{miss})}} \indent \mathbb{E}_{ 
    r(\boldsymbol{X}^{(\text{miss})})}[{\log{ {p}( \boldsymbol{X}^{(\text{miss})}\vert \boldsymbol{X}^{(\text{obs})}) }}] .
\end{equation}
Consequently, DM-based MDI approaches address this optimization problem by generating samples from the estimated conditional score function: $\nabla_{\boldsymbol{X}^{(\text{miss})}}\log{\hat{p}(\boldsymbol{X}^{(\text{miss})}\vert \boldsymbol{X}^{(\text{obs})})}$.


Note that, according to Section~\ref{subsec:diffusionKnowledge}, the generative process of DMs satisfies the FPK equation, a specific instance of the continuity equation that underpins the WGF according to Section~\ref{subsec:GradinWasserstein}. Meanwhile, WGF framework is central to functional optimization, indicating that the divergence of the objective between MDI and the DM-based MDI approaches can be effectively analyzed within the WGF framework. In support of this, we first give the following proposition, which elucidates the relationship between the objective of DM-based MDI approaches and MDI~\footnote{according to reference~\cite{song2020score}, we mainly consdier the Variance Preserving-SDE (VP-SDE), Variance Exploding-SDE (VE-SDE), and sub-Variance Preserving-SDE (sub-VP-SDE)}:
\begin{proposition}\label{prop:inEffectiveSampling}
Within WGF framework, DM-based MDI approaches can be viewed as finding the imputed values $\boldsymbol{X}^{(\text{imp})}$ that maximize the following objective:
\begin{equation}\label{eq:samplingFunctional}
  \mathop{\arg\max}_{\boldsymbol{X}^{(\text{miss})}} \quad  \mathbb{E}_{ r(\boldsymbol{X}^{(\text{miss})})}[{\log{ \hat{p}( \boldsymbol{X}^{(\text{miss})}\vert \boldsymbol{X}^{(\text{obs})}) }}] +\psi(\boldsymbol{X}^{(\text{miss})}) + \text{const},
\end{equation}
where const is the abbreviation of constant, and $\psi(\boldsymbol{X}^{(\text{miss})})$ is a scalar function determined by the type of SDE underlying the DMs.
\begin{itemize}[leftmargin=*]
  \item{\textbf{VP-SDE:} $\psi(\boldsymbol{X}^{(\text{miss})})=\mathbb{E}_{ r(\boldsymbol{X}^{(\text{miss})})}\{\frac{1}{4}[\boldsymbol{X}^{\text{(miss)}}]^\top[\boldsymbol{X}^{\text{(miss)}}]  -\frac{1}{2}\log{r(\boldsymbol{X}^{\text{(miss)}})} \}$}
  \item{\textbf{VE-SDE:} $\psi(\boldsymbol{X}^{(\text{miss})})=\mathbb{E}_{r(\boldsymbol{X}^{(\text{miss})})}\{ -\frac{1}{2}\log{r(\boldsymbol{X}^{\text{(miss)}})} \}$}
  \item{\textbf{sub-VP-SDE:} $\psi(\boldsymbol{X}^{(\text{miss})})=\mathbb{E}_{ r(\boldsymbol{X}^{(\text{miss})})}\{  \frac{1}{4\gamma_\tau}[\boldsymbol{X}^{\text{(miss)}}]^\top[\boldsymbol{X}^{\text{(miss)}}] -\frac{1}{2}\log{r(\boldsymbol{X}^{\text{(miss)}})}\}$, where $\gamma_\tau$ is determined by noise scale $\beta_\tau$: $\gamma_\tau\coloneqq(1-\exp(-2\int_{0}^{\tau}{\beta_s\mathrm{d}s}))>0,0<\beta_1<\beta_2<\dots<\beta_{\mathrm{T}}<1$.}
\end{itemize}
It is important to note that in DMs, the condition $\psi(\boldsymbol{X}^{(\text{miss})}) \ge 0$ consistently holds. This assertion is supported by the fact that the inner product $[\boldsymbol{X}^{\text{(miss)}}]^\top[\boldsymbol{X}^{\text{(miss)}}] \ge 0$, and the entropy function defined as $\mathbb{H}[r(\boldsymbol{X}^{\text{(miss)}})] \coloneqq -\int{r(\boldsymbol{X}^{\text{(miss)}}) \log{r(\boldsymbol{X}^{\text{(miss)}})} \mathrm{d}\boldsymbol{X}^{\text{(miss)}}}$ is also non-negative.
\end{proposition}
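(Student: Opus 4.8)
The plan is to recover the cost functional $\mathcal{F}_{\text{cost}}$ that the generative process of each diffusion model secretly minimizes, and then to rewrite its negative as the MDI objective of \Cref{eq:samplingFunctional} plus a residual term $\psi$. The starting point is the observation from \Cref{subsec:diffusionKnowledge} that the proposal density $r(\boldsymbol{X}^{(\text{miss})})$ produced by the reverse (generative) SDE obeys a Fokker--Planck--Kolmogorov equation. Writing the reverse process in its forward-in-generation-time form (so that descent corresponds to generation) with the true score replaced by the learned conditional score $\nabla\log\hat{p}(\boldsymbol{X}^{(\text{miss})}\vert\boldsymbol{X}^{(\text{obs})})$, and converting the diffusion term through the identity $\nabla\cdot\nabla r=\nabla\cdot(r\,\nabla\log r)$, I would cast the FPK equation into the continuity-equation form $\partial_\tau r=-\nabla\cdot(v_\tau r)$ of \Cref{subsec:GradinWasserstein}, reading off the velocity field
\begin{equation*}
  v_\tau = -f(\boldsymbol{X}^{(\text{miss})},\tau) + g_\tau^2\,\nabla\log\hat{p}(\boldsymbol{X}^{(\text{miss})}\vert\boldsymbol{X}^{(\text{obs})}) - \tfrac{1}{2}g_\tau^2\,\nabla\log r(\boldsymbol{X}^{(\text{miss})}).
\end{equation*}

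Next I would identify this velocity with a Wasserstein gradient flow. Since the WGF of $\mathcal{F}_{\text{cost}}$ advances with velocity $v_\tau=-\nabla\frac{\delta\mathcal{F}_{\text{cost}}}{\delta r}$, and since for these SDEs every summand of $v_\tau$ is a gradient in $\boldsymbol{X}^{(\text{miss})}$ (the drift $f$ being the gradient of a quadratic, the score and $\nabla\log r$ being gradients by construction), the field is curl-free and can be integrated in space to recover the first variation $\frac{\delta\mathcal{F}_{\text{cost}}}{\delta r}$ up to an additive constant. The time-dependent scalar multiplying $v_\tau$ (namely $g_\tau^2$ or $\beta_\tau$) merely reparametrizes the flow and is absorbed as a positive time rescaling, so it does not affect the identification of the functional. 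Reconstructing $\mathcal{F}_{\text{cost}}$ from its first variation, the score term integrates to the potential $\int r\log\hat{p}$, the $\tfrac12\nabla\log r$ term to the (scaled) entropy $\tfrac12\int r\log r$, and the quadratic drift to a second-moment potential proportional to $\int\|\boldsymbol{X}^{(\text{miss})}\|^2 r$. Collecting terms gives $-\mathcal{F}_{\text{cost}}=\mathbb{E}_{r}[\log\hat{p}(\boldsymbol{X}^{(\text{miss})}\vert\boldsymbol{X}^{(\text{obs})})]+\psi(\boldsymbol{X}^{(\text{miss})})+\text{const}$, so minimizing $\mathcal{F}_{\text{cost}}$ along the flow is exactly maximizing \Cref{eq:samplingFunctional}.

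Specializing to each SDE then yields the three cases. Substituting $f=-\tfrac12\beta_\tau\boldsymbol{X}^{(\text{miss})}$ and $g_\tau^2=\beta_\tau$ produces the VP form $\psi=\mathbb{E}_r[\tfrac14\|\boldsymbol{X}^{(\text{miss})}\|^2-\tfrac12\log r]$; setting $f=0$ removes the quadratic term and gives the VE form $\psi=\mathbb{E}_r[-\tfrac12\log r]$; and $f=-\tfrac12\beta_\tau\boldsymbol{X}^{(\text{miss})}$, $g_\tau^2=\beta_\tau\gamma_\tau$ yields, after dividing by the positive factor $\beta_\tau\gamma_\tau$, the sub-VP form carrying the $\tfrac{1}{4\gamma_\tau}$-weighted quadratic term. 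Finally, non-negativity of each $\psi$ is immediate: $[\boldsymbol{X}^{(\text{miss})}]^\top[\boldsymbol{X}^{(\text{miss})}]\ge0$ handles the quadratic piece (with $\gamma_\tau>0$ in the sub-VP case), and $-\tfrac12\mathbb{E}_r[\log r]=\tfrac12\mathbb{H}[r(\boldsymbol{X}^{(\text{miss})})]\ge0$ handles the entropy piece.

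The main obstacle I anticipate is the bookkeeping of signs and time direction: the reverse SDE is stated in backward time, so one must carefully convert it to a forward generative flow and ensure that descent of $\mathcal{F}_{\text{cost}}$ matches ascent of the MDI objective (taking the reverse SDE literally as forward-in-time flips the sign of the $\log\hat{p}$ term and reverses the optimization). A secondary subtlety is justifying the positive time-reparametrization (absorbing $g_\tau^2$, $\beta_\tau\gamma_\tau$, and similar factors) as leaving the identified functional---and in particular the sign and shape of $\psi$---invariant.
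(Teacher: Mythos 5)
Your proposal is correct and follows essentially the same route as the paper's proof: cast the reverse-process Fokker--Planck equation into continuity-equation form via $\nabla\cdot\nabla r=\nabla\cdot(r\,\nabla\log r)$, absorb the time-dependent coefficient ($\tfrac{\beta_\tau}{2}$, $\tfrac{\mathrm{d}\sigma_\tau^2}{\mathrm{d}\tau}$, or $\tfrac{\beta_\tau}{2}\gamma_\tau$) as a positive time reparametrization, read off the cost functional from the WGF velocity $v_\tau=-\nabla\frac{\delta\mathcal{F}}{\delta r}$, and conclude non-negativity of $\psi$ from the quadratic and entropy terms. The only difference is presentational: you integrate the generic drift--score--entropy velocity once and then specialize, while the paper treats VP, VE, and sub-VP as three separate instances of the same computation.
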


Based on this proposition, it becomes evident that `inaccurate imputation' issue may arise from the misalignment in the optimization objectives: By comparing~\Cref{eq:objectiveMDIProcess,eq:samplingFunctional}, we observe that DM-based MDI methods are optimizing an upper bound of the MDI. Additionally, it is important to note that the gaps involving inner products and entropy, which promote maximization, inherently encourage sample diversification~\cite{wang2019nonlinear}. This diversification is fundamentally at odds with the precision required in MDI tasks. To achieve a more accurate imputation, it is crucial to reformulate $\psi(\boldsymbol{X}^{(\text{miss})})$ to ensure that $\psi(\boldsymbol{X}^{(\text{miss})}) \leq 0$, thereby aligning the objective of the imputation procedure with the accuracy-oriented goal of MDI.

\subsection{Negative Entropy Regularized \& Closed-form Velocity Field Expression}\label{subsec:RegularClosedIter}
Based on previous subsection, we adopt negative entropy as $\psi(\boldsymbol{X}^{(\text{miss})})$ intuitively:
\begin{equation}\label{eq:NEGReg}
  \psi(\boldsymbol{X}^{(\text{miss})}) = \lambda\int{r(\boldsymbol{X}^{\text{(miss)}})\log{r(\boldsymbol{X}^{\text{(miss)}})}\mathrm{d}\boldsymbol{X}^{\text{(miss)}}}=-\lambda \mathbb{H}[r(\boldsymbol{X}^{\text{(miss)}})], \lambda > 0,
\end{equation}
where positive constant $\lambda$ is a predefined regularization strength, and consequently we can formulate our NER cost functional for MDI as follows:
\begin{equation}\label{eq:NEGFunctional}
  \mathcal{F}_{\text{NER}} \coloneqq \mathbb{E}_{ r(\boldsymbol{X}^{(\text{miss})})}[{\log{ \hat{p}( \boldsymbol{X}^{(\text{miss})}\vert \boldsymbol{X}^{(\text{obs})}) }}] -\lambda \mathbb{H}[r(\boldsymbol{X}^{\text{(miss)}})]. 
\end{equation}
From a theoretical perspective, the NER term serves a critical role: The optimal $r(\boldsymbol{X}^{(\text{miss})})$ inherently allows for infinite possibilities, potentially leading to a diversification of imputed values that could adversely affect accuracy.
By incorporating this regularization term, we not only keep the objective direction but also reduce the diversification of samples by eliminating the entropy of $r(\boldsymbol{X}^{(\text{miss})})$, which may result in an improvement in accuracy. 

As demonstrated in~\cite{santambrogio2017euclidean,ijcai2022p679}, we can directly incorporate~\Cref{eq:NEGFunctional} into the WGF framework and result in the following velocity field that drives the ODE in~\Cref{subsec:GradinWasserstein}:
\begin{equation}\label{eq:velocityNEG}
  v_\tau = -\nabla_{\boldsymbol{X}^{\text{(miss)}}}\frac{\delta  (- \mathcal{F}_{\text{NER}}) }{\delta r(\boldsymbol{X}^{\text{(miss)}})}=[\nabla_{\boldsymbol{X}^{\text{(miss)}}}\log{ \hat{p}( \boldsymbol{X}^{(\text{miss})}\vert \boldsymbol{X}^{(\text{obs})}) } +\lambda\nabla_{\boldsymbol{X}^{\text{(miss)}}}\log{r(\boldsymbol{X}^{\text{(miss)}})} ],
\end{equation}
where $\frac{\delta \mathcal{F}_{\text{NER}} }{\delta r(\boldsymbol{X}^{\text{(miss)}})}$ represents the first variation of $\mathcal{F}_{\text{NER}}$ with respect to $r(\boldsymbol{X}^{\text{(miss)}})$. However, implementing this velocity field poses substantial challenges within both the ODE framework of the WGF and the SDE contexts.~\footnote{Detailed analysis regarding the implementation challenges is provided in~\Cref{appendix-subsec:impleDiff}.} To alleviate this issue, 
%
we attempt to derive the expressions for model implementation based on the steepest ascent direction of functional gradient~\cite{friedman2001greedy,8744312,dong2022particle}. On this basis, we first derive the following ODE for the evolution of $\mathcal{F}_{\text{NER}}$ along time $\tau$:
\begin{proposition}\label{prop:simulationLoss}
  The evolution of $\mathcal{F}_{\text{NER}}$ along $\tau$ can be characterized by the following ODE, assuming that the boundary condition $\mathbb{E}_{ r(\boldsymbol{X}^{(\text{miss})},\tau)}\{\nabla_{\boldsymbol{X}^{(\text{miss})}}\cdot [u(\boldsymbol{X}^{\text{(miss)}}, \tau) \log{\hat{p}( \boldsymbol{X}^{(\text{miss})} \vert \boldsymbol{X}^{(\text{obs})})} ]\}=0$ is satisfied for the velocity field $u(\boldsymbol{X}^{\text{(miss)}}, \tau)$:
  \begin{equation}\label{eq:iterativeLossImputation}
    \frac{\mathrm{d} \mathcal{F}_{\text{NER}}}{\mathrm{d} \tau} = \mathbb{E}_{ r(\boldsymbol{X}^{(\text{miss})},\tau)}[ 
    u^{\top}(\boldsymbol{X}^{\text{(miss)}}, \tau)\nabla_{\boldsymbol{X}^{(\text{miss})}} \log{\hat{p}( \boldsymbol{X}^{(\text{miss})} \vert \boldsymbol{X}^{(\text{obs})})}  -\lambda\nabla_{\boldsymbol{X}^{(\text{miss})}} \cdot u(\boldsymbol{X}^{\text{(miss)}}, \tau)
    ].
  \end{equation}
This boundary condition is achievable, for instance, when $\hat{p}( \boldsymbol{X}^{(\text{miss})} \vert \boldsymbol{X}^{(\text{obs})})$ is bounded, and the limit of the velocity field as the norm of $\boldsymbol{X}^{(\text{miss})}$ approaches zero is zero ($\lim_{\Vert \boldsymbol{X}^{\text{(miss)}}\Vert\rightarrow 0}{u(\boldsymbol{X}^{\text{(miss)}}, \tau)}=0$).
\end{proposition}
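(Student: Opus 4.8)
The plan is to compute $\tfrac{\mathrm{d}\mathcal{F}_{\text{NER}}}{\mathrm{d}\tau}$ directly from the definition in \Cref{eq:NEGFunctional}, using the continuity equation $\tfrac{\partial r(\boldsymbol{X}^{(\text{miss})},\tau)}{\partial\tau}=-\nabla_{\boldsymbol{X}^{(\text{miss})}}\cdot[u(\boldsymbol{X}^{(\text{miss})},\tau)\,r(\boldsymbol{X}^{(\text{miss})},\tau)]$ from \Cref{subsec:GradinWasserstein} to eliminate every time derivative of the density, and then shifting spatial derivatives off the density by integration by parts. Since $\log\hat{p}(\boldsymbol{X}^{(\text{miss})}\vert\boldsymbol{X}^{(\text{obs})})$ carries no $\tau$-dependence, I would first split $\mathcal{F}_{\text{NER}}=A(\tau)+B(\tau)$ into the likelihood part $A(\tau)=\int r\log\hat{p}\,\mathrm{d}\boldsymbol{X}^{(\text{miss})}$ and the negative-entropy part $B(\tau)=\lambda\int r\log r\,\mathrm{d}\boldsymbol{X}^{(\text{miss})}$, differentiating each under the integral sign.

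For $A(\tau)$ I would insert the continuity equation and apply the product rule $\nabla\cdot(u\,r\log\hat{p})=\log\hat{p}\,\nabla\cdot(u\,r)+r\,u^{\top}\nabla\log\hat{p}$, thereby splitting the integrand into a pure divergence plus the target density $r\,u^{\top}\nabla\log\hat{p}$. Integrating the divergence over $\mathbb{R}^{\mathrm{D}}$ produces a flux term, which vanishes under the stated boundary condition $\mathbb{E}_{r}\{\nabla_{\boldsymbol{X}^{(\text{miss})}}\cdot[u\log\hat{p}]\}=0$ (secured by boundedness of $\hat{p}$ and the decay of $u$), leaving $A'(\tau)=\mathbb{E}_{r}[u^{\top}\nabla_{\boldsymbol{X}^{(\text{miss})}}\log\hat{p}]$. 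For $B(\tau)$, differentiation gives $\lambda\int\partial_\tau r\,(\log r+1)\,\mathrm{d}\boldsymbol{X}^{(\text{miss})}$, where the constant piece contributes $\lambda\tfrac{\mathrm{d}}{\mathrm{d}\tau}\int r\,\mathrm{d}\boldsymbol{X}^{(\text{miss})}=0$ by conservation of mass. Substituting the continuity equation into the remaining $\log r$ piece and integrating by parts once, using $\nabla\log r=\nabla r/r$, yields $\lambda\int u^{\top}\nabla r\,\mathrm{d}\boldsymbol{X}^{(\text{miss})}$; a second integration by parts turns this into $-\lambda\int r\,\nabla\cdot u\,\mathrm{d}\boldsymbol{X}^{(\text{miss})}=-\lambda\,\mathbb{E}_{r}[\nabla\cdot u]$. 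Adding $A'(\tau)$ and $B'(\tau)$ then reproduces \Cref{eq:iterativeLossImputation}.

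The main obstacle I anticipate is analytic rather than algebraic: justifying the interchange of $\mathrm{d}/\mathrm{d}\tau$ with the spatial integral (a dominated-convergence / Leibniz argument for both $r\log\hat{p}$ and $r\log r$) and, above all, confirming that every surface term generated by the integrations by parts vanishes. This is exactly where the hypotheses are consumed: boundedness of $\hat{p}$ controls the likelihood factor, while the vanishing of $u$ at the boundary together with the integrability of $r$ and $r\log r$ kills the fluxes $u\,r\log\hat{p}$, $u\,r\log r$, and $u\,r$. I would therefore close the proof by recording these as standing regularity assumptions and then verifying the explicit two-line sufficiency claim ($\hat{p}$ bounded and $u\to 0$ at the boundary) stated in the proposition.
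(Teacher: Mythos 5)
Your proposal is correct and follows essentially the same route as the paper's proof: substitute the continuity equation into $\tfrac{\mathrm{d}}{\mathrm{d}\tau}\int r\,[\log\hat{p}+\lambda\log r+\lambda]\,\mathrm{d}\boldsymbol{X}^{(\text{miss})}$, integrate by parts to move the divergence onto the bracket, use $r\,\nabla\log r=\nabla r$, and integrate by parts a second time to obtain $-\lambda\,\mathbb{E}_{r}[\nabla\cdot u]$. Your splitting into $A(\tau)+B(\tau)$ and your use of mass conservation to kill the constant term are only cosmetic reorganizations of the paper's single chain-rule computation (where the constant drops because its gradient vanishes), and your explicit attention to where the boundary condition absorbs the surface terms is, if anything, slightly more careful than the paper's treatment.
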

Despite the clarity provided by~\Cref{eq:iterativeLossImputation}, practical model implementation remains a significant challenge due to the potential variability in the velocity field $u(\boldsymbol{X}^{\text{(miss)}}, \tau)$. To address this, we propose constraining the velocity within some specified function class $\mathcal{V}$~\cite{liu2019understanding,wild2024rigorous,dong2022particle}, such that $u(\boldsymbol{X}^{\text{(miss)}}, \tau) \in \mathcal{V}$, which allows us to explore the steepest ascent direction systematically. \textit{To obtain a closed-form and easily implementable expression}, 
we choose RKHS denoted by $\mathcal{H}$ with RKHS norm $\Vert \cdot\Vert_{\mathcal{H}}$ to represent $\mathcal{V}$. On this basis, we have the following proposition for the expression of $ u(\boldsymbol{X}^{\text{(miss)}}, \tau)$: 
\begin{proposition}\label{prop:steinMap}
When the velocity field $u(\boldsymbol{X}^{\text{(miss)}}, \tau)$ is regularized by RKHS norm, the problem of finding the steepest gradient ascent direction can be formulated as follows:
\begin{equation}\label{eq:objectiveRKHSReg}
  u(\boldsymbol{X}^{\text{(miss)}}, \tau) = \mathop{\arg\max}_{v(\boldsymbol{X}^{\text{(miss)}}, \tau)\in\mathcal{H}^d}
  \begin{aligned}
  & \{\mathbb{E}_{ r(\boldsymbol{X}^{(\text{miss})},\tau)}[ 
    v^{\top}(\boldsymbol{X}^{\text{(miss)}}, \tau)\nabla_{\boldsymbol{X}^{(\text{miss})}} \log{\hat{p}( \boldsymbol{X}^{(\text{miss})} \vert \boldsymbol{X}^{(\text{obs})})}  \\
  & \quad \quad- \lambda\nabla_{\boldsymbol{X}^{(\text{miss})}} \cdot v(\boldsymbol{X}^{\text{(miss)}}, \tau)
      ]\}-\frac{1}{2}\Vert v(\boldsymbol{X}^{\text{(miss)}}, \tau)\Vert_{\mathcal{H}}^2.
  \end{aligned}
  \end{equation}
  The corresponding optimal solution is given by:
\begin{equation}\label{eq:RKHSVelocityField}
u(\boldsymbol{X}^{\text{(miss)}}, \tau) = \mathbb{E}_{r(\tilde{\boldsymbol{{X}}}^{(\text{miss})},\tau)}\left\{ 
      \begin{aligned}  
      & { -\lambda \nabla_{{\tilde{\boldsymbol{X}}}^{(\text{miss})}} \mathcal{K}(\boldsymbol{X}^{(\text{miss})}  , \tilde{\boldsymbol{X}}^{(\text{miss})})  } \\
      &\quad \quad+ [\nabla_{{\tilde{\boldsymbol{X}}}^{(\text{miss})}}\log{\hat{p}( \tilde{\boldsymbol{X}}^{(\text{miss})} \vert \boldsymbol{X}^{(\text{obs})})}]^\top\mathcal{K}(\boldsymbol{X}^{(\text{miss})}  , \tilde{\boldsymbol{X}}^{(\text{miss})}) 
      \end{aligned} 
      \right\},
\end{equation}
where $\mathcal{K}(\boldsymbol{X}^{(\text{miss})}  , \tilde{\boldsymbol{X}}^{(\text{miss})})$ is kernel function.
\end{proposition}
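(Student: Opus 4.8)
The plan is to recognize Proposition~\ref{prop:steinMap} as a Stein-variational-style computation: the objective in~\Cref{eq:objectiveRKHSReg} is a concave functional of $v \in \mathcal{H}^d$ (a linear term minus one-half the squared RKHS norm), so the unique maximizer is characterized by setting the functional gradient to zero. First I would rewrite the two linear terms using the reproducing property. The key identity is that for any $v \in \mathcal{H}^d$ and any fixed point $\boldsymbol{X}^{\text{(miss)}}$, one has $v(\boldsymbol{X}^{\text{(miss)}}) = \langle v(\cdot), \mathcal{K}(\boldsymbol{X}^{\text{(miss)}}, \cdot) \rangle_{\mathcal{H}}$ coordinatewise, and crucially $\nabla_{\boldsymbol{X}^{\text{(miss)}}}\cdot v(\boldsymbol{X}^{\text{(miss)}}) = \langle v(\cdot), \nabla_{\boldsymbol{X}^{\text{(miss)}}}\mathcal{K}(\boldsymbol{X}^{\text{(miss)}}, \cdot) \rangle_{\mathcal{H}}$, so that both the score-weighted term and the divergence term become inner products of $v$ against explicit elements of $\mathcal{H}^d$.

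The next step is to pull the expectation over $r(\boldsymbol{X}^{\text{(miss)}}, \tau)$ inside the inner product (justified by linearity and the boundedness assumptions inherited from Proposition~\ref{prop:simulationLoss}), so the whole linear functional takes the form $\langle v, \Phi \rangle_{\mathcal{H}}$ with
\begin{equation*}
  \Phi(\cdot) = \mathbb{E}_{r(\tilde{\boldsymbol{X}}^{(\text{miss})},\tau)}\bigl[ \mathcal{K}(\tilde{\boldsymbol{X}}^{(\text{miss})}, \cdot)\,\nabla_{\tilde{\boldsymbol{X}}^{(\text{miss})}}\log\hat{p}(\tilde{\boldsymbol{X}}^{(\text{miss})}\vert\boldsymbol{X}^{(\text{obs})}) - \lambda\,\nabla_{\tilde{\boldsymbol{X}}^{(\text{miss})}}\mathcal{K}(\tilde{\boldsymbol{X}}^{(\text{miss})}, \cdot) \bigr].
\end{equation*}
The objective is then $\langle v, \Phi\rangle_{\mathcal{H}} - \tfrac{1}{2}\Vert v\Vert_{\mathcal{H}}^2$, which is maximized at $v = \Phi$ by completing the square (equivalently, the Cauchy--Schwarz bound $\langle v,\Phi\rangle \le \Vert v\Vert\,\Vert\Phi\Vert$ is tight exactly when $v \propto \Phi$, and the quadratic penalty fixes the scale). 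Evaluating $u = \Phi$ at the point $\boldsymbol{X}^{\text{(miss)}}$ and using symmetry of the kernel to rewrite $\nabla_{\tilde{\boldsymbol{X}}^{(\text{miss})}}\mathcal{K}(\tilde{\boldsymbol{X}}^{(\text{miss})}, \boldsymbol{X}^{\text{(miss)}})$ recovers exactly~\Cref{eq:RKHSVelocityField}.

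The main obstacle I anticipate is the careful handling of the divergence term: expressing $\nabla_{\boldsymbol{X}^{\text{(miss)}}}\cdot v(\boldsymbol{X}^{\text{(miss)}})$ as an RKHS inner product requires the reproducing property for derivatives, which demands that $\mathcal{K}$ be sufficiently smooth (at least $C^1$ in each argument) and that differentiation commute with the inner product; making this rigorous needs the standard fact that $\partial_i \mathcal{K}(\boldsymbol{X}^{\text{(miss)}}, \cdot) \in \mathcal{H}$ and reproduces partial derivatives. A secondary technical point is justifying the interchange of the expectation $\mathbb{E}_{r}$ with the inner product and with differentiation, which is where the integrability and boundary conditions from Proposition~\ref{prop:simulationLoss} do their work. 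Once these functional-analytic prerequisites are granted, the optimization itself is an elementary complete-the-square argument, so the substance of the proof lies entirely in setting up the RKHS representation correctly rather than in the final maximization.
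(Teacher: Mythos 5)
Your proof is correct, and it reaches the same optimizer, but it travels a different road than the paper. The paper's proof expands the kernel in its Mercer spectral decomposition $\mathcal{K}(x,y)=\sum_{i}\xi_i\phi_i(x)\phi_i(y)$, writes $v=\sum_i v_i\sqrt{\xi_i}\phi_i$, substitutes into the objective, and finds each optimal coefficient $v_i^*$ by coordinatewise stationarity before resumming the series into the kernel form. You instead work coordinate-free: you invoke the reproducing property and its derivative version $\partial_j v_j(x)=\langle v_j,\partial_{x_j}\mathcal{K}(x,\cdot)\rangle_{\mathcal{H}}$ to collapse the whole linear part into $\langle v,\Phi\rangle_{\mathcal{H}^d}$ and then complete the square. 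These are two realizations of the same underlying fact (a concave quadratic on a Hilbert space is maximized at the Riesz representer of its linear part), so the mathematical substance is shared; what differs is which technical lemma carries the load. Your route makes explicit the derivative-reproducing property and the interchange of $\mathbb{E}_r$ with the inner product, prerequisites the paper's proof absorbs silently into term-by-term differentiation and resummation of an infinite series (itself requiring the same kind of justification). The one advantage of the paper's version is that it never needs to name the derivative-reproducing property as a standalone fact; the advantage of yours is that it is shorter, matches the standard Stein-variational derivation, and surfaces exactly the regularity hypotheses ($C^1$ kernel, integrability) that make the formal manipulation legitimate. Your final evaluation $u=\Phi$ and the use of kernel symmetry to match \Cref{eq:RKHSVelocityField} are both correct.
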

Importantly, since the missing value dimension is undefined, we did not specify the type signature of $\mathcal{K}(\boldsymbol{X}^{(\text{miss})}  , \tilde{\boldsymbol{X}}^{(\text{miss})})$, and the expectation term $\mathbb{E}_{r(\tilde{\boldsymbol{{X}}}^{(\text{miss})},\tau)}$ can be efficiently estimated using Monte Carlo approximation. Leveraging this approach, the velocity field as outlined in~\Cref{eq:RKHSVelocityField} does not require explicit computation of proposal distribution $r(\boldsymbol{{X}}^{(\text{miss})},\tau)$. Consequently, we finally derive an easily implementable and closed-form imputation procedure for our KnewImp approach. 

\subsection{Modeling ${p}( \boldsymbol{X}^{(\text{miss})}\vert \boldsymbol{X}^{(\text{obs})})$ by ${p}( \boldsymbol{X}^{(\text{miss})}, \boldsymbol{X}^{(\text{obs})})$}\label{subsec:JointDistModeling}
As discussed in the previous subsection, accurately defining the conditional distribution ${p}( \boldsymbol{X}^{(\text{miss})}\vert \boldsymbol{X}^{(\text{obs})})$ is crucial for effectively simulating the ODE in~\Cref{eq:iterativeLossImputation} using the velocity field specified in \Cref{eq:RKHSVelocityField}. However, as previously noted, precise modeling of ${p}( \boldsymbol{X}^{(\text{miss})}\vert \boldsymbol{X}^{(\text{obs})})$ presents substantial challenges due to the diverse choices of masking matrices, which critically influence the efficacy of model training.

To circumvent this difficulty, a practical approach involves substituting the conditional distribution ${p}( \boldsymbol{X}^{(\text{miss})}\vert \boldsymbol{X}^{(\text{obs})})$ with the joint distribution ${p}( \boldsymbol{X}^{(\text{miss})}, \boldsymbol{X}^{(\text{obs})})$. By denoting $\boldsymbol{X}^{(\text{joint})} = (\boldsymbol{X}^{(\text{miss})}, \boldsymbol{X}^{(\text{obs})})$, we can accordingly redefine the velocity field $u$ as follows:

\begin{equation}\label{eq:jointVelocityField}
  {u}( \boldsymbol{X}^{\text{(joint)}}, \tau) = \mathbb{E}_{r({\tilde{\boldsymbol{X}}}^{\text{(joint)}},\tau)}\left\{ 
    \begin{aligned}  
    & { - \lambda \nabla_{  {\tilde{\boldsymbol{X}}^{\text{(miss)}}}} \mathcal{K}(  \boldsymbol{X}^{\text{(joint)}}  ,    {\tilde{\boldsymbol{X}}^{\text{(joint)}}})  }\\
    &\quad \quad+ [\nabla_{  {\tilde{\boldsymbol{X}}^{\text{(miss)}}}}\log{\hat{p}(  \tilde{\boldsymbol{X}}^{\text{(joint)}})}]^\top\mathcal{K}(  \boldsymbol{X}^{\text{(joint)}}  ,   \tilde{\boldsymbol{X}}^{\text{(joint)}}) 
    \end{aligned} 
    \right\},
\end{equation}
where $\mathcal{K}(\boldsymbol{X}^{(\text{joint})}  , \tilde{\boldsymbol{X}}^{(\text{joint})}):\mathbb{R}^\mathrm{D}\rightarrow\mathbb{R}^\mathrm{D}$ is kernel function, and we use radial basis function kernel defined as $\mathcal{K}(\boldsymbol{X}  , \tilde{\boldsymbol{X}})\coloneqq \exp(-\frac{\Vert \boldsymbol{X}  - \tilde{\boldsymbol{X}}\Vert^2}{2h^2})$ with bandwidth $h$ in this paper~\cite{zhou2008derivative,liu2016stein,10005854}. Based on this, $\nabla_{  {\boldsymbol{X}^{\text{(miss)}}}}\log{\hat{p}(  \boldsymbol{X}^{\text{(joint)}})}$ can be obtained according to the following equation: 
\begin{equation}\label{eq:CondByJoint}
    \nabla_{  {\tilde{\boldsymbol{X}}^{\text{(miss)}}}}\log{\hat{p}(  \tilde{\boldsymbol{X}}^{\text{(joint)}})} = \nabla_{  {\tilde{\boldsymbol{X}}^{\text{(joint)}}}}\log{\hat{p}(  \tilde{\boldsymbol{X}}^{\text{(joint)}})} \odot (\mathbbm{1}_{\mathrm{N} \times \mathrm{D}}-\boldsymbol{M}) +   0 \times \boldsymbol{M}.
\end{equation}
As such, a pertinent question arises: What's the relationship between \Cref{eq:jointVelocityField,eq:RKHSVelocityField}? Interestingly, these formulations are identical.  In light of this, the remainder of this subsection is dedicated to demonstrating that the velocity field associated with the cost functional in \Cref{eq:jointVelocityField} does not compromise the optimization of $\mathcal{F}_{\text{NER}}$. To support this assertion, we present the following proposition:
\begin{proposition}\label{prop:JointCondDist}
Assume that the proposal distribution $r({\boldsymbol{X}}^{\text{(joint)}})$ is factorized by $r({\boldsymbol{X}}^{\text{(joint)}})\coloneqq r({\boldsymbol{X}}^{\text{(miss)}}) p(\boldsymbol{X}^{\text{(obs)}})$. The cost functional associated with the joint distribution is defined by the following equation:
 \begin{equation}\label{eq:jointNERCostFunctional}
  \mathcal{F}_{\text{joint-NER}} \coloneqq \mathbb{E}_{ r({\boldsymbol{X}}^{\text{(joint)}})}[{\log{ \hat{p}( \boldsymbol{X}^{(\text{joint})}) }}] -\lambda\mathbb{H}[r({\boldsymbol{X}}^{\text{(joint)}})],
 \end{equation}
 which leads to the velocity field delineated in \Cref{eq:jointVelocityField} and establishes $\mathcal{F}_{\text{joint-NER}}$ as a lower bound for $\mathcal{F}_{\text{NER}}$, with the difference being a constant (i.e., $\mathcal{F}_{\text{joint-NER}} = \mathcal{F}_{\text{NER}} - \text{const},\text{const}\ge0$).
\end{proposition}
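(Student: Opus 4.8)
The plan is to prove the proposition in two independent pieces: first, that the WGF velocity field induced by $\mathcal{F}_{\text{joint-NER}}$ coincides with the field in \Cref{eq:jointVelocityField} and, after simplification, with the conditional-distribution field of \Cref{eq:RKHSVelocityField}; second, that $\mathcal{F}_{\text{joint-NER}}$ and $\mathcal{F}_{\text{NER}}$ differ only by a term that does not depend on the evolving variable $r(\boldsymbol{X}^{\text{(miss)}})$. The entire argument rests on two elementary facts that I would isolate at the outset: the chain-rule split $\log \hat{p}(\boldsymbol{X}^{\text{(joint)}}) = \log \hat{p}(\boldsymbol{X}^{\text{(miss)}} \vert \boldsymbol{X}^{\text{(obs)}}) + \log \hat{p}(\boldsymbol{X}^{\text{(obs)}})$, and the assumed factorization $r(\boldsymbol{X}^{\text{(joint)}}) = r(\boldsymbol{X}^{\text{(miss)}}) p(\boldsymbol{X}^{\text{(obs)}})$ in which the observed marginal $p(\boldsymbol{X}^{\text{(obs)}})$ is held fixed throughout the flow.

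For the velocity field, I would first record the score identity obtained by differentiating the chain-rule split only in the missing coordinates: since $\log \hat{p}(\boldsymbol{X}^{\text{(obs)}})$ does not depend on $\boldsymbol{X}^{\text{(miss)}}$, we get $\nabla_{\boldsymbol{X}^{\text{(miss)}}} \log \hat{p}(\boldsymbol{X}^{\text{(joint)}}) = \nabla_{\boldsymbol{X}^{\text{(miss)}}} \log \hat{p}(\boldsymbol{X}^{\text{(miss)}} \vert \boldsymbol{X}^{\text{(obs)}})$, which is precisely what \Cref{eq:CondByJoint} encodes through the mask. Then I would re-run the derivations of \Cref{prop:simulationLoss,prop:steinMap} verbatim with $\hat{p}(\boldsymbol{X}^{\text{(joint)}})$ in place of $\hat{p}(\boldsymbol{X}^{\text{(miss)}} \vert \boldsymbol{X}^{\text{(obs)}})$, noting that because $p(\boldsymbol{X}^{\text{(obs)}})$ is frozen the continuity equation only transports the missing block, so the admissible velocity and the first variation live in the missing coordinates and only $\nabla_{\tilde{\boldsymbol{X}}^{\text{(miss)}}}$ survives. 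This yields \Cref{eq:jointVelocityField}; substituting the score identity and, for the RBF kernel, using that the observed block contributes the common factor $\exp(-\Vert \boldsymbol{X}^{\text{(obs)}} - \tilde{\boldsymbol{X}}^{\text{(obs)}}\Vert^2 / 2h^2)$ (equal to one when the observed coordinates coincide) collapses \Cref{eq:jointVelocityField} onto \Cref{eq:RKHSVelocityField}.

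For the constant gap, I would substitute both the chain-rule split and the factorization into \Cref{eq:jointNERCostFunctional}. The expectation term becomes $\mathbb{E}_{r(\boldsymbol{X}^{\text{(miss)}}) p(\boldsymbol{X}^{\text{(obs)}})}[\log \hat{p}(\boldsymbol{X}^{\text{(miss)}} \vert \boldsymbol{X}^{\text{(obs)}})] + \mathbb{E}_{p(\boldsymbol{X}^{\text{(obs)}})}[\log \hat{p}(\boldsymbol{X}^{\text{(obs)}})]$, while additivity of entropy under the product form gives $\mathbb{H}[r(\boldsymbol{X}^{\text{(joint)}})] = \mathbb{H}[r(\boldsymbol{X}^{\text{(miss)}})] + \mathbb{H}[p(\boldsymbol{X}^{\text{(obs)}})]$. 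Recognizing the first expectation together with $-\lambda \mathbb{H}[r(\boldsymbol{X}^{\text{(miss)}})]$ as $\mathcal{F}_{\text{NER}}$, all remaining observed-only terms collect into $\text{const} = \lambda \mathbb{H}[p(\boldsymbol{X}^{\text{(obs)}})] - \mathbb{E}_{p(\boldsymbol{X}^{\text{(obs)}})}[\log \hat{p}(\boldsymbol{X}^{\text{(obs)}})]$, which is manifestly independent of $r(\boldsymbol{X}^{\text{(miss)}})$, so $\mathcal{F}_{\text{joint-NER}} = \mathcal{F}_{\text{NER}} - \text{const}$. Non-negativity then follows from the paper's standing convention (already used in \Cref{prop:inEffectiveSampling}) that the entropy term is non-negative, together with the fact that $-\mathbb{E}_{p(\boldsymbol{X}^{\text{(obs)}})}[\log \hat{p}(\boldsymbol{X}^{\text{(obs)}})]$ is a cross-entropy that is non-negative under the same convention.

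The main obstacle I anticipate is the velocity-field equivalence rather than the bookkeeping of the constant: one must verify that the first-variation and Stein-type arguments of \Cref{prop:simulationLoss,prop:steinMap} genuinely transfer when the observed marginal is frozen, so that the functional gradient is taken only over the missing block, and that the joint RBF kernel truly reduces to the conditional kernel of \Cref{eq:RKHSVelocityField} (this is clean per sample, where observed coordinates coincide, but needs a remark for the aggregated empirical setting). A secondary subtlety is the sign of $\text{const}$ for continuous densities, where differential entropy may be negative; I would address this by appealing to the same non-negativity convention adopted earlier, or by restricting attention to settings in which $\mathbb{H}[p(\boldsymbol{X}^{\text{(obs)}})] \ge 0$.
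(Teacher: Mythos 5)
Your proposal is correct and follows essentially the same route as the paper: the velocity field is obtained by re-running the arguments of \Cref{prop:simulationLoss,prop:steinMap} with the observed marginal frozen (so the continuity equation and first variation act only on the missing block), and the constant gap is obtained from the chain-rule split of $\log\hat{p}(\boldsymbol{X}^{\text{(joint)}})$ together with additivity of entropy under the product factorization. The only cosmetic difference is the form of the constant --- the paper silently identifies $\hat{p}(\boldsymbol{X}^{\text{(obs)}})$ with $p(\boldsymbol{X}^{\text{(obs)}})$ and obtains $(1+\lambda)\mathbb{H}[p(\boldsymbol{X}^{\text{(obs)}})]$, whereas you keep the cross-entropy term explicit --- and both versions rest on the same non-negativity convention for the (differential) entropy that you rightly flag as a subtlety.
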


Based on this, the following corollary can be obtained:
\begin{corollary}\label{eq:velocityEqual}
The following equation holds:
\begin{equation}
  {u}( \boldsymbol{X}^{\text{(joint)}}, \tau) = u( \boldsymbol{X}^{\text{(miss)}}, \tau).
\end{equation}
\end{corollary}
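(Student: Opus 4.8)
The plan is to deduce the corollary directly from Proposition~\ref{prop:JointCondDist} together with the explicit RKHS form of the velocity field established in Proposition~\ref{prop:steinMap}. The governing idea is that the RKHS-regularized steepest-ascent direction in~\Cref{eq:objectiveRKHSReg} depends on the cost functional only through its first variation (the functional gradient), which is insensitive to additive constants. Since Proposition~\ref{prop:JointCondDist} gives $\mathcal{F}_{\text{joint-NER}} = \mathcal{F}_{\text{NER}} - \text{const}$ with $\text{const}\ge 0$, the two functionals share the same first variation, and hence the same optimal velocity field. It then remains to confirm that the explicit expressions in~\Cref{eq:jointVelocityField,eq:RKHSVelocityField} coincide term by term once the observed coordinates are treated as fixed.

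First I would exploit the imputation convention that the observed entries never evolve along the flow, so that $\tilde{\boldsymbol{X}}^{\text{(obs)}} = \boldsymbol{X}^{\text{(obs)}}$ throughout. Under the factorization $r(\tilde{\boldsymbol{X}}^{\text{(joint)}}) = r(\tilde{\boldsymbol{X}}^{\text{(miss)}})\,p(\tilde{\boldsymbol{X}}^{\text{(obs)}})$ assumed in Proposition~\ref{prop:JointCondDist}, the expectation $\mathbb{E}_{r(\tilde{\boldsymbol{X}}^{\text{(joint)}},\tau)}$ factorizes; integrating over the observed marginal simply fixes $\tilde{\boldsymbol{X}}^{\text{(obs)}}$ at the observed data, so that $\mathbb{E}_{r(\tilde{\boldsymbol{X}}^{\text{(joint)}},\tau)}[\,\cdot\,]$ reduces to $\mathbb{E}_{r(\tilde{\boldsymbol{X}}^{\text{(miss)}},\tau)}[\,\cdot\,]$ with the observed block held fixed.

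Next I would decompose the two ingredients of the integrand. For the kernel, the radial basis function satisfies $\Vert \boldsymbol{X}^{\text{(joint)}} - \tilde{\boldsymbol{X}}^{\text{(joint)}}\Vert^2 = \Vert \boldsymbol{X}^{\text{(miss)}} - \tilde{\boldsymbol{X}}^{\text{(miss)}}\Vert^2 + \Vert \boldsymbol{X}^{\text{(obs)}} - \tilde{\boldsymbol{X}}^{\text{(obs)}}\Vert^2$, and the second summand vanishes because both observed blocks equal $\boldsymbol{X}^{\text{(obs)}}$; hence $\mathcal{K}(\boldsymbol{X}^{\text{(joint)}}, \tilde{\boldsymbol{X}}^{\text{(joint)}}) = \mathcal{K}(\boldsymbol{X}^{\text{(miss)}}, \tilde{\boldsymbol{X}}^{\text{(miss)}})$, and likewise for its gradient with respect to $\tilde{\boldsymbol{X}}^{\text{(miss)}}$. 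For the score, I would use the decomposition $\hat{p}(\tilde{\boldsymbol{X}}^{\text{(joint)}}) = \hat{p}(\tilde{\boldsymbol{X}}^{\text{(miss)}} \vert \tilde{\boldsymbol{X}}^{\text{(obs)}})\,\hat{p}(\tilde{\boldsymbol{X}}^{\text{(obs)}})$; taking $\nabla_{\tilde{\boldsymbol{X}}^{\text{(miss)}}}\log(\cdot)$ eliminates the marginal term and, with $\tilde{\boldsymbol{X}}^{\text{(obs)}} = \boldsymbol{X}^{\text{(obs)}}$, yields $\nabla_{\tilde{\boldsymbol{X}}^{\text{(miss)}}}\log \hat{p}(\tilde{\boldsymbol{X}}^{\text{(joint)}}) = \nabla_{\tilde{\boldsymbol{X}}^{\text{(miss)}}}\log \hat{p}(\tilde{\boldsymbol{X}}^{\text{(miss)}} \vert \boldsymbol{X}^{\text{(obs)}})$, which is precisely the content of~\Cref{eq:CondByJoint}. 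Substituting these identities into~\Cref{eq:jointVelocityField} reproduces~\Cref{eq:RKHSVelocityField} verbatim, establishing the equality.

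The main obstacle I anticipate is the rigorous justification of the reduction from $\mathbb{E}_{r(\tilde{\boldsymbol{X}}^{\text{(joint)}},\tau)}$ to $\mathbb{E}_{r(\tilde{\boldsymbol{X}}^{\text{(miss)}},\tau)}$: one must argue that conditioning on the fixed observed data is compatible with the factorized proposal, i.e.\ that the observed marginal $p(\tilde{\boldsymbol{X}}^{\text{(obs)}})$ contributes only the point evaluation at $\boldsymbol{X}^{\text{(obs)}}$ rather than an averaging over spurious observed configurations. Making this precise requires care about what it means to \emph{fix} the observed coordinates within the WGF on the joint space, and about its consistency with the constant-gap claim of Proposition~\ref{prop:JointCondDist}; the high-level first-variation argument is a useful sanity check that the explicit term-by-term computation must respect.
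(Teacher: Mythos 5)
Your opening argument is exactly the paper's own proof: the paper derives the corollary from $\mathcal{F}_{\text{NER}} = \mathcal{F}_{\text{joint-NER}} + \text{const}$ (its \Cref{eq:jointCondConstant}), notes that the additive constant is annihilated by the first variation with respect to $r(\boldsymbol{X}^{\text{(miss)}})$, and concludes that both functionals drive the same continuity equation and hence the same velocity field. On that main line you and the paper coincide, and your use of \Cref{eq:CondByJoint} for the score factor is also consistent with what the paper does.

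The supplementary term-by-term verification, however, contains a step that would fail. In \Cref{eq:jointVelocityField} the expectation $\mathbb{E}_{r(\tilde{\boldsymbol{X}}^{\text{(joint)}},\tau)}$ integrates the tilde variable over the factorized proposal $r(\tilde{\boldsymbol{X}}^{\text{(miss)}})\,p(\tilde{\boldsymbol{X}}^{\text{(obs)}})$, so the observed block of the \emph{integration} variable is averaged over $p(\tilde{\boldsymbol{X}}^{\text{(obs)}})$ (empirically, over the observed entries of the other rows of the dataset); it is not pinned to the observed block $\boldsymbol{X}^{\text{(obs)}}$ of the point at which the velocity is evaluated. Hence $\Vert \boldsymbol{X}^{\text{(obs)}}-\tilde{\boldsymbol{X}}^{\text{(obs)}}\Vert^2$ does not vanish, the joint-space RBF kernel does not collapse to the missing-coordinate kernel, and \Cref{eq:jointVelocityField,eq:RKHSVelocityField} are not literally equal entry by entry — they are steepest-ascent directions taken in two different RKHSs. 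You rightly identify this as the main obstacle; the paper sidesteps it entirely by arguing only at the level of the unprojected velocity fields $-\nabla\,\delta\mathcal{F}/\delta r$ and never claiming the two explicit kernelized formulas agree via a kernel reduction. If you want the corollary at the level of the explicit RKHS expressions, the defensible statement is that both are projections of the same linear functional $\mathrm{d}\mathcal{F}/\mathrm{d}\tau$ (equal by the constant-gap argument) and therefore coincide once the same kernel is used for both; the kernel mismatch must be resolved by that choice of convention, not by the pointwise reduction $\tilde{\boldsymbol{X}}^{\text{(obs)}}=\boldsymbol{X}^{\text{(obs)}}$ you propose.
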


Building on these foundations, the imputed value is obtained by simulating the following ODE:
\begin{equation}
  \frac{\mathrm{d}\boldsymbol{X}^{\text{(miss)}}}{\mathrm{d}\tau} =  {u}( \boldsymbol{X}^{\text{(joint)}}, \tau).
\end{equation}
For simplicity, in this paper, we simulate this ODE by forward Euler's method with step size $\eta$. \footnote{please see~\Cref{appendix-forwardEuler} for more detailed information about forward Euler's method.}

To date, our primary objective has been to determine the estimation of score function $\nabla_{{\boldsymbol{X}}^{\text{(joint)}}}\log\hat{p}({\boldsymbol{X}}^{\text{(joint)}})$. To achieve this, we employ Denoising Score Matching (DSM)~\cite{hyvarinen2005estimation,vincent2011connection} to parameterize $\nabla_{{\boldsymbol{X}}^{\text{(joint)}}}\log\hat{p}({\boldsymbol{X}}^{\text{(joint)}})$ using a neural network with $\theta$ as parameter set. We design the learning objective to minimize the discrepancy between the actual score and the model's predicted score after introducing Gaussian noise to the clean ${\boldsymbol{X}}^{\text{(joint)}}$ as $\hat{\boldsymbol{X}}^{\text{(joint)}}$ :
\begin{equation}
  \mathcal{L}_{\text{DSM}}\coloneqq\frac{1}{2}\mathbb{E}_{q_{\sigma}(\hat{\boldsymbol{X}}^{\text{(joint)}}\vert {\boldsymbol{X}}^{\text{(joint)}})}[\Vert 
  \nabla_{{\hat{\boldsymbol{X}}}^{\text{(joint)}}}\log\hat{p}({\boldsymbol{X}}^{\text{(joint)}})
  - \nabla_{\hat{\boldsymbol{X}}^{\text{(joint)}}} \log{q_{\sigma}(\hat{\boldsymbol{X}}^{\text{(joint)}}\vert {\boldsymbol{X}}^{\text{(joint)}})}  \Vert^2].
\end{equation}
Notably, $\sigma$ is variance scale, $\hat{\boldsymbol{X}}^{\text{(joint)}}$ is obtained by $\hat{\boldsymbol{X}}^{\text{(joint)}}={\boldsymbol{X}}^{\text{(joint)}} + \epsilon,\epsilon\sim\mathcal{N}(\mathbf{0}, \sigma^2\mathbf{I})$, and $ \nabla_{\hat{\boldsymbol{X}}^{\text{(joint)}}} \log{q_{\sigma}(\hat{\boldsymbol{X}}^{\text{(joint)}}\vert {\boldsymbol{X}}^{\text{(joint)}})} =-\frac{\hat{\boldsymbol{X}}^{\text{(joint)}} -\boldsymbol{X}^{\text{(joint)}} }{\sigma^2}$. Once $\nabla_{{{\boldsymbol{X}}}^{\text{(joint)}}}\log\hat{p}({\boldsymbol{X}}^{\text{(joint)}})$ is trained, we can obtain the imputation value by simulating the differential equation based on~\Cref{eq:jointVelocityField}.
\subsection{Overall Architecture of KnewImp}\label{subsec:OverallArchitecture}
Fig.~\ref{fig:illustration} presents the architecture of our KnewImp approach, which consists of two parts namely `Impute' and `Estimate'. The `Impute' part alleviates the missing data imputation as an ODE simulation problem within WGF framework, and the imputed matrix is obtained by simulating the velocity field as per~\Cref{eq:jointVelocityField}. Meanwhile, since the velocity field requires the computation of the score function of the joint data, the `Estimate' part serves for estimating the score function. By alternatively repeating these two parts, we can finally obtain the imputed value. To better delineate the KnewImp approach, we summarize the corresponding algorithms in~\Cref{appendix-Algorithm}. 
\begin{figure}[htbp]
  \centering
  \includegraphics[width=1.0\textwidth]{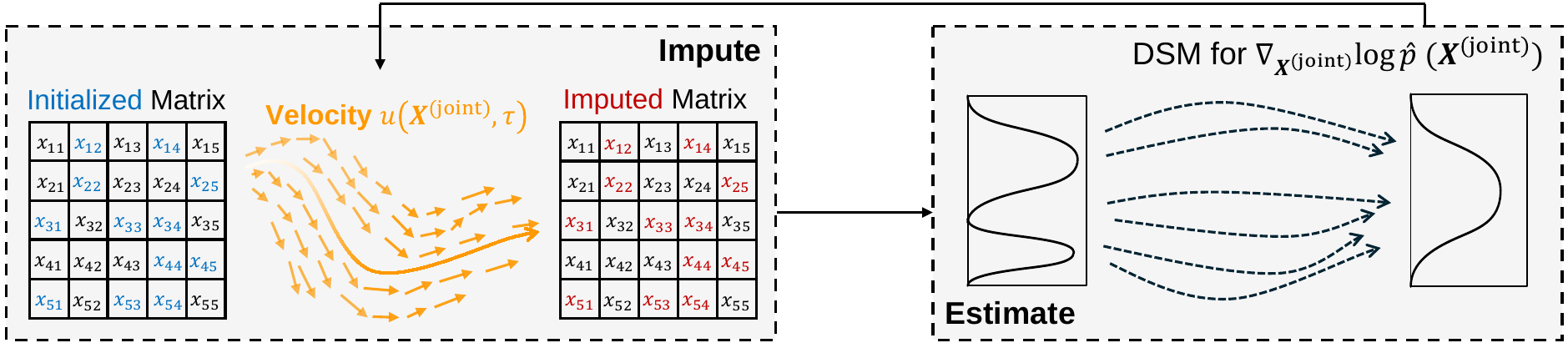} 
\vspace{-0.4cm}
  \caption{The illustration of KnewImp. The left part indicates we impute the missing value by WGF, and the right part indicates we use DSM to estimate $\log{p(\boldsymbol{X}^{\text{(miss)}})}$.}
  \label{fig:illustration}
  \vspace{-0.2cm}
\end{figure}
\section{Experiments}
\subsection{Experimental Setup}
\textbf{Datasets:} Eight real-world datasets from \href{https://archive.ics.uci.edu}{UCI repository} are chosen to validate the efficacy of our KnewImp approach. Detailed information for these datasets and the missing scenario simulation method is provided in Appendix~\ref{appendix-subsec:missingSimulation}.

\textbf{Baselines:} For fairness, we mainly consider the following models as baseline models: DMs-based approaches: conditional score-based diffusion models for Tabular Data (CSDI\_T)~\cite{tashiro2021csdi}, MissDiff~\cite{ouyang2023missdiff}; Non-DMs: Optimal Transport Imputer (Sink)~\cite{muzellec2020missing}, Transform Distribution Matching (TDM, state-of-the-art)~\cite{zhao2023transformed},  Generative Adversarial Imputation Nets (GAIN)~\cite{yoon2018gain}, Missing Data Importance-Weighted Autoencoder (MIWAE)~\cite{mattei2019miwae}, 
and  Missing data Imputation Refinement And Causal
LEarning (MIRACLE)~\cite{kyono2021miracle}. Concerning experimental details are given in~\Cref{appendix-TrainingProtocols}.

\textbf{Evaluation Metric:} We choose the mean absolute error (abbreviated to `MAE') and squared Wasserstein distance (abbreviated to `Wass') as evaluation metrics.\footnote{Detailed information about these two metrics is given in~\Cref{appendix-subsec:evaluationProtocols}. We also report concerning results under the MNAR scenario in~\Cref{appendix-sec:additionalEmpiricalEvidence} for completeness. 
}


\subsection{Baseline Comparison Results}
Baseline comparison results are given in~\Cref{tab:ComparisonResults}, and the following observations can be given
\begin{itemize}[leftmargin=*]
  \item{Models with neural architectures such as MIRACLE, MIWAE, and TDM demonstrate superior performance compared to models lacking such architectures. This observation suggests that integrating neural networks into MDI tasks can significantly enhance model performance. }
  \item{DM-based imputation approaches generally perform worse than other MDI methods. This outcome indicates that despite the incorporation of complex nonlinear neural architectures to boost performance, employing diversification-oriented generative approaches may not align well with the precision requirements of MDI tasks.}
  \item{Our proposed KnewImp method consistently ranks as the best or second-best across most comparisons. Notably, KnewImp significantly outperforms other DM-based MDI approaches, underscoring the effectiveness of our analytical enhancements and innovations in~\Cref{subsec:DiffMDIRelationship,subsec:RegularClosedIter,subsec:JointDistModeling}.}
\end{itemize}
\flushleft
\begin{table}[htbp]
  \caption{Performance of MAE and Wass metrics at 30\% missing rate, and `$^*$' marks that KnewImp outperforms significantly at $p$-value < 0.05 over paired samples $t$-test. Best results are \textbf{bolded} and the second best results are \uline{underliend}. Other results like standard deviation are given in appendix.}
  \label{tab:ComparisonResults}
  \resizebox{1.0\linewidth}{!}{
\begin{tabular}{c|l|ll|ll|ll|ll|ll|ll|ll|ll}
\toprule
\multirow{2}{*}{Scenario} & \multicolumn{1}{c|}{\multirow{2}{*}{Model}} & \multicolumn{2}{c|}{BT} & \multicolumn{2}{c|}{BCD} & \multicolumn{2}{c|}{CC} & \multicolumn{2}{c|}{CBV} & \multicolumn{2}{c|}{IS} & \multicolumn{2}{c|}{PK} & \multicolumn{2}{c|}{QB} & \multicolumn{2}{c}{WQW} \\ \cmidrule{3-18} 
                          & \multicolumn{1}{c|}{}                       & MAE       & Wass       & MAE       & Wass        & MAE       & Wass       & MAE       & Wass        & MAE      & Wass        & MAE       & Wass       & MAE       & Wass        & MAE       & Wass        \\ \midrule
\multirow{8}{*}{MAR}      & CSDI\_T                                     & 0.93$^*$  & 3.44$^*$   & 0.92$^*$  & 18.2$^*$    & 0.85$^*$  & 2.82$^*$   & 0.81$^*$  & 3.86$^*$    & 0.70$^*$  & 16.9$^*$   & 0.99$^*$  & 15.9$^*$  & 0.65$^*$  & 20.1$^*$    & 0.77$^*$  & 4.13$^*$    \\
                          & MissDiff                                    & 0.85$^*$   & 2.20$^*$   & 0.91$^*$   & 16.5$^*$  & 0.87$^*$   & 1.59$^*$  & 0.83$^*$   & 3.87$^*$   & 0.72$^*$  & 13.3$^*$  & 0.92$^*$  & 17.1$^*$  & 0.63$^*$   & 26.3$^*$  & 0.75$^*$   & 6.88$^*$   \\
                          & GAIN                                        & 0.75$^*$  & 0.65$^*$   & 0.54$^*$  & 1.64$^*$    & 0.75$^*$  & 0.67$^*$   & 0.68$^*$  & 0.68$^*$    & 0.56$^*$ & 1.88$^*$    & \uline{0.59}$^*$  & \uline{1.90}$^*$    & 0.65$^*$  & 5.05$^*$    & 0.68$^*$  & 0.87$^*$    \\
                          & MIRACLE                                     & \uline{0.62}$^*$  & \uline{0.38}       & 0.55$^*$  & 1.92$^*$    & \textbf{0.43}      & \textbf{0.25}       & \uline{0.55}$^*$  & \uline{0.46}$^*$    & 3.39$^*$ & 35.1$^*$   & 4.14$^*$  & 34.1$^*$  & \uline{0.46}      & \textbf{2.87}$^*$    & \uline{0.51}$^*$  & \uline{0.56}        \\
                          & MIWAE                                       & 0.64      & 0.53       & \uline{0.52}$^*$  & \uline{1.54}$^*$    & 0.76$^*$  & 0.64$^*$   & 0.82$^*$  & 0.92$^*$    & \uline{0.50}$^*$  & \uline{1.87}$^*$    & 0.65$^*$  & 1.98$^*$   & 0.55$^*$  & 5.05$^*$    & 0.62$^*$  & 0.75$^*$    \\
                          & Sink                                        & 0.87$^*$  & 0.92$^*$   & 0.92$^*$  & 3.84$^*$    & 0.88$^*$  & 0.83$^*$   & 0.84$^*$  & 0.98$^*$    & 0.75$^*$ & 2.43$^*$    & 0.94$^*$  & 3.61$^*$   & 0.65$^*$  & 4.71$^*$    & 0.76$^*$  & 1.04$^*$    \\
                          & TDM                                         & 0.83$^*$  & 0.89$^*$   & 0.83$^*$  & 3.47$^*$    & 0.81$^*$  & 0.73$^*$   & 0.76$^*$  & 0.85$^*$    & 0.62$^*$ & 1.96$^*$    & 0.86$^*$  & 3.36$^*$   & 0.59$^*$  & 4.46$^*$    & 0.73$^*$  & 0.99$^*$    \\
                          & \textbf{KnewImp}                                     & \textbf{0.52}      & \textbf{0.38}       & \textbf{0.34}      & \textbf{0.82}        & \uline{0.35}      & \uline{0.25}       & \textbf{0.31}      & \textbf{0.20}         & \textbf{0.39}     & \textbf{1.31}        & \textbf{0.44}      & \textbf{1.21}       & \textbf{0.45}      & \uline{3.50}        & \textbf{0.46}      & \textbf{0.55}        \\ \midrule
\multirow{8}{*}{MCAR}     & CSDI\_T                                     & 0.73$^*$  & 1.93$^*$   & 0.73$^*$  & 15.5$^*$   & 0.85$^*$  & 2.71$^*$   & 0.83$^*$  & 3.79$^*$    & 0.76$^*$ & 15.2$^*$   & 0.72$^*$  & 12.4$^*$  & 0.57$^*$  & 19.9$^*$   & 0.78$^*$  & 4.11$^*$    \\
                          & MissDiff                                    & 0.72$^*$   & 1.62$^*$  & 0.73$^*$   & 14.4$^*$  & 0.84$^*$   & 1.23$^*$  & 0.82$^*$   & 3.31$^*$   & 0.75$^*$  & 13.01$^*$  & 0.71$^*$  & 14.1$^*$  & 0.56$^*$   & 19.7$^*$  & 0.76$^*$   & 4.95$^*$   \\
                          & GAIN                                        & 0.72$^*$  & 0.39$^*$   & \uline{0.38}$^*$  & \uline{1.41}$^*$    & 0.78$^*$  & 0.73$^*$   & 0.72$^*$  & 0.99$^*$    & \uline{0.57}$^*$ & \uline{3.72}$^*$    & \uline{0.46}$^*$  & \uline{1.70}        & 0.42$^*$  & \uline{3.62}        & 0.73$^*$  & 1.14$^*$    \\
                          & MIRACLE                                     & \uline{0.52}      & \textbf{0.15}$^*$   & 0.44$^*$  & 1.94$^*$    & \uline{0.53}$^*$  & \uline{0.35}       & \uline{0.61}$^*$  & \uline{0.72}$^*$    & 2.99$^*$ & 52.9$^*$   & 3.38$^*$  & 42.8$^*$  & \uline{0.35}      & \textbf{2.71}$^*$    & \uline{0.56}$^*$  & \textbf{0.75}        \\
                          & MIWAE                                       & 0.58$^*$  & 0.24       & 0.50$^*$   & 2.55$^*$    & 0.76$^*$  & 0.69$^*$   & 0.83$^*$  & 1.24$^*$    & 0.64$^*$ & 4.95$^*$    & 0.51$^*$  & 2.05$^*$   & 0.48$^*$  & 5.87$^*$    & 0.67$^*$  & 0.95$^*$    \\
                          & Sink                                        & 0.73$^*$  & 0.48$^*$   & 0.75$^*$  & 4.39$^*$    & 0.84$^*$  & 0.85$^*$   & 0.82$^*$  & 1.27$^*$    & 0.75$^*$ & 4.94$^*$    & 0.74$^*$  & 3.36$^*$   & 0.61$^*$  & 5.92$^*$    & 0.76$^*$  & 1.25$^*$    \\
                          & TDM                                         & 0.68$^*$  & 0.42$^*$   & 0.63$^*$  & 3.57$^*$    & 0.77$^*$  & 0.75$^*$   & 0.77$^*$  & 1.15$^*$    & 0.66$^*$ & 4.20$^*$     & 0.64$^*$  & 2.89$^*$   & 0.52$^*$  & 5.34$^*$    & 0.74$^*$  & 1.20$^*$     \\
                          & \textbf{KnewImp}                                     & \textbf{0.48}      & \uline{0.18}       & \textbf{0.25}      & \textbf{0.80}         & \textbf{0.47}      & \textbf{0.34}       & \textbf{0.42}      & \textbf{0.44}        & \textbf{0.44}     & \textbf{3.05}        & \textbf{0.32}      & \textbf{1.01}       & \textbf{0.34}      & 3.66        & \textbf{0.53}      & \uline{0.76}        \\ \bottomrule
\end{tabular}
  }
  \vspace{-0.1cm}
  \end{table}

\begin{table}[htbp]
\caption{Ablation Study Results with missing rate at 30\%, and `$^*$' marks that KnewImp outperforms significantly at $p$-value < 0.05 over paired samples $t$-test. Best results are \textbf{bolded}.}
\label{tab:AblationStudy}
  \resizebox{1.0\linewidth}{!}{
\begin{tabular}{c|l|l|ll|ll|ll|ll|ll|ll|ll|ll}
\toprule
\multirow{2}{*}{Missing} & \multicolumn{1}{c|}{\multirow{2}{*}{NER}} & \multicolumn{1}{c|}{\multirow{2}{*}{Joint}} & \multicolumn{2}{c|}{BT}              & \multicolumn{2}{c|}{BCD}             & \multicolumn{2}{c|}{CC}              & \multicolumn{2}{c|}{CBV}             & \multicolumn{2}{c|}{IS}              & \multicolumn{2}{c|}{PK}              & \multicolumn{2}{c|}{QB} & \multicolumn{2}{c}{WQW} \\ \cmidrule{4-19} 
                         & \multicolumn{1}{c|}{}                     & \multicolumn{1}{c|}{}                       & MAE      & \multicolumn{1}{l|}{Wass} & MAE      & \multicolumn{1}{l|}{Wass} & MAE      & \multicolumn{1}{l|}{Wass} & MAE      & \multicolumn{1}{l|}{Wass} & MAE      & \multicolumn{1}{l|}{Wass} & MAE      & \multicolumn{1}{l|}{Wass} & MAE        & Wass       & MAE        & Wass       \\ \midrule
\multirow{4}{*}{MAR}     & \XSolidBrush                                     & \XSolidBrush                                        & 0.96$^*$ & 3.82$^*$                  & 1.05$^*$ & 20.2$^*$                 & 1.04$^*$ & 5.47$^*$                  & 0.86$^*$ & 5.81$^*$                  & 0.67$^*$ &20.2$^*$                 & 1.06$^*$ & 15.6$^*$                 & 0.72$^*$   & 22.5$^*$  & 0.79$^*$   & 6.49$^*$   \\
                         & \XSolidBrush                                      & \Checkmark                                       & 0.54     & 0.42                      & 0.34     & 0.82                      & 0.61$^*$ & 0.40$^*$                   & 0.58$^*$ & 0.47$^*$                  & 0.43$^*$ & 1.34                      & 0.46$^*$ & 1.25$^*$                  & 0.47$^*$   & 3.56$^*$   & 0.55$^*$   & 0.64$^*$   \\
                         & \Checkmark                                      & \XSolidBrush                                       & 0.96$^*$ & 3.83$^*$                  & 1.05$^*$ & 20.3$^*$                 & 1.04$^*$ & 5.49$^*$                  & 0.86$^*$ & 5.83$^*$                  & 0.67$^*$ & 20.2$^*$                 & 1.06$^*$ & 15.7$^*$                 & 0.72$^*$   & 22.5$^*$  & 0.79$^*$   & 6.51$^*$   \\
                         & \Checkmark                                      & \Checkmark                                        & \textbf{0.52}     & \textbf{0.38}                      & \textbf{0.34}     & \textbf{0.82}                      & \textbf{0.35}     & \textbf{0.25}                      & \textbf{0.31}     & \textbf{0.20}                      & \textbf{0.39}     & \textbf{1.31}                      & \textbf{0.44}     & \textbf{1.21}                      & \textbf{0.45}       & \textbf{3.50}       & \textbf{0.46}       & \textbf{0.55}       \\ \midrule
\multirow{4}{*}{MCAR}    & \XSolidBrush                                     & \XSolidBrush                                        & 0.72$^*$ & 2.11$^*$                  & 0.74$^*$ & 16.7$^*$                 & 0.85$^*$ & 3.72$^*$                  & 0.83$^*$ & 5.22$^*$                  & 0.74$^*$ & 18.4$^*$                 & 0.71$^*$ & 12.7$^*$                 & 0.58$^*$   & 20.1$^*$  & 0.76$^*$   & 5.57$^*$   \\
                         & \XSolidBrush                                      & \Checkmark                                       & 0.52$^*$ & 0.17$^*$                  & 0.25     & 0.79                      & 0.62$^*$ & 0.46$^*$                  & 0.61$^*$ & 0.71$^*$                  & 0.46     & 3.05                      & 0.34     & 1.09                      & 0.36$^*$   & 3.74$^*$   & 0.58$^*$   & 0.82$^*$   \\
                         & \Checkmark                                      & \XSolidBrush                                       & 0.72$^*$ & 2.12$^*$                  & 0.73$^*$ & 16.8$^*$                 & 0.86$^*$ & 3.73$^*$                  & 0.83$^*$ & 5.24$^*$                  & 0.74$^*$ & 18.4$^*$                 & 0.71$^*$ & 12.8$^*$                 & 0.58$^*$   & 20.1$^*$  & 0.76$^*$   & 5.60$^*$    \\
                         & \Checkmark                                      & \Checkmark                                        & \textbf{0.48}     & \textbf{0.18}                      & \textbf{0.25}     & \textbf{0.80}                      & \textbf{0.47}     & \textbf{0.34}                      & \textbf{0.42}     & \textbf{0.44}                      & \textbf{0.44}     & \textbf{3.05}                      & \textbf{0.32}     & \textbf{1.01}                      & \textbf{0.34}       & \textbf{3.66}       & \textbf{0.53}       & \textbf{0.76}       \\ \bottomrule
\end{tabular}
  }
\vspace{-0.3cm}
\end{table}
\justifying
\subsection{Ablation Study Results}\label{subsec:abResults}
In this subsection, we conduct the ablation study to assess the contributions of two key components in our KnewImp approach: the NER term and the joint modeling strategy (referred to as 'Joint'). The results of this study are detailed in \Cref{tab:AblationStudy}. Analysis of the data between the second and last rows of~\Cref{tab:AblationStudy} reveals that, in the absence of the NER, the proposal distribution $r(\boldsymbol{X}^{\text{(miss)}})$ may become pathological, leading to diminished model performance. Additionally, when comparing results from the first, third, and last rows, it becomes evident that modeling the joint distribution directly, rather than inferring it from the conditional distribution, significantly enhances model performance. This finding underscores the effectiveness of the strategies we have implemented, as discussed in~\Cref{subsec:JointDistModeling}. Overall, the ablation study underscores the critical roles of both the NER term and the joint distribution learning strategy in promoting the performance of KnewImp.

\subsection{Sensitivity Analysis}\label{subsec:sensResults}
In this subsection, we analyze the impact of key hyperparameters within the KnewImp approach, including the bandwidth $h$ of the RBF kernel function, the hidden units $\mathrm{HU}_{\text{score}}$ in the score network, the weight $\lambda$ of the NER term, and the discretization step size $\eta$ for simulating the ODE defined in~\Cref{eq:jointVelocityField}. The profound influence of these hyperparameters on learning objectives and overall performance is substantiated by the experimental results presented in~\Cref{fig:sensResult}. Initially, we explore the effects of varying the bandwidth $h$. We observe that an increase in bandwidth correlates with a decrease in imputation accuracy. For instance, as the bandwidth increases from 0.5 to 2.0, the MAE and Wasserstein distance escalate from 0.35 and 0.25 to 0.82 and 0.74, respectively. This trend suggests that excessive bandwidth can lead to an over-smoothed velocity field, expanding the exploration space of the distribution $r(\boldsymbol{X}^{\text{(joint)}})$ excessively and failing to adequately `concentrate' this distribution, ultimately diminishing performance. Subsequently, we examine changes in the score network's hidden units. Increasing the hidden units from 256 to 512 appears to decrease imputation accuracy, likely due to overfitting issues associated with larger neural networks. Next, we adjust the strength of the NER term and find that increasing its intensity generally improves imputation accuracy. This supports the necessity of the NER term, further validating its effectiveness. Lastly, we investigate the discretization step size for the ODE. We find that accuracy initially increases with smaller step sizes but then decreases. This pattern is consistent with ODE simulation behavior, where smaller step sizes require longer to converge, potentially resulting in lower accuracy within a predefined time. Conversely, larger step sizes increase discretization errors, adversely affecting accuracy as well.

\begin{figure}[htbp]
  \vspace{-1.0cm}
  \centering
  \subfigure[MAR with 30\% missing rate at CC dataset.]{\includegraphics[width=0.48\linewidth]{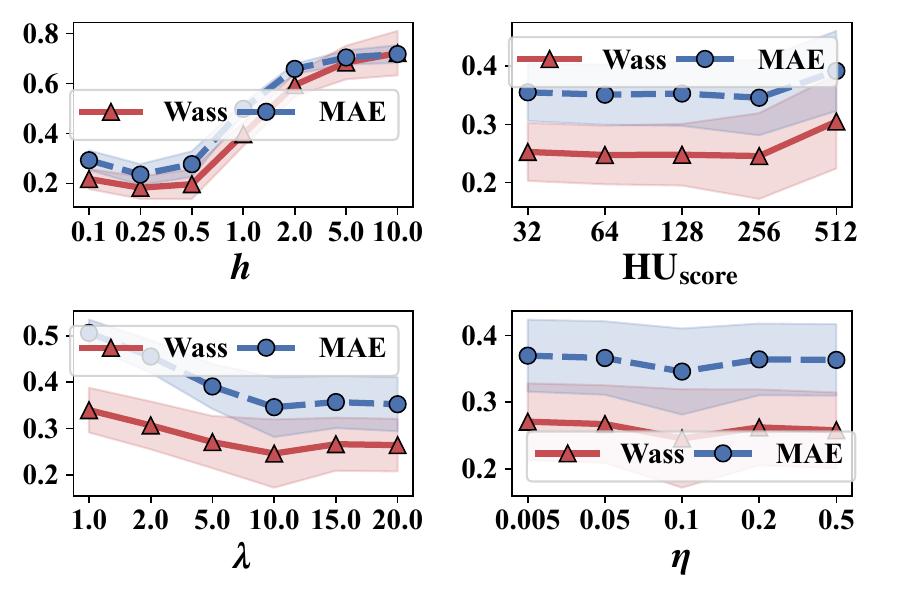}}
  \subfigure[MCAR with 30\% missing rate at CC dataset.]{\includegraphics[width=0.48\linewidth]{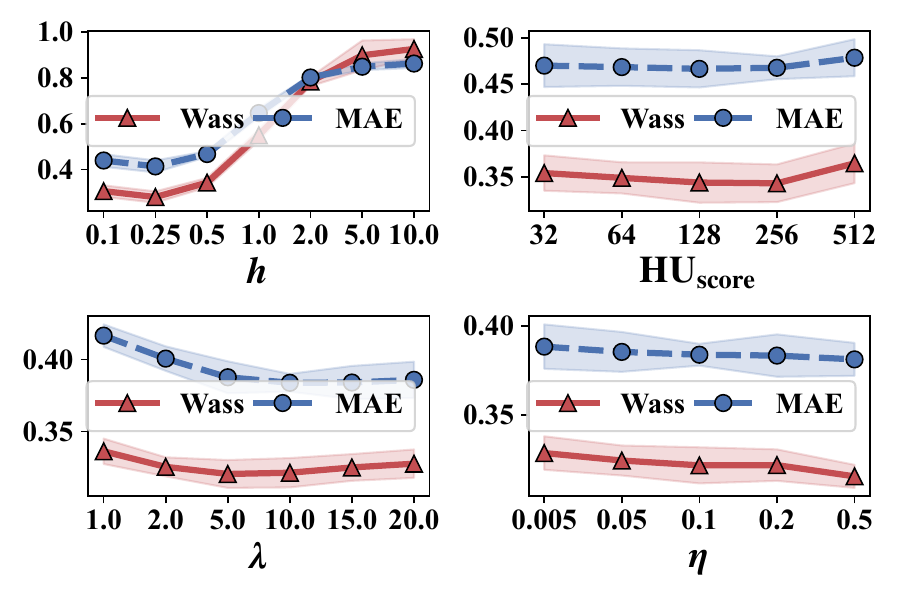}}
  \vspace{-0.3cm}
  \caption{Parameter sensitivity of KnewImp on bandwidth for kernel function ($h$), hidden unit of score network $\text{HU}_{\text{score}}$, NER weight $\lambda$, and discretization step $\eta$ for~\Cref{eq:jointVelocityField} on CC dataset. Mean values and one standard deviations from mean are represented by scatters and shaded area, respectively.
  }
  \label{fig:sensResult}
  \vspace{-0.5cm}
\end{figure}


\section{Related Works}
\subsection{Diffusion Models for Missing Data Imputation} 
The impressive ability of DMs to synthesize data has inspired extensive research into their application for MDI tasks~\cite{wang2024deep,yang2024survey}. Among the pioneering efforts, the Conditional Score-based Diffusion models for Imputation (CSDI)~\cite{tashiro2021csdi} was the first to adapt diffusion models for time-series MDI, substituting the score function with a conditional distribution and pioneering a novel model training strategy by masking parts of the observational data. Building on this, to address categorical data in tabular datasets, CSDI\_T~\cite{zheng2022diffusion} introduced an embedding layer within the feature extractor. To enhance inference efficiency, the conditional Schr\"odinger bridge method for probabilistic time series imputation proposed modeling the diffusion process as a Schr\"odinger bridge~\cite{chen2023provably}. Meanwhile, MissDiff~\cite{ouyang2023missdiff} sought to bypass the masking mechanism typically used during score function training by using missing data information as a mask matrix to improve the training process.

Despite these advancements from the perspective of feature extraction module~\cite{alcaraz2022diffusion,xu2023density}, loss function~\cite{ouyang2023missdiff}, and model inference approach~\cite{wang2023observed}, the reconciliation of the inherent diversity-seeking nature of DMs' generative processes and the accuracy-centric demands of MDI task remains underexplored. To our knowledge, this paper is the first to elucidate the relationship between DM generative processes and MDI tasks from an optimization perspective (\Cref{subsec:DiffMDIRelationship}). Based on these insights, we further propose our KnewImp approach, which prioritizes MDI accuracy (\Cref{subsec:RegularClosedIter}).


\subsection{Modeling Conditional Distribution by Joint Distribution}
Modeling conditional distribution as joint distribution remains an opening question and has a broad potential for application~\cite{yu2006supervised,chemseddine2024conditional,8892406}. Conditional sliced WGF~\cite{du2023nonparametric} first empirically validated that the velocity field of joint distribution and conditional distribution are identical when choosing sliced Wasserstein distance as cost functional. After that, reference~\cite{9944913} extended this relationship and derived the relationship between conditional and joint distribution in various discrepancy metrics like f-divergence, Wasserstein distance, and integral probability metrics. On this basis, reference~\cite{hagemann2023posterior} further theoretically proved the equivalence of velocity fields for conditional and joint distribution.

However, the objective of KnewImp does not belong to any kind of discrepancy metric~\cite{9944913}. The most similar discrepancy metric is Kullback Leiber (KL) divergence ($-\int{r(x)\frac{\log{r(x)}}{p(x)}}\mathrm{d}x$). Notably, KL divergence contains diversification-encouraging `positive' entropy $\mathbb{H}[r(x)]$ as the regularization term, and the regularization term in our study is diversification-discouraging `negative' entropy (i.e., $-\mathbb{H}[r(x)]$), and thus more than directly applying these results to our research is needed. On this basis, our theoretical contribution proves that this joint distribution modeling approach can still be applied when the functional is regularized by the negative entropy (\Cref{subsec:JointDistModeling}).

\section{Conclusions}
Existing DM-based MDI approaches face two critical issues that may hinder model performance: inaccurate imputation and difficult training. The first issue arises from the inherent conflict between the diversification-oriented generative process of DMs and the accuracy-focused demands of the MDI tasks. The second issue stems from the selection complexities of the masking matrix to facilitate conditional distribution between missing and observed data. To this end, this study initially applied the WGF framework to analyze DM-based MDI tasks, elucidating the relationship between the optimization objectives of DMs' generative process and the MDI task, and answered the reason for inaccurate imputation issue from the perspective of optimization. On this basis, we proposed our KnewImp approach by redesigning a novel effect cost functional and developing the corresponding DM-like imputation procedure within WGF and RKHS. Furthermore, we proved that another joint-distribution-related cost functional can result in the same imputation procedure, which naturally copes with the need for a masking matrix during model training. Finally, we conducted extensive experiments and demonstrated that the KnewImp approach can mitigate the abovementioned issues and achieve better performance than prevalent baseline models.

\newpage
\bibliographystyle{plainnat}
\bibliography{ref}

\newpage
\appendix
\counterwithin{equation}{section}
\setcounter{equation}{0}
\counterwithin{table}{section}
\setcounter{table}{0}
\counterwithin{figure}{section}
\setcounter{figure}{0}
\begin{appendices}
    \let\oldsection\section
    \renewcommand{\section}[1]{
        \oldsection{\textcolor{black}{#1}}
        \addcontentsline{apc}{section}{\protect\numberline{\thesection}\textcolor{blue}{#1}}
    }

    \let\oldsubsection\subsection
    \renewcommand{\subsection}[1]{
        \oldsubsection{\textcolor{black}{#1}}
        \addcontentsline{apc}{subsection}{\protect\numberline{\thesubsection}\textcolor{blue}{#1}}
    }


\addcontentsline{toc}{chapter}{Appendices} 
\listofappendix 

\newpage

\section{Detailed Preliminaries of Wasserstein Gradient Flow}\label{sec:PreliminariesDetailed}
In this section, we want to introduce the WGF technique and its application scenarios to better understand this paper. Before introduction, the following concepts are listed to better understand the WGF framework:
\begin{enumerate}[leftmargin=*]
\item{\textbf{Wasserstein Metric:} Let $\mathcal{P}_2(\mathbb{R}^{\mathrm{D}})$ represent the space of probability measures on $\mathbb{R}^{\mathrm{D}}$ that possess finite second moments. Formally, this is expressed as $\mathcal{P}_2(\mathbb{R}^{\mathrm{D}}) = \{\mu \in \mathcal{M}(\mathbb{R}^{\mathrm{D}}) \mid \int{\Vert x\Vert^2\mathrm{d}\mu(x)} < \infty \}$, where $\mathcal{M}(\mathbb{R}^{\mathrm{D}})$ denotes the set of all probability measures on $\mathbb{R}^{\mathrm{D}}$. Considering any two probability measures $\mu, \nu \in \mathcal{P}_2(\mathbb{R}^{\mathrm{D}})$, we define the Wasserstein-$p$ distance between them as follows:
\begin{equation}
\mathcal{W}_{p}=\left( \mathop{\inf}_{\pi\in\Gamma(\mu,\nu)} \int_{\mathbb{R}^{\mathrm{D}}\times \mathbb{R}^{\mathrm{D}}}{\Vert x-y \Vert^p\mathrm{d}\pi(x,y)} \right)^{\frac{1}{p}}.
\end{equation}
Here, $\Gamma(\mu, \nu)$ represents the collection of all joint distributions (couplings) between $\mu$ and $\nu$. For every joint distribution $\pi \in \Gamma(\mu, \nu)$, it holds that $\mu(x) = \int_{\mathbb{R}^{\mathrm{D}}} \pi(x, y) \, \mathrm{d}y$ and $\nu(y) = \int_{\mathbb{R}^{\mathrm{D}}} \pi(x, y) \, \mathrm{d}x$. The integral on the right-hand side encapsulates the transportation cost in the optimal transport (OT) problem, framed by Kantorovich's formulation, where $\pi^*$ denotes the optimal transportation plan.

Furthermore, leveraging Jensen's inequality facilitates demonstrating the monotonicity of the Wasserstein-$p$ distance, affirming that for $1 \le p \le q$, the relationship $\mathcal{W}_p(\mu, \nu) \le \mathcal{W}_q(\mu, \nu)$ invariably holds. Building on this principle, we can articulate the inner product within the measurable space $(\mathcal{P}_2(\mathbb{R}^{\mathrm{D}}), \mathcal{W})$ as delineated below:

\begin{equation}
\langle \mu, \nu \rangle_{\mu_\tau} = \int_{\mathbb{R}^{\mathrm{D}}}{\langle \mu, \nu \rangle_{\mathbb{R}^{\mathrm{D}}}\mathrm{d}\mu_\tau}
\end{equation}
  }
\item{\textbf{Gradient Flow in Wasserstein Space:} 
Consider a functional $\mathcal{F}$ associated with $\mu \in \mathcal{P}_2(\mathbb{R}^{\mathrm{D}})$. Our objective is to identify the optimal $\mu$ that minimizes $\mathcal{F}$:
\begin{equation}
  \mathop{\min}_{\mu \in \mathcal{P}_2(\mathbb{R}^{\mathrm{D}})} \mathcal{F}(\mu) +\text{const}.
\end{equation}
To facilitate the decrease of $\mathcal{F}(\mu)$, we introduce a velocity field $v_{\mu_\tau} : \mathbb{R}^{\mathrm{D}} \rightarrow \mathbb{R}^{\mathrm{D}}$ designed to expedite the reduction of $\mathcal{F}(\mu)$ as $\mu$ evolves under this field. Utilizing the chain rule yields:
\begin{equation}
  \frac{\mathrm{d}\mathcal{F}(\mu)}{\mathrm{d}\tau} = \int \left< \nabla \frac{\delta \mathcal{F}}{\delta \mu_\tau}, v_{\mu_\tau} \right> \, \mathrm{d}\mu_\tau,
\end{equation}
where $\delta$ represents the first variation operator. To ensure the decrease of $\mathcal{F}(\mu)$, i.e., $ \frac{\mathrm{d}\mathcal{F}(\mu)}{\mathrm{d}\tau} \le 0$, the velocity field is defined as:
\begin{equation}\label{eq:gradFlowEquation}
  v_{\mu_\tau} = - \nabla \frac{\delta \mathcal{F}}{\delta \mu_\tau}.
\end{equation}
The decline of $\mathcal{F}(\mu)$ aligns with the continuity equation:
\begin{equation}\label{eq:continuityEquation}
  \frac{\partial \mu_\tau}{\partial \tau} = -\nabla \cdot (\mu_\tau v_{\mu_\tau}).
\end{equation}
Hence, the continuity equation~\Cref{eq:continuityEquation}, coupled with the velocity field articulated in~\Cref{eq:gradFlowEquation}, is recognized as the \textit{Wasserstein Gradient Flow}, delineating the steepest descent in the Wasserstein space.
}
\item{\textbf{Simulation of WGF \& Sampling:} 
There are primarily two discretization techniques for the WGF: the forward scheme and the backward scheme.
\begin{itemize}[leftmargin=*]
\item{\textbf{Forward Scheme:}
The forward scheme applies gradient descent within the Wasserstein space to identify the direction of the steepest descent. For an energy functional $\mathcal{F}(\mu_\tau)$ with a specified step size $\eta$, the update rule in the forward scheme is formulated as:
\begin{equation}
  \mu_{t+1} = (\mathrm{Id} - \nabla\frac{\delta \mathcal{F}}{\delta \mu_\tau})_{\#}\mu_\tau,
\end{equation}
facilitating an intuitive and direct update mechanism that emulates the gradient flow in the Euclidean space but transposed into the Wasserstein space.
}
\item{\textbf{Backward Scheme:}
Conversely, the backward scheme, often referred to as the Jordan-Kinderlehrer-Otto (JKO) scheme~\cite{jordan1998variational}, represents a more implicit discretization approach. It defines the subsequent distribution $\mu_{\tau+1}$ by solving an optimization problem that balances the energy decrease and the transportation cost. This scheme is mathematically denoted as:
\begin{equation}
  \mu_{\tau+1} = \mathop{\arg\min}_{\mu\in\mathcal{P}_2(\mathbb{R}^{\mathrm{D}})} \mathcal{F}(\mu) + \frac{1}{2\eta}\mathcal{W}^2_2(\mu, \mu_\tau),
\end{equation}
thereby integrating the energy minimization and transport efficiency into a single variational problem that reflects the inherent structure of the Wasserstein space.
}
\end{itemize}
These schemes provide distinct yet complementary approaches to discretizing the dynamics defined by WGFs, offering different perspectives and tools for the analysis and computation of these flows.
}
\end{enumerate}
Leveraging the WGF framework, if we designate the functional $\mathcal{F}$ to be the KL divergence, it yields a particular formulation for the velocity field. 
\begin{equation}\label{eq:KLdivVariation}
  v_{\mu_\tau} = -\nabla\frac{\delta \mathbb{D}_{\text{KL}}(\mu_\tau\Vert p)}{\delta \mu_\tau} = \nabla\log{p} - \nabla\log{\mu_\tau}.
\end{equation}
On this basis, plug~\Cref{eq:KLdivVariation} into~\Cref{eq:continuityEquation}, we can get the following PDE:
\begin{equation}\label{eq:KLVelocity}
  \frac{\partial \mu_\tau}{\partial \tau} = -\nabla \cdot (\mu_\tau \nabla\log{p}) + \nabla\cdot\nabla\mu_\tau.
\end{equation}
According to Theorem 5.4 in reference~\cite{sarkka2019applied}, denote the random sample from distribution $p$ as $x$, we can obtain the following SDE called Langevin equation~\cite{welling2011bayesian} for implementing this gradient flow easily:
\begin{equation}\label{eq:LangevinDynamicsExpress}
  \mathrm{d}x = \nabla_{x}{\log{p}(x)}\mathrm{d}\tau +\sqrt{2}\mathrm{d}W_\tau,
\end{equation}
where $\mathrm{d}W_\tau$ is the Wiener process (also known as Brownian motion).
\section{Theoretical Analysis}\label{appendix-sec:theoryAnalysis}
\subsection{Implementation Difficulty of Velocity Field}\label{appendix-subsec:impleDiff}
The difficulty of implementing velocity can be given from two perspectives, namely ODE-based implementation and SDE-based implementation, to the best of our knowledge. In this subsection, we want to discuss these two implementation approaches in detail.

\textbf{ODE-based Implementation:}
\begin{enumerate}[leftmargin=*]
  \item{\textbf{WGF framework:} 
According to the continuity equation, we can obtain the following velocity field:
\begin{equation}
  \frac{\mathrm{d}\boldsymbol{X}^{\text{(miss)}}}{\mathrm{d}\tau}\overset{\text{(i)}}{=}v_\tau(\boldsymbol{X}^{\text{(miss)}}) \overset{\text{(ii)}}{=} -[\nabla_{\boldsymbol{X}^{\text{(miss)}}}\log{ \hat{p}( \boldsymbol{X}^{(\text{miss})}\vert \boldsymbol{X}^{(\text{obs})}) } +\lambda\nabla_{\boldsymbol{X}^{\text{(miss)}}}\log{r(\boldsymbol{X}^{\text{(miss)}})} ],
\end{equation}
where (i) is based on~\Cref{subsec:GradinWasserstein}, and (ii) is based on~\Cref{eq:velocityNEG}. The expression of velocity field involves the computation of density term $r(\boldsymbol{X}^{\text{(miss)}})$~\cite{li2018gradient,chen2018unified}, which is intractable during practice. Based on this, we conclude that implementing this velocity field within the WGF framework is difficult.
  }
\item{\textbf{Probability flow ODE:} According to reference~\cite{song2020score}, if we directly plug~\Cref{eq:velocityNEG} into the FPK equation, we can get the following PDE: 
\begin{equation}
\begin{aligned}
  &\frac{\partial r(\boldsymbol{X}^{\text{(miss)}})}{\partial \tau}  \\
  =& -(\boldsymbol{X}^{\text{(miss)}} r(\boldsymbol{X}^{\text{(miss)}})) \\
 = & -\left[\nabla_{\boldsymbol{X}^{\text{(miss)}}}\log{ \hat{p}( \boldsymbol{X}^{(\text{miss})}\vert \boldsymbol{X}^{(\text{obs})}) } r(\boldsymbol{X}^{\text{(miss)}})\right] -\lambda\nabla\cdot\nabla r(\boldsymbol{X}^{\text{(miss)}}) \\
 = & 
 \begin{aligned}
 -[\nabla_{\boldsymbol{X}^{\text{(miss)}}}\log{ \hat{p}( \boldsymbol{X}^{(\text{miss})}\vert \boldsymbol{X}^{(\text{obs})}) } & r(\boldsymbol{X}^{\text{(miss)}})] -\lambda\nabla\cdot\nabla r(\boldsymbol{X}^{\text{(miss)}}) \\
 & +\frac{1}{2}\sigma_\tau^2 \nabla\cdot\nabla r(\boldsymbol{X}^{\text{(miss)}}) -\frac{1}{2}\sigma_\tau^2 \nabla\cdot\nabla r(\boldsymbol{X}^{\text{(miss)}})  
 \end{aligned}
 \\
 = & 
 \begin{aligned}
 -&\left\{\left[\nabla_{\boldsymbol{X}^{\text{(miss)}}}\log{ \hat{p}( \boldsymbol{X}^{(\text{miss})}\vert \boldsymbol{X}^{(\text{obs})}) } + (\lambda+\frac{1}{2}\sigma_\tau^2) \nabla\log{ r(\boldsymbol{X}^{\text{(miss)}}) }\right] r(\boldsymbol{X}^{\text{(miss)}})\right\} \\
  &+\frac{1}{2}\sigma_\tau^2 \nabla\cdot\nabla r(\boldsymbol{X}^{\text{(miss)}}) .
\end{aligned}
\end{aligned}
\end{equation}
When we set $\sigma_\tau$ as $0$, we can find that the corresponding ODE is~\Cref{eq:velocityNEG}, where we are obliged to compute the intractable density $r(\boldsymbol{X}^{\text{(miss)}})$.
}
\end{enumerate}
\textbf{SDE-based Implementation:}\\
If we plug~\Cref{eq:velocityNEG} into the FPK equation, the corresponding PDE can be given as follows:
\begin{equation}
  \begin{aligned}
    &\frac{\partial r(\boldsymbol{X}^{\text{(miss)}})}{\partial \tau}  \\
    =& -(v_{\tau}(\boldsymbol{X}^{\text{(miss)}}) r(\boldsymbol{X}^{\text{(miss)}})) \\
   = & -\left[\nabla_{\boldsymbol{X}^{\text{(miss)}}}\log{ \hat{p}( \boldsymbol{X}^{(\text{miss})}\vert \boldsymbol{X}^{(\text{obs})}) } r(\boldsymbol{X}^{\text{(miss)}})\right] -\lambda\nabla\cdot\nabla r(\boldsymbol{X}^{\text{(miss)}}), 
  \end{aligned}
\end{equation}
where the coefficient before Laplacian operator $\nabla\cdot\nabla$ is $-1$. To the best of our knowledge, this structure makes deriving a corresponding SDE impossible by current approaches.


\subsection{Proof \& Discussions of Propositions \& Corollaries}\label{sec:PropProof}

\begin{proposition*}[\ref{prop:inEffectiveSampling}]
Within WGF framework, DM-based MDI approaches can be viewed as finding the imputed values $\boldsymbol{X}^{(\text{imp})}$ that maximize the following objective:
\begin{equation}\label{appendix-eq:samplingFunctional}
  \mathop{\arg\max}_{\boldsymbol{X}^{(\text{miss})}} \quad  \mathbb{E}_{ r(\boldsymbol{X}^{(\text{miss})})}[{\log{ \hat{p}( \boldsymbol{X}^{(\text{miss})}\vert \boldsymbol{X}^{(\text{obs})}) }}] +\psi(\boldsymbol{X}^{(\text{miss})}) + \text{const},
\end{equation}
where const is the abbreviation of constant, and $\psi(\boldsymbol{X}^{(\text{miss})})$ is a scalar function determined by the type of SDE underlying the DMs.
\begin{itemize}[leftmargin=*]
  \item{\textbf{VP-SDE:} $\psi(\boldsymbol{X}^{(\text{miss})})=\mathbb{E}_{ r(\boldsymbol{X}^{(\text{miss})})}\{\frac{1}{4}[\boldsymbol{X}^{\text{(miss)}}]^\top[\boldsymbol{X}^{\text{(miss)}}]  -\frac{1}{2}\log{r(\boldsymbol{X}^{\text{(miss)}})} \}$}
  \item{\textbf{VE-SDE:} $\psi(\boldsymbol{X}^{(\text{miss})})=\mathbb{E}_{r(\boldsymbol{X}^{(\text{miss})})}\{ -\frac{1}{2}\log{r(\boldsymbol{X}^{\text{(miss)}})} \}$}
  \item{\textbf{sub-VP-SDE:} $\psi(\boldsymbol{X}^{(\text{miss})})=\mathbb{E}_{ r(\boldsymbol{X}^{(\text{miss})})}\{  \frac{1}{4\gamma_\tau}[\boldsymbol{X}^{\text{(miss)}}]^\top[\boldsymbol{X}^{\text{(miss)}}] -\frac{1}{2}\log{r(\boldsymbol{X}^{\text{(miss)}})}\}$, 
  where $\gamma_\tau$ is determined by noise scale $\beta_\tau$: $\gamma_\tau\coloneqq(1-\exp(-2\int_{0}^{\tau}{\beta_s\mathrm{d}s}))>0,0<\beta_1<\beta_2<\dots<\beta_{\mathrm{T}}<1$.
  }
\end{itemize}
It is important to note that in DMs, the condition $\psi(\boldsymbol{X}^{(\text{miss})}) \ge 0$ consistently holds. This assertion is supported by the fact that the inner product $[\boldsymbol{X}^{\text{(miss)}}]^\top[\boldsymbol{X}^{\text{(miss)}}] \ge 0$, and the entropy function defined as $\mathbb{H}[r(\boldsymbol{X}^{\text{(miss)}})] \coloneqq -\int{r(\boldsymbol{X}^{\text{(miss)}}) \log{r(\boldsymbol{X}^{\text{(miss)}})} \mathrm{d}\boldsymbol{X}^{\text{(miss)}}}$ is also non-negative.
\end{proposition*}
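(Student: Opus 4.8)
The plan is to identify, for each of the three SDE families, the effective velocity field that drives the reverse (generative) process once it is written in continuity-equation form, and to show that this field coincides, up to a strictly positive time-dependent scalar, with the Wasserstein gradient of the candidate objective $\mathcal{G} := \mathbb{E}_{r(\boldsymbol{X}^{(\text{miss})})}[\log\hat{p}(\boldsymbol{X}^{(\text{miss})}\vert\boldsymbol{X}^{(\text{obs})})] + \psi(\boldsymbol{X}^{(\text{miss})})$. Since a positive rescaling of a velocity field only reparametrizes time along the flow while leaving its trajectories and stationary points unchanged, this matching identifies the functional that DM-based MDI implicitly maximizes and establishes~\Cref{appendix-eq:samplingFunctional}.

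First I would fix the coefficients of each SDE: $f=-\frac{1}{2}\beta_\tau\boldsymbol{X}$ with $g_\tau^2=\beta_\tau$ for VP-SDE; $f=0$ for VE-SDE; and $f=-\frac{1}{2}\beta_\tau\boldsymbol{X}$ with $g_\tau^2=\beta_\tau\gamma_\tau$ for sub-VP-SDE. Replacing the unconditional score by the conditional score $\nabla\log\hat{p}(\boldsymbol{X}^{(\text{miss})}\vert\boldsymbol{X}^{(\text{obs})})$ and reversing time, the generative SDE reads $\mathrm{d}\boldsymbol{X}=[-f+g_\tau^2\nabla\log\hat{p}]\,\mathrm{d}s+g_\tau\,\mathrm{d}W$. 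Applying the FPK equation of~\Cref{subsec:diffusionKnowledge} and rewriting the diffusion term as $\frac{1}{2}g_\tau^2\nabla\cdot\nabla r=\nabla\cdot(\frac{1}{2}g_\tau^2\,r\,\nabla\log r)$ casts the dynamics in the continuity form $\partial_s r=-\nabla\cdot(r\,v)$ of~\Cref{subsec:GradinWasserstein}, from which I read off the effective velocity $v=-f+g_\tau^2\nabla\log\hat{p}-\frac{1}{2}g_\tau^2\nabla\log r$.

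Next I would compute the Wasserstein gradient of $\mathcal{G}$ via first variations: $\nabla\frac{\delta}{\delta r}\mathbb{E}_r[\log\hat{p}]=\nabla\log\hat{p}$; $\nabla\frac{\delta}{\delta r}\mathbb{E}_r[\frac{1}{4}\boldsymbol{X}^\top\boldsymbol{X}]=\frac{1}{2}\boldsymbol{X}$ (and with $\frac{1}{4\gamma_\tau}$ in place of $\frac{1}{4}$, $\frac{1}{2\gamma_\tau}\boldsymbol{X}$); and $\nabla\frac{\delta}{\delta r}(-\frac{1}{2}\mathbb{E}_r[\log r])=-\frac{1}{2}\nabla\log r$, where the additive $+1$ produced by the entropy variation is constant in $\boldsymbol{X}$ and is absorbed into the const term. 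Collecting terms with the stated $\psi$ gives, for sub-VP, $\nabla\frac{\delta\mathcal{G}}{\delta r}=\nabla\log\hat{p}+\frac{1}{2\gamma_\tau}\boldsymbol{X}-\frac{1}{2}\nabla\log r$, and the analogues for VP (set $\gamma_\tau\equiv1$) and VE (drop the quadratic term). A termwise comparison then yields $v=\beta_\tau\gamma_\tau\,\nabla\frac{\delta\mathcal{G}}{\delta r}$ for sub-VP, $v=\beta_\tau\,\nabla\frac{\delta\mathcal{G}}{\delta r}$ for VP, and $v=g_\tau^2\,\nabla\frac{\delta\mathcal{G}}{\delta r}$ for VE, all scalar factors being strictly positive. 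Hence the reverse dynamics is the Wasserstein gradient flow minimizing $-\mathcal{G}$, i.e.\ maximizing $\mathcal{G}$. Non-negativity of $\psi$ finally follows termwise from $\boldsymbol{X}^\top\boldsymbol{X}\ge0$ and $\mathbb{H}[r]\ge0$.

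The main obstacle I anticipate is the bookkeeping of the time reversal together with the conversion of the stochastic FPK term into a deterministic transport velocity: the extra $-\frac{1}{2}g_\tau^2\nabla\log r$ contribution is exactly what generates the entropic half of $\psi$, so a sign slip there would break the match. A secondary subtlety is justifying that the strictly positive time-dependent prefactor (e.g.\ $\beta_\tau\gamma_\tau$) is immaterial, which rests on the observation that reparametrizing $\tau$ preserves the flow's image curve and its fixed points, so the secretly maximized functional $\mathcal{G}$ is well defined independently of the noise schedule.
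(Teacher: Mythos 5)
Your proposal is correct and follows essentially the same route as the paper's proof: both cast the reverse-process Fokker--Planck equation into continuity-equation form by converting the diffusion term into a $-\tfrac{1}{2}g_\tau^2\nabla\log r$ transport contribution, identify the resulting velocity with the Wasserstein gradient of the stated functional (your explicit first-variation computations match the paper's read-off of $\mathcal{F}_{\text{VP-SDE}}$, $\mathcal{F}_{\text{VE-SDE}}$, $\mathcal{F}_{\text{sub-VP-SDE}}$), and dispose of the positive time-dependent prefactor exactly as the paper does via its change of variables $\mathrm{d}\tau\coloneqq\tfrac{\beta_\tau}{2}\mathrm{d}\tau$. The only difference is the direction of the matching (you go from candidate functional to velocity field rather than from the PDE to the functional), which is a reorganization of the same calculation.
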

\begin{proof}
  Since there are various approaches for reversing the sampling procedure of DMs, for simplicity, we mainly consider the VP-SDE, VE-SDE, and sub-VP-SDE as analysis objects. 
  \begin{itemize}[leftmargin=*]
    \item{\textbf{VP-SDE:}
  According to reference~\cite{song2020score}, the density evolution of the generative process for VP-SDE can be delineated by the following PDE:
\begin{equation}
\begin{aligned}
  \frac{\partial r(\boldsymbol{X}^{\text{(miss)}})}{\partial \tau}  =& -\nabla_{\boldsymbol{X}^{\text{(miss)}}} \cdot \left\{ 
    r(\boldsymbol{X}^{\text{(miss)}}) \left[\beta_\tau\right]\left[\frac{1}{2}\boldsymbol{X}^{\text{(miss)}} + \nabla_{\boldsymbol{X}^{\text{(miss)}}}{\log{\hat{p}(\boldsymbol{X}^{\text{(miss)}}\vert \boldsymbol{X}^{\text{(obs)}})}}\right]
   \right\}\\
     +&\frac{\beta_\tau}{2}\nabla_{\boldsymbol{X}^{\text{(miss)}}} \cdot\nabla_{\boldsymbol{X}^{\text{(miss)}}} r(\boldsymbol{X}^{\text{(miss)}}) 
\end{aligned}
\end{equation}
where $\beta_\tau\in(0, 1)$ is the noise scale.
On this basis, by chaning the variable as $\mathrm{d}\tau\coloneqq\frac{\beta_\tau}{2}\mathrm{d}\tau$~\cite{khrulkov2022understanding}, we can get the following equation:
\begin{equation}\label{eq:VPFPK}
  \frac{\partial r(\boldsymbol{X}^{\text{(miss)}})}{\partial \tau} = -\nabla_{\boldsymbol{X}^{\text{(miss)}}} \cdot \left\{
      \begin{aligned}  
        r(\boldsymbol{X}^{\text{(miss)}})  [&\frac{1}{2}\boldsymbol{X}^{\text{(miss)}} + \nabla_{\boldsymbol{X}^{\text{(miss)}}}{\log{\hat{p}(\boldsymbol{X}^{\text{(miss)}}\vert \boldsymbol{X}^{\text{(obs)}})}}\\
    & -\frac{1}{2}\nabla_{\boldsymbol{X}^{\text{(miss)}}}\log{r(\boldsymbol{X}^{\text{(miss)}})} ]
      \end{aligned}
   \right\}.
\end{equation}
Comparing~\Cref{eq:VPFPK} with~\Cref{eq:gradFlowEquation,eq:continuityEquation}, the cost functional to be minimized of this simulation procedure can be given as follows:
\begin{equation}
\begin{aligned}
  \mathcal{F}_{\text{VP-SDE}}& =-\int{r(\boldsymbol{X}^{\text{(miss)}})\left\{
  \begin{aligned}  
  \frac{1}{4}&[\boldsymbol{X}^{\text{(miss)}}]^\top[\boldsymbol{X}^{\text{(miss)}}] +\log{\hat{p}(\boldsymbol{X}^{\text{(miss)}}\vert \boldsymbol{X}^{\text{(obs)}})}\\
  & -\frac{1}{2}\log{r(\boldsymbol{X}^{\text{(miss)}})} + \text{const}
  \end{aligned}  
  \right\}\mathrm{d}\boldsymbol{X}^{\text{(miss)}}}\\
  & = -\mathbb{E}_{ r(\boldsymbol{X}^{\text{(miss)}})}\left\{
  \begin{aligned}  
  \frac{1}{4}&[\boldsymbol{X}^{\text{(miss)}}]^\top[\boldsymbol{X}^{\text{(miss)}}] +\log{\hat{p}(\boldsymbol{X}^{\text{(miss)}}\vert \boldsymbol{X}^{\text{(obs)}})}\\
  & -\frac{1}{2}\log{r(\boldsymbol{X}^{\text{(miss)}})} + \text{const}
  \end{aligned} 
  \right\}.
\end{aligned}
\end{equation}
Note that $\frac{1}{4}[\boldsymbol{X}^{\text{(miss)}}]^\top[\boldsymbol{X}^{\text{(miss)}}] \ge 0$ and $-\frac{1}{2}\int{r(\boldsymbol{X}^{\text{(miss)}})\log{r(\boldsymbol{X}^{\text{(miss)}})}\mathrm{d}\boldsymbol{X}^{\text{(miss)}}} \ge 0$ hold, and thus the proposition for VP-SDE is proved by taking the negative of the abovementioned equation.
    }
\item{\textbf{VE-SDE:} Similarly, based on reference~\cite{song2020score}, the following PDE can be given to delineate the density evolution of the generative process for VE-SDE:
\begin{equation}\label{eq:smldReverse}
\begin{aligned}
  \frac{\partial r(\boldsymbol{X}^{\text{(miss)}})}{\partial \tau} &= -\nabla_{\boldsymbol{X}^{\text{(miss)}}} \cdot \left\{ 
    r(\boldsymbol{X}^{\text{(miss)}})\left[-\frac{\mathrm{d}\sigma^2_\tau}{\mathrm{d}\tau}\right]\nabla_{\boldsymbol{X}^{\text{(miss)}}}{\log{\hat{p}(\boldsymbol{X}^{\text{(miss)}}\vert \boldsymbol{X}^{\text{(obs)}})}}  
   \right\}\\
   & + \frac{1}{2}\frac{\mathrm{d}\sigma^2_\tau}{\mathrm{d}\tau}\nabla_{\boldsymbol{X}^{\text{(miss)}}} \cdot\nabla_{\boldsymbol{X}^{\text{(miss)}}}r(\boldsymbol{X}^{\text{(miss)}}),
\end{aligned}
\end{equation} 
where $\sigma^2_\tau$ is a time varying noise scale.

As such, by chaning the variable as $\mathrm{d}\tau\coloneqq\left[\frac{\mathrm{d}\sigma^2_\tau}{\mathrm{d}\tau}\right]\mathrm{d}\tau$~\cite{khrulkov2022understanding},~\Cref{eq:smldReverse} can be reformulated as follows:
\begin{equation}\label{eq:VEFPK}
  \frac{\partial r(\boldsymbol{X}^{\text{(miss)}})}{\partial \tau} = -\nabla_{\boldsymbol{X}^{\text{(miss)}}} \cdot \left\{ 
    r(\boldsymbol{X}^{\text{(miss)}})
    \left[
      \nabla_{\boldsymbol{X}^{\text{(miss)}}}{\log{\hat{p}(\boldsymbol{X}^{\text{(miss)}}\vert \boldsymbol{X}^{\text{(obs)}})}}-\frac{1}{2}\nabla_{\boldsymbol{X}^{\text{(miss)}}}\log{r(\boldsymbol{X}^{\text{(miss)}})}
    \right] 
   \right\}.
\end{equation}
Comparing~\Cref{eq:VEFPK} with~\Cref{eq:gradFlowEquation,eq:continuityEquation}, the cost functional to be minimized of this simulation procedure can be given as follows:
\begin{equation}
\begin{aligned}
 \mathcal{F}_{\text{VE-SDE}}& =\int{r(\boldsymbol{X}^{\text{(miss)}})\left\{\frac{1}{2}\log{r(\boldsymbol{X}^{\text{(miss)}})} -\log{\hat{p}(\boldsymbol{X}^{\text{(miss)}}\vert \boldsymbol{X}^{\text{(obs)}})}+\text{const}\right\}\mathrm{d}\boldsymbol{X}^{\text{(miss)}}}\\
  & =- \mathbb{E}_{ r(\boldsymbol{X}^{\text{(miss)}})}\left\{-\frac{1}{2}\log{r(\boldsymbol{X}^{\text{(miss)}})} +\log{\hat{p}(\boldsymbol{X}^{\text{(miss)}}\vert \boldsymbol{X}^{\text{(obs)}})}+\text{const}\right\}.
\end{aligned}
\end{equation}
Note that the entropy function $-\frac{1}{2}\int{r(\boldsymbol{X}^{\text{(miss)}})\log{r(\boldsymbol{X}^{\text{(miss)}})}\mathrm{d}\boldsymbol{X}^{\text{(miss)}}} \ge 0$ holds, and thus the proposition for VE-SDE is proved by taking the negative of the abovementioned equation.

}
\item{\textbf{sub-VP-SDE:} 
Based on reference~\cite{song2020score}, the following PDE can be given to delineate the density evolution of the generative process for sub-VP-SDE:
\begin{equation}
\begin{aligned}
  \frac{\partial r(\boldsymbol{X}^{\text{(miss)}})}{\partial \tau}  =& -\nabla_{\boldsymbol{X}^{\text{(miss)}}} \cdot \left\{ 
    r(\boldsymbol{X}^{\text{(miss)}}) \left[\beta_\tau\right]\left[\frac{1}{2}\boldsymbol{X}^{\text{(miss)}} +  \gamma_\tau\nabla_{\boldsymbol{X}^{\text{(miss)}}}{\log{\hat{p}(\boldsymbol{X}^{\text{(miss)}}\vert \boldsymbol{X}^{\text{(obs)}})}}\right]
   \right\}\\
     +&\frac{\beta_\tau}{2}\gamma_\tau\nabla_{\boldsymbol{X}^{\text{(miss)}}} \cdot\nabla_{\boldsymbol{X}^{\text{(miss)}}} r(\boldsymbol{X}^{\text{(miss)}}) ,
\end{aligned}
\end{equation}
where $\gamma_\tau\coloneqq(1-\exp(-2\int_{0}^{\tau}{\beta_s\mathrm{d}s}))>0$. On this basis, by chaning the variable as $\mathrm{d}\tau\coloneqq\frac{\beta_\tau}{2}\mathrm{d}\tau$, we can get the following equation:
\begin{equation}\label{eq:subVPFPK}
  \frac{\partial r(\boldsymbol{X}^{\text{(miss)}})}{\partial \tau} = -\nabla_{\boldsymbol{X}^{\text{(miss)}}} \cdot \left\{
      \begin{aligned}  
        r(\boldsymbol{X}^{\text{(miss)}})  [&\frac{1}{2}\boldsymbol{X}^{\text{(miss)}} +\gamma_\tau \nabla_{\boldsymbol{X}^{\text{(miss)}}}{\log{\hat{p}(\boldsymbol{X}^{\text{(miss)}}\vert \boldsymbol{X}^{\text{(obs)}})}}\\
    & -\frac{\gamma_\tau}{2}\nabla_{\boldsymbol{X}^{\text{(miss)}}}\log{r(\boldsymbol{X}^{\text{(miss)}})} ]
      \end{aligned}
   \right\}.
\end{equation}
}
Comparing~\Cref{eq:subVPFPK} with~\Cref{eq:gradFlowEquation,eq:continuityEquation}, the cost functional to be minimized of this simulation procedure can be given as follows:
\begin{equation}
\begin{aligned}
  \mathcal{F}_{\text{sub-VP-SDE}}& =-\int{r(\boldsymbol{X}^{\text{(miss)}})\left\{
  \begin{aligned}  
  \frac{1}{4}&[\boldsymbol{X}^{\text{(miss)}}]^\top[\boldsymbol{X}^{\text{(miss)}}] +\gamma_\tau\log{\hat{p}(\boldsymbol{X}^{\text{(miss)}}\vert \boldsymbol{X}^{\text{(obs)}})}\\
  & -\frac{\gamma_\tau}{2}\log{r(\boldsymbol{X}^{\text{(miss)}})} +\text{const}
  \end{aligned}  
  \right\}\mathrm{d}\boldsymbol{X}^{\text{(miss)}}}\\
  & = -\mathbb{E}_{ r(\boldsymbol{X}^{\text{(miss)}})}\left\{
  \begin{aligned}  
  \frac{1}{4}&[\boldsymbol{X}^{\text{(miss)}}]^\top[\boldsymbol{X}^{\text{(miss)}}] +\gamma_\tau\log{\hat{p}(\boldsymbol{X}^{\text{(miss)}}\vert \boldsymbol{X}^{\text{(obs)}})}\\
  & -\frac{\gamma_\tau}{2}\log{r(\boldsymbol{X}^{\text{(miss)}})} +\text{const}
  \end{aligned} 
  \right\}\\
  & = -\mathbb{E}_{ r(\boldsymbol{X}^{\text{(miss)}})}\left\{
    \begin{aligned}  
    \frac{1}{4\gamma_\tau}&[\boldsymbol{X}^{\text{(miss)}}]^\top[\boldsymbol{X}^{\text{(miss)}}] +\log{\hat{p}(\boldsymbol{X}^{\text{(miss)}}\vert \boldsymbol{X}^{\text{(obs)}})}\\
    & -\frac{1}{2}\log{r(\boldsymbol{X}^{\text{(miss)}})} + \text{const}
    \end{aligned} 
    \right\}.
\end{aligned}
\end{equation}
Note that $\frac{1}{4\gamma_\tau}[\boldsymbol{X}^{\text{(miss)}}]^\top[\boldsymbol{X}^{\text{(miss)}}] \ge 0$ and $-\frac{1}{2}\int{r(\boldsymbol{X}^{\text{(miss)}})\log{r(\boldsymbol{X}^{\text{(miss)}})}\mathrm{d}\boldsymbol{X}^{\text{(miss)}}} \ge 0$ hold, and thus the proposition for sub-VP-SDE is proved by taking the negative of the abovementioned equation.
\end{itemize}
In summary, the regularization term $\psi(\boldsymbol{X}^{(\text{miss})})$ for VP-SDE is $\mathbb{E}_{\boldsymbol{X}^{(\text{miss})}\sim r(\boldsymbol{X}^{\text{(miss)}})}\{\frac{1}{4}[\boldsymbol{X}^{\text{(miss)}}]^\top[\boldsymbol{X}^{\text{(miss)}}]-\frac{1}{2}\log{r(\boldsymbol{X}^{\text{(miss)}})} \}$, for VE-SDE is $\frac{1}{2}\mathbb{H}(r(\boldsymbol{X}^{\text{(miss)}}))$, and for sub-VP-SDE is $\mathbb{E}_{\boldsymbol{X}^{(\text{miss})}\sim r(\boldsymbol{X}^{\text{(miss)}})}\{\frac{1}{4\gamma_\tau}[\boldsymbol{X}^{\text{(miss)}}]^\top[\boldsymbol{X}^{\text{(miss)}}]-\frac{1}{2}\log{r(\boldsymbol{X}^{\text{(miss)}})} \}$.
\end{proof}

\begin{proposition*}[\ref{prop:simulationLoss}]
  The evolution of $\mathcal{F}_{\text{NER}}$ along $\tau$ can be characterized by the following ODE, assuming that the boundary condition $\mathbb{E}_{ r(\boldsymbol{X}^{(\text{miss})},\tau)}\{\nabla_{\boldsymbol{X}^{(\text{miss})}}\cdot [u(\boldsymbol{X}^{\text{(miss)}}, \tau) \log{\hat{p}( \boldsymbol{X}^{(\text{miss})} \vert \boldsymbol{X}^{(\text{obs})})} ]\}=0$ is satisfied for the velocity field $u(\boldsymbol{X}^{\text{(miss)}}, \tau)$:
  \begin{equation}\label{appendix-eq:iterativeLossImputation}
    \frac{\mathrm{d} \mathcal{F}_{\text{NER}}}{\mathrm{d} \tau} = \mathbb{E}_{ r(\boldsymbol{X}^{(\text{miss})},\tau)}[ 
    u^{\top}(\boldsymbol{X}^{\text{(miss)}}, \tau)\nabla_{\boldsymbol{X}^{(\text{miss})}} \log{\hat{p}( \boldsymbol{X}^{(\text{miss})} \vert \boldsymbol{X}^{(\text{obs})})}  -\lambda\nabla_{\boldsymbol{X}^{(\text{miss})}} \cdot u(\boldsymbol{X}^{\text{(miss)}}, \tau)
    ].
  \end{equation}
This boundary condition is achievable, for instance, when $\hat{p}( \boldsymbol{X}^{(\text{miss})} \vert \boldsymbol{X}^{(\text{obs})})$ is bounded, and the limit of the velocity field as the norm of $\boldsymbol{X}^{(\text{miss})}$ approaches zero is zero ($\lim_{\Vert \boldsymbol{X}^{\text{(miss)}}\Vert\rightarrow 0}{u(\boldsymbol{X}^{\text{(miss)}}, \tau)}=0$). 
\end{proposition*}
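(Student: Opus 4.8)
The plan is to differentiate $\mathcal{F}_{\text{NER}}$ directly along the flow and repeatedly integrate by parts, using the continuity equation $\partial_\tau r = -\nabla_{\boldsymbol{X}^{(\text{miss})}}\cdot(u\, r)$ from~\Cref{subsec:GradinWasserstein} to trade the time derivative of the density for spatial derivatives of the velocity field. Writing $x \coloneqq \boldsymbol{X}^{(\text{miss})}$ for brevity and recalling that $\log\hat{p}(x\vert\boldsymbol{X}^{(\text{obs})})$ has no $\tau$-dependence while $r=r(x,\tau)$ does, I would first split $\mathcal{F}_{\text{NER}} = \int r\,\log\hat{p}\,\mathrm{d}x + \lambda\int r\log r\,\mathrm{d}x$ (using $-\lambda\mathbb{H}[r]=\lambda\int r\log r\,\mathrm{d}x$) and differentiate under the integral sign.

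For the entropy part one gets $\frac{\mathrm{d}}{\mathrm{d}\tau}\int r\log r\,\mathrm{d}x = \int(\partial_\tau r)\log r\,\mathrm{d}x + \int \partial_\tau r\,\mathrm{d}x$, where the last integral vanishes because $\int\partial_\tau r\,\mathrm{d}x = \frac{\mathrm{d}}{\mathrm{d}\tau}\int r\,\mathrm{d}x = 0$ by conservation of total mass. Hence only $\int(\partial_\tau r)\log\hat{p}\,\mathrm{d}x + \lambda\int(\partial_\tau r)\log r\,\mathrm{d}x$ remains, into which I substitute the continuity equation to obtain $-\int(\nabla\cdot(u r))\log\hat{p}\,\mathrm{d}x - \lambda\int(\nabla\cdot(u r))\log r\,\mathrm{d}x$.

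Next I integrate by parts term by term. For the cross term, $-\int(\nabla\cdot(u r))\log\hat{p}\,\mathrm{d}x = \int r\,u^{\top}\nabla\log\hat{p}\,\mathrm{d}x$ up to a boundary integral, and it is exactly this boundary integral that the stated hypothesis $\mathbb{E}_{r}[\nabla\cdot(u\log\hat{p})]=0$ is invoked to discard. For the entropy term, $-\lambda\int(\nabla\cdot(u r))\log r\,\mathrm{d}x = \lambda\int u\cdot\nabla r\,\mathrm{d}x$ after using $\nabla\log r = \nabla r/r$ and cancelling $r$; a second integration by parts converts this into $-\lambda\int r\,\nabla\cdot u\,\mathrm{d}x = -\lambda\,\mathbb{E}_{r}[\nabla\cdot u]$. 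Adding the two contributions reproduces precisely~\Cref{appendix-eq:iterativeLossImputation}. An equivalent, slightly slicker route is to note that $\frac{\delta\mathcal{F}_{\text{NER}}}{\delta r}=\log\hat{p}+\lambda(\log r+1)$ and apply the generic Wasserstein dissipation identity $\frac{\mathrm{d}\mathcal{F}_{\text{NER}}}{\mathrm{d}\tau}=\int r\,u^{\top}\nabla\frac{\delta\mathcal{F}_{\text{NER}}}{\delta r}\,\mathrm{d}x$, then integrating the $\lambda\nabla\log r$ piece by parts once.

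The main obstacle is the rigorous vanishing of all boundary integrals produced by these integrations by parts. The stated condition $\mathbb{E}_{r}[\nabla\cdot(u\log\hat{p})]=0$ controls the cross term, but the entropy manipulation additionally generates boundary contributions of the form $\oint r\,u\log r\cdot n\,\mathrm{d}S$ and $\lambda\oint r\,u\cdot n\,\mathrm{d}S$, whose suppression requires sufficient decay of $r$ at infinity together with controlled growth of $u$; one must also assume $r>0$ so that $\log r$ and $\nabla\log r=\nabla r/r$ are well defined. I would therefore state clearly that the manipulations are carried out under the boundedness of $\hat{p}$ and the velocity-decay condition supplied in the proposition (for instance $u\to 0$ at the spatial boundary), which are exactly the ingredients guaranteeing that every surface term drops and leaving the clean identity in~\Cref{appendix-eq:iterativeLossImputation}.
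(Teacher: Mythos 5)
Your proposal is correct and follows essentially the same route as the paper's proof: differentiate under the integral to get $\int (\partial_\tau r)\,[\log\hat{p}+\lambda\log r+\lambda]\,\mathrm{d}\boldsymbol{X}^{(\text{miss})}$, substitute the continuity equation, and integrate by parts twice, using $r\,\nabla\log r=\nabla r$ to produce the $-\lambda\,\mathbb{E}_r[\nabla\cdot u]$ term. Your explicit accounting of the additional boundary integrals generated by the entropy term (beyond the one covered by the stated hypothesis) is a welcome refinement that the paper glosses over, but it does not change the substance of the argument.
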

\begin{proof}
Before proving this proposition, we should recognize that the evolution of $\boldsymbol{X}^{(\text{miss})}$ should promise the probability density function $r(\boldsymbol{X}^{(\text{miss})},\tau)$ unchanged. In other words, the following continuity equation should be satisfied during the optimization of $r(\boldsymbol{X}^{(\text{miss})},\tau)$:
\begin{equation}
  \frac{\partial r(\boldsymbol{X}^{(\text{miss})},\tau)}{\partial \tau} = -\nabla_{\boldsymbol{X}^{(\text{miss})}}\cdot[r(\boldsymbol{X}^{(\text{miss})},\tau)u(\boldsymbol{X}^{\text{(miss)}}, \tau)].
\end{equation}
On this basis, the evolution of $\mathcal{F}_{\text{NER}}$ along time $\tau$, $ \frac{\mathrm{d}\mathcal{F}_{\text{NER}}}{\mathrm{d} \tau}$, can be given as follows based on the chain rule:
\begin{equation}\label{eq:evoluationLossfunc}
\begin{aligned}
   & \frac{\mathrm{d}\mathcal{F}_{\text{NER}}}{\mathrm{d} \tau} 
   \\
 = & \int{ \frac{\partial r(\boldsymbol{X}^{(\text{miss})},\tau)}{\partial \tau} \left[
  \log{ \hat{p}( \boldsymbol{X}^{(\text{miss})}\vert \boldsymbol{X}^{(\text{obs})}) } + \lambda \log{r(\boldsymbol{X}^{(\text{miss})},\tau)} + \lambda
 \right] \mathrm{d}\boldsymbol{X}^{(\text{miss})}}  
 \\
 = &  \int{ -\{\nabla_{\boldsymbol{X}^{(\text{miss})}}\cdot[r(\boldsymbol{X}^{(\text{miss})},\tau)u(\boldsymbol{X}^{\text{(miss)}}, \tau)]\}[
  \log{ \hat{p}( \boldsymbol{X}^{(\text{miss})}\vert \boldsymbol{X}^{(\text{obs})}) } +  \lambda\log{r(\boldsymbol{X}^{(\text{miss})},\tau)} +  \lambda] \mathrm{d}\boldsymbol{X}^{(\text{miss})}}   
 \\
 \overset{\text{(i)}}{=} &  \int{ [r(\boldsymbol{X}^{(\text{miss})},\tau)u(\boldsymbol{X}^{\text{(miss)}}, \tau)]^{\top} \nabla_{\boldsymbol{X}^{(\text{miss})}}[
  \log{ \hat{p}( \boldsymbol{X}^{(\text{miss})}\vert \boldsymbol{X}^{(\text{obs})}) } +  \lambda\log{r(\boldsymbol{X}^{(\text{miss})},\tau)} +  \lambda] \mathrm{d}\boldsymbol{X}^{(\text{miss})}}  
 \\
 = & \int{ [r(\boldsymbol{X}^{(\text{miss})},\tau)u(\boldsymbol{X}^{\text{(miss)}}, \tau)]^{\top} \{\nabla_{\boldsymbol{X}^{(\text{miss})}}[
  \log{ \hat{p}( \boldsymbol{X}^{(\text{miss})}\vert \boldsymbol{X}^{(\text{obs})}) } +  \lambda\log{r(\boldsymbol{X}^{(\text{miss})},\tau)}] \} \mathrm{d}\boldsymbol{X}^{(\text{miss})}} 
 \\
 = & \int{ [u(\boldsymbol{X}^{\text{(miss)}}, \tau)]^{\top} [
 r(\boldsymbol{X}^{(\text{miss})},\tau)\nabla_{\boldsymbol{X}^{(\text{miss})}} 
  \log{ \hat{p}( \boldsymbol{X}^{(\text{miss})}\vert \boldsymbol{X}^{(\text{obs})}) } 
  + \lambda  r(\boldsymbol{X}^{(\text{miss})},\tau)\nabla_{\boldsymbol{X}^{(\text{miss})}}\log{r(\boldsymbol{X}^{(\text{miss})},\tau)}  
  ] \mathrm{d}\boldsymbol{X}^{(\text{miss})}}
 \\
 = &
 \int{ [u(\boldsymbol{X}^{\text{(miss)}}, \tau)]^{\top} [r(\boldsymbol{X}^{(\text{miss})},\tau)\nabla_{\boldsymbol{X}^{(\text{miss})}}
  \log{ \hat{p}( \boldsymbol{X}^{(\text{miss})}\vert \boldsymbol{X}^{(\text{obs})}) }  + \lambda \nabla_{\boldsymbol{X}^{(\text{miss})}}r(\boldsymbol{X}^{(\text{miss})},\tau) ] \mathrm{d}\boldsymbol{X}^{(\text{miss})}}
 \\
 \overset{\text{(ii)}}{=} & 
 \int{ r(\boldsymbol{X}^{(\text{miss})},\tau)[u^{\top}(\boldsymbol{X}^{\text{(miss)}}, \tau)\nabla_{\boldsymbol{X}^{(\text{miss})}}
  \log{ \hat{p}( \boldsymbol{X}^{(\text{miss})}\vert \boldsymbol{X}^{(\text{obs})}) }  -  \lambda
  \nabla_{\boldsymbol{X}^{(\text{miss})}}\cdot u(\boldsymbol{X}^{\text{(miss)}}, \tau)
  ] 
  \mathrm{d}\boldsymbol{X}^{(\text{miss})}}
  \\
  = &  \mathbb{E}_{ r(\boldsymbol{X}^{(\text{miss})},\tau)}[ u^{\top}(\boldsymbol{X}^{\text{(miss)}}, \tau)\nabla_{\boldsymbol{X}^{(\text{miss})}} \log{\hat{p}( \boldsymbol{X}^{(\text{miss})} \vert \boldsymbol{X}^{(\text{obs})})}  - \lambda\nabla_{\boldsymbol{X}^{(\text{miss})}} \cdot u(\boldsymbol{X}^{\text{(miss)}}, \tau)],
\end{aligned}
\end{equation}
%
where (i) and (ii) are based on integration by parts.
\end{proof}
\begin{proposition*}[\ref{prop:steinMap}]
When the velocity field $u(\boldsymbol{X}^{\text{(miss)}}, \tau)$ is constrained by the norm of RKHS, the problem of finding the steepest gradient ascent direction can be formulated as follows:
\begin{equation}\label{appendix-eq:objectiveRKHSReg}
  u(\boldsymbol{X}^{\text{(miss)}}, \tau) = \mathop{\arg\max}_{v(\boldsymbol{X}^{\text{(miss)}}, \tau)\in\mathcal{H}^d}
  \begin{aligned}
  & \{\mathbb{E}_{ r(\boldsymbol{X}^{(\text{miss})},\tau)}[ 
    v^{\top}(\boldsymbol{X}^{\text{(miss)}}, \tau)\nabla_{\boldsymbol{X}^{(\text{miss})}} \log{\hat{p}( \boldsymbol{X}^{(\text{miss})} \vert \boldsymbol{X}^{(\text{obs})})}  \\
  & \quad \quad- \lambda\nabla_{\boldsymbol{X}^{(\text{miss})}} \cdot v(\boldsymbol{X}^{\text{(miss)}}, \tau)
      ]\}- \frac{1}{2}\Vert v(\boldsymbol{X}^{\text{(miss)}}, \tau)\Vert_{\mathcal{H}}^2.
  \end{aligned}
  \end{equation}
  The corresponding optimal solution is given by:
\begin{equation}\label{appendix-eq:RKHSVelocityField}
u(\boldsymbol{X}^{\text{(miss)}}, \tau) = \mathbb{E}_{r(\tilde{\boldsymbol{{X}}}^{(\text{miss})},\tau)}\left\{ 
      \begin{aligned}  
      & { - \lambda \nabla_{{\tilde{\boldsymbol{X}}}^{(\text{miss})}} \mathcal{K}(\boldsymbol{X}^{(\text{miss})}  , \tilde{\boldsymbol{X}}^{(\text{miss})})  } \\
      &\quad \quad+ [\nabla_{{\tilde{\boldsymbol{X}}}^{(\text{miss})}}\log{\hat{p}( \tilde{\boldsymbol{X}}^{(\text{miss})} \vert \boldsymbol{X}^{(\text{obs})})}]^\top\mathcal{K}(\boldsymbol{X}^{(\text{miss})}  , \tilde{\boldsymbol{X}}^{(\text{miss})}) 
      \end{aligned} 
      \right\},
\end{equation}
where $\mathcal{K}(\boldsymbol{X}^{(\text{miss})}  , \tilde{\boldsymbol{X}}^{(\text{miss})})$ is kernel function.
\end{proposition*}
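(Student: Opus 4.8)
The plan is to recognize \Cref{appendix-eq:objectiveRKHSReg} as a strictly concave quadratic maximization over the vector-valued RKHS $\mathcal{H}^d$, whose unique maximizer is obtained by completing the square once the two linear terms are rewritten as a single inner product against a fixed witness element. This is the standard Stein-type argument, and the engine is the reproducing property. Writing $v=(v_1,\dots,v_d)$ with each $v_i\in\mathcal{H}$, the scalar reproducing property gives $v_i(\boldsymbol{X}^{\text{(miss)}})=\langle v_i,\mathcal{K}(\boldsymbol{X}^{\text{(miss)}},\cdot)\rangle_{\mathcal{H}}$, and, assuming $\mathcal{K}$ is smooth enough (true for the RBF kernel adopted in \Cref{subsec:JointDistModeling}), the derivative-reproducing property gives $\partial_{i}v_i(\boldsymbol{X}^{\text{(miss)}})=\langle v_i,\partial_{i}\mathcal{K}(\boldsymbol{X}^{\text{(miss)}},\cdot)\rangle_{\mathcal{H}}$, where $\partial_i$ denotes differentiation in the $i$-th coordinate of the first argument.

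Substituting these identities into the two linear terms of the objective and pulling the expectation $\mathbb{E}_{r}$ inside the inner product (justified by Bochner integrability of the $\mathcal{H}$-valued integrands) collapses the whole linear functional into $\langle v,\phi\rangle_{\mathcal{H}^d}$ for the single witness element
$$\phi \coloneqq \mathbb{E}_{r(\tilde{\boldsymbol{X}}^{\text{(miss)}},\tau)}\left[ \boldsymbol{s}(\tilde{\boldsymbol{X}}^{\text{(miss)}})\,\mathcal{K}(\cdot,\tilde{\boldsymbol{X}}^{\text{(miss)}}) - \lambda\,\nabla_{\tilde{\boldsymbol{X}}^{\text{(miss)}}}\mathcal{K}(\cdot,\tilde{\boldsymbol{X}}^{\text{(miss)}})\right]\in\mathcal{H}^d,$$
where $\boldsymbol{s}\coloneqq\nabla_{\tilde{\boldsymbol{X}}^{\text{(miss)}}}\log\hat{p}(\tilde{\boldsymbol{X}}^{\text{(miss)}}\vert\boldsymbol{X}^{\text{(obs)}})$ is the conditional score. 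The objective then reads $J(v)=\langle v,\phi\rangle_{\mathcal{H}^d}-\tfrac{1}{2}\Vert v\Vert_{\mathcal{H}^d}^2=\tfrac{1}{2}\Vert\phi\Vert_{\mathcal{H}^d}^2-\tfrac{1}{2}\Vert v-\phi\Vert_{\mathcal{H}^d}^2$, so the unique maximizer is $u=\phi$. Evaluating $\phi$ at an arbitrary $\boldsymbol{X}^{\text{(miss)}}$ and invoking kernel symmetry $\mathcal{K}(\boldsymbol{X}^{\text{(miss)}},\tilde{\boldsymbol{X}}^{\text{(miss)}})=\mathcal{K}(\tilde{\boldsymbol{X}}^{\text{(miss)}},\boldsymbol{X}^{\text{(miss)}})$ converts the first-argument derivative into the second-argument derivative appearing in the target, yielding exactly \Cref{appendix-eq:RKHSVelocityField}.

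The main obstacle is making the derivative-reproducing identity and the interchange of $\mathbb{E}_r$ with the $\mathcal{H}$-inner product rigorous: one must verify that $\partial_i\mathcal{K}(\boldsymbol{x},\cdot)$ genuinely lies in $\mathcal{H}$ and that $\boldsymbol{x}\mapsto\langle v_i,\partial_i\mathcal{K}(\boldsymbol{x},\cdot)\rangle_{\mathcal{H}}$ coincides with $\partial_i v_i(\boldsymbol{x})$, which requires continuity/differentiability hypotheses on $\mathcal{K}$ together with a dominated-convergence argument to move the derivative through the inner product, and integrability of $\boldsymbol{s}$ and $\nabla\mathcal{K}$ under $r$ to ensure $\phi\in\mathcal{H}^d$. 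Once these regularity facts are in place, the remaining Hilbert-space optimization (identifying the Riesz representative of the linear part and completing the square) is routine.
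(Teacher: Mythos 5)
Your proof is correct and reaches the same witness element as the paper, but the route differs in presentation. The paper's proof expands the kernel in its Mercer eigenbasis, $\mathcal{K}(x,y)=\sum_{i}\xi_i\phi_i(x)\phi_i(y)$, writes $v=\sum_i v_i\sqrt{\xi_i}\phi_i$, substitutes into the objective, differentiates coordinate-wise with respect to each coefficient $v_i$, sets the derivatives to zero, and resums the resulting series into the kernel expression. You instead work coordinate-free: you invoke the reproducing property and the derivative-reproducing property to identify the entire linear part of the objective as $\langle v,\phi\rangle_{\mathcal{H}^d}$ for an explicit Riesz representative $\phi$, and then complete the square to conclude $u=\phi$. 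These are two realizations of the same underlying Hilbert-space computation (the paper's coefficient-wise stationarity condition is exactly your completion of the square written in the eigenbasis), but your version buys something: it avoids appealing to a Mercer expansion and to term-by-term differentiation and optimization of an infinite series, neither of which the paper justifies, and it makes the genuine regularity requirements explicit --- namely that $\partial_i\mathcal{K}(\boldsymbol{x},\cdot)\in\mathcal{H}$ with $\partial_i v_i(\boldsymbol{x})=\langle v_i,\partial_i\mathcal{K}(\boldsymbol{x},\cdot)\rangle_{\mathcal{H}}$, and Bochner integrability so that the expectation can be pulled inside the inner product. These hold for the RBF kernel used in the paper, so your argument is complete; your use of kernel symmetry to convert first-argument derivatives into the second-argument derivatives appearing in the stated formula is also handled correctly. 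The only cosmetic discrepancy is that the paper's appendix writes the regularizer as $\frac{1}{2}\Vert v\Vert_{\mathcal{H}^d}$ without the square while the main text and your argument use the squared norm; the squared norm is what makes the completion of the square (and the paper's own stationarity computation) valid, so your reading is the right one.
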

\begin{proof}
Assume we have a map function $\phi(x)$, the kernel function can be given as follows:
\begin{equation}\label{eq:kernelMap}
  \mathcal{K}(x,y) = \left<\phi(x),\phi(y)\right>_{\mathcal{H}}.
\end{equation}
Based on this, the regularization term that control the magnitude of $v(\boldsymbol{X}^{\text{(miss)}}, \tau)$ can be given by $\frac{1}{2}\Vert v(\boldsymbol{X}^{\text{(miss)}}, \tau) \Vert_{\mathcal{H}}$, and the spectral decomposition of kernel function can be given as follows:
\begin{equation}\label{eq:kernelExpress}
  \mathcal{K}(x, y) = \sum_{i=1}^{\infty}{\xi_i \phi_{i}(x)\phi_i(y)},
\end{equation}
where $\phi_i(\cdot)$ indicates the orthonormal basis and $\xi_i$ is the corresponding eigen-value. For any function $v(\boldsymbol{X}^{\text{(miss)}}, \tau)\in\mathcal{H}$, the following decomposition is given:
\begin{equation}\label{eq:velocityDecomp}
  v(\boldsymbol{X}^{\text{(miss)}}, \tau)=\sum_{i=1}^{\infty}{v_i\sqrt{\xi_i}\phi_i(\boldsymbol{X}^{\text{(miss)}}, \tau)},
\end{equation}
where $v_i$ and $\sum_{i=1}^{\infty}{\Vert v_i \Vert^2}<\infty$.

The learning objective defined in~\Cref{appendix-eq:objectiveRKHSReg} can be reformulated as follows:
\begin{equation}
\begin{aligned}
  &v^*(\boldsymbol{X}^{\text{(miss)}}, \tau) \\
   = &\mathop{\arg\max}_{v(\boldsymbol{X}^{\text{(miss)}}, \tau)\in\mathcal{H}^d}
    \begin{aligned}
    & \{\mathbb{E}_{ r(\boldsymbol{X}^{(\text{miss})},\tau)}[ 
      v^{\top}(\boldsymbol{X}^{\text{(miss)}}, \tau)\nabla_{\boldsymbol{X}^{(\text{miss})}} \log{\hat{p}( \boldsymbol{X}^{(\text{miss})} \vert \boldsymbol{X}^{(\text{obs})})}  \\
    & \quad \quad- \lambda\nabla_{\boldsymbol{X}^{(\text{miss})}} \cdot v(\boldsymbol{X}^{\text{(miss)}}, \tau)
        ]\}- \frac{1}{2}\Vert v(\boldsymbol{X}^{\text{(miss)}}, \tau)\Vert_{\mathcal{H}^d},
    \end{aligned}\\
     \overset{\text{(i)}}{=} &\mathop{\arg\max}_{v(\boldsymbol{X}^{\text{(miss)}}, \tau)\in\mathcal{H}^d} 
    \begin{aligned}
    & \{\mathbb{E}_{ r(\tilde{\boldsymbol{X}}^{(\text{miss})},\tau)}[ 
      \sum_{i=1}^{\infty}{\sqrt{\xi_i}\nabla_{\tilde{\boldsymbol{X}}^{(\text{miss})}} \log{\hat{p}( \tilde{\boldsymbol{X}}^{(\text{miss})} \vert \boldsymbol{X}^{(\text{obs})})^{\top}}v_i\phi_i(\tilde{\boldsymbol{X}}^{\text{(miss)}}, \tau)}\\
    & \quad \quad- \lambda\nabla_{\tilde{\boldsymbol{X}}^{(\text{miss})}} \cdot \sum_{i=1}^{\infty}{v_i\sqrt{\xi_i}\phi_i(\tilde{\boldsymbol{X}}^{\text{(miss)}}, \tau)}
          ]\}- \frac{1}{2}\sum_{i=1}^{\infty}{\Vert v_i\Vert^2},
    \end{aligned}\\
  \end{aligned}
\end{equation}
Take the right-hand-side of (i) with-respect-to $v_i$, and set it to $0$, we can get:
\begin{equation}
   \sqrt{\xi_i}\{\mathbb{E}_{ r(\tilde{\boldsymbol{X}}^{(\text{miss})},\tau)}[ 
      [{\nabla_{\tilde{\boldsymbol{X}}^{(\text{miss})}} \log{\hat{p}( \tilde{\boldsymbol{X}}^{(\text{miss})} \vert \boldsymbol{X}^{(\text{obs})})}}]^\top\phi_i(\tilde{\boldsymbol{X}}^{\text{(miss)}}, \tau)
   - \lambda\nabla_{\tilde{\boldsymbol{X}}^{(\text{miss})}}{\phi_i(\tilde{\boldsymbol{X}}^{\text{(miss)}}, \tau)}
          ]\}- v_i =0.
\end{equation}
On this basis, $v_i^*$ can be given as follows:
\begin{equation}
  v_i^* =  \sqrt{\xi_i}\{\mathbb{E}_{ r(\tilde{\boldsymbol{X}}^{(\text{miss})},\tau)}[ 
    [\nabla_{\tilde{\boldsymbol{X}}^{(\text{miss})}} \log{\hat{p}( \tilde{\boldsymbol{X}}^{(\text{miss})} \vert \boldsymbol{X}^{(\text{obs})})} ]^\top \phi_i(\boldsymbol{X}^{\text{(miss)}}, \tau)
 - \lambda\nabla_{\tilde{\boldsymbol{X}}^{(\text{miss})}}{\phi_i(\tilde{\boldsymbol{X}}^{\text{(miss)}}, \tau)}]\},
\end{equation}
and hence, $u(\boldsymbol{X}^{\text{(miss)}}, \tau)$ can be given as follows:
\begin{equation}
  \begin{aligned}
  & u(\boldsymbol{X}^{\text{(miss)}}, \tau) \\
=& \sum_{i=1}^{\infty}{\sqrt{\xi_i} v_i^* \phi_i(\boldsymbol{X}^{\text{(miss)}}, \tau)}\\
=&\mathbb{E}_{r(\tilde{\boldsymbol{X}}^{(\text{miss})},\tau)}\left[ 
\begin{aligned}  
& { -  \lambda \nabla_{\tilde{\boldsymbol{X}}^{(\text{miss})}} \mathcal{K}(\boldsymbol{X}^{(\text{miss})}  , \tilde{\boldsymbol{X}}^{(\text{miss})})  } \\
&\quad \quad+ [\nabla_{\tilde{\boldsymbol{X}}^{(\text{miss})}}\log{\hat{p}( \tilde{\boldsymbol{X}}^{(\text{miss})} \vert \boldsymbol{X}^{(\text{obs})})}]^\top\mathcal{K}(\boldsymbol{X}^{(\text{miss})}  , \tilde{\boldsymbol{X}}^{(\text{miss})}) 
\end{aligned} 
\right].
  \end{aligned}
\end{equation}
\end{proof}

\begin{proposition*}[\ref{prop:JointCondDist}]
Suppose the proposal distribution $r({\boldsymbol{X}}^{\text{(joint)}})$ is factorized by $r({\boldsymbol{X}}^{\text{(joint)}})\coloneqq r({\boldsymbol{X}}^{\text{(miss)}}) p(\boldsymbol{X}^{\text{(obs)}})$. The cost functional concerned with joint distribution defined by the following equation:
 \begin{equation}
  \mathcal{F}_{\text{joint-NER}} \coloneqq \mathbb{E}_{ r({\boldsymbol{X}}^{\text{(joint)}})}[{\log{ \hat{p}( \boldsymbol{X}^{(\text{joint})}) }}] - \lambda\mathbb{H}[r({\boldsymbol{X}}^{\text{(joint)}})],
 \end{equation}
 results in the velocity field defined in~\Cref{eq:jointVelocityField}, and is a lower bound of $\mathcal{F}_{\text{NER}}$ where the gap is a constant. (i.e. $\mathcal{F}_{\text{joint-NER}} = \mathcal{F}_{\text{NER}}-\text{const}, \text{const}\ge 0$.)
\end{proposition*}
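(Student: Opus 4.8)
The plan is to prove both assertions by showing that $\mathcal{F}_{\text{joint-NER}}$ and $\mathcal{F}_{\text{NER}}$ agree up to a term that is constant with respect to the optimization variable $r(\boldsymbol{X}^{\text{(miss)}})$, and then to leverage two consequences of this: (i) a constant offset does not change the first variation, so the induced Wasserstein gradient flow, and hence the RKHS steepest-ascent velocity of Propositions~\ref{prop:simulationLoss}--\ref{prop:steinMap}, is unchanged; and (ii) the sign of the offset can be controlled by Gibbs' inequality. The whole argument rests on the factorization hypothesis $r(\boldsymbol{X}^{\text{(joint)}})=r(\boldsymbol{X}^{\text{(miss)}})\,p(\boldsymbol{X}^{\text{(obs)}})$ together with the chain rule for the model joint density $\hat{p}$.

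First I would split the two summands of $\mathcal{F}_{\text{joint-NER}}$ using the factorization. Since the logarithm of the product is additive, the joint (differential) entropy decomposes as
\[
\mathbb{H}[r(\boldsymbol{X}^{\text{(joint)}})] = \mathbb{H}[r(\boldsymbol{X}^{\text{(miss)}})] + \mathbb{H}[p(\boldsymbol{X}^{\text{(obs)}})],
\]
whose second summand is independent of $r(\boldsymbol{X}^{\text{(miss)}})$. In parallel, applying the chain rule $\log\hat{p}(\boldsymbol{X}^{\text{(joint)}}) = \log\hat{p}(\boldsymbol{X}^{\text{(miss)}}\vert\boldsymbol{X}^{\text{(obs)}}) + \log\hat{p}(\boldsymbol{X}^{\text{(obs)}})$ and integrating against $r(\boldsymbol{X}^{\text{(miss)}})\,p(\boldsymbol{X}^{\text{(obs)}})$ separates the cross-entropy term into $\mathbb{E}_{p(\boldsymbol{X}^{\text{(obs)}})}\mathbb{E}_{r(\boldsymbol{X}^{\text{(miss)}})}[\log\hat{p}(\boldsymbol{X}^{\text{(miss)}}\vert\boldsymbol{X}^{\text{(obs)}})]$ plus the $r$-independent quantity $\mathbb{E}_{p(\boldsymbol{X}^{\text{(obs)}})}[\log\hat{p}(\boldsymbol{X}^{\text{(obs)}})]$. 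Because the observed entries are held at their given values in the imputation task, the first summand coincides with the first term of $\mathcal{F}_{\text{NER}}$, and collecting the remaining pieces yields $\mathcal{F}_{\text{joint-NER}} = \mathcal{F}_{\text{NER}} - \text{const}$ with
\[
\text{const} = \lambda\,\mathbb{H}[p(\boldsymbol{X}^{\text{(obs)}})] - \mathbb{E}_{p(\boldsymbol{X}^{\text{(obs)}})}\!\left[\log\hat{p}(\boldsymbol{X}^{\text{(obs)}})\right].
\]

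Next I would establish the two remaining claims. For non-negativity, I would invoke Gibbs' inequality, equivalently $\mathbb{D}_{\text{KL}}[p(\boldsymbol{X}^{\text{(obs)}})\Vert\hat{p}(\boldsymbol{X}^{\text{(obs)}})]\ge 0$, which gives $-\mathbb{E}_{p}[\log\hat{p}(\boldsymbol{X}^{\text{(obs)}})]\ge\mathbb{H}[p(\boldsymbol{X}^{\text{(obs)}})]$, so that $\text{const}\ge(\lambda+1)\,\mathbb{H}[p(\boldsymbol{X}^{\text{(obs)}})]$; under the normalization on the observed marginal this delivers $\text{const}\ge 0$. For the velocity field, I would observe that adding a term constant in $r(\boldsymbol{X}^{\text{(miss)}})$ leaves the first variation $\delta\mathcal{F}/\delta r(\boldsymbol{X}^{\text{(miss)}})$ untouched, so the steepest-ascent construction of Propositions~\ref{prop:simulationLoss}--\ref{prop:steinMap} applied to $\mathcal{F}_{\text{joint-NER}}$ returns exactly the RKHS solution of \Cref{eq:RKHSVelocityField}. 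The only additional ingredient is the identity $\nabla_{\boldsymbol{X}^{\text{(miss)}}}\log\hat{p}(\boldsymbol{X}^{\text{(joint)}}) = \nabla_{\boldsymbol{X}^{\text{(miss)}}}\log\hat{p}(\boldsymbol{X}^{\text{(miss)}}\vert\boldsymbol{X}^{\text{(obs)}})$, valid because $\log\hat{p}(\boldsymbol{X}^{\text{(obs)}})$ has vanishing gradient in the missing coordinates (this is precisely \Cref{eq:CondByJoint}); substituting it rewrites \Cref{eq:RKHSVelocityField} as \Cref{eq:jointVelocityField}, simultaneously proving the proposition's velocity claim and \Cref{eq:velocityEqual}.

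The step I expect to be the main obstacle is the rigorous justification of $\text{const}\ge 0$: because the observed marginal is continuous, its differential entropy $\mathbb{H}[p(\boldsymbol{X}^{\text{(obs)}})]$ need not be non-negative, so the sign of the constant cannot be inferred from entropy alone and must be controlled through the non-negative KL term together with the boundedness/normalization assumption. A secondary technical point is confirming that restricting the joint flow to the missing coordinates (with the observed marginal held fixed) is consistent with the factorized proposal; I would handle this with the same integration-by-parts and boundary-condition argument already used in Proposition~\ref{prop:simulationLoss}.
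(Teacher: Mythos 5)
Your treatment of the constant-gap claim follows essentially the same route as the paper: factorize $r(\boldsymbol{X}^{\text{(joint)}})=r(\boldsymbol{X}^{\text{(miss)}})p(\boldsymbol{X}^{\text{(obs)}})$, split both the cross-entropy term and the entropy term additively, and absorb the $r$-independent pieces into a constant. You are in fact slightly more careful than the paper on two points: you keep $\hat{p}(\boldsymbol{X}^{\text{(obs)}})$ distinct from $p(\boldsymbol{X}^{\text{(obs)}})$ and control the resulting cross-entropy with Gibbs' inequality, whereas the paper silently identifies $\log\hat{p}(\boldsymbol{X}^{\text{(miss)}}\vert\boldsymbol{X}^{\text{(obs)}})+\log p(\boldsymbol{X}^{\text{(obs)}})$ with $\log\hat{p}(\boldsymbol{X}^{\text{(joint)}})$; and you explicitly flag that $\text{const}\ge 0$ ultimately rests on the non-negativity of the differential entropy $\mathbb{H}[p(\boldsymbol{X}^{\text{(obs)}})]$, which is not automatic for continuous marginals. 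The paper makes exactly the same leap (it labels $\int p\log p$ a ``negative constant'') but treats it as a standing convention rather than an obstacle, so on this part your proposal matches the intended argument.

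The genuine gap is in the velocity-field claim. You argue that since $\mathcal{F}_{\text{joint-NER}}$ and $\mathcal{F}_{\text{NER}}$ differ by a constant, their first variations in $r(\boldsymbol{X}^{\text{(miss)}})$ coincide, and therefore the steepest-ascent construction returns the solution of \Cref{eq:RKHSVelocityField}, which you then convert to \Cref{eq:jointVelocityField} via the gradient identity \Cref{eq:CondByJoint}. But that identity only replaces the score factor $\nabla_{\tilde{\boldsymbol{X}}^{(\text{miss})}}\log\hat{p}(\tilde{\boldsymbol{X}}^{(\text{miss})}\vert\boldsymbol{X}^{(\text{obs})})$ by $\nabla_{\tilde{\boldsymbol{X}}^{(\text{miss})}}\log\hat{p}(\tilde{\boldsymbol{X}}^{(\text{joint})})$; it does not change the kernel. \Cref{eq:RKHSVelocityField} uses $\mathcal{K}(\boldsymbol{X}^{(\text{miss})},\tilde{\boldsymbol{X}}^{(\text{miss})})$, i.e.\ the steepest ascent in an RKHS of functions of the missing coordinates, while \Cref{eq:jointVelocityField} uses $\mathcal{K}(\boldsymbol{X}^{(\text{joint})},\tilde{\boldsymbol{X}}^{(\text{joint})})$, the steepest ascent in an RKHS of functions of the joint variable; for an RBF kernel these smoothings are genuinely different operators, so equality of first variations does not by itself deliver \Cref{eq:jointVelocityField}. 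The paper closes this by re-running the entire continuity-equation and RKHS steepest-ascent computation from Propositions~\ref{prop:simulationLoss} and~\ref{prop:steinMap} directly in the joint space (with test functions $v(\boldsymbol{X}^{\text{(joint)}},\tau)\in\mathcal{H}^d$ and the joint kernel), which is the step your shortcut omits and which you would need to add to make the velocity-field claim rigorous.
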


\begin{proof}
Our proof will be divided into two parts namely `velocity field derivation' and `upper bound acquirement'.

\textbf{Velocity Field Derivation:}

the following continuity equation should be satisfied during the optimization of $r(\boldsymbol{X}^{(\text{miss})},\tau)$:
\begin{equation}
\begin{aligned}
  & \frac{\partial r(\boldsymbol{X}^{(\text{miss})},\tau)}{\partial \tau} = -\nabla_{\boldsymbol{X}^{(\text{miss})}}\cdot[r(\boldsymbol{X}^{(\text{miss})},\tau){u}(\boldsymbol{X}^{\text{(miss)}}, \tau)]\\
  \Rightarrow& \frac{\partial r(\boldsymbol{X}^{(\text{miss})},\tau)}{\partial \tau}\times p(\boldsymbol{X}^{\text{(obs)}}) = -\nabla_{\boldsymbol{X}^{(\text{miss})}}\cdot[r(\boldsymbol{X}^{(\text{miss})},\tau){u}(\boldsymbol{X}^{\text{(miss)}}, \tau)]\times p(\boldsymbol{X}^{\text{(obs)}}) \\
  \overset{\text{(i)}}{\Rightarrow}& \frac{\partial r(\boldsymbol{X}^{(\text{joint})},\tau)}{\partial \tau} = -\nabla_{\boldsymbol{X}^{(\text{miss})}}\cdot[r(\boldsymbol{X}^{(\text{joint})},\tau){u}(\boldsymbol{X}^{\text{(joint)}}, \tau)],
\end{aligned}
\end{equation}
where (i) is based on the fact that $\boldsymbol{X}^{\text{(obs)}}$ remains unchanged during imputation process. And thus, according to~\Cref{eq:evoluationLossfunc}, the evolution of $\mathcal{F}_{\text{joint-NER}}$ along time $\tau$, $ \frac{\mathrm{d}\mathcal{F}_{\text{joint-NER}}}{\mathrm{d} \tau}$, can be given as follows based on the chain rule:
\begin{equation}\label{eq:evoluationJointLossfunc}
\begin{aligned}
   & \frac{\mathrm{d}\mathcal{F}_{\text{joint-NER}}}{\mathrm{d} \tau} 
   \\
 = & \int{ \frac{\partial r(\boldsymbol{X}^{(\text{joint})},\tau)}{\partial \tau} \left[
  \log{ \hat{p}( \boldsymbol{X}^{(\text{joint})}) } +  \lambda\log{r(\boldsymbol{X}^{(\text{joint})},\tau)} +  \lambda
 \right] \mathrm{d}\boldsymbol{X}^{(\text{joint})}}  
 \\
 = &  \int{ -\{\nabla_{\boldsymbol{X}^{(\text{miss})}}\cdot[r(\boldsymbol{X}^{(\text{joint})},\tau)u(\boldsymbol{X}^{\text{(joint)}}, \tau)]\}[
  \log{ \hat{p}( \boldsymbol{X}^{(\text{joint})}) } + \lambda \log{r(\boldsymbol{X}^{(\text{joint})},\tau)} +  \lambda] \mathrm{d}\boldsymbol{X}^{(\text{joint})}}   
 \\
 \overset{\text{(i)}}{=} &  \int{ [r(\boldsymbol{X}^{(\text{joint})},\tau){u}(\boldsymbol{X}^{\text{(joint)}}, \tau)]^{\top} \nabla_{\boldsymbol{X}^{(\text{miss})}}[
  \log{ \hat{p}( \boldsymbol{X}^{(\text{joint})}) } +  \lambda\log{r(\boldsymbol{X}^{(\text{joint})},\tau)} +  \lambda] \mathrm{d}\boldsymbol{X}^{(\text{joint})}}  
 \\
 = & \int{ [r(\boldsymbol{X}^{(\text{joint})},\tau){u}(\boldsymbol{X}^{\text{(joint)}}, \tau)]^{\top} \{\nabla_{\boldsymbol{X}^{(\text{miss})}}[
  \log{ \hat{p}( \boldsymbol{X}^{(\text{joint})}) } + \lambda \log{r(\boldsymbol{X}^{(\text{joint})},\tau)}] \} \mathrm{d}\boldsymbol{X}^{(\text{joint})}} 
 \\
 = & \int{ [{u}(\boldsymbol{X}^{\text{(joint)}}, \tau)]^{\top} [
 r(\boldsymbol{X}^{(\text{joint})},\tau)\nabla_{\boldsymbol{X}^{(\text{miss})}} 
  \log{ \hat{p}( \boldsymbol{X}^{(\text{joint})}) } 
  +  \lambda r(\boldsymbol{X}^{(\text{joint})},\tau)\nabla_{\boldsymbol{X}^{(\text{miss})}}\log{r(\boldsymbol{X}^{(\text{joint})},\tau)}  
  ] \mathrm{d}\boldsymbol{X}^{(\text{joint})}}
 \\
 = &
 \int{ [{u}(\boldsymbol{X}^{\text{(joint)}}, \tau)]^{\top} [r(\boldsymbol{X}^{(\text{joint})},\tau)\nabla_{\boldsymbol{X}^{(\text{miss})}}
  \log{ \hat{p}( \boldsymbol{X}^{(\text{joint})}) }  + \lambda \nabla_{\boldsymbol{X}^{(\text{miss})}}r(\boldsymbol{X}^{(\text{joint})},\tau) ] \mathrm{d}\boldsymbol{X}^{(\text{joint})}}
 \\
 \overset{\text{(ii)}}{=} & 
 \int{ r(\boldsymbol{X}^{(\text{joint})},\tau)[{u}^{\top}(\boldsymbol{X}^{\text{(joint)}}, \tau)\nabla_{\boldsymbol{X}^{(\text{miss})}}
  \log{ \hat{p}( \boldsymbol{X}^{(\text{joint})}) }  -  \lambda
  \nabla_{\boldsymbol{X}^{(\text{miss})}}\cdot {u}(\boldsymbol{X}^{\text{(joint)}}, \tau)
  ] 
  \mathrm{d}\boldsymbol{X}^{(\text{joint})}}
  \\
  = &  \mathbb{E}_{ r(\boldsymbol{X}^{(\text{joint})},\tau)}[ {u}^{\top}(\boldsymbol{X}^{\text{(joint)}}, \tau)\nabla_{\boldsymbol{X}^{(\text{miss})}} \log{\hat{p}( \boldsymbol{X}^{(\text{joint})} )}  - \lambda\nabla_{\boldsymbol{X}^{(\text{miss})}} \cdot {u}(\boldsymbol{X}^{\text{(joint)}}, \tau)],
\end{aligned}
\end{equation}
where (i) and (ii) are based on integration by parts. 

Similar to the proof of proposition~\ref{prop:steinMap}, we can restrict the velocity field in RKHS and find the steepest gradient boosting direction as follows according to~\Cref{eq:kernelMap,eq:kernelExpress,eq:velocityDecomp}:
\begin{equation}
\begin{aligned}
 & v^*(\boldsymbol{X}^{\text{(joint)}}, \tau)\\ 
   =& \mathop{\arg\max}_{v(\boldsymbol{X}^{\text{(joint)}}, \tau)\in\mathcal{H}^d}
    \begin{aligned}
    & \{\mathbb{E}_{ r(\boldsymbol{X}^{(\text{joint})},\tau)}[ 
      v^{\top}(\boldsymbol{X}^{\text{(joint)}}, \tau)\nabla_{\boldsymbol{X}^{(\text{miss})}} \log{\hat{p}( \boldsymbol{X}^{(\text{joint})} )}  \\
    & \quad \quad- \lambda\nabla_{\boldsymbol{X}^{(\text{miss})}} \cdot v(\boldsymbol{X}^{\text{(miss)}}, \tau)
        ]\}- \frac{1}{2}\Vert v(\boldsymbol{X}^{\text{(joint)}}, \tau)\Vert_{\mathcal{H}^d},
    \end{aligned}\\
  \overset{\text{(i)}}{=}& \mathop{\arg\max}_{v(\boldsymbol{X}^{\text{(joint)}}, \tau)\in\mathcal{H}^d} 
    \begin{aligned}
    & \{\mathbb{E}_{ r(\tilde{\boldsymbol{X}}^{(\text{joint})},\tau)}[ 
      \sum_{i=1}^{\infty}{\sqrt{\xi_i}\nabla_{\tilde{\boldsymbol{X}}^{(\text{miss})}} \log{\hat{p}( \tilde{\boldsymbol{X}}^{(\text{joint})} )^{\top}}v_i\phi_i(\tilde{\boldsymbol{X}}^{\text{(joint)}}, \tau)}\\
    & \quad \quad- \lambda\nabla_{\tilde{\boldsymbol{X}}^{(\text{miss})}} \cdot \sum_{i=1}^{\infty}{v_i\sqrt{\xi_i}\phi_i(\tilde{\boldsymbol{X}}^{\text{(joint)}}, \tau)}
          ]\}- \frac{1}{2}\sum_{i=1}^{\infty}{\Vert v_i\Vert^2},
    \end{aligned}\\
  \end{aligned}
\end{equation}
Take the right-hand-side of (i) with-respect-to $v_i$, and set it to $0$, we can get:
\begin{equation}
   \sqrt{\xi_i}\{\mathbb{E}_{ r(\tilde{\boldsymbol{X}}^{(\text{joint})},\tau)}[ 
      [{\nabla_{\tilde{\boldsymbol{X}}^{(\text{miss})}}} \log{\hat{p}( \tilde{\boldsymbol{X}}^{(\text{joint})})}]^\top\phi_i(\tilde{\boldsymbol{X}}^{\text{(joint)}}, \tau)
   - \lambda\nabla_{\tilde{\boldsymbol{X}}^{(\text{miss})}}{\phi_i(\tilde{\boldsymbol{X}}^{\text{(joint)}}, \tau)}
          ]\}- v_i =0.
\end{equation}
On this basis, $v_i^*$ can be given as follows:
\begin{equation}
  v_i^* =  \sqrt{\xi_i}\{\mathbb{E}_{ r(\tilde{\boldsymbol{X}}^{(\text{joint})},\tau)}[ 
    {[\nabla_{\tilde{\boldsymbol{X}}^{(\text{miss})}} \log{\hat{p}( \tilde{\boldsymbol{X}}^{(\text{joint})} )}]^\top\phi_i(\tilde{\boldsymbol{X}}^{\text{(joint)}}, \tau)}
 - \lambda\nabla_{\tilde{\boldsymbol{X}}^{(\text{miss})}}{\phi_i(\tilde{\boldsymbol{X}}^{\text{(joint)}}, \tau)}]\},
\end{equation}
and hence, $u(\boldsymbol{X}^{\text{(joint)}}, \tau)$ can be given as follows:
\begin{equation}
  \begin{aligned}
  & u(\boldsymbol{X}^{\text{(joint)}}, \tau) \\
=& \sum_{i=1}^{\infty}{\sqrt{\xi_i} v_i^* \phi_i(\boldsymbol{X}^{\text{(joint)}}, \tau)}\\
=&\mathbb{E}_{r(\boldsymbol{X}^{(\text{joint})},\tau)}\left[ 
\begin{aligned}  
& { - \lambda \nabla_{\tilde{\boldsymbol{X}}^{(\text{joint})}} \mathcal{K}(\boldsymbol{X}^{(\text{joint})}  , \tilde{\boldsymbol{X}}^{(\text{joint})})  } \\
&\quad \quad+ \nabla_{\tilde{\tilde{\boldsymbol{X}}}^{(\text{miss})}}\log{\hat{p}( \tilde{\boldsymbol{X}}^{(\text{joint})} )}\mathcal{K}(\boldsymbol{X}^{(\text{miss})}  , \tilde{\boldsymbol{X}}^{(\text{miss})}) 
\end{aligned} 
\right].
  \end{aligned}
\end{equation}

\textbf{Lower Bound Acquirement:}

Before proving the proposition, we should notice that given the unchanged observational data $\boldsymbol{X}^{\text{(obs)}}$, the distribution $p(\boldsymbol{X}^{\text{(obs)}})$ is a constant. On this basis, consider the definition of $\mathcal{F}_{\text{NER}}$ (right-hand-side of~\Cref{eq:NEGFunctional}), the first term and the second term are denoted by `term 1' and `term 2' for simplicity:
\begin{equation}
  \underbrace{ \mathbb{E}_{
    r(\boldsymbol{X}^{(\text{miss})})}[{\log{ \hat{p}( \boldsymbol{X}^{(\text{miss})}\vert \boldsymbol{X}^{(\text{obs})}) }}]}_{\text{term 1}}  +\lambda\times \left[\underbrace{-  \mathbb{H}[r(\boldsymbol{X}^{(\text{miss})})]}_{\text{term 2}}\right].
\end{equation}
For term 1, we can obtain the following derivation:
\begin{equation}\label{eq:term1Upper}
\begin{aligned}
  & \int{ r(\boldsymbol{X}^{(\text{miss})})\log{ \hat{p}( \boldsymbol{X}^{(\text{miss})}\vert \boldsymbol{X}^{(\text{obs})}) } \mathrm{d}\boldsymbol{X}^{(\text{miss})} }\\
   & \ge\int{ r(\boldsymbol{X}^{(\text{miss})})\log{ \hat{p}( \boldsymbol{X}^{(\text{miss})}\vert \boldsymbol{X}^{(\text{obs})}) } \mathrm{d}\boldsymbol{X}^{(\text{miss})} } +\underbrace{ \int{p(\boldsymbol{X}^{\text{(obs)}})\log{p(\boldsymbol{X}^{\text{(obs)}})}\mathrm{d}\boldsymbol{X}^{\text{(obs)}}}}_{\text{negative entropy (negative constant)}} \\
  =&  \iint{ p(\boldsymbol{X}^{(\text{obs})})r(\boldsymbol{X}^{(\text{miss})})\log{ \hat{p}( \boldsymbol{X}^{(\text{miss})}\vert \boldsymbol{X}^{(\text{obs})}) } \mathrm{d}\boldsymbol{X}^{(\text{miss})} \mathrm{d}\boldsymbol{X}^{(\text{obs})}} \\
  &\quad  +\underbrace{ \int{p(\boldsymbol{X}^{\text{(obs)}})\log{p(\boldsymbol{X}^{\text{(obs)}})}\mathrm{d}\boldsymbol{X}^{\text{(obs)}}}}_{\text{negative constant}} \\
  =&  \iint{ p(\boldsymbol{X}^{(\text{obs})})r(\boldsymbol{X}^{(\text{miss})})\log{ \hat{p}( \boldsymbol{X}^{(\text{miss})}\vert \boldsymbol{X}^{(\text{obs})}) } \mathrm{d}\boldsymbol{X}^{(\text{miss})} \mathrm{d}\boldsymbol{X}^{(\text{obs})}} \\
  &\quad  +\underbrace{ \iint{r(\boldsymbol{X}^{(\text{miss})})p(\boldsymbol{X}^{\text{(obs)}})\log{p(\boldsymbol{X}^{\text{(obs)}})}\mathrm{d}\boldsymbol{X}^{(\text{miss})} \mathrm{d}\boldsymbol{X}^{\text{(obs)}}}}_{\text{negative constant}} \\
  = &  \iint{ \underbrace{ p(\boldsymbol{X}^{(\text{obs})})r(\boldsymbol{X}^{(\text{miss})})}_{r( \boldsymbol{X}^{(\text{miss})}, \boldsymbol{X}^{(\text{obs})})}[\underbrace{\log{ \hat{p}( \boldsymbol{X}^{(\text{miss})}\vert \boldsymbol{X}^{(\text{obs})}) } + \log{p( \boldsymbol{X}^{(\text{obs})}) }}_{\log{\hat{p}( \boldsymbol{X}^{(\text{miss})}, \boldsymbol{X}^{(\text{obs})})}}]\mathrm{d}\boldsymbol{X}^{(\text{miss})} \mathrm{d}\boldsymbol{X}^{(\text{obs})}}\\
  = & \mathbb{E}_{r( \boldsymbol{X}^{(\text{miss})}, \boldsymbol{X}^{(\text{obs})})}[\log{\hat{p}( \boldsymbol{X}^{(\text{miss})}, \boldsymbol{X}^{(\text{obs})})}].
\end{aligned}
\end{equation}
Similarly, the term 2 can be reformulated as follows:
\begin{equation}\label{eq:term2Upper}
\begin{aligned}
 & -\mathbb{H}[r(\boldsymbol{X}^{(\text{miss})})] \\
 \ge  & -\mathbb{H}[r(\boldsymbol{X}^{(\text{miss})})] +\underbrace{ \int{p(\boldsymbol{X}^{\text{(obs)}})\log{p(\boldsymbol{X}^{\text{(obs)}})}\mathrm{d}\boldsymbol{X}^{\text{(obs)}}}}_{\text{negative entropy (negative constant)}} \\
 = & \iint{ p(\boldsymbol{X}^{(\text{obs})})r(\boldsymbol{X}^{(\text{miss})})\log{r(\boldsymbol{X}^{(\text{miss})})\mathrm{d}\boldsymbol{X}^{(\text{miss})} \mathrm{d}\boldsymbol{X}^{(\text{obs})}}} \\
 &\quad +\underbrace{ \iint{ p(\boldsymbol{X}^{(\text{obs})})r(\boldsymbol{X}^{(\text{miss})})\log{p(\boldsymbol{X}^{\text{(obs)}})} \mathrm{d}\boldsymbol{X}^{(\text{miss})} \mathrm{d}\boldsymbol{X}^{(\text{obs})} }}_{\text{negative entropy (negative constant)}} \\
 =& \iint{ \underbrace{p(\boldsymbol{X}^{(\text{obs})})r(\boldsymbol{X}^{(\text{miss})})}_{r(\boldsymbol{X}^{(\text{obs})}, \boldsymbol{X}^{(\text{miss})})}[ \underbrace{\log{r(\boldsymbol{X}^{(\text{miss})}) } + \log{p(\boldsymbol{X}^{\text{(obs)}})}}_{r(\boldsymbol{X}^{(\text{obs})}, \boldsymbol{X}^{(\text{miss})})}] \mathrm{d}\boldsymbol{X}^{(\text{miss})} \mathrm{d}\boldsymbol{X}^{(\text{obs})}}  \\
= & -\mathbb{H}[r(\boldsymbol{X}^{(\text{obs})}, \boldsymbol{X}^{(\text{miss})})] .
\end{aligned}
\end{equation}

Combine~\Cref{eq:term1Upper,eq:term2Upper}, we can obtain the following relationship:
\begin{equation}\label{eq:jointCondConstant}
  \mathcal{F}_{\text{NER}} -\text{const}= \mathcal{F}_{\text{joint-NER}} ,
\end{equation}
and constant $\text{const}$ is greater than $0$.

\end{proof}

\begin{corollary*}[\ref{eq:velocityEqual}]
The following equation can be satisfied:
\begin{equation}
   {u}( \boldsymbol{X}^{\text{(joint)}}, \tau) = u( \boldsymbol{X}^{\text{(miss)}}, \tau).
\end{equation}
\end{corollary*}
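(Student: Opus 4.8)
The plan is to prove the identity by directly comparing the two closed-form velocity fields, namely the conditional expression in~\Cref{eq:RKHSVelocityField} and the joint expression in~\Cref{eq:jointVelocityField}, and showing that the joint expression collapses term-by-term onto the conditional one. At a high level the result is expected: \Cref{prop:JointCondDist} already establishes both that $\mathcal{F}_{\text{joint-NER}}$ induces the velocity field~\Cref{eq:jointVelocityField} and that $\mathcal{F}_{\text{joint-NER}} = \mathcal{F}_{\text{NER}} - \text{const}$, with the gap being independent of the evolving missing coordinates. Since a constant shift leaves the first variation, and hence the RKHS steepest-ascent direction of~\Cref{prop:steinMap}, untouched, the two flows should coincide; the corollary makes this precise through an explicit reduction.

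I would carry out three reductions, in order. First, I exploit the separability of the RBF kernel across the missing and observed coordinates: writing $\boldsymbol{X}^{\text{(joint)}} = (\boldsymbol{X}^{\text{(miss)}}, \boldsymbol{X}^{\text{(obs)}})$, one has $\mathcal{K}(\boldsymbol{X}^{\text{(joint)}}, \tilde{\boldsymbol{X}}^{\text{(joint)}}) = \mathcal{K}(\boldsymbol{X}^{\text{(miss)}}, \tilde{\boldsymbol{X}}^{\text{(miss)}})\,\mathcal{K}(\boldsymbol{X}^{\text{(obs)}}, \tilde{\boldsymbol{X}}^{\text{(obs)}})$. Because the observed entries are held fixed throughout the flow, the observed factor equals $\mathcal{K}(\boldsymbol{X}^{\text{(obs)}}, \boldsymbol{X}^{\text{(obs)}}) = 1$ and is constant in $\tilde{\boldsymbol{X}}^{\text{(miss)}}$, so both $\mathcal{K}(\boldsymbol{X}^{\text{(joint)}}, \tilde{\boldsymbol{X}}^{\text{(joint)}})$ and $\nabla_{\tilde{\boldsymbol{X}}^{\text{(miss)}}}\mathcal{K}(\boldsymbol{X}^{\text{(joint)}}, \tilde{\boldsymbol{X}}^{\text{(joint)}})$ reduce to their missing-coordinate counterparts. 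Second, I replace the joint score by the conditional score: from the factorization $\log\hat{p}(\boldsymbol{X}^{\text{(joint)}}) = \log\hat{p}(\boldsymbol{X}^{\text{(miss)}}\vert\boldsymbol{X}^{\text{(obs)}}) + \log p(\boldsymbol{X}^{\text{(obs)}})$, the gradient in the missing coordinates annihilates the observed marginal term, giving $\nabla_{\tilde{\boldsymbol{X}}^{\text{(miss)}}}\log\hat{p}(\tilde{\boldsymbol{X}}^{\text{(joint)}}) = \nabla_{\tilde{\boldsymbol{X}}^{\text{(miss)}}}\log\hat{p}(\tilde{\boldsymbol{X}}^{\text{(miss)}}\vert\boldsymbol{X}^{\text{(obs)}})$, which is exactly~\Cref{eq:CondByJoint}. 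Third, I invoke the assumed factorization $r(\boldsymbol{X}^{\text{(joint)}}) = r(\boldsymbol{X}^{\text{(miss)}})p(\boldsymbol{X}^{\text{(obs)}})$ from~\Cref{prop:JointCondDist}, so that once the observed-dependent factors have collapsed, the expectation $\mathbb{E}_{r(\tilde{\boldsymbol{X}}^{\text{(joint)}},\tau)}$ marginalizes the fixed observed component and reduces to $\mathbb{E}_{r(\tilde{\boldsymbol{X}}^{\text{(miss)}},\tau)}$. Assembling the three reductions turns~\Cref{eq:jointVelocityField} into~\Cref{eq:RKHSVelocityField}, establishing the claim.

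The main obstacle will be rigorously justifying the collapse of the observed-coordinate kernel factor, i.e.\ that $\tilde{\boldsymbol{X}}^{\text{(obs)}}$ may be identified with $\boldsymbol{X}^{\text{(obs)}}$ so that $\mathcal{K}(\boldsymbol{X}^{\text{(obs)}}, \tilde{\boldsymbol{X}}^{\text{(obs)}})=1$; this hinges on the fact that the continuity equation underlying the flow transports only the missing coordinates, leaving the observed coordinates invariant, as already used in step~(i) of the velocity-field derivation inside~\Cref{prop:JointCondDist}. Care is also needed to confirm that no residual dependence on $\tilde{\boldsymbol{X}}^{\text{(obs)}}$ survives in the conditional score after the substitution, so that the marginalization against $p(\boldsymbol{X}^{\text{(obs)}})$ genuinely integrates to one rather than leaving a weighting factor; this is precisely what makes the proposal factorization hypothesis indispensable. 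Everything else reduces to routine bookkeeping of the kernel and the expectation.
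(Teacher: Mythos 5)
Your high-level starting point --- that the constant gap $\mathcal{F}_{\text{joint-NER}} = \mathcal{F}_{\text{NER}} - \text{const}$ from Proposition~\ref{prop:JointCondDist} should force the two velocity fields to coincide --- is exactly the paper's argument: the paper simply observes that adding a constant independent of $r(\boldsymbol{X}^{\text{(miss)}})$ leaves $\nabla_{\boldsymbol{X}^{\text{(miss)}}}\frac{\delta \mathcal{F}}{\delta r(\boldsymbol{X}^{\text{(miss)}})}$ unchanged, and stops there. Had you stopped at that point, your proof would be essentially the paper's.

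The trouble is the explicit term-by-term reduction you then attempt, and specifically the first of your three reductions. The factorization $\mathcal{K}(\boldsymbol{X}^{\text{(joint)}}, \tilde{\boldsymbol{X}}^{\text{(joint)}}) = \mathcal{K}(\boldsymbol{X}^{\text{(miss)}}, \tilde{\boldsymbol{X}}^{\text{(miss)}})\,\mathcal{K}(\boldsymbol{X}^{\text{(obs)}}, \tilde{\boldsymbol{X}}^{\text{(obs)}})$ is fine for the RBF kernel, but the second factor is $\mathcal{K}(\boldsymbol{X}^{\text{(obs)}}, \tilde{\boldsymbol{X}}^{\text{(obs)}})$, not $\mathcal{K}(\boldsymbol{X}^{\text{(obs)}}, \boldsymbol{X}^{\text{(obs)}})$. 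Inside the expectation $\mathbb{E}_{r(\tilde{\boldsymbol{X}}^{\text{(joint)}},\tau)}$ the variable $\tilde{\boldsymbol{X}}^{\text{(obs)}}$ ranges over $p(\boldsymbol{X}^{\text{(obs)}})$ --- in the Monte Carlo implementation, over the observed parts of all the other rows of the dataset --- and these generically differ from the observed part of the point at which $u$ is evaluated. The fact that the flow leaves each sample's observed coordinates invariant over $\tau$ does not identify the integration variable with the evaluation point; it only says that neither of them moves. Hence $\mathcal{K}(\boldsymbol{X}^{\text{(obs)}}, \tilde{\boldsymbol{X}}^{\text{(obs)}}) < 1$ off the diagonal, the observed factor does not integrate out to $1$ against $p(\boldsymbol{X}^{\text{(obs)}})$, and the closed forms in \Cref{eq:jointVelocityField} and \Cref{eq:RKHSVelocityField} are not literally identical expressions: they are RKHS steepest-ascent directions taken with respect to two different kernels, one on the joint space and one on the missing-coordinate space. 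Your reduction therefore does not go through as stated; the equality has to be read, as the paper reads it, at the level of the unprojected Wasserstein gradients $\nabla_{\boldsymbol{X}^{\text{(miss)}}}\delta\mathcal{F}/\delta r(\boldsymbol{X}^{\text{(miss)}})$, which are insensitive to the additive constant, rather than by matching the two kernelized formulas term by term.
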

\begin{proof}
  This corollary can be easily proven by according to~\Cref{eq:jointCondConstant}:
  \begin{equation}\label{eq:IdenticalContinuity}
  \begin{aligned}
   & \mathcal{F}_{\text{NER}} = \mathcal{F}_{\text{joint-NER}} +\text{const}\\
\Rightarrow & \nabla_{\boldsymbol{X}^{\text{(miss)}}} \frac{\delta  \mathcal{F}_{\text{NER}} }{\delta r(\boldsymbol{X}^{\text{(miss)}})} 
=  \nabla_{\boldsymbol{X}^{\text{(miss)}}} \frac{\delta  \mathcal{F}_{\text{joint-NER}} +\text{const} }{\delta r(\boldsymbol{X}^{\text{(miss)}})} \\
\Rightarrow & \nabla_{\boldsymbol{X}^{\text{(miss)}}} \frac{\delta  \mathcal{F}_{\text{NER}} }{\delta r(\boldsymbol{X}^{\text{(miss)}})} 
=  \nabla_{\boldsymbol{X}^{\text{(miss)}}} \frac{\delta  \mathcal{F}_{\text{joint-NER}} }{\delta r(\boldsymbol{X}^{\text{(miss)}})}.
  \end{aligned}
  \end{equation}
Plug~\Cref{eq:IdenticalContinuity} into~\Cref{eq:continuityEquation,eq:gradFlowEquation}, we can see that the density functions for $\boldsymbol{X}^{\text{(miss)}}$ within functional $\mathcal{F}_{\text{NER}}$ and $\mathcal{F}_{\text{joint-NER}}$ are identical.
\end{proof}

\section{Detailed Information for KnewImp Implementation}\label{appendix-sec:ExperimentalDetailed}

\subsection{Forward Euler's Method for ODE Simulation}\label{appendix-forwardEuler}

Suppose we have the following ODE:
\begin{equation}
  \frac{\mathrm{d} x_\tau}{\mathrm{d}\tau} = f(x_\tau,\tau),
\end{equation}
and the initial value at $\tau=0$ is given $x_0 = x_{\text{init}}$, the value at time $\eta$ can be derived as follows:
\begin{equation}
  x_\eta = x_0 + \int_{0}^{\eta}{f(x_\tau,\tau)\mathrm{d}\tau}.
\end{equation}
To alleviate the intergal term, the forward Euler's method attempts to convert the integral term to summation term as follows:
\begin{equation}\label{eq:ODEEuler}
  x_\mathrm{\eta} = x_0 + f(x_\tau,\tau)\times(\eta-0).
\end{equation}
On this basis, the value at time $\mathrm{T}$ can be obtained by repeating~\Cref{eq:ODEEuler} from $\tau=0$ to $\tau=\mathrm{T}$, which is is the forward Euler's method.
\begin{algorithm}
\caption{Algorithm for Forward Euler's Method }\label{algoEulerMethod}
\SetKwInput{KwInput}{Input}
\SetKwInput{KwOutput}{Output}
\SetKwInput{Hyperparams}{Hyperparameters}
\SetKwInput{Initialization}{Initialization}
\SetKwInput{Parameter}{Parameter}
\SetKwInput{Training}{Training}
\SetKwInput{Testing}{Testing}
\KwInput{ODE $f(x_\tau,\tau)$; start point $\tau_0$; end point $\tau_\mathrm{T}$; step size $\eta$; initial value $x_{\tau_0}$.}
\KwOutput{Predicted value $x_{\tau_\mathrm{T}}$ at $\tau_\mathrm{T}$.}
Repeating times $j$ calculation: $j\leftarrow (\tau_\mathrm{T} - \tau_0)/{\eta}$\\
\For{$t = \tau_0+\eta, \tau_0+2\eta, ..., \tau_0+j\eta $}{
$x_{\tau_\mathrm{T}} \xleftarrow[]{} x({t-\eta}) + f(x_{{t-\eta}},{t-\eta})\times \eta $
}
\end{algorithm}
\newpage
\subsection{Algorithms for KnewImp}\label{appendix-Algorithm}

As we pointed out in Fig.~\ref{fig:illustration}, the KnewImp mainly consists of two parts, namely `Impute' and `Estimate'. Based on this, we first give the algorithm for the `Impute' and `Estimate' parts in this subsection. 
\begin{algorithm}[htbp]
  \caption{Impute Part Algorithm for KnewImp}\label{algo:impWGFAlgo}
  \SetKwInput{KwInput}{Input}
  \SetKwInput{KwOutput}{Output}
  \SetKwInput{Hyperparams}{Hyperparameters}
  \SetKwInput{Training}{Training}
  \SetKwInput{Testing}{Testing}
  \DontPrintSemicolon
  
  \KwInput{
    Initialized Missing Data $\boldsymbol{X}^{\text{(imp)}}$, Score Function: $\nabla_{{\boldsymbol{X}}^{\text{(joint)}}}\log\hat{p}({\boldsymbol{X}}^{\text{(joint)}})$, and Mask Matrix $\boldsymbol{M}$.
  }
  \Hyperparams{\\
    Simulation Time: $\mathrm{T}$, Discretization Step Size $\eta$, and Bandwidth of RBF kernel $h$.  }
  \KwOut{Imputed Result: $\boldsymbol{X}^{\text{(imp)}}$.}
  Set $\tau=0$,  \\
  \While {$\tau < \mathrm{T}$}{
    \tcc{Velocity Field Acquirement}
  $ \nabla_{  {\boldsymbol{X}^{\text{(miss)}}}}\log{\hat{p}(  \boldsymbol{X}^{\text{(joint)}})} \leftarrow \nabla_{  {\boldsymbol{X}^{\text{(joint)}}}}\log{\hat{p}(  \boldsymbol{X}^{\text{(joint)}})} \odot (\mathbbm{1}_{\mathrm{N} \times \mathrm{D}}-\boldsymbol{M}) +   0 \times \boldsymbol{M},$      \;
  $  {u}( \boldsymbol{X}^{\text{(joint)}}, \tau) \leftarrow \mathbb{E}_{r({\boldsymbol{X}}^{\text{(joint)}},\tau)}\left\{ 
    \begin{aligned}  
    & { - \lambda \nabla_{  {\tilde{\boldsymbol{X}}^{\text{(miss)}}}} \mathcal{K}(  \boldsymbol{X}^{\text{(joint)}}  ,    {\tilde{\boldsymbol{X}}^{\text{(joint)}}})  }\\
    &\quad \quad+ [\nabla_{  {\boldsymbol{X}^{\text{(miss)}}}}\log{\hat{p}(  \boldsymbol{X}^{\text{(joint)}})}]^\top\mathcal{K}(  \boldsymbol{X}^{\text{(joint)}}  ,   \tilde{\boldsymbol{X}}^{\text{(joint)}}) 
    \end{aligned} 
    \right\},$ \;
  \tcc{ODE Simulation By Forward Euler's Method}
  $\boldsymbol{X}^{\text{(imp)}}\leftarrow\boldsymbol{X}^{\text{(imp)}} + \eta \times {u}( \boldsymbol{X}^{\text{(joint)}}, \tau) ,$         \;
  $\tau \leftarrow \tau + 1.$                         \;
  }  
\end{algorithm}
\begin{algorithm}[htbp]
  \caption{Estimate Part Algorithm for KnewImp}\label{algo:impScoreEstimate}
  \SetKwInput{KwInput}{Input}
  \SetKwInput{KwOutput}{Output}
  \SetKwInput{Hyperparams}{Hyperparameters}
  \SetKwInput{Training}{Training}
  \SetKwInput{Testing}{Testing}
  \DontPrintSemicolon
  \KwInput{Imputed Data $\boldsymbol{X}^{\text{(imp)}}$, 
  and $\nabla_{{\boldsymbol{X}}^{\text{(joint)}}}\log\hat{p}({\boldsymbol{X}}^{\text{(joint)}})$ parameterized by Neural Network with Parameter $\theta$.
  }
  \Hyperparams{\\Network Learning Rate $lr$, Training Epoch $\mathcal{E}$, and Network Hidden Unit $\text{HU}_{\text{score}}$. }
  \KwOutput{Score Function: $\nabla_{{\boldsymbol{X}}^{\text{(joint)}}}\log\hat{p}({\boldsymbol{X}}^{\text{(joint)}})$.}
  \While {$e \leq \mathcal{E}$}{
    \tcc{Data Noising}
    $\hat{\boldsymbol{X}}^{\text{(joint)}}\leftarrow{\boldsymbol{X}}^{\text{(joint)}} + \epsilon,\epsilon\sim\mathcal{N}(\mathbf{0}, \sigma^2\mathbf{I}),$\;
   $ \nabla_{\hat{\boldsymbol{X}}^{\text{(joint)}}} \log{q_{\sigma}(\hat{\boldsymbol{X}}^{\text{(joint)}}\vert {\boldsymbol{X}}^{\text{(joint)}})} \leftarrow-\frac{\hat{\boldsymbol{X}}^{\text{(joint)}} -\boldsymbol{X}^{\text{(joint)}} }{\sigma^2},$\;
   \tcc{Score Function Training}
   $ \mathcal{L}_{\text{DSM}}\leftarrow\frac{1}{2}\mathbb{E}_{q_{\sigma}(\hat{\boldsymbol{X}}^{\text{(joint)}}\vert {\boldsymbol{X}}^{\text{(joint)}})}[\Vert 
   \nabla_{{\hat{\boldsymbol{X}}}^{\text{(joint)}}}\log\hat{p}({\boldsymbol{X}}^{\text{(joint)}})
   - \nabla_{\hat{\boldsymbol{X}}^{\text{(joint)}}} \log{q_{\sigma}(\hat{\boldsymbol{X}}^{\text{(joint)}}\vert {\boldsymbol{X}}^{\text{(joint)}})}  \Vert^2],$\;
   $\theta \leftarrow \theta - lr\times \nabla_{\theta}{\mathcal{L}_{\text{DSM}}}.$\;

  }
\end{algorithm}
\\On this basis, the algorithm for KnewImp is summarized as follows:
\begin{algorithm}[htbp]
  \caption{KnewImp Algorithm for MDI}\label{algo:knewImpAlgorithm}
  \SetKwInput{KwInput}{Input}
  \SetKwInput{KwOutput}{Output}
  \SetKwInput{Hyperparams}{Hyperparameters}
  \SetKwInput{Training}{Training}
  \SetKwInput{Testing}{Testing}
  \DontPrintSemicolon

  \KwInput{
    Missing Data $\boldsymbol{X}^{\text{(miss)}}$, 
    and Mask Matrix $\boldsymbol{M}$.
  }
  \Hyperparams{\\
  Loop Time: $\mathcal{T}$,
    Simulation Time: $\mathrm{T}$, Discretization Step Size $\eta$, Bandwidth of RBF kernel $h$, Network Learning Rate $lr$, Training Epoch $\mathcal{E}$, and Network Hidden Unit $\text{HU}_{\text{score}}$. }
  $\boldsymbol{X}^{\text{(imp)}} \leftarrow \text{Initialize}(\boldsymbol{X}^{\text{(imp)}})$ \\
  \While {$t < \mathcal{T}$}{
    \tcc{`Estimate' Part}
    $\nabla_{{\boldsymbol{X}}^{\text{(joint)}}}\log\hat{p}({\boldsymbol{X}}^{\text{(joint)}})\leftarrow\text{\Cref{algo:impScoreEstimate}}$\;
    \tcc{`Impute' Part}
    $\boldsymbol{X}^{\text{(imp)}}\leftarrow\text{\Cref{algo:impWGFAlgo}}$\;

  }

\end{algorithm}
\newpage
\section{Detailed Information for Experiments}\label{appendix-datasetForExperiments}

\subsection{Background \& Simulation of Missing Data}\label{appendix-subsec:missingSimulation}
According to reference~\cite{rubin1976inference}, missing data can be classified into three categories: Missing Completely at Random (MCAR), where the absence of data is completely unrelated to any observed or unobserved variables; Missing at Random (MAR), where the likelihood of missing data depends solely on observed data; and Missing Not at Random (MNAR), where missingness is influenced by unobserved data. In the cases of MCAR and MAR, the patterns of missing data are considered `ignorable' because it is unnecessary to explicitly model the distribution of the missing values. Conversely, MNAR scenarios, where missing data can introduce significant biases that are not easily corrected without imposing domain-specific assumptions, constraints, or parametric forms on the missingness mechanism, present more complex challenges~\cite{muzellec2020missing,jarrett2022hyperimpute}. Therefore, our discussion is primarily focused on numerical tabular data within the MCAR and MAR contexts.

To simulate missing data, we adopt the methodologies outlined in reference~\cite{jarrett2022hyperimpute}:
\begin{itemize}[leftmargin=*]
\item{\textbf{MAR:}  Initially, a random subset of features is selected to remain non-missing. The masking of the remaining features is conducted using a logistic model, which employs the non-missing features as predictors. This model is parameterized with randomly selected weights, and the bias is adjusted to achieve the desired missingness rate.}
\item{\textbf{MCAR:} For each data point, the masking variable is generated from a Bernoulli distribution with a predetermined fixed mean, ensuring that the probability of missingness is the same across all data points.}
\item{\textbf{MNAR:} Although MNAR scenarios are not the primary focus of this manuscript, we include experiments in this context. Missingness is introduced either by additional masking of the MAR-selected features using a Bernoulli process with a fixed mean, or through direct self-masking of values using interval-censoring techniques. In this paper, we mainly consider the former strategy. In other words, the mechanism of MNAR we used in this paper is identical to the previously described MAR mechanism, but the inputs of the logistic model are then masked by an MCAR mechanism. }
\end{itemize}

Based on this, the datasets listed in~\Cref{tab:DatasetsDescription} are adopted in this paper.
\begin{table}[htbp]
  \caption{Detailed dataset descriptions, where `Dimension' denotes the variate number of each dataset. `Numer' denotes the total number of item.}
  \label{tab:DatasetsDescription}
  \centering
  \begin{tabular}{l|l|l|l}
  \toprule
  Abbreviation & Dataset Name               & Numer ($\mathrm{N}$) & Dimension ($\mathrm{D}$) \\ \midrule
  BT           & Blood Transfusion         & 748   & 4         \\
  BCD          & Breast Cancer Diagnostic  & 569   & 30        \\
  CC           & Concrete Compression      & 1030  & 7         \\
  CBV          & Connectionist Bench Vowel & 990   & 10        \\
  IS           & Ionosphere                & 351   & 34        \\
  PK           & Parkinsons                & 195   & 23        \\
  QB           & QSAR Biodegradation       & 1055  & 41        \\
  WQW          & Wine Quality White        & 4898  & 11        \\ \bottomrule
  \end{tabular}
  \end{table}
\newpage
\subsection{Training Protocols of Different Models }\label{appendix-TrainingProtocols}
In this study, we employ a two-layer Multi-Layer Perceptron (MLP) to model $\nabla_{\boldsymbol{X}^{\text{(miss)}}}\log{\hat{p}}(\boldsymbol{X}^{\text{(miss)}})$. Each layer is configured with 256 hidden units ($\textrm{HU}_{\text{score}}$). The activation function is set as `Swish' function~\cite{ramachandran2017searching}, and the variance scale $\sigma$ for DSM is set as 0.1. The network is trained using an Adam optimizer with a learning rate of $\text{1.0} \times \text{10}^{\text{-3}}$, and the batch size is dynamically set to $\mathrm{N}$. For the `impute' part, we specify a simulation time ($\mathrm{T}$) of 500, a step size of 0.1, and a bandwidth ($h$) of 0.5. The loop time $\mathcal{T}$ for KnewImp is set as 2. For baseline models, the batch size is uniformly set at 512. Models incorporating neural architectures are optimized with the Adam optimizer at a learning rate of $1.0 \times 10^{-2}$, in line with the practices recommended by Kingma and Ba~\cite{kingma2014adam}. The MIWAE model features a latent dimension of 16 and 32 hidden units. The settings for the TDM model include 16 hidden units per layer and two layers. 
For the CSDI\_T and MissDiff models, the parameters are set as follows: particle number at 50, diffusion embedding dimension at 128, batch size at 512 (for Sink and TDM, if $\mathrm{N}<512$, the batch size is set as $2^{\lfloor\frac{\mathrm{N}}{2}\rfloor}$), and learning rate at $1.0 \times 10^{-3}$, and diffusion steps at 100. 

To ensure fairness and reproducibility, all experiments are conducted on a workstation equipped with an Intel Xeon E5 processor with four cores, eight Nvidia GTX 1080 GPUs, and 128 GB of RAM. Each experiment is replicated at least five times, utilizing six distinct random seeds to guarantee robustness in the results.

\subsection{Evaluation Protocols}\label{appendix-subsec:evaluationProtocols}
Imputation methods are assessed using two metrics: the mean absolute error (MAE), which is a pointwise metric, and the squared Wasserstein distance (abbreviated as Wass), which evaluates empirical distributions. Based on reference~\cite{muzellec2020missing}, consider a dataset $\boldsymbol{X} \in \mathbb{R}^{\mathrm{N}\times\mathrm{D}}$ with missing values. For any entry $(i, j)$ identified as missing, let $\boldsymbol{X}^{\text{(imp)}}[i,d]$ represent the corresponding imputation, and $\boldsymbol{X}^{\text{(true)}}[i,d]$ denote the ground truth. Define $\boldsymbol{m}_0$ as the total number of missing entries, $\boldsymbol{m}_0 \coloneqq \#\{(i,d) , \boldsymbol{M}[i,d]=0\}$, and $\boldsymbol{m}_1$ as the number of data points that have at least one missing value, $\boldsymbol{m}_1 \coloneqq \#\{i : \exists d, \boldsymbol{M}[i,d]=0\}$. The set $\boldsymbol{M}_1$ encompasses indices of data points with any missing values, $\boldsymbol{M}_1 \coloneqq \{i : \exists d, \boldsymbol{M}[i,d]=0\}$. The metrics used to evaluate the accuracy of the imputation, MAE and Wass, are calculated as follows:
\begin{equation}\label{eq:MAEDef}
  \text{MAE} \coloneqq \frac{1}{\boldsymbol{m}_0}\sum_{(i,d):\boldsymbol{M}[i,d]=0}{\left| \boldsymbol{X}^{\text{(true)}}[i,d] -  \boldsymbol{X}^{\text{(imp)}}[i,d]  \right|},
\end{equation}
\begin{equation}\label{eq:WassDef}
  \text{Wass} \coloneqq \mathcal{W}^2_2\left[ \frac{1}{\boldsymbol{m}_1}\sum_{k=1}^{K}{\varDelta_{\boldsymbol{X}^{\text{(imp)}}_{\boldsymbol{M}_1}}}, \frac{1}{\boldsymbol{m}_1}\sum_{k=1}^{K}{\varDelta_{\boldsymbol{X}^{\text{(true)}}_{\boldsymbol{M}_1}}} \right], 
\end{equation}
where $\varDelta_{\boldsymbol{x}}$ is the Dirac distribution (measure) concentrated on $\boldsymbol{x}$.

\newpage
\section{Additional Empirical Evidence }\label{appendix-sec:additionalEmpiricalEvidence}
\subsection{Additional Experimental Results with MNAR Scenario}\label{appendix-sec:BaselineComparison}
In this subsection, we expand upon the results presented in~\Cref{tab:ComparisonResults} by including the MNAR scenario, as detailed in~\Cref{appendix-tab:ComparisonResults}. Additionally, we report on the outcomes of an ablation study and sensitivity analysis in~\Cref{appendix-tab:AblationStudy,appendix-tab:stdAblationStudy} and~\Cref{appendix-fig:sensResult}. These extended results lead to several pertinent observations:

\begin{itemize}[leftmargin=*]
  \item{Across three different missing data scenarios, the models consistently exhibit the poorest performance under the MNAR condition. For instance, in the MNAR scenario, nearly all models show a significant decrease in imputation accuracy and an increase in standard deviation. This supports the assertion made in~\Cref{appendix-subsec:missingSimulation} that addressing the MNAR scenario requires the incorporation of relevant domain knowledge to mitigate biases introduced by the pattern of missing data.}
  \item{The findings from the ablation study under the MNAR scenario are consistent with those observed in both MAR and MCAR scenarios in~\Cref{subsec:abResults}. This consistency underscores the importance of including the NER term and adopting the joint distribution modeling approach.}
  \item{Similarly, the results from the sensitivity analysis under the MNAR scenario align with those from MAR and MCAR scenarios in~\Cref{subsec:sensResults}. This alignment reinforces our interpretations of model performance across different groups of hyperparameters under MAR and MCAR scenarios.}
\end{itemize}

\begin{table}[htbp]
  \caption{Performance of MAE and Wass metrics at 30\% missing rate, and `$^*$' marks that KnewImp outperforms significantly at $p$-value < 0.05 over paired samples $t$-test. 
  }
  \label{appendix-tab:ComparisonResults}
  \resizebox{1.0\linewidth}{!}{
\begin{tabular}{c|l|ll|ll|ll|ll|ll|ll|ll|ll}
\toprule
\multirow{2}{*}{Scenario} & \multicolumn{1}{c|}{\multirow{2}{*}{Model}} & \multicolumn{2}{c|}{BT} & \multicolumn{2}{c|}{BCD} & \multicolumn{2}{c|}{CC} & \multicolumn{2}{c|}{CBV} & \multicolumn{2}{c|}{IS} & \multicolumn{2}{c|}{PK} & \multicolumn{2}{c|}{QB} & \multicolumn{2}{c}{WQW} \\ \cmidrule{3-18} 
                          & \multicolumn{1}{c|}{}                       & MAE       & Wass       & MAE       & Wass        & MAE       & Wass       & MAE       & Wass        & MAE      & Wass        & MAE       & Wass       & MAE       & Wass        & MAE       & Wass        \\ \midrule
\multirow{8}{*}{MAR}      & CSDI\_T                                     & 0.93$^*$  & 3.44$^*$   & 0.92$^*$  & 18.2$^*$    & 0.85$^*$  & 2.82$^*$   & 0.81$^*$  & 3.86$^*$    & 0.70$^*$  & 16.9$^*$   & 0.99$^*$  & 15.9$^*$  & 0.65$^*$  & 20.1$^*$    & 0.77$^*$  & 4.13$^*$    \\
                          & MissDiff                                    & 0.85$^*$   & 2.20$^*$   & 0.91$^*$   & 16.5$^*$  & 0.87$^*$   & 1.59$^*$  & 0.83$^*$   & 3.87$^*$   & 0.72$^*$  & 13.3$^*$  & 0.92$^*$  & 17.1$^*$  & 0.63$^*$   & 26.3$^*$  & 0.75$^*$   & 6.88$^*$   \\
                          & GAIN                                        & 0.75$^*$  & 0.65$^*$   & 0.54$^*$  & 1.64$^*$    & 0.75$^*$  & 0.67$^*$   & 0.68$^*$  & 0.68$^*$    & 0.56$^*$ & 1.88$^*$    & \uline{0.59}$^*$  & \uline{1.90}$^*$    & 0.65$^*$  & 5.05$^*$    & 0.68$^*$  & 0.87$^*$    \\
                          & MIRACLE                                     & \uline{0.62}$^*$  & \uline{0.38}       & 0.55$^*$  & 1.92$^*$    & \textbf{0.43}      & \textbf{0.25}       & \uline{0.55}$^*$  & \uline{0.46}$^*$    & 3.39$^*$ & 35.1$^*$   & 4.14$^*$  & 34.1$^*$  & \uline{0.46}      & \textbf{2.87}$^*$    & \uline{0.51}$^*$  & \uline{0.56}        \\
                          & MIWAE                                       & 0.64      & 0.53       & \uline{0.52}$^*$  & \uline{1.54}$^*$    & 0.76$^*$  & 0.64$^*$   & 0.82$^*$  & 0.92$^*$    & \uline{0.50}$^*$  & \uline{1.87}$^*$    & 0.65$^*$  & 1.98$^*$   & 0.55$^*$  & 5.05$^*$    & 0.62$^*$  & 0.75$^*$    \\
                          & Sink                                        & 0.87$^*$  & 0.92$^*$   & 0.92$^*$  & 3.84$^*$    & 0.88$^*$  & 0.83$^*$   & 0.84$^*$  & 0.98$^*$    & 0.75$^*$ & 2.43$^*$    & 0.94$^*$  & 3.61$^*$   & 0.65$^*$  & 4.71$^*$    & 0.76$^*$  & 1.04$^*$    \\
                          & TDM                                         & 0.83$^*$  & 0.89$^*$   & 0.83$^*$  & 3.47$^*$    & 0.81$^*$  & 0.73$^*$   & 0.76$^*$  & 0.85$^*$    & 0.62$^*$ & 1.96$^*$    & 0.86$^*$  & 3.36$^*$   & 0.59$^*$  & 4.46$^*$    & 0.73$^*$  & 0.99$^*$    \\
                          & \textbf{KnewImp}                                     & \textbf{0.52}      & \textbf{0.38}       & \textbf{0.34}      & \textbf{0.82}        & \uline{0.35}      & \uline{0.25}       & \textbf{0.31}      & \textbf{0.20}         & \textbf{0.39}     & \textbf{1.31}        & \textbf{0.44}      & \textbf{1.21}       & \textbf{0.45}      & \uline{3.50}        & \textbf{0.46}      & \textbf{0.55}        \\ \midrule
\multirow{8}{*}{MCAR}     & CSDI\_T                                     & 0.73$^*$  & 1.93$^*$   & 0.73$^*$  & 15.5$^*$   & 0.85$^*$  & 2.71$^*$   & 0.83$^*$  & 3.79$^*$    & 0.76$^*$ & 15.2$^*$   & 0.72$^*$  & 12.4$^*$  & 0.57$^*$  & 19.9$^*$   & 0.78$^*$  & 4.11$^*$    \\
                          & MissDiff                                    & 0.72$^*$   & 1.62$^*$  & 0.73$^*$   & 14.4$^*$  & 0.84$^*$   & 1.23$^*$  & 0.82$^*$   & 3.31$^*$   & 0.75$^*$  & 13.0$^*$  & 0.71$^*$  & 14.1$^*$  & 0.56$^*$   & 19.7$^*$  & 0.76$^*$   & 4.95$^*$   \\
                          & GAIN                                        & 0.72$^*$  & 0.39$^*$   & \uline{0.38}$^*$  & \uline{1.41}$^*$    & 0.78$^*$  & 0.73$^*$   & 0.72$^*$  & 0.99$^*$    & \uline{0.57}$^*$ & \uline{3.72}$^*$    & \uline{0.46}$^*$  & \uline{1.70}        & 0.42$^*$  & \uline{3.62}        & 0.73$^*$  & 1.14$^*$    \\
                          & MIRACLE                                     & \uline{0.52}      & \textbf{0.15}$^*$   & 0.44$^*$  & 1.94$^*$    & \uline{0.53}$^*$  & \uline{0.35}       & \uline{0.61}$^*$  & \uline{0.72}$^*$    & 2.99$^*$ & 52.9$^*$   & 3.38$^*$  & 42.8$^*$  & \uline{0.35}      & \textbf{2.71}$^*$    & \uline{0.56}$^*$  & \textbf{0.75}        \\
                          & MIWAE                                       & 0.58$^*$  & 0.24       & 0.50$^*$   & 2.55$^*$    & 0.76$^*$  & 0.69$^*$   & 0.83$^*$  & 1.24$^*$    & 0.64$^*$ & 4.95$^*$    & 0.51$^*$  & 2.05$^*$   & 0.48$^*$  & 5.87$^*$    & 0.67$^*$  & 0.95$^*$    \\
                          & Sink                                        & 0.73$^*$  & 0.48$^*$   & 0.75$^*$  & 4.39$^*$    & 0.84$^*$  & 0.85$^*$   & 0.82$^*$  & 1.27$^*$    & 0.75$^*$ & 4.94$^*$    & 0.74$^*$  & 3.36$^*$   & 0.61$^*$  & 5.92$^*$    & 0.76$^*$  & 1.25$^*$    \\
                          & TDM                                         & 0.68$^*$  & 0.42$^*$   & 0.63$^*$  & 3.57$^*$    & 0.77$^*$  & 0.75$^*$   & 0.77$^*$  & 1.15$^*$    & 0.66$^*$ & 4.20$^*$     & 0.64$^*$  & 2.89$^*$   & 0.52$^*$  & 5.34$^*$    & 0.74$^*$  & 1.20$^*$     \\
                          & \textbf{KnewImp}                                     & \textbf{0.48}      & \uline{0.18}       & \textbf{0.25}      & \textbf{0.80}         & \textbf{0.47}      & \textbf{0.34}       & \textbf{0.42}      & \textbf{0.44}        & \textbf{0.44}     & \textbf{3.05}        & \textbf{0.32}      & \textbf{1.01}       & \textbf{0.34}      & 3.66        & \textbf{0.53}      & \uline{0.76}        \\ \midrule
\multirow{8}{*}{MNAR}     & CSDI\_T                                     & 0.83$^*$  & 2.29$^*$   & 0.82$^*$  & 15.7$^*$   & 0.85$^*$  & 2.78$^*$   & 0.83$^*$  & 3.83$^*$    & 0.74$^*$ & 15.5$^*$   & 0.84$^*$  & 12.2$^*$   & 0.62$^*$  & 19.8$^*$   & 0.78$^*$  & 4.09$^*$    \\
                          & MissDiff                                    & 0.78$^*$   & 1.43$^*$  & 0.81$^*$   & 14.9$^*$  & 0.84$^*$   & 1.27$^*$  & 0.83$^*$   & 3.53$^*$   & 0.72$^*$  & 13.3$^*$  & 0.81$^*$  & 16.0$^*$  & 0.61$^*$   & 21.6$^*$  & 0.76$^*$   & 4.70$^*$    \\
                          & GAN                                        & 0.77$^*$  & 0.57$^*$   & 0.62$^*$  & 3.94$^*$    & 0.78$^*$  & 0.79$^*$   & 0.78$^*$  & 1.15$^*$    & 0.71$^*$ & 4.85$^*$    & 0.70$^*$   & 4.20$^*$    & 0.76$^*$  & 10.5$^*$   & 0.75$^*$  & 1.23$^*$    \\
                          & MIRACLE                                     & \uline{0.63}      & \textbf{0.35}       & 0.60$^*$   & 4.26$^*$    & \uline{0.52}$^*$  & \uline{0.35}       & \uline{0.63}$^*$  & \uline{0.77}$^*$    & 3.10$^*$  & 55.6$^*$   & 3.49$^*$  & 44.8$^*$  & \uline{0.52}$^*$  & \uline{5.61}        & \uline{0.58}$^*$  & \uline{0.80}         \\
                          & MIWAE                                       & 0.66$^*$  & 0.42       & \uline{0.56}$^*$  & \uline{3.31}$^*$    & 0.74$^*$  & 0.68$^*$   & 0.85$^*$  & 1.30$^*$     & \uline{0.59}$^*$ & 4.33$^*$    & \uline{0.60}$^*$   & \uline{3.06}$^*$   & 0.53$^*$  & 7.21$^*$    & 0.67$^*$  & 0.97$^*$    \\
                          & Sink                                        & 0.79$^*$  & 0.68$^*$   & 0.83$^*$  & 5.90$^*$     & 0.83$^*$  & 0.89$^*$   & 0.84$^*$  & 1.36$^*$    & 0.75$^*$ & 4.86$^*$    & 0.84$^*$  & 5.02$^*$   & 0.64$^*$  & 7.23$^*$    & 0.77$^*$  & 1.33$^*$    \\
                          & TDM                                         & 0.76$^*$  & 0.64$^*$   & 0.74$^*$  & 5.18$^*$    & 0.76$^*$  & 0.77$^*$   & 0.79$^*$  & 1.24$^*$    & 0.64$^*$ & \uline{4.02}$^*$    & 0.76$^*$  & 4.54$^*$   & 0.57$^*$  & 6.45        & 0.74$^*$  & 1.23$^*$    \\
                          & \textbf{KnewImp}                                     & \textbf{0.60}       & \uline{0.35}       & \textbf{0.32}      & \textbf{1.46}        & \textbf{0.44}      & \textbf{0.34}       & \textbf{0.46}      & \textbf{0.52}        & \textbf{0.40}      & \textbf{2.68}        & \textbf{0.39}      & \textbf{1.56}       & \textbf{0.42}      & \textbf{5.57}        & \textbf{0.55}      & \textbf{0.81}        \\ \bottomrule
\end{tabular}
}
\end{table}

\begin{table}[htbp]
  \caption{Standard deviation of MAE and Wass metrics at 30\% missing rate. 
  }
  \label{appendix-tab:StdComparisonResults}
  \resizebox{1.0\linewidth}{!}{
\begin{tabular}{c|l|ll|ll|ll|ll|ll|ll|ll|ll}
  \toprule
  \multirow{2}{*}{Scenario} & \multicolumn{1}{c|}{\multirow{2}{*}{Model}} & \multicolumn{2}{c|}{BT}             & \multicolumn{2}{c|}{BCD}            & \multicolumn{2}{c|}{CC}             & \multicolumn{2}{c|}{CBV}            & \multicolumn{2}{c|}{IS}             & \multicolumn{2}{c|}{PK}             & \multicolumn{2}{c|}{QB}             & \multicolumn{2}{c}{WQW} \\ \cmidrule{3-18} 
  & \multicolumn{1}{c|}{}                       & MAE     & \multicolumn{1}{l|}{Wass} & MAE     & \multicolumn{1}{l|}{Wass} & MAE     & \multicolumn{1}{l|}{Wass} & MAE     & \multicolumn{1}{l|}{Wass} & MAE     & \multicolumn{1}{l|}{Wass} & MAE     & \multicolumn{1}{l|}{Wass} & MAE     & \multicolumn{1}{l|}{Wass} & MAE        & Wass       \\ \midrule
\multirow{8}{*}{MAR}      & CSDI\_T                                     & 4.8E-2 & 2.9E-1 & 5.9E-2 & 2.6E+0 & 2.7E-2 & 1.4E-1 & 2.0E-2 & 9.2E-2 & 2.1E-2 & 2.0E+0 & 3.9E-2 & 3.1E+0 & 2.0E-2 & 8.3E-1 & 1.8E-2 & 7.7E-2  \\
  & MissDiff                                    & 4.0E-2 & 4.9E-1 & 3.1E-2 & 2.6E+0 & 3.4E-2 & 2.6E-1 & 1.9E-2 & 1.2E+0 & 5.4E-2 & 5.6E-1 & 3.0E-2 & 2.4E+0 & 1.8E-2 & 4.9E+0 & 1.6E-2 & 1.3E+0  \\
  & GAIN                                        & 1.0E-1 & 1.6E-1                   & 4.3E-2 & 2.3E-1                   & 3.4E-2 & 8.9E-2                   & 1.9E-2 & 4.0E-2                   & 5.8E-2 & 3.4E-1                   & 5.4E-2 & 3.7E-1                   & 6.9E-2 & 8.4E-1                   & 3.9E-2    & 5.6E-2    \\
  & MIRACLE                                     & 2.0E-2 & 6.3E-2                   & 5.1E-2 & 4.0E-1                   & 1.1E-2 & 2.0E-2                   & 1.8E-2 & 2.1E-2                   & 6.7E-1 & 1.2E+1                   & 4.2E-1 & 5.6E+0                   & 1.6E-2 & 1.9E-1                   & 1.3E-2    & 2.8E-2    \\
  & MIWAE                                       & 6.5E-2 & 1.5E-1                   & 5.2E-2 & 2.5E-1                   & 6.1E-2 & 1.2E-1                   & 2.4E-2 & 4.6E-2                   & 5.3E-2 & 1.8E-1                   & 2.7E-2 & 1.8E-1                   & 3.8E-2 & 2.8E-1                   & 1.9E-2    & 2.7E-2    \\
  & Sink                                        & 4.6E-2 & 1.2E-1                   & 3.2E-2 & 1.6E-1                   & 2.6E-2 & 7.4E-2                   & 2.4E-2 & 5.5E-2                   & 5.0E-2 & 2.0E-1                   & 1.7E-2 & 9.3E-2                   & 1.7E-2 & 7.6E-2                   & 2.2E-2    & 4.4E-2    \\
  & TDM                                         & 4.5E-2 & 1.2E-1                   & 1.9E-2 & 8.2E-2                   & 3.4E-2 & 8.6E-2                   & 2.6E-2 & 5.2E-2                   & 6.2E-2 & 2.1E-1                   & 2.7E-2 & 1.7E-1                   & 1.1E-2 & 8.1E-2                   & 2.1E-2    & 4.7E-2    \\
  & \textbf{KnewImp}                               & 2.0E-2 & 4.0E-2                   & 2.7E-2 & 1.1E-1                   & 5.6E-2 & 6.4E-2                   & 1.6E-2 & 2.2E-2                   & 1.9E-2 & 1.1E-1                   & 1.1E-2 & 8.8E-2                   & 1.9E-2 & 2.7E-1                   & 1.6E-2    & 3.3E-2    \\ \midrule
\multirow{8}{*}{MCAR}     & CSDI\_T             & 1.0E-2 & 1.5E-1 & 8.7E-3 & 5.7E-1 & 8.7E-3 & 8.2E-2 & 4.6E-3 & 4.6E-2 & 4.6E-3 & 3.6E-1 & 1.1E-2 & 8.7E-1 & 3.7E-3 & 3.1E-1 & 1.2E-2 & 5.0E-02 \\
  & MissDiff                                    &   6.4E-3 & 3.3E-1 & 8.2E-3 & 8.3E-1 & 3.5E-3 & 2.3E-1 & 5.9E-3 & 8.4E-1 & 7.1E-3 & 1.8E-1 & 4.6E-3 & 2.5E+0 & 6.2E-3 & 2.4E+0 & 4.1E-3 & 6.5E-1  \\
  & GAIN                                        & 6.1E-2 & 1.0E-1                   & 7.9E-3 & 2.6E-2                   & 2.4E-2 & 5.2E-2                   & 1.4E-2 & 3.4E-2                   & 1.8E-2 & 1.8E-1                   & 4.5E-2 & 3.8E-1                   & 3.7E-3 & 1.8E-1                   & 2.0E-2    & 5.5E-2    \\
  & MIRACLE                                     & 2.6E-2 & 8.4E-3                   & 1.6E-2 & 1.9E-1                   & 1.7E-2 & 1.5E-2                   & 5.2E-3 & 1.4E-2                   & 4.3E-2 & 1.2E+0                   & 4.6E-2 & 1.1E+0                   & 1.0E-2 & 1.7E-1                   & 1.1E-3    & 5.5E-3    \\
  & MIWAE                                       & 3.1E-2 & 3.9E-2                   & 4.8E-3 & 4.9E-2                   & 7.6E-3 & 1.3E-2                   & 1.6E-2 & 4.3E-2                   & 9.4E-3 & 1.3E-1                   & 1.0E-2 & 7.8E-2                   & 9.1E-3 & 2.7E-1                   & 4.1E-3    & 9.5E-3    \\
  & Sink                                        & 7.3E-3 & 3.4E-2                   & 4.6E-3 & 2.5E-2                   & 7.0E-3 & 6.6E-3                   & 4.5E-3 & 4.5E-3                   & 4.2E-3 & 1.4E-1                   & 1.0E-2 & 5.9E-2                   & 3.3E-3 & 1.9E-1                   & 3.9E-3    & 1.2E-2    \\
  & TDM                                         & 4.9E-3 & 2.8E-2                   & 8.7E-3 & 3.1E-2                   & 9.8E-3 & 6.6E-3                   & 6.9E-3 & 7.9E-3                   & 1.0E-3 & 1.9E-3                   & 3.3E-3 & 3.6E-2                   & 9.3E-3 & 1.5E-1                   & 5.1E-3    & 1.3E-2    \\
  & \textbf{KnewImp}                                     & 3.3E-3 & 3.7E-3                   & 1.9E-3 & 4.6E-2                   & 1.1E-2 & 1.8E-2                   & 4.1E-3 & 1.8E-2                   & 5.7E-3 & 1.1E-1                   & 6.4E-3 & 3.7E-2                   & 4.8E-3 & 1.7E-1                   & 2.2E-3    & 1.1E-2    \\ \midrule
\multirow{8}{*}{MNAR}     & CSDI\_T             & 2.9E-2 & 2.2E-1 & 8.7E-3 & 7.8E-1 & 2.2E-2 & 1.3E-1 & 7.4E-3 & 7.4E-2 & 1.0E-2 & 5.9E-1 & 2.2E-02 & 1.8E+0 & 2.6E-3 & 4.6E-1 & 2.6E-3 & 4.6E-1 \\
  & MissDiff                                    & 3.7E-2 & 3.7E-1 & 2.4E-3 & 9.7E-1 & 5.9E-3 & 2.4E-1 & 5.5E-3 & 8.2E-1 & 1.4E-2 & 3.3E-1 & 1.0E-2 & 2.1E+0 & 8.7E-3 & 3.3E+0 & 4.0E-3 & 5.2E-1  \\
  & GAIN                                        & 4.9E-2 & 1.2E-1                   & 6.2E-2 & 6.9E-1                   & 5.3E-2 & 8.6E-2                   & 4.1E-2 & 9.3E-2                   & 5.5E-3 & 4.8E-2                   & 2.5E-2 & 4.7E-1                   & 5.0E-2 & 1.2E+0                   & 4.0E-2    & 1.0E-1    \\
  & MIRACLE                                     & 6.6E-2 & 9.5E-2                   & 1.9E-2 & 4.7E-1                   & 1.3E-2 & 1.3E-2                   & 4.0E-3 & 1.7E-2                   & 9.9E-2 & 3.5E+0                   & 6.9E-2 & 1.6E+0                   & 1.7E-2 & 1.7E-1                   & 7.5E-3    & 1.2E-2    \\
  & MIWAE                                       & 3.3E-2 & 6.4E-2                   & 8.3E-3 & 3.7E-2                   & 2.4E-2 & 3.5E-2                   & 3.0E-2 & 8.7E-2                   & 6.6E-3 & 7.2E-2                   & 2.3E-2 & 3.2E-1                   & 1.2E-2 & 1.5E-1                   & 9.1E-3    & 2.0E-2    \\
  & Sink                                        & 1.9E-2 & 6.2E-2                   & 1.4E-2 & 1.4E-1                   & 1.0E-2 & 6.3E-3                   & 1.3E-2 & 3.9E-2                   & 7.2E-3 & 5.1E-2                   & 1.8E-2 & 3.5E-1                   & 6.9E-3 & 1.6E-1                   & 4.6E-3    & 2.9E-2    \\
  & TDM                                         & 2.2E-2 & 6.8E-2                   & 1.4E-2 & 1.2E-1                   & 9.4E-3 & 8.3E-3                   & 1.5E-2 & 3.8E-2                   & 2.0E-2 & 7.7E-2                   & 1.8E-2 & 3.7E-1                   & 3.9E-3 & 1.8E-1                   & 7.1E-3    & 2.1E-2    \\
  & \textbf{KnewImp}                                    & 2.5E-2 & 1.0E-1                   & 3.9E-3 & 1.3E-1                   & 1.9E-2 & 2.9E-2                   & 8.4E-3 & 1.2E-2                   & 9.0E-3 & 1.3E-1                   & 8.5E-3 & 5.0E-2                   & 7.1E-3 & 6.8E-1                   & 5.8E-3    & 1.6E-2    \\ \bottomrule
\end{tabular}
  }
\end{table}

\begin{table}[htbp]
\caption{Ablation Study Results with missing rate at 30\%, and `$^*$' marks that KnewImp outperforms significantly at $p$-value < 0.05 over paired samples $t$-test. Best results are \textbf{bolded}.}
  \label{appendix-tab:AblationStudy}
    \resizebox{1.0\linewidth}{!}{
  \begin{tabular}{c|l|l|ll|ll|ll|ll|ll|ll|ll|ll}
  \toprule
  \multirow{2}{*}{Missing} & \multicolumn{1}{c|}{\multirow{2}{*}{NER}} & \multicolumn{1}{c|}{\multirow{2}{*}{Joint}} & \multicolumn{2}{c|}{BT}              & \multicolumn{2}{c|}{BCD}             & \multicolumn{2}{c|}{CC}              & \multicolumn{2}{c|}{CBV}             & \multicolumn{2}{c|}{IS}              & \multicolumn{2}{c|}{PK}              & \multicolumn{2}{c|}{QB} & \multicolumn{2}{c}{WQW} \\ \cmidrule{4-19} 
                           & \multicolumn{1}{c|}{}                     & \multicolumn{1}{c|}{}                       & MAE      & \multicolumn{1}{l|}{Wass} & MAE      & \multicolumn{1}{l|}{Wass} & MAE      & \multicolumn{1}{l|}{Wass} & MAE      & \multicolumn{1}{l|}{Wass} & MAE      & \multicolumn{1}{l|}{Wass} & MAE      & \multicolumn{1}{l|}{Wass} & MAE        & Wass       & MAE        & Wass       \\ \midrule
  \multirow{4}{*}{MAR}     & \XSolidBrush                                     & \XSolidBrush                                        & 0.96$^*$ & 3.82$^*$                  & 1.05$^*$ & 20.2$^*$                 & 1.04$^*$ & 5.47$^*$                  & 0.86$^*$ & 5.81$^*$                  & 0.67$^*$ &20.2$^*$                 & 1.06$^*$ & 15.6$^*$                 & 0.72$^*$   & 22.5$^*$  & 0.79$^*$   & 6.49$^*$   \\
                           & \XSolidBrush                                      & \Checkmark                                       & 0.54     & 0.42                      & 0.34     & 0.82                      & 0.61$^*$ & 0.40$^*$                   & 0.58$^*$ & 0.47$^*$                  & 0.43$^*$ & 1.34                      & 0.46$^*$ & 1.25$^*$                  & 0.47$^*$   & 3.56$^*$   & 0.55$^*$   & 0.64$^*$   \\
                           & \Checkmark                                      & \XSolidBrush                                       & 0.96$^*$ & 3.83$^*$                  & 1.05$^*$ & 20.3$^*$                 & 1.04$^*$ & 5.49$^*$                  & 0.86$^*$ & 5.83$^*$                  & 0.67$^*$ & 20.2$^*$                 & 1.06$^*$ & 15.7$^*$                 & 0.72$^*$   & 22.5$^*$  & 0.79$^*$   & 6.51$^*$   \\
                           & \Checkmark                                      & \Checkmark                                        & \textbf{0.52}     & \textbf{0.38}                      & \textbf{0.34}     & \textbf{0.82}                      & \textbf{0.35}     & \textbf{0.25}                      & \textbf{0.31}     & \textbf{0.20}                      & \textbf{0.39}     & \textbf{1.31}                      & \textbf{0.44}     & \textbf{1.21}                      & \textbf{0.45}       & \textbf{3.50}       & \textbf{0.46}       & \textbf{0.55}       \\ \midrule
  \multirow{4}{*}{MCAR}    & \XSolidBrush                                     & \XSolidBrush                                        & 0.72$^*$ & 2.11$^*$                  & 0.74$^*$ & 16.7$^*$                 & 0.85$^*$ & 3.72$^*$                  & 0.83$^*$ & 5.22$^*$                  & 0.74$^*$ & 18.4$^*$                 & 0.71$^*$ & 12.7$^*$                 & 0.58$^*$   & 20.1$^*$  & 0.76$^*$   & 5.57$^*$   \\
                           & \XSolidBrush                                      & \Checkmark                                       & 0.52$^*$ & 0.17$^*$                  & 0.25     & 0.79                      & 0.62$^*$ & 0.46$^*$                  & 0.61$^*$ & 0.71$^*$                  & 0.46     & 3.05                      & 0.34     & 1.09                      & 0.36$^*$   & 3.74$^*$   & 0.58$^*$   & 0.82$^*$   \\
                           & \Checkmark                                      & \XSolidBrush                                       & 0.72$^*$ & 2.12$^*$                  & 0.73$^*$ & 16.8$^*$                 & 0.86$^*$ & 3.73$^*$                  & 0.83$^*$ & 5.24$^*$                  & 0.74$^*$ & 18.4$^*$                 & 0.71$^*$ & 12.8$^*$                 & 0.58$^*$   & 20.1$^*$  & 0.76$^*$   & 5.60$^*$    \\
                           & \Checkmark                                      & \Checkmark                                        & \textbf{0.48}     & \textbf{0.18}                      & \textbf{0.25}     & \textbf{0.80}                      & \textbf{0.47}     & \textbf{0.34}                      & \textbf{0.42}     & \textbf{0.44}                      & \textbf{0.44}     & \textbf{3.05}                      & \textbf{0.32}     & \textbf{1.01}                      & \textbf{0.34}       & \textbf{3.66}       & \textbf{0.53}       & \textbf{0.76}       \\ \midrule
  \multirow{4}{*}{MNAR}    & \XSolidBrush                                     & \XSolidBrush                                       & 0.81$^*$   & 2.47$^*$  & 0.89$^*$   & 18.2$^*$  & 0.87$^*$   & 3.85$^*$  & 0.85$^*$   & 5.26$^*$   & 0.69$^*$  & 17.6$^*$  & 0.87$^*$  & 13.0$^*$  & 0.64$^*$  & 20.6$^*$  & 0.77$^*$   & 5.71$^*$   \\
                           & \XSolidBrush                                      & \Checkmark                                      & 0.62       & 0.37      & 0.32       & 1.47       & 0.61$^*$   & 0.47$^*$  & 0.64$^*$   & 0.79$^*$   & 0.44      & 2.79       & 0.43$^*$  & 1.88$^*$   & 0.44$^*$  & 5.65       & 0.60$^*$    & 0.87       \\
                           & \Checkmark                                      & \XSolidBrush                                      & 0.82$^*$   & 2.57$^*$  & 0.89$^*$   & 18.3$^*$  & 0.87$^*$   & 3.86$^*$  & 0.85$^*$   & 5.28$^*$   & 0.69$^*$  & 17.7$^*$  & 0.88$^*$  & 13.5$^*$  & 0.64$^*$  & 20.7$^*$  & 0.77$^*$   & 5.73$^*$   \\
                           & \Checkmark                                      & \Checkmark                                       & \textbf{0.60}       & \textbf{0.35}      & \textbf{0.32}       & \textbf{1.46}       & \textbf{0.44}       & \textbf{0.34}      & \textbf{0.46}       & \textbf{0.52}       & \textbf{0.40}       & \textbf{2.68}       & \textbf{0.39}      & \textbf{1.56}       & \textbf{0.42}      & \textbf{5.57}       & \textbf{0.55}       & \textbf{0.81}       \\ \bottomrule
  \end{tabular}
    }
  \end{table}

\begin{table}[htbp]
  \caption{Standard deviation of Ablation Study Results with missing rate at 30\%.}
\label{appendix-tab:stdAblationStudy}
  \resizebox{1.0\linewidth}{!}{
\begin{tabular}{l|l|l|ll|ll|ll|ll|ll|ll|ll|ll}
\toprule
\multicolumn{1}{c|}{\multirow{2}{*}{Missing}} & \multicolumn{1}{c|}{\multirow{2}{*}{NER}} & \multicolumn{1}{c|}{\multirow{2}{*}{Joint}} & \multicolumn{2}{c|}{BT}             & \multicolumn{2}{c|}{BCD}            & \multicolumn{2}{c|}{CC}             & \multicolumn{2}{c|}{CBV}            & \multicolumn{2}{c|}{IS}             & \multicolumn{2}{c|}{PK}             & \multicolumn{2}{c|}{QB}             & \multicolumn{2}{c}{WQW} \\ \cmidrule{4-19} 
\multicolumn{1}{c|}{}                         & \multicolumn{1}{c|}{}                     & \multicolumn{1}{c|}{}                       & MAE     & \multicolumn{1}{l|}{Wass} & MAE     & \multicolumn{1}{l|}{Wass} & MAE     & \multicolumn{1}{l|}{Wass} & MAE     & \multicolumn{1}{l|}{Wass} & MAE     & \multicolumn{1}{l|}{Wass} & MAE     & \multicolumn{1}{l|}{Wass} & MAE     & \multicolumn{1}{l|}{Wass} & MAE        & Wass       \\ \midrule
MAR                                           & \XSolidBrush                                     & \XSolidBrush                                       & 6.1E-2 & 4.1E-1                   & 4.8E-2 & 6.8E-1                   & 1.1E-1 & 5.6E-1                   & 4.6E-2 & 4.5E-1                   & 4.4E-2 & 3.8E+0                   & 1.4E-1 & 3.9E+0                   & 7.2E-2 & 2.4E+0                   & 5.0E-2    & 4.1E-1    \\
MAR                                           & \XSolidBrush                                     & \Checkmark                                        & 6.9E-2 & 1.2E-1                   & 2.7E-2 & 1.1E-1                   & 2.8E-2 & 4.7E-2                   & 2.5E-2 & 3.8E-2                   & 2.7E-2 & 1.3E-1                   & 1.0E-2 & 9.0E-2                   & 1.8E-2 & 2.6E-1                   & 2.8E-2    & 5.6E-2    \\
MAR                                           & \Checkmark                                      & \XSolidBrush                                       & 6.1E-2 & 4.1E-1                   & 4.8E-2 & 6.8E-1                   & 1.1E-1 & 5.6E-1                   & 4.6E-2 & 4.5E-1                   & 4.4E-2 & 3.8E+0                   & 1.4E-1 & 3.9E+0                   & 7.2E-2 & 2.4E+0                   & 5.0E-2    & 4.1E-1    \\
MAR                                           & \Checkmark                                      & \Checkmark                                        & 2.0E-2 & 4.0E-2                   & 2.7E-2 & 1.1E-1                   & 5.6E-2 & 6.4E-2                   & 1.6E-2 & 2.2E-2                   & 1.9E-2 & 1.1E-1                   & 1.1E-2 & 8.8E-2                   & 1.9E-2 & 2.7E-1                   & 1.6E-2    & 3.3E-2    \\ \midrule
MCAR                                          & \XSolidBrush                                     & \XSolidBrush                                       & 9.9E-3 & 5.8E-2                   & 1.0E-2 & 2.3E-1                   & 3.6E-3 & 6.0E-2                   & 3.3E-3 & 2.0E-2                   & 9.5E-3 & 4.2E-1                   & 1.5E-2 & 2.3E-1                   & 1.1E-2 & 1.0E+0                   & 4.0E-3    & 1.4E-2    \\
MCAR                                          & \XSolidBrush                                     & \Checkmark                                        & 5.0E-3 & 5.9E-3                   & 2.8E-3 & 4.6E-2                   & 1.2E-2 & 1.5E-2                   & 9.4E-3 & 1.7E-2                   & 6.4E-3 & 1.2E-1                   & 1.0E-2 & 1.2E-1                   & 7.3E-3 & 1.9E-1                   & 9.4E-04    & 3.9E-3    \\
MCAR                                          & \Checkmark                                      & \XSolidBrush                                       & 9.9E-3 & 5.7E-2                   & 1.0E-2 & 2.3E-1                   & 3.6E-3 & 6.0E-2                   & 3.2E-3 & 2.1E-2                   & 9.5E-3 & 4.2E-1                   & 1.5E-2 & 2.3E-1                   & 1.0E-2 & 1.0E+0                   & 4.0E-3    & 1.4E-2    \\
MCAR                                          & \Checkmark                                      & \Checkmark                                        & 3.3E-3 & 3.7E-3                   & 1.9E-3 & 4.6E-2                   & 1.1E-2 & 1.8E-2                   & 4.1E-3 & 1.8E-2                   & 5.7E-3 & 1.1E-1                   & 6.4E-3 & 3.7E-2                   & 4.8E-3 & 1.7E-1                   & 2.2E-3    & 1.1E-2    \\ \midrule
MNAR                                          & \XSolidBrush                                     & \XSolidBrush                                       & 4.2E-2 & 1.5E-1                   & 2.3E-2 & 8.5E-1                   & 3.2E-2 & 1.8E-1                   & 1.2E-2 & 5.3E-2                   & 6.9E-3 & 1.2E-1                   & 3.2E-2 & 9.6E-1                   & 1.6E-2 & 8.3E-1                   & 1.4E-2    & 1.0E-1    \\
MNAR                                          & \XSolidBrush                                     & \Checkmark                                        & 4.0E-2 & 1.4E-1                   & 3.4E-3 & 1.3E-1                   & 1.8E-2 & 2.1E-2                   & 4.8E-3 & 1.7E-2                   & 1.0E-2 & 1.3E-1                   & 1.1E-2 & 1.8E-1                   & 8.0E-3 & 7.0E-1                   & 7.7E-3    & 1.4E-2    \\
MNAR                                          & \Checkmark                                      & \XSolidBrush                                       & 4.8E-2 & 1.4E-1                   & 2.4E-2 & 8.5E-1                   & 3.3E-2 & 1.8E-1                   & 1.2E-2 & 5.3E-2                   & 6.9E-3 & 1.2E-1                   & 1.9E-2 & 2.5E-1                   & 1.6E-2 & 8.3E-1                   & 1.4E-2    & 1.0E-1    \\
MNAR                                          & \Checkmark                                      & \Checkmark                                        & 2.5E-2 & 1.0E-1                   & 3.9E-3 & 1.3E-1                   & 1.9E-2 & 2.9E-2                   & 8.4E-3 & 1.2E-2                   & 9.0E-3 & 1.3E-1                   & 8.5E-3 & 5.0E-2                   & 7.1E-3 & 6.8E-1                   & 5.8E-3    & 1.6E-2    \\ 
\bottomrule
\end{tabular}
}
\end{table}

\begin{figure}[htbp]
  \centering
  \subfigure[MAR with 30\% missing rate at CC dataset.]{\includegraphics[width=0.48\linewidth]{picture/sens_fig/sens_data_concrete_compression_p_0.3_scenario_MAR.pdf}}
  \subfigure[MCAR with 30\% missing rate at CC dataset.]{\includegraphics[width=0.48\linewidth]{picture/sens_fig/sens_data_concrete_compression_p_0.3_scenario_MCAR.pdf}}
  \subfigure[MNAR with 30\% missing rate at CC dataset.]{\includegraphics[width=0.48\linewidth]{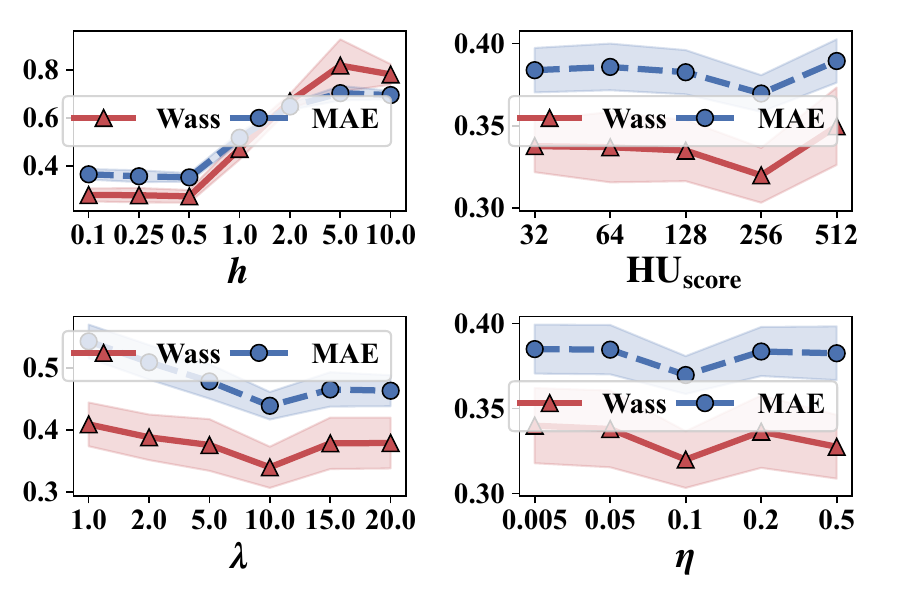}}
  \caption{Parameter sensitivity of KnewImp on bandwidth for kernel function ($h$), hidden unit of score network $\text{HU}_{\text{score}}$, NER weight $\lambda$, and discretization step $\eta$ for~\Cref{eq:jointVelocityField} on CC dataset. Mean values and one standard deviations from mean are represented by scatters and shaded area, respectively.
  }
  \label{appendix-fig:sensResult}
\end{figure}

\newpage
\subsection{Time Complexity Analysis}\label{appendix-subsec:timeComplexity}
\begin{figure}[htbp]
  \centering
  \subfigure[Running time vary $\mathrm{N}$, MAR.]{\includegraphics[width=0.30\linewidth]{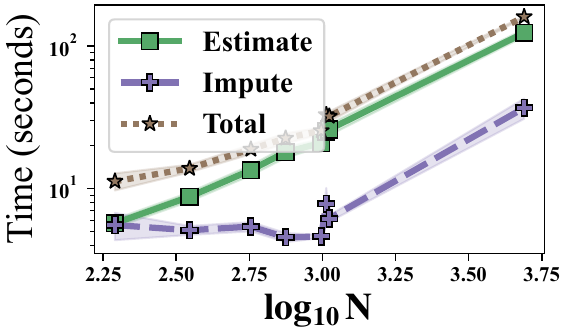}}
  \subfigure[Running time vary $\mathrm{N}$, MCAR.]{\includegraphics[width=0.30\linewidth]{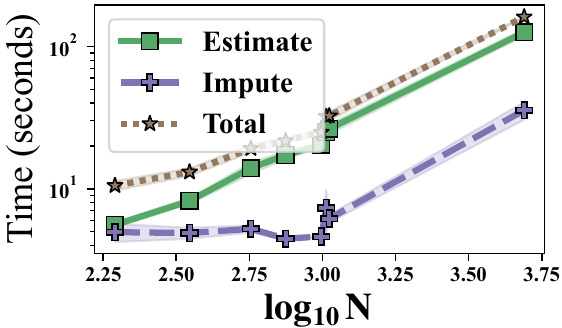}}
  \subfigure[Running time vary $\mathrm{N}$, MNAR.]{\includegraphics[width=0.30\linewidth]{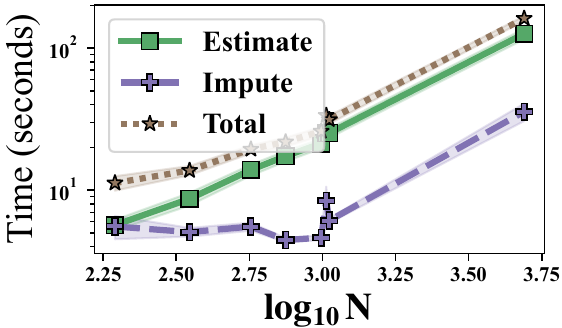}}
  \subfigure[Running time vary $\mathrm{D}$, MAR.]{\includegraphics[width=0.30\linewidth]{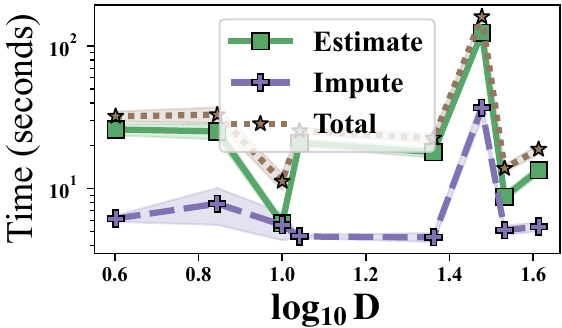}}
  \subfigure[Running time vary $\mathrm{D}$, MCAR.]{\includegraphics[width=0.30\linewidth]{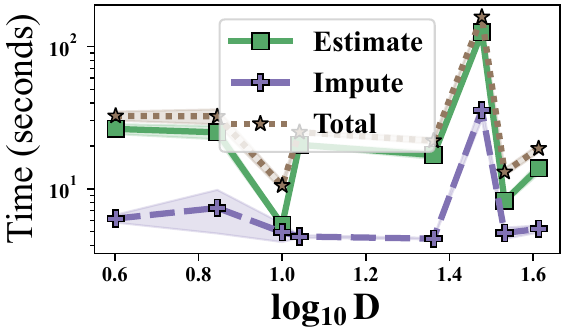}}
  \subfigure[Running time vary $\mathrm{D}$, MNAR.]{\includegraphics[width=0.30\linewidth]{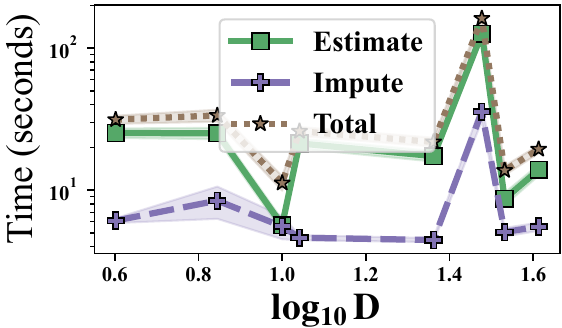}}
  \caption{Average computation time. The scatters and shaded areas indicate the mean and one standard deviation from the mean, respectively.
  }
  \label{appendix-fig:timeResult}
\end{figure}
In this subsection, we present an analysis of the complexity of time for our KnewImp approach. The complexity analysis is based on the algorithms described in~\Cref{algo:impWGFAlgo,algo:impScoreEstimate,algo:knewImpAlgorithm}. We begin by estimating the time complexity of the score function $\nabla_{\boldsymbol{X}^{\text{(joint)}}}\log{\hat{p}}(\boldsymbol{X}^{\text{(joint)}})$. Assuming the number of layers in the score network is $\mathrm{L}$ and each layer has an equal number of hidden units denoted as $\text{HU}_{\text{score}}$, the time complexity for the imputation algorithm defined in~\Cref{algo:impWGFAlgo,algo:knewImpAlgorithm} is detailed as follows:

\begin{enumerate}[leftmargin=*]
  \item{\textbf{Impute Part:}
    \begin{itemize}[leftmargin=*]
      \item{\textbf{Score function computation:}
        The time complexity for computing the score function is expressed as:
        \begin{equation}
          \mathcal{O}\left[2 \times \mathrm{N} \times \left( \mathrm{D} \times \text{HU}_{\text{score}} + (\mathrm{L}-1) \times \text{HU}_{\text{score}}^2 \right)\right],
        \end{equation}
        where the factor $2$ accounts for the backward propagation needed during the score function computation.
      }
      \item{\textbf{Kernel function and its gradient:}
        Employing the RBF kernel $\mathcal{K}(\boldsymbol{X}, \tilde{\boldsymbol{X}})\coloneqq \exp\left(-\frac{\|\boldsymbol{X} - \tilde{\boldsymbol{X}}\|^2}{2h^2}\right)$, the gradient with respect to $\tilde{\boldsymbol{X}}$ is analytically determined as:
        \begin{equation}
          [\nabla_{\tilde{\boldsymbol{X}}} \mathcal{K}(\boldsymbol{X}  , \tilde{\boldsymbol{X}})]{[:,j]} = - \frac{1}{h^2} \left\{[\mathcal{K}(\boldsymbol{X}  , \tilde{\boldsymbol{X}}) \times \tilde{\boldsymbol{X}} ]{[:, j]} + \tilde{\boldsymbol{X}}{[:,j]} \odot \sum_{j=1}^{\mathrm{D}}{\mathcal{K}(\boldsymbol{X}  , \tilde{\boldsymbol{X}}){[:,j]}} \right\}.
         \end{equation}
        The time complexities for calculating the kernel function and its gradient are specified in~\Cref{eq:appendix-timeComplexKernel1,eq:appendix-timeComplexGradKernel2}:
        \begin{equation}\label{eq:appendix-timeComplexKernel1}
          \mathcal{O}\left[\mathrm{N}^2 \times \mathrm{D} + \mathrm{N}^2\right],
        \end{equation}
        \begin{equation}\label{eq:appendix-timeComplexGradKernel2}
          \mathcal{O}\left[\mathrm{N}^2 \times \mathrm{D} + \mathrm{N}^2 + \mathrm{N} \times \mathrm{D}\right].
        \end{equation}
      }
    \end{itemize}
  }
  \item{\textbf{Estimate Part:}
    Building on the previous item, the time complexity for the estimation algorithm defined in~\Cref{algo:impScoreEstimate} is given as:
  \begin{equation}\label{eq:appendix-timeComplEstimate}
    \mathcal{O}\left[4\times \mathrm{N} \times \left( \mathrm{D} \times \text{HU}_{\text{score}} + (\mathrm{L}-1) \times \text{HU}_{\text{score}}^2  \right)\right],
  \end{equation}
   where the factor of $4$ comprises three distinct components: backward propagation ($1$), forward propagation ($1$), and the acquisition of the sample-wise score function ($2$). Note that the network parameter size is substantially smaller than the number of data points, thereby making the forward computation of the score function the primary factor in time complexity.
  }
\end{enumerate}

Based on the analysis outlined above, we explore how computational complexity varies with different dataset sizes $\mathrm{N}$ and the number of features $\mathrm{D}$, as shown in Figs.~\ref{appendix-fig:timeResult} (a) and (b), respectively. From these figures, it is evident that computational time increases with the dataset size $\mathrm{N}$. However, changes in the number of features $\mathrm{D}$ do not significantly affect the computation time. This observation underscores that the primary determinant of computational complexity in our context is the dataset size, aligning with our theoretical analysis, which indicates a quadratic relationship between time complexity and the size of the dataset $\mathrm{N}$ for the `Impute' part, and $\mathrm{N}\gg\mathrm{D}$ for the `Estimate' part. 

Moreover, the data reveals that the total computational time is predominantly governed by `Estimate' part of our KnewImp approach. This suggests that the training of the score function represents a critical bottleneck in the efficiency of the KnewImp algorithm. Therefore, accelerating the KnewImp algorithm crucially hinges on reducing the computational demands of the `Estimate' part.



\subsection{Convergence Analysis}
\begin{figure}[htbp]
  \centering
\subfigure[MAE, BT.]{\includegraphics[width=0.24\linewidth]{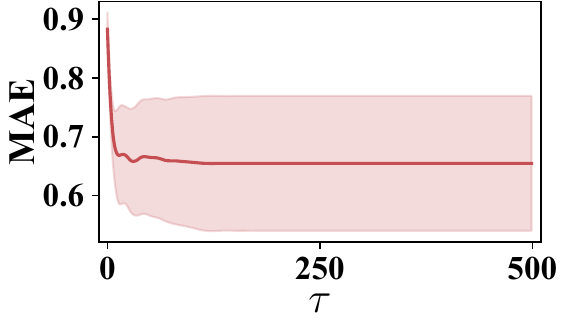}}
\subfigure[Wass, BT.]{\includegraphics[width=0.24\linewidth]{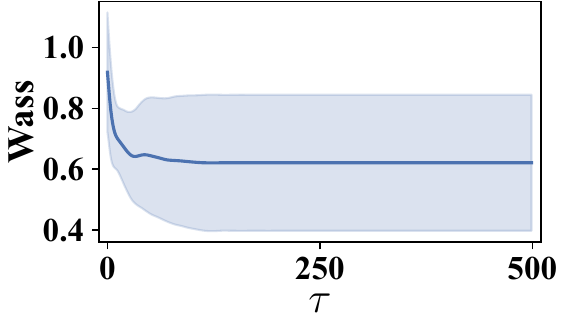}}
\subfigure[MAE, BCD.]{\includegraphics[width=0.24\linewidth]{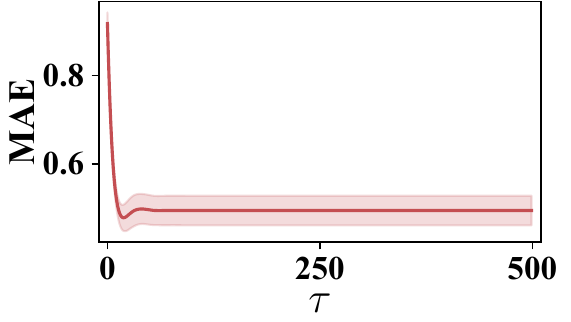}}
\subfigure[MAE, BCD.]{\includegraphics[width=0.24\linewidth]{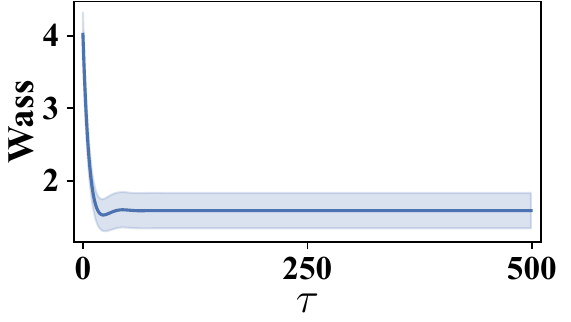}}
\subfigure[MAE, CC.]{\includegraphics[width=0.24\linewidth]{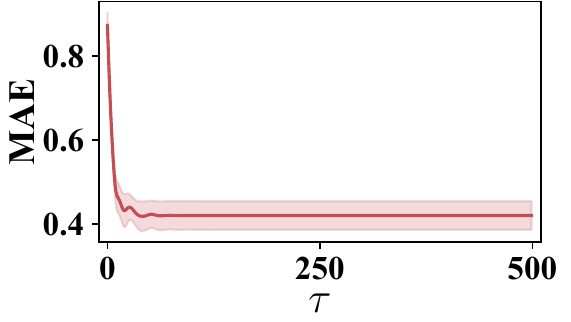}}
\subfigure[Wass, CC.]{\includegraphics[width=0.24\linewidth]{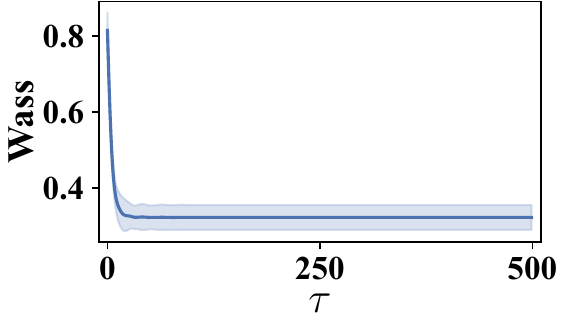}}
\subfigure[MAE, CBV.]{\includegraphics[width=0.24\linewidth]{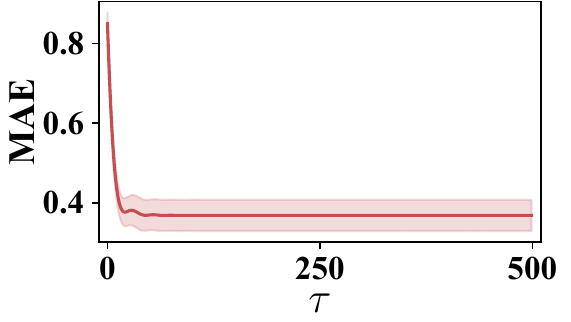}}
\subfigure[MAE, CBV.]{\includegraphics[width=0.24\linewidth]{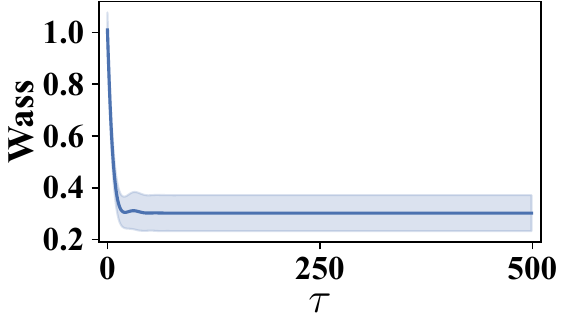}}
\subfigure[MAE, IS.]{\includegraphics[width=0.24\linewidth]{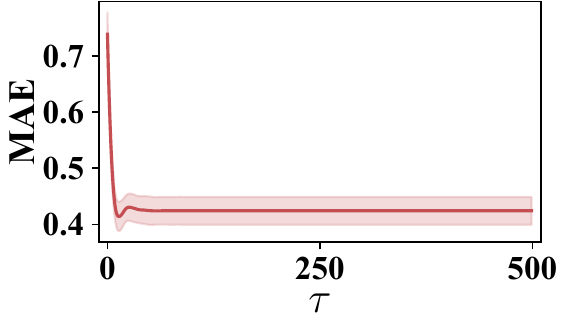}}
\subfigure[Wass, IS.]{\includegraphics[width=0.24\linewidth]{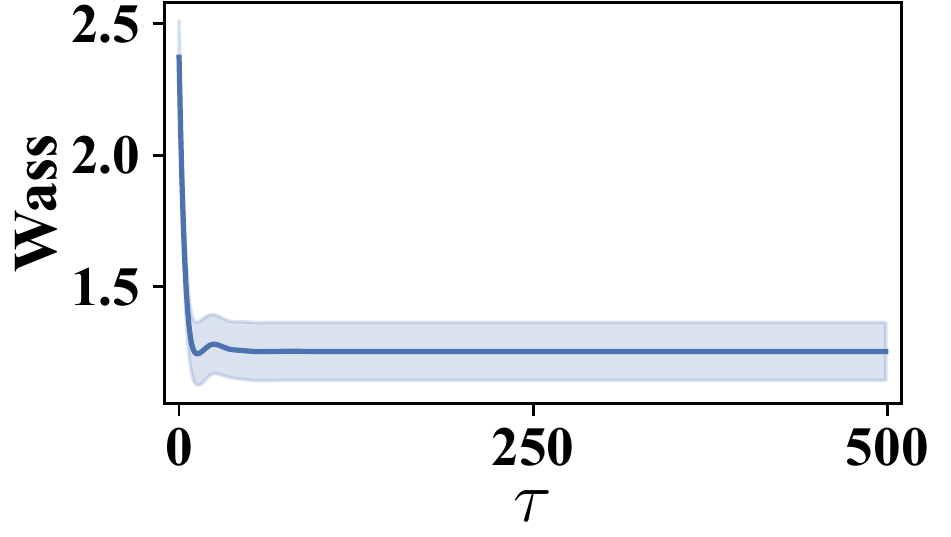}}
\subfigure[MAE, PK.]{\includegraphics[width=0.24\linewidth]{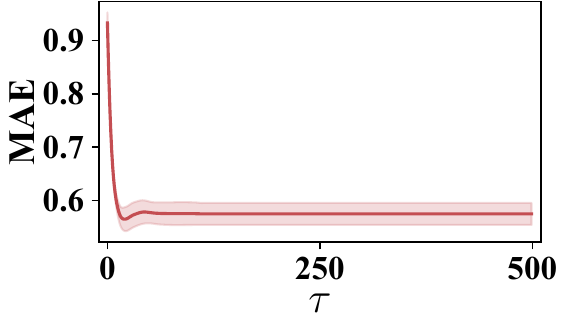}}
\subfigure[MAE, PK.]{\includegraphics[width=0.24\linewidth]{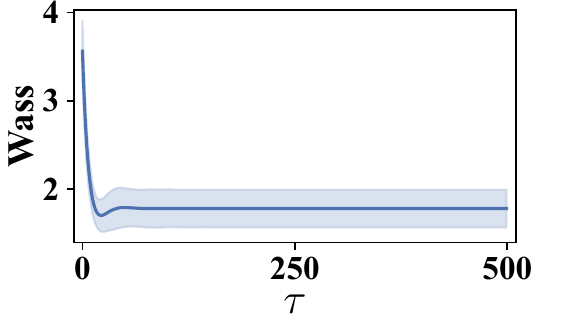}}
\subfigure[MAE, QB.]{\includegraphics[width=0.24\linewidth]{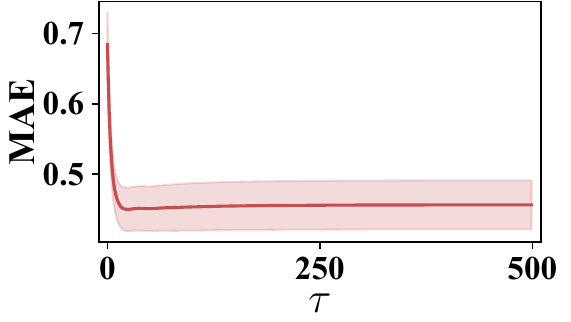}}
\subfigure[Wass, QB.]{\includegraphics[width=0.24\linewidth]{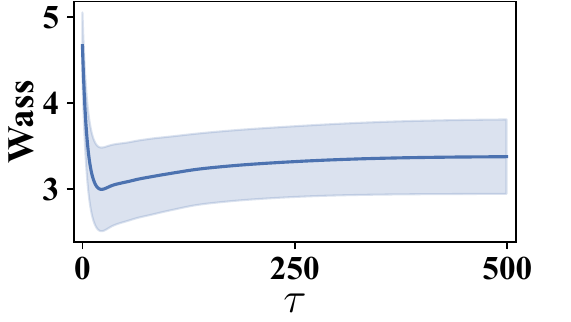}}
\subfigure[MAE, WQW.]{\includegraphics[width=0.24\linewidth]{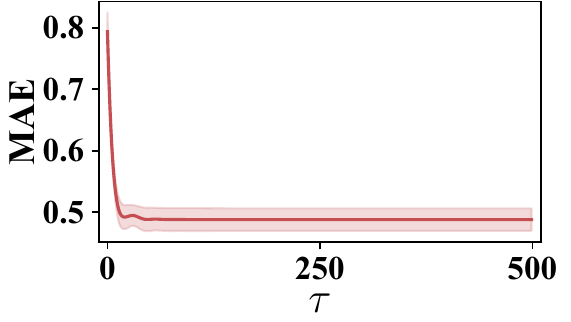}}
\subfigure[MAE, WQW.]{\includegraphics[width=0.24\linewidth]{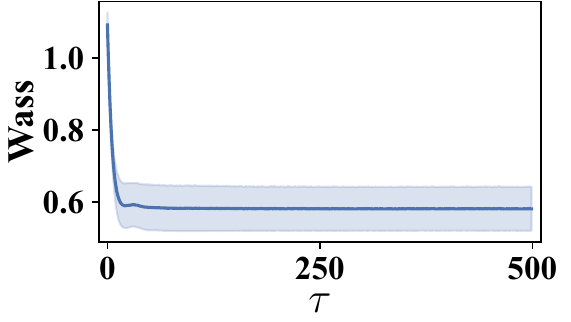}}
  \caption{Evolution of evaluation metrics along iteration time $\tau$ under MAR scenario at 30\% missing rate. The shaded area indicates the $\pm$ 1.0 standard deviation uncertainty interval.}
  \label{appendix-fig:convExperMAR}
\end{figure}

In this subsection, we explore the convergence of the Impute part as defined in~\Cref{algo:impWGFAlgo} within our KnewImp approach. Prior to delving into this discussion, it is essential to establish a clear definition of convergence:
\begin{definition}\label{appendix-def:Convergence}
A sequence $\{\mathcal{F}_1, \mathcal{F}_2, ..., \mathcal{F}_\mathrm{T}\}$ is said to be convergent if there exists a real number $\mathcal{G}$ such that for any given positive number $\varepsilon$ ($\varepsilon > 0$), there exists a positive integer $N$, such that for all indices $n$ greater than $N$, the corresponding terms $ \mathcal{F}_n, n\ge N$ satisfy the  inequality $\left|  \mathcal{F}_n - \mathcal{G} \right| < \varepsilon$.
\end{definition}
Based on Definition~\ref{appendix-def:Convergence}, if a sequence is either monotonically increasing or monotonically decreasing and bounded (either bounded above or bounded below), then it is guaranteed to converge according to the celebrated monotone convergence theorem (Section 3.14 in reference~\cite{rudin1964principles}). Based on this, we first prove the following proposition for the convergence in `Impute' part of our approach:
\begin{proposition}
  The convergence of the `Imputer' part can be guaranteed, given that the step size $\eta$ is small enough. 
\end{proposition}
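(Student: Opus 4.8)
The plan is to establish convergence by invoking the monotone convergence theorem cited just above (Definition~\ref{appendix-def:Convergence}): it suffices to show that the sequence of cost-functional values $\{\mathcal{F}_{\text{joint-NER}}^{(t)}\}_{t=1}^{\mathrm{T}}$ produced along the forward-Euler iterates of Algorithm~\ref{algo:impWGFAlgo} is both monotone and bounded. I would track the value of $\mathcal{F}_{\text{joint-NER}}$ (equivalently $\mathcal{F}_{\text{NER}}$, which differs only by a constant by Proposition~\ref{prop:JointCondDist}) at each discrete step $\boldsymbol{X}^{\text{(imp)}}_{t+1} = \boldsymbol{X}^{\text{(imp)}}_{t} + \eta\, u(\boldsymbol{X}^{\text{(joint)}}_t,\tau)$ and argue the two properties in turn.

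For monotonicity, I would first recall from Proposition~\ref{prop:simulationLoss} that the continuous-time evolution obeys $\frac{\mathrm{d}\mathcal{F}_{\text{NER}}}{\mathrm{d}\tau} = \mathbb{E}_{r}[u^\top\nabla\log\hat p - \lambda\nabla\cdot u]$, and that by the first-order optimality condition of the RKHS maximization in Proposition~\ref{prop:steinMap}, the optimal velocity $u$ makes this very linear functional equal to $\|u\|_{\mathcal{H}}^2$. Hence the continuous flow satisfies $\frac{\mathrm{d}\mathcal{F}_{\text{NER}}}{\mathrm{d}\tau} = \|u\|_{\mathcal{H}}^2 \ge 0$, so the functional is non-decreasing. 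I would then transfer this to the discrete scheme via a Taylor expansion in the step size, $\mathcal{F}_{\text{NER}}^{(t+1)} = \mathcal{F}_{\text{NER}}^{(t)} + \eta\|u\|_{\mathcal{H}}^2 + O(\eta^2)$, where the $O(\eta^2)$ remainder is controlled by the boundedness and Lipschitz regularity of the RBF kernel and of the learned score. For $\eta$ small enough the nonnegative first-order term dominates the remainder, so the discrete sequence increases monotonically.

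For boundedness, I would impose (as already assumed in Proposition~\ref{prop:simulationLoss}) that $\hat p(\boldsymbol{X}^{(\text{miss})}\mid\boldsymbol{X}^{(\text{obs})})$ is bounded, so that the data-fidelity term $\mathbb{E}_{r}[\log\hat p]$ is bounded above. The delicate part is the negative-entropy term $-\lambda\mathbb{H}[r]$: I would show that the RKHS velocity field in~\Cref{appendix-eq:RKHSVelocityField} has bounded norm (the RBF kernel and its gradient are bounded, and the score is bounded on the relevant compact set), so that the iterates neither escape to infinity nor let $r$ collapse to a Dirac mass within the finite simulation horizon $\mathrm{T}$; this keeps $-\mathbb{H}[r]$ bounded above along the trajectory. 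Combining the two bounds furnishes an upper bound on $\mathcal{F}_{\text{NER}}^{(t)}$, and together with monotonicity the monotone convergence theorem delivers convergence.

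The main obstacle I anticipate is precisely this boundedness-above argument for the negative-entropy term: since $-\mathbb{H}$ is maximized by concentration, one must rule out degenerate finite-time collapse of $r$. I expect this requires a uniform bound on $\|u\|_{\mathcal{H}}$ over the trajectory together with a Gr\"onwall-type estimate on the growth of the iterates, which in turn leans on the smoothness of the RBF kernel and the assumed boundedness of $\hat p$; the interplay between this control and the smallness condition on $\eta$ needed for discrete monotonicity is the most technical piece of the argument.
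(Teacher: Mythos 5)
Your overall strategy coincides with the paper's: establish that $\mathcal{F}_{\text{joint-NER}}$ is monotonically increasing along the flow and bounded above, then invoke the monotone convergence theorem, with the smallness of $\eta$ only needed so that the forward-Euler iterates inherit the monotonicity of the continuous-time ODE. Your monotonicity argument is also essentially the paper's: the paper rewrites $u$ via integration by parts and shows $\frac{\mathrm{d}\mathcal{F}_{\text{joint-NER}}}{\mathrm{d}\tau}$ equals a double integral of the form $\iint g(\tilde{\boldsymbol{X}})^\top\mathcal{K}(\boldsymbol{X},\tilde{\boldsymbol{X}})g(\boldsymbol{X})\,\mathrm{d}r\,\mathrm{d}r \ge 0$ by positive semi-definiteness of the kernel, which is exactly your $\Vert u\Vert_{\mathcal{H}}^2$ written out through the reproducing property. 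You are, if anything, more explicit than the paper about the Taylor-expansion step that transfers monotonicity to the discrete scheme (the paper only remarks that small $\eta$ makes the iteration approximate the ODE).

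The genuine gap is the boundedness-above argument, which you correctly flag as the hard part but do not close. Your proposed route (uniform bounds on $\Vert u\Vert_{\mathcal{H}}$, a Gr\"onwall estimate, ruling out finite-time collapse of $r$) is far more machinery than is needed and, as sketched, does not actually yield an upper bound on $-\lambda\mathbb{H}[r]$. The paper instead uses a one-line algebraic bound: since the entropy is taken to be non-negative throughout the paper (the same standing assumption used in Proposition~\ref{prop:inEffectiveSampling}), adding $(\lambda+1)\mathbb{H}[r(\boldsymbol{X}^{\text{(joint)}})]\ge 0$ to the functional gives
\begin{equation*}
  \mathcal{F}_{\text{joint-NER}} \;\le\; \mathcal{F}_{\text{joint-NER}} - (\lambda+1)\,\mathbb{E}_{r(\boldsymbol{X}^{\text{(joint)}})}[\log r(\boldsymbol{X}^{\text{(joint)}})] \;=\; -\mathbb{D}_{\text{KL}}\left[r(\boldsymbol{X}^{\text{(joint)}})\,\Vert\, \hat{p}(\boldsymbol{X}^{\text{(joint)}})\right] \;\le\; 0,
\end{equation*}
so the functional is bounded above by $0$ pointwise in $\tau$, with no trajectory control required. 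Note that this bound does lean on $\mathbb{H}[r]\ge 0$, which is precisely the condition that would fail under the entropy collapse you worry about; so your concern identifies the real hypothesis at work, but within the paper's framework the boundedness is immediate rather than the technical bottleneck you anticipated. You should also note that boundedness of $\hat{p}$ is not invoked for this step at all.
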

\begin{proof}
First, let's reformulate the velocity field as follows:
\begin{equation}\label{appendix-eq:jointVelocityField}
\begin{aligned}
 & {u}( \boldsymbol{X}^{\text{(joint)}}, \tau) \\
 = &  \mathbb{E}_{r({\tilde{\boldsymbol{X}}}^{\text{(joint)}},\tau)}\left\{ 
    \begin{aligned}  
    & { - \lambda \nabla_{  {\tilde{\boldsymbol{X}}^{\text{(miss)}}}} \mathcal{K}(  \boldsymbol{X}^{\text{(joint)}}  ,    {\tilde{\boldsymbol{X}}^{\text{(joint)}}})  }\\
    &\quad \quad+ [\nabla_{  {\tilde{\boldsymbol{X}}^{\text{(miss)}}}}\log{\hat{p}( \tilde{\boldsymbol{X}}^{\text{(joint)}})}]^\top\mathcal{K}(  \boldsymbol{X}^{\text{(joint)}}  ,   {\tilde{\boldsymbol{X}}}^{\text{(joint)}}) 
    \end{aligned} 
    \right\}\\
    \overset{\text{(i)}}{=} &   \mathbb{E}_{r(\tilde{\boldsymbol{X}}^{\text{(joint)}},\tau)}\left\{ 
      \begin{aligned}  
      & {  \lambda [\nabla_{  {\tilde{\boldsymbol{X}}^{\text{(miss)}}}}\log{r({\tilde{\boldsymbol{X}}}^{\text{(joint)}})}]^\top \mathcal{K}(  \boldsymbol{X}^{\text{(joint)}}  ,    {\tilde{\boldsymbol{X}}^{\text{(joint)}}})   }\\
      &\quad \quad+ [\nabla_{  {\tilde{\boldsymbol{X}}^{\text{(miss)}}}}\log{\hat{p}(  \tilde{\boldsymbol{X}}^{\text{(joint)}})}]^\top\mathcal{K}(  \boldsymbol{X}^{\text{(joint)}}  ,   \tilde{\boldsymbol{X}}^{\text{(joint)}}) 
      \end{aligned} \right\}  \\
      = &   \int r(\tilde{\boldsymbol{X}}^{\text{(joint)}})  \left\{ 
        \begin{aligned}  
        & {  \lambda\nabla_{  {\tilde{\boldsymbol{X}}^{\text{(miss)}}}}\log{r(\tilde{\boldsymbol{X}}^{\text{(joint)}})}  }\\
        &\quad \quad+ \nabla_{  {\tilde{\boldsymbol{X}}^{\text{(miss)}}}}\log{\hat{p}(  \tilde{\boldsymbol{X}}^{\text{(joint)}})}
        \end{aligned} \right\}^\top \mathcal{K}(  \boldsymbol{X}^{\text{(joint)}}  ,    {\tilde{\boldsymbol{X}}^{\text{(joint)}}}) \mathrm{d}\tilde{\boldsymbol{X}}^{\text{(joint)}} \\
        =&  \int   \left\{ 
          \begin{aligned}  
          & {  \lambda \nabla_{  {\tilde{\boldsymbol{X}}^{\text{(miss)}}}}\log{r({\tilde{\boldsymbol{X}}}^{\text{(joint)}})}  }\\
          &\quad \quad+ \nabla_{  {\tilde{\boldsymbol{X}}^{\text{(miss)}}}}\log{\hat{p}(  \tilde{\boldsymbol{X}}^{\text{(joint)}})}
          \end{aligned} \right\}^\top \mathcal{K}(  \boldsymbol{X}^{\text{(joint)}}  ,    {\tilde{\boldsymbol{X}}^{\text{(joint)}}}) \mathrm{d}r(\tilde{\boldsymbol{X}}^{\text{(joint)}}) 
        ,
  \end{aligned}
\end{equation}
where (i) is based on integration by parts. 

Based on this reformulation, the inner product can be given as follows:
\begin{equation}\label{appendix-eq:monotonicIncreasing}
  \begin{aligned}
    &\frac{\mathrm{d} \mathcal{F}_{\text{joint-NER}}}{\mathrm{d}\tau}\\
   = & \int{\left<\nabla_{\boldsymbol{X}^{\text{(miss)}}}\frac{\delta \mathcal{F}_{\text{joint-NER}}}{\delta r(\boldsymbol{X}^{\text{(joint)}})},{u}( \boldsymbol{X}^{\text{(joint)}}, \tau)  \right>}\mathrm{d}r(\boldsymbol{X}^{\text{(miss)}}) \\
  =&  \iint{   
\begin{aligned}
   \left\{ 
          \begin{aligned}  
          & {  \lambda \nabla_{  {{\tilde{\boldsymbol{X}}}^{\text{(miss)}}}}\log{r({\tilde{\boldsymbol{X}}}^{\text{(joint)}})}  }\\
          &+ \nabla_{  {\tilde{\boldsymbol{X}}^{\text{(miss)}}}}\log{\hat{p}(  \tilde{\boldsymbol{X}}^{\text{(joint)}})}
          \end{aligned} \right\}^\top &  \mathcal{K}(  \boldsymbol{X}^{\text{(joint)}}  ,    {\tilde{\boldsymbol{X}}^{\text{(joint)}}}) \times \\      
& \indent \left\{ 
            \begin{aligned}  
            & {  \lambda \nabla_{  {{\boldsymbol{X}}^{\text{(miss)}}}}\log{r({\boldsymbol{X}}^{\text{(joint)}})}  }\\
            &+ \nabla_{  {\boldsymbol{X}^{\text{(miss)}}}}\log{\hat{p}(  \boldsymbol{X}^{\text{(joint)}})}
            \end{aligned} \right\} \mathrm{d}r(\tilde{\boldsymbol{X}}^{\text{(joint)}})  \mathrm{d}r(\boldsymbol{X}^{\text{(joint)}})
          \end{aligned}
            }\\
      \overset{\text{(i)}}{\ge} &  0,
  \end{aligned}
\end{equation}
where the (i) is predicated on the requirement that the kernel function, $\mathcal{K}(\cdot,\cdot)$, is semi-positive definite; according to the above-mentioned derivation, we can conclude that the evolution of $\mathcal{F}_{\text{joint-NER}}$ is monotonic increasing along $\tau$. 
Furthermore, $ \mathcal{F}_{\text{joint-NER}}$ satisfies the following inequality:
\begin{equation}\label{appendix-eq:ODEConverge}
  \begin{aligned}
  & \mathcal{F}_{\text{joint-NER}}\\
  \le & \mathcal{F}_{\text{joint-NER}} - (\lambda + 1)\mathbb{E}_{r(\boldsymbol{X}^{\text{(joint)}})}[\log{r(\boldsymbol{X}^{\text{(joint)}})}]\\
  = & -\mathbb{D}_{\text{KL}}\left[r(\boldsymbol{X}^{\text{(joint)}})\Vert \hat{p}(\boldsymbol{X}^{\text{(joint)}})\right]\\
  \le & 0,
\end{aligned}
\end{equation}
which indicates that $\mathcal{F}_{\text{joint-NER}}$ is upper-bounded by $0$.

According to~\Cref{appendix-eq:monotonicIncreasing,appendix-eq:ODEConverge}, the cost functional $\mathcal{F}_{\text{joint-NER}}$, driven by the velocity field $u(\boldsymbol{X}^{\text{(joint)}}, \tau)$ along $\tau$, converges. 
Building on this, employing a smaller step size $\eta$ results in the iteration curve of $\mathcal{F}_{\text{joint-NER}}$ more closely approximating the ODE defined in~\Cref{appendix-eq:monotonicIncreasing}. Consequently, a smaller $\eta$ leads to a sequence where $\mathcal{F}_{\text{joint-NER}}$ monotonically increases, aligning with the theoretical expectations of the ODE behavior.
\end{proof}


Unfortunately, directly obtaining $\mathcal{F}_{\text{joint-NER}}$ is intractable. Nevertheless, we can still observe the changes in Wass and MAE across iteration time $\tau$ to demonstrate the convergence of the 'Impute' part. To this end, we present the convergence trends along $\tau$ in~\Cref{appendix-fig:convExperMAR,appendix-fig:convExperMCAR,appendix-fig:convExperMNAR}. These figures illustrate that both MAE and Wass generally decrease as the iteration epochs increase and eventually stabilize after $\tau = 250$. This observed behavior supports our theoretical findings regarding the convergence of the `Impute' part.

\begin{figure}[htbp]
  \vspace{-0.5cm}
  \centering
\subfigure[MAE, BT.]{\includegraphics[width=0.24\linewidth]{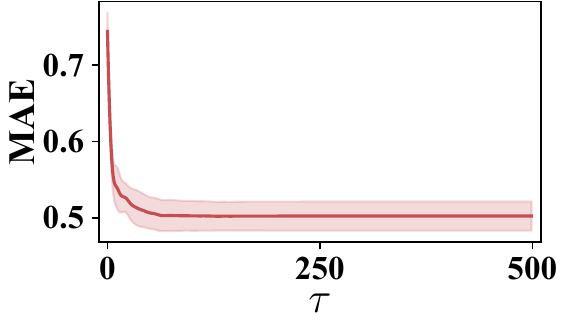}}
\subfigure[Wass, BT.]{\includegraphics[width=0.24\linewidth]{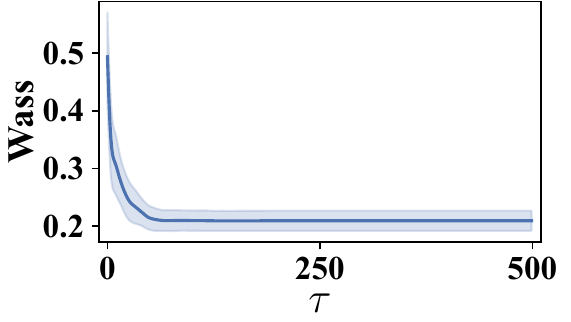}}
\subfigure[MAE, BCD.]{\includegraphics[width=0.24\linewidth]{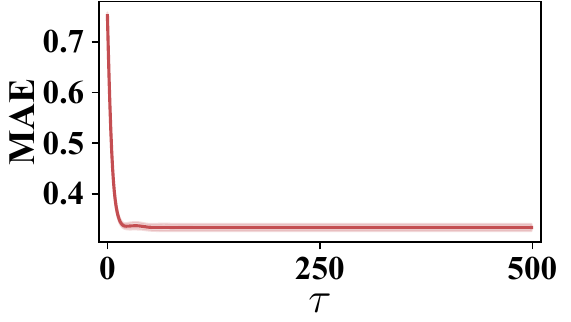}}
\subfigure[MAE, BCD.]{\includegraphics[width=0.24\linewidth]{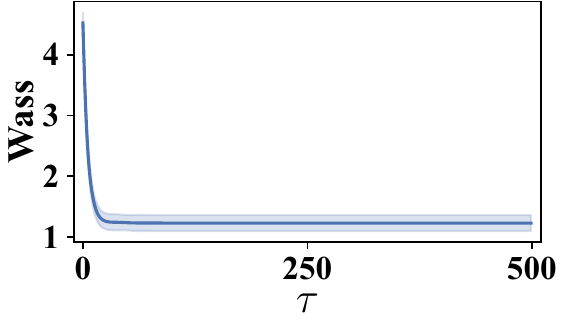}}
\subfigure[MAE, CC.]{\includegraphics[width=0.24\linewidth]{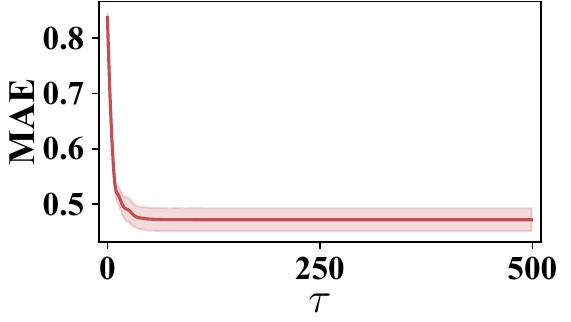}}
\subfigure[Wass, CC.]{\includegraphics[width=0.24\linewidth]{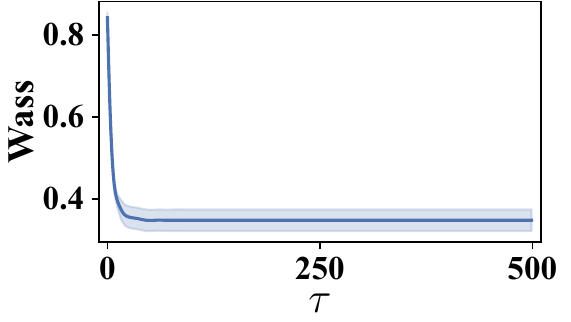}}
\subfigure[MAE, CBV.]{\includegraphics[width=0.24\linewidth]{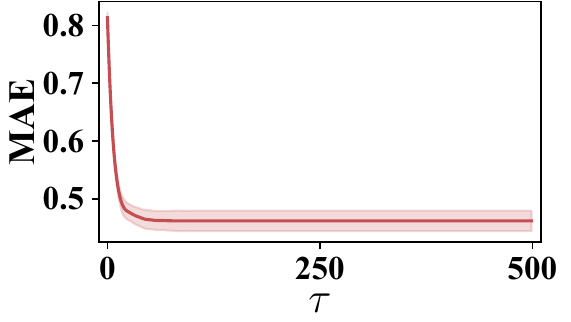}}
\subfigure[MAE, CBV.]{\includegraphics[width=0.24\linewidth]{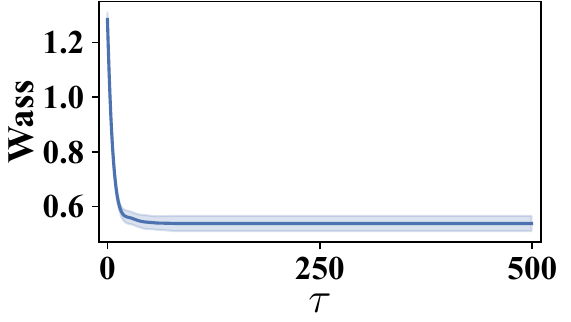}}
\subfigure[MAE, IS.]{\includegraphics[width=0.24\linewidth]{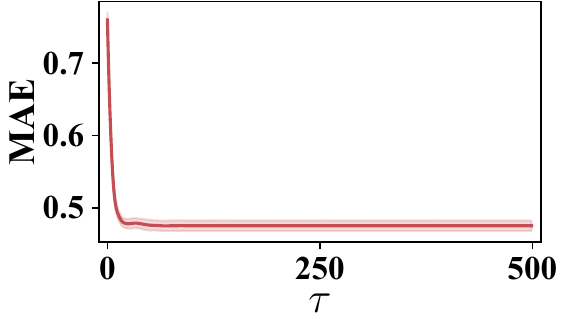}}
\subfigure[Wass, IS.]{\includegraphics[width=0.24\linewidth]{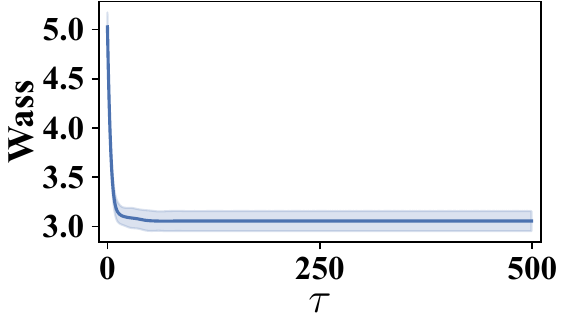}}
\subfigure[MAE, PK.]{\includegraphics[width=0.24\linewidth]{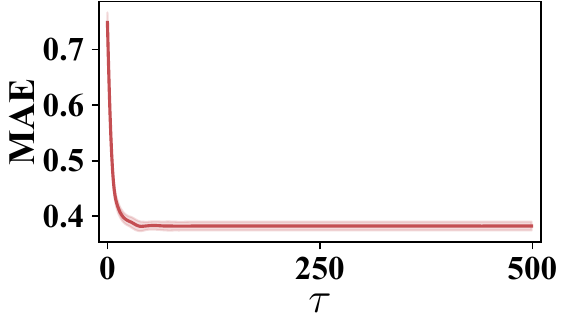}}
\subfigure[MAE, PK.]{\includegraphics[width=0.24\linewidth]{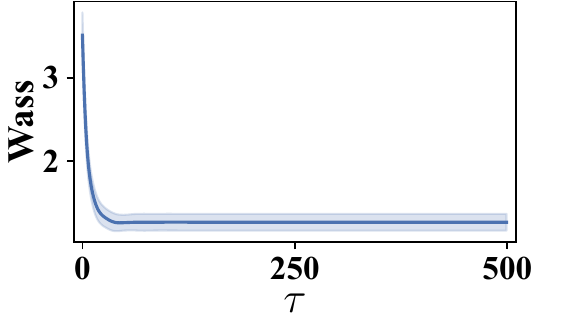}}
\subfigure[MAE, QB.]{\includegraphics[width=0.24\linewidth]{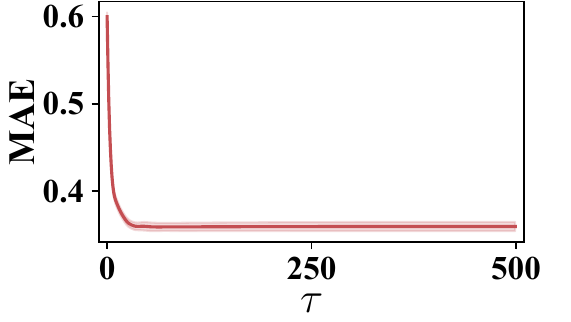}}
\subfigure[Wass, QB.]{\includegraphics[width=0.24\linewidth]{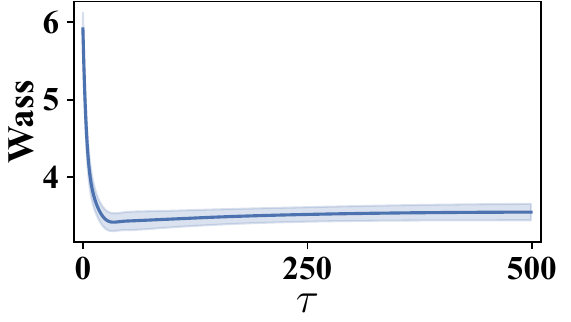}}
\subfigure[MAE, WQW.]{\includegraphics[width=0.24\linewidth]{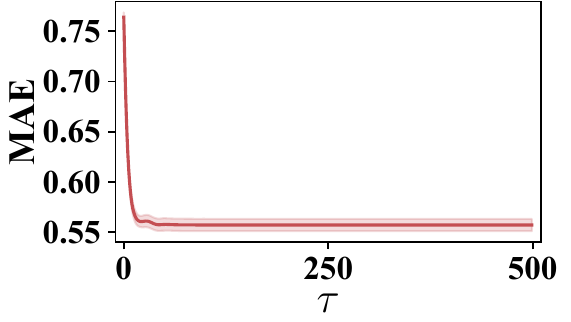}}
\subfigure[MAE, WQW.]{\includegraphics[width=0.24\linewidth]{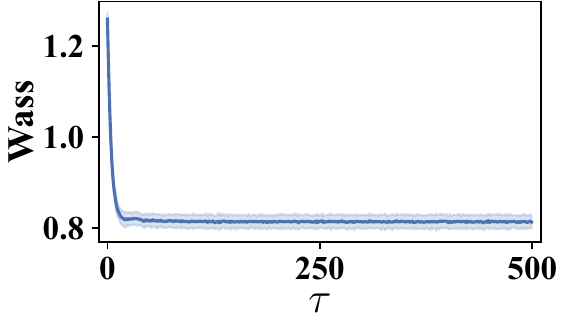}}
  \vspace{-0.3cm}
  \caption{Evolution of evaluation metrics along iteration time $\tau$ under MCAR scenario at 30\% missing rate. The shaded area indicates the $\pm$ 1.0 standard deviation uncertainty interval.}
  \label{appendix-fig:convExperMCAR}
\end{figure}
\begin{figure}[htbp]
  \vspace{-0.5cm}
  \centering
\subfigure[MAE, BT.]{\includegraphics[width=0.24\linewidth]{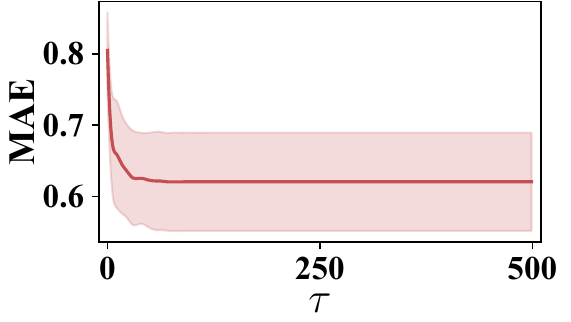}}
\subfigure[Wass, BT.]{\includegraphics[width=0.24\linewidth]{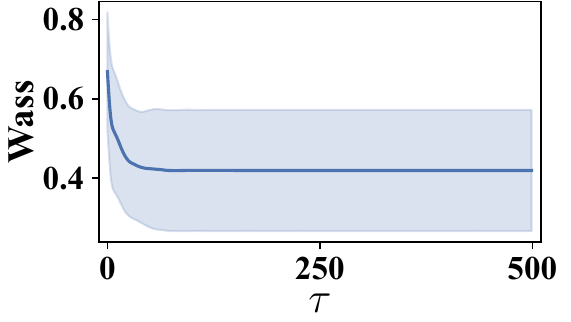}}
\subfigure[MAE, BCD.]{\includegraphics[width=0.24\linewidth]{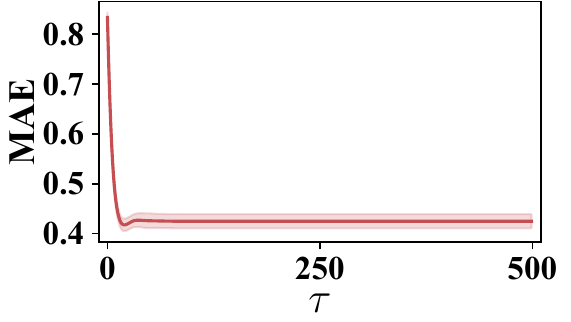}}
\subfigure[MAE, BCD.]{\includegraphics[width=0.24\linewidth]{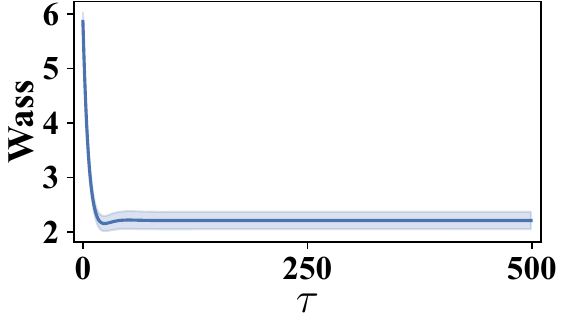}}
\subfigure[MAE, CC.]{\includegraphics[width=0.24\linewidth]{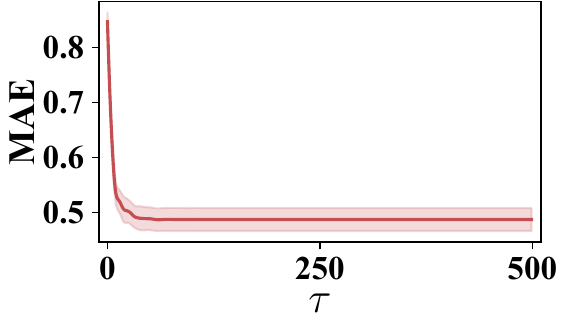}}
\subfigure[Wass, CC.]{\includegraphics[width=0.24\linewidth]{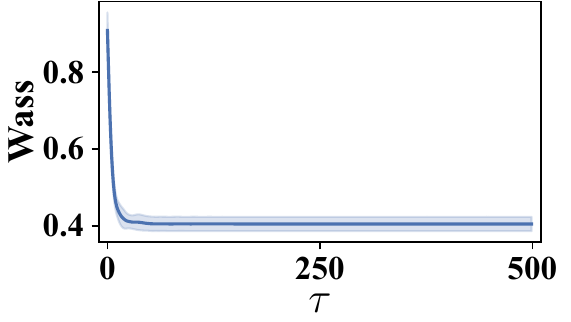}}
\subfigure[MAE, CBV.]{\includegraphics[width=0.24\linewidth]{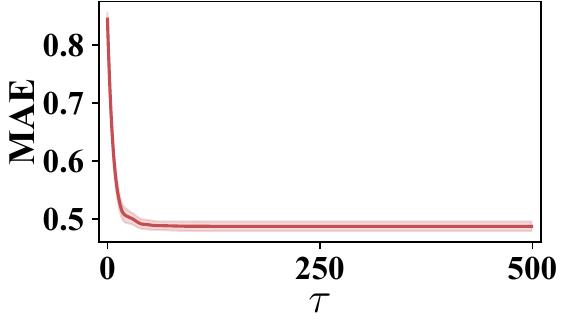}}
\subfigure[MAE, CBV.]{\includegraphics[width=0.24\linewidth]{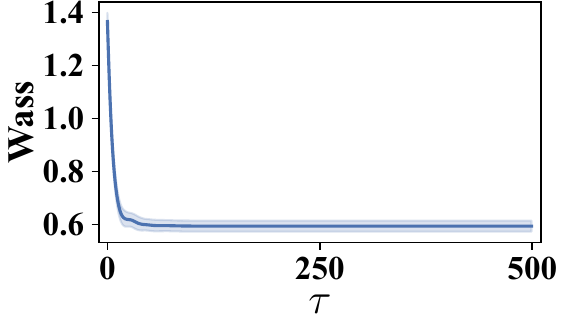}}
\subfigure[MAE, IS.]{\includegraphics[width=0.24\linewidth]{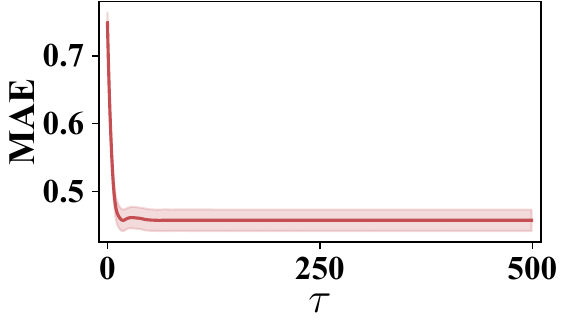}}
\subfigure[Wass, IS.]{\includegraphics[width=0.24\linewidth]{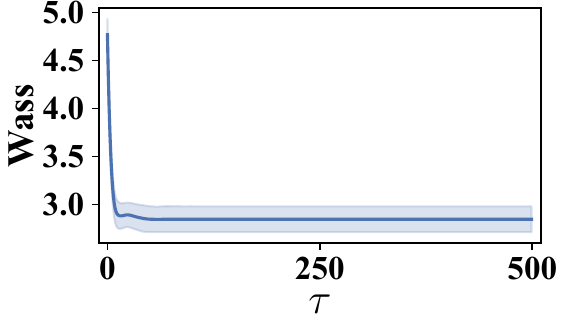}}
\subfigure[MAE, PK.]{\includegraphics[width=0.24\linewidth]{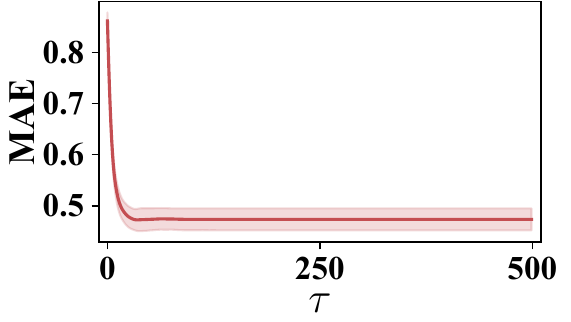}}
\subfigure[MAE, PK.]{\includegraphics[width=0.24\linewidth]{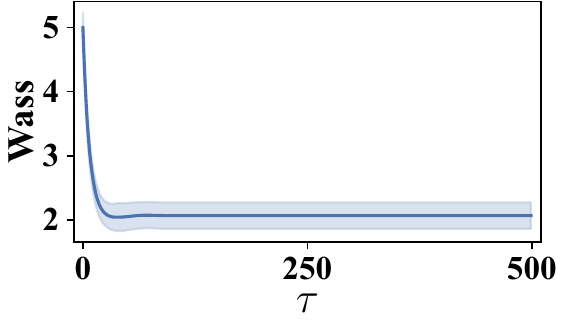}}
\subfigure[MAE, QB.]{\includegraphics[width=0.24\linewidth]{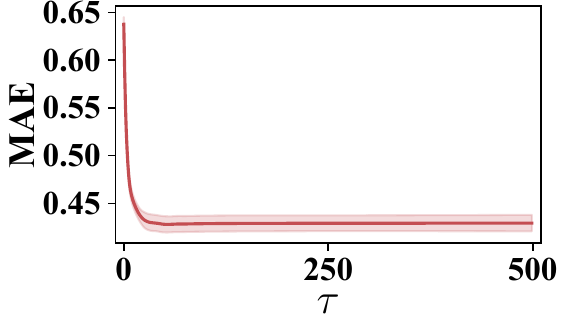}}
\subfigure[Wass, QB.]{\includegraphics[width=0.24\linewidth]{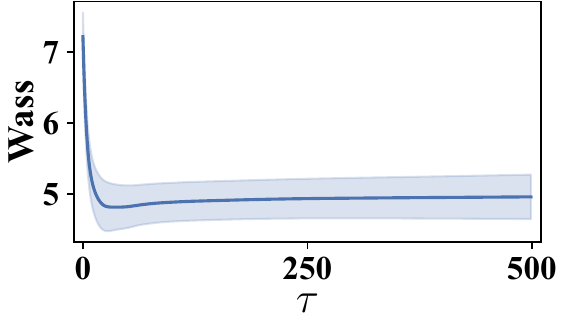}}
\subfigure[MAE, WQW.]{\includegraphics[width=0.24\linewidth]{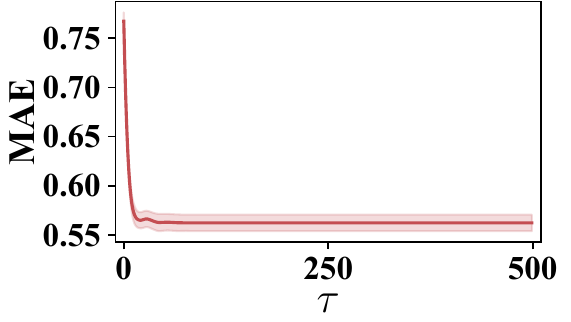}}
\subfigure[MAE, WQW.]{\includegraphics[width=0.24\linewidth]{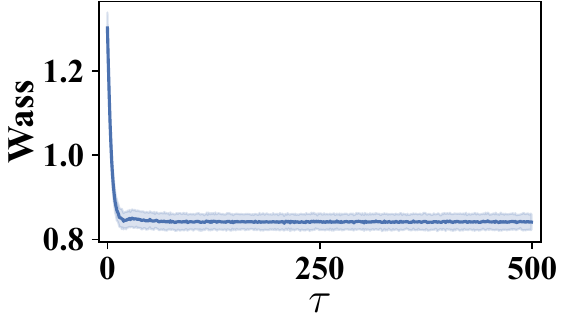}}
  \vspace{-0.3cm}
  \caption{Evolution of evaluation metrics along iteration time $\tau$ under MNAR scenario at 30\% missing rate. The shaded area indicates the $\pm$ 1.0 standard deviation uncertainty interval.}
  \label{appendix-fig:convExperMNAR}
\end{figure}

\newpage
\subsection{Baseline Comparison Vary Different Missing Rates and Scenarios}
In this subsection, we present an extended analysis of model performance across varying missing data rates, as detailed in
~\Cref{fig:extraExperMAR10,fig:extraExperMCAR10,fig:extraExperMNAR10,fig:extraExperMAR50,fig:extraExperMCAR50,fig:extraExperMNAR50}
. The results demonstrate that our KnewImp approach performs competitively across a broad spectrum of missing data regimes.


\begin{figure}[htbp]
  \vspace{-0.5cm}
  \centering
\subfigure{\includegraphics[width=0.32\linewidth]{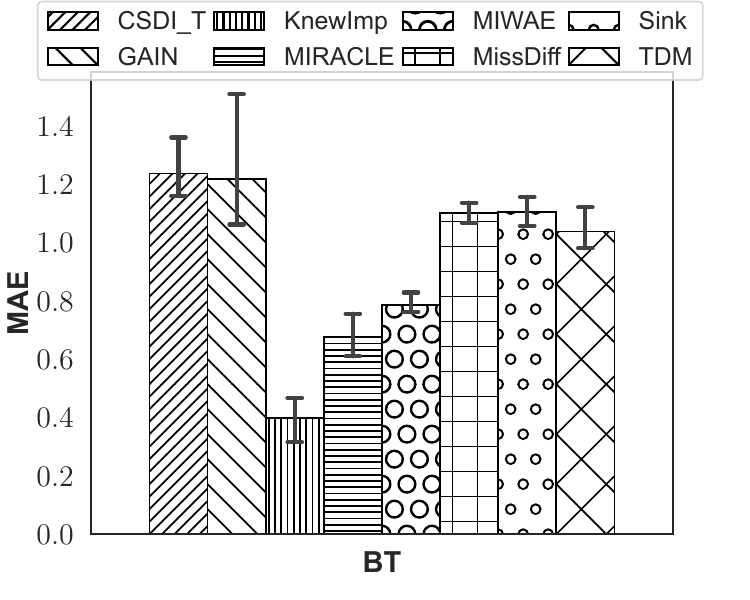}}
\subfigure{\includegraphics[width=0.32\linewidth]{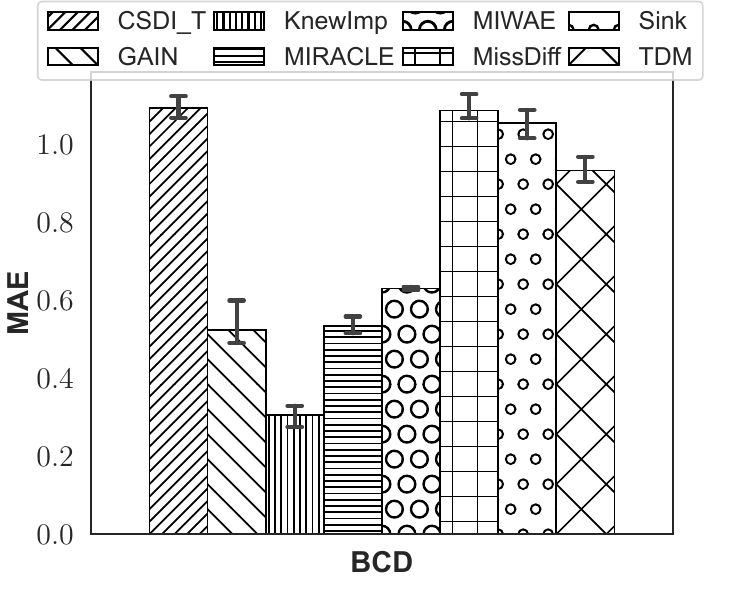}}
\subfigure{\includegraphics[width=0.32\linewidth]{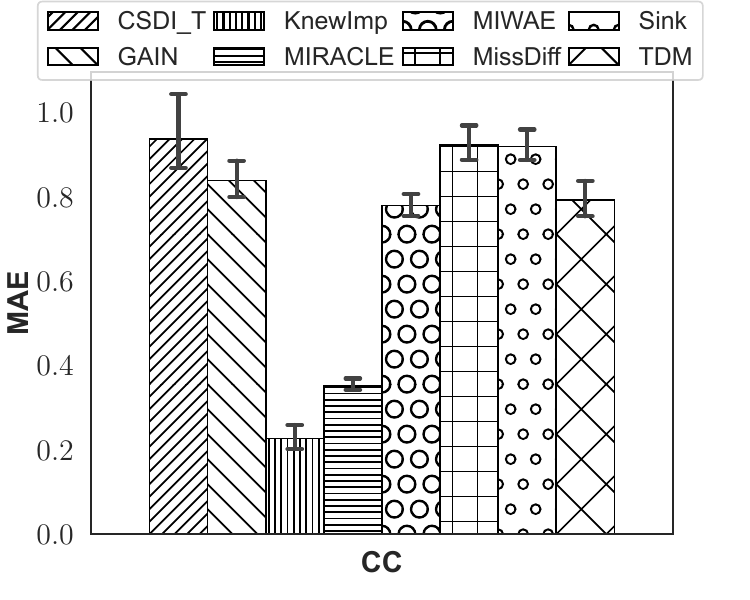}}
\subfigure{\includegraphics[width=0.32\linewidth]{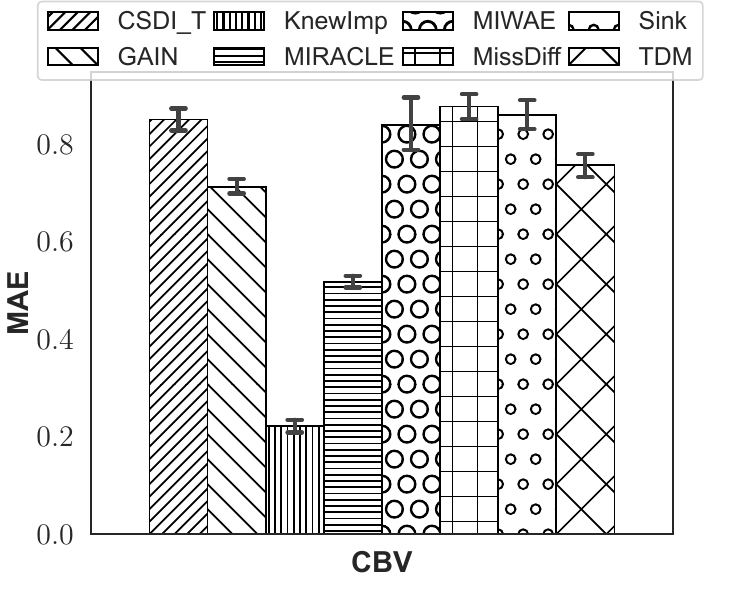}}
\subfigure{\includegraphics[width=0.32\linewidth]{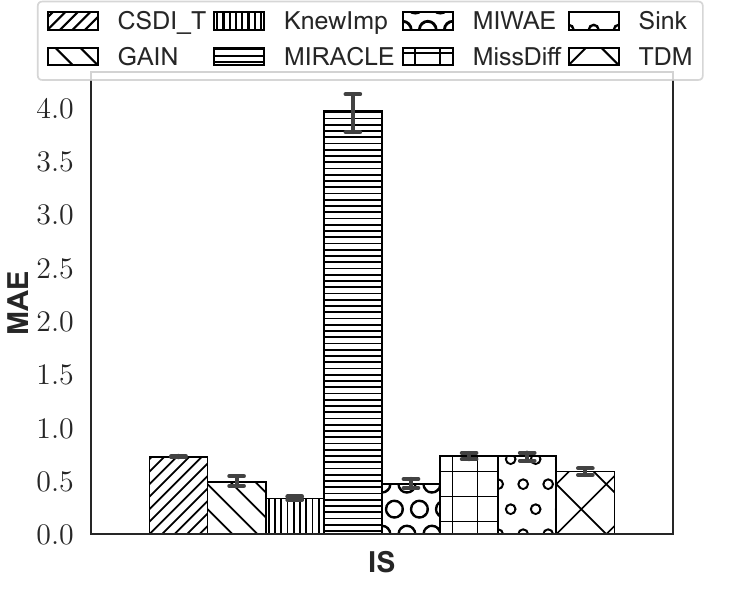}}
\subfigure{\includegraphics[width=0.32\linewidth]{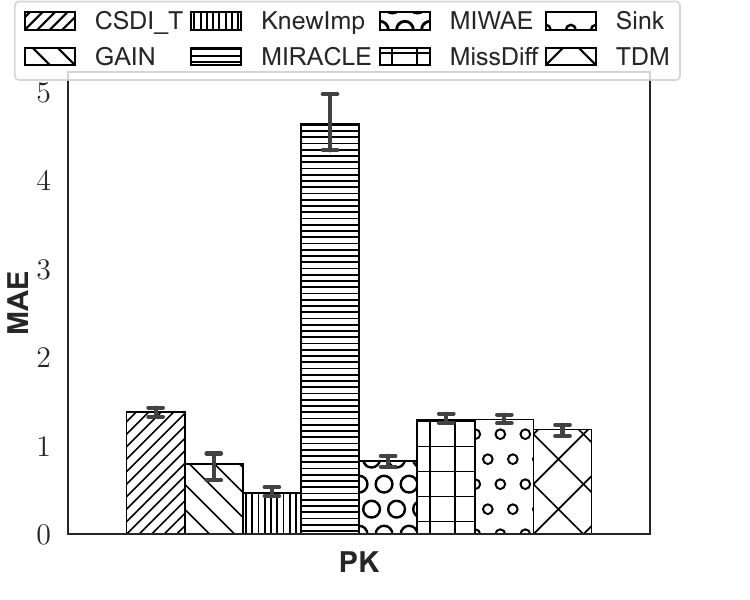}}
\subfigure{\includegraphics[width=0.32\linewidth]{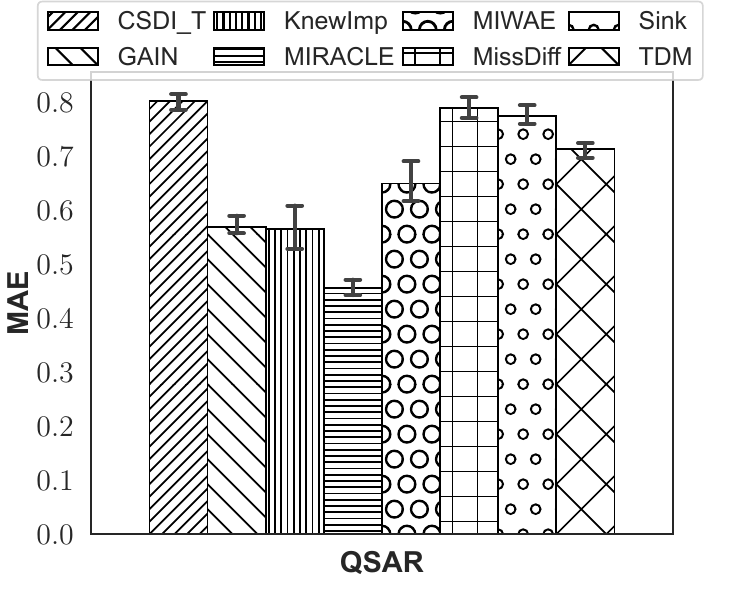}}
\subfigure{\includegraphics[width=0.32\linewidth]{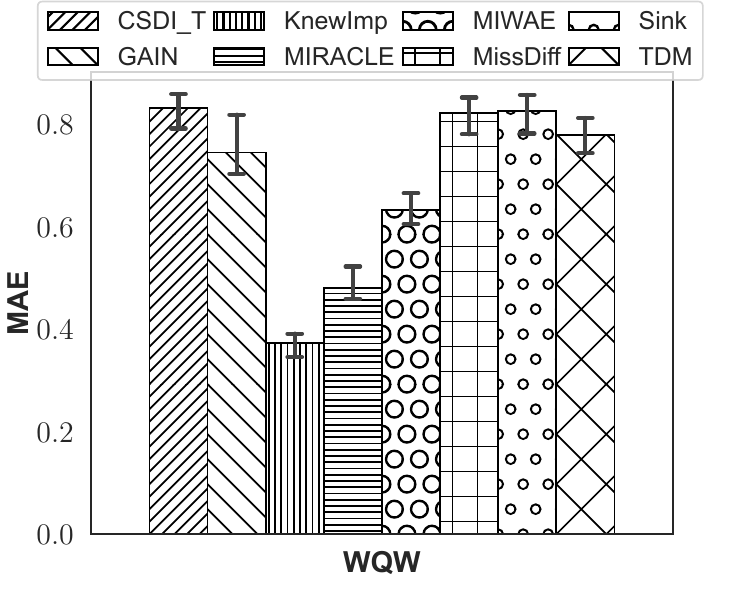}}
\subfigure{\includegraphics[width=0.32\linewidth]{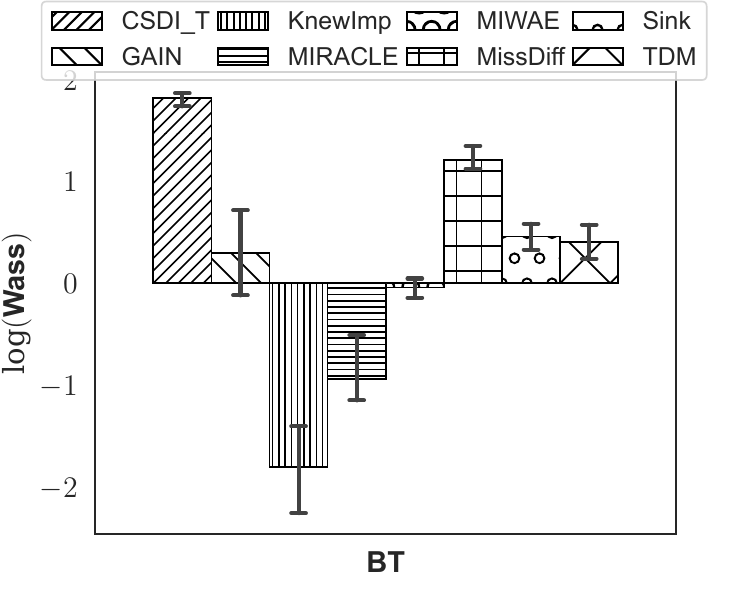}}
\subfigure{\includegraphics[width=0.32\linewidth]{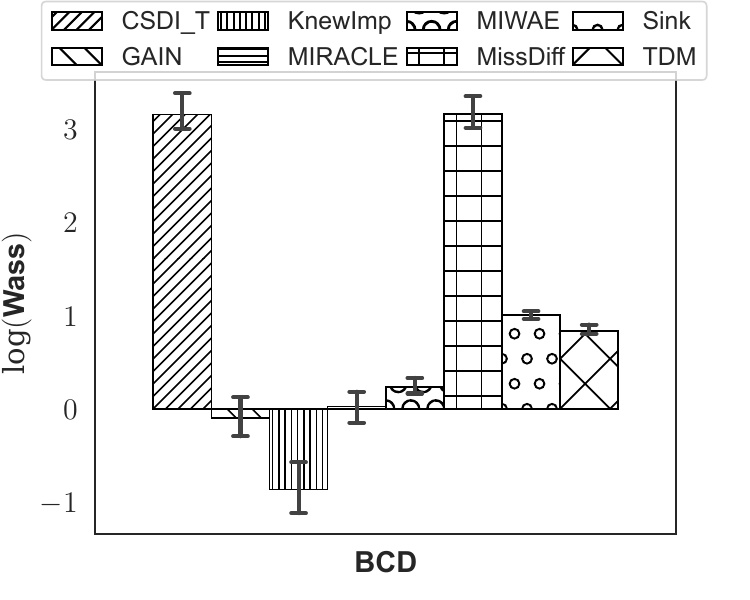}}
\subfigure{\includegraphics[width=0.32\linewidth]{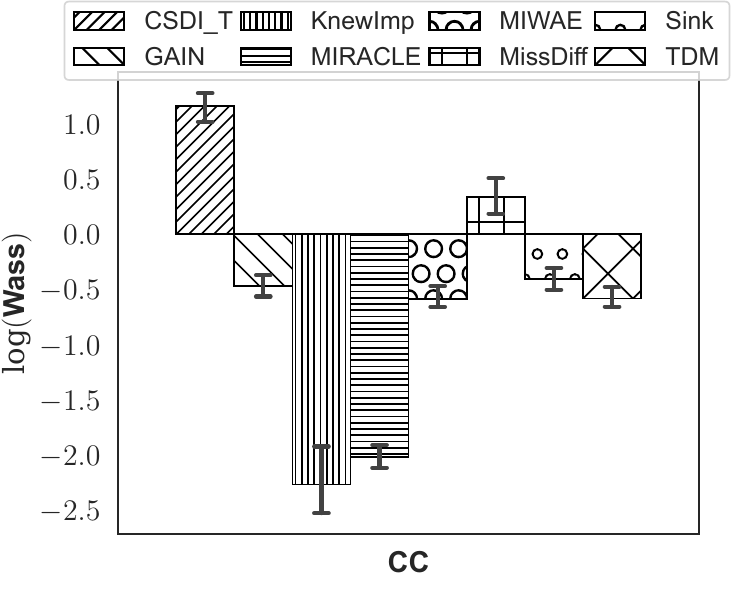}}
\subfigure{\includegraphics[width=0.32\linewidth]{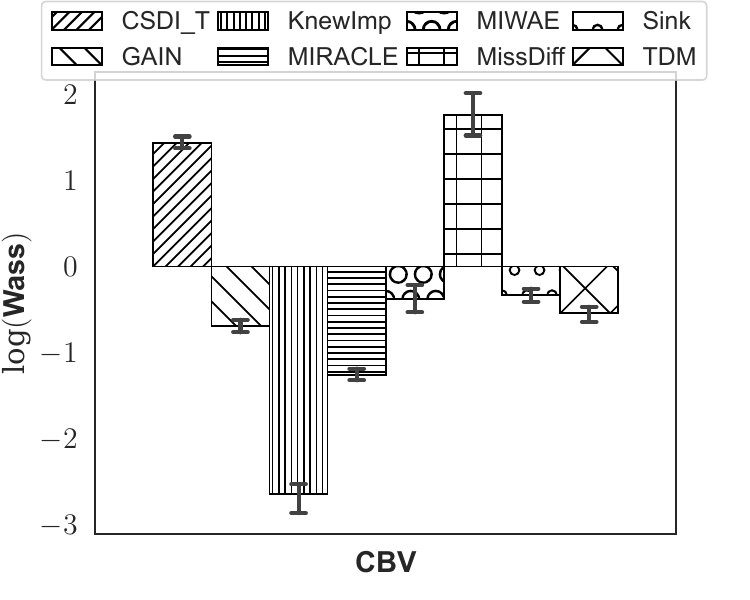}}
\subfigure{\includegraphics[width=0.32\linewidth]{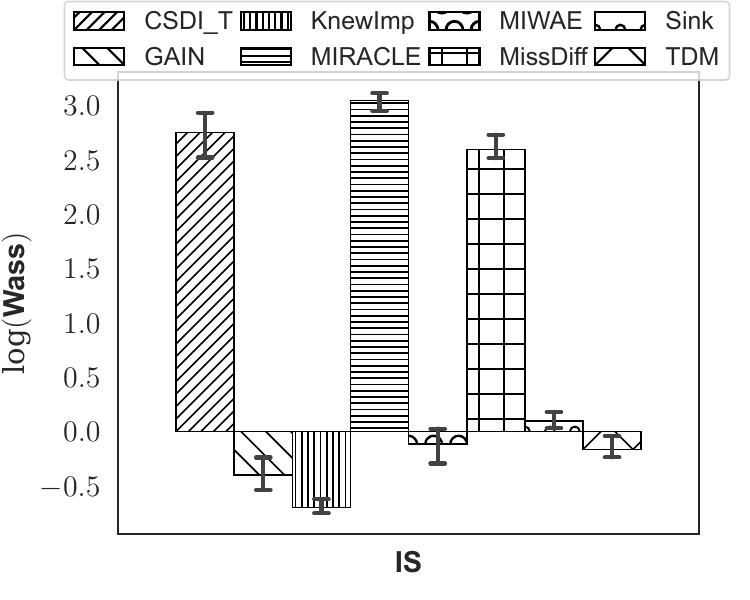}}
\subfigure{\includegraphics[width=0.32\linewidth]{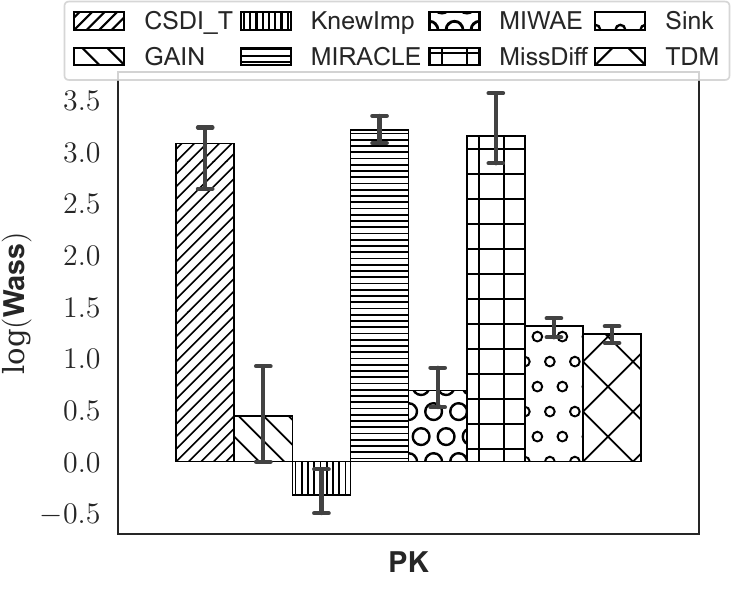}}
\subfigure{\includegraphics[width=0.32\linewidth]{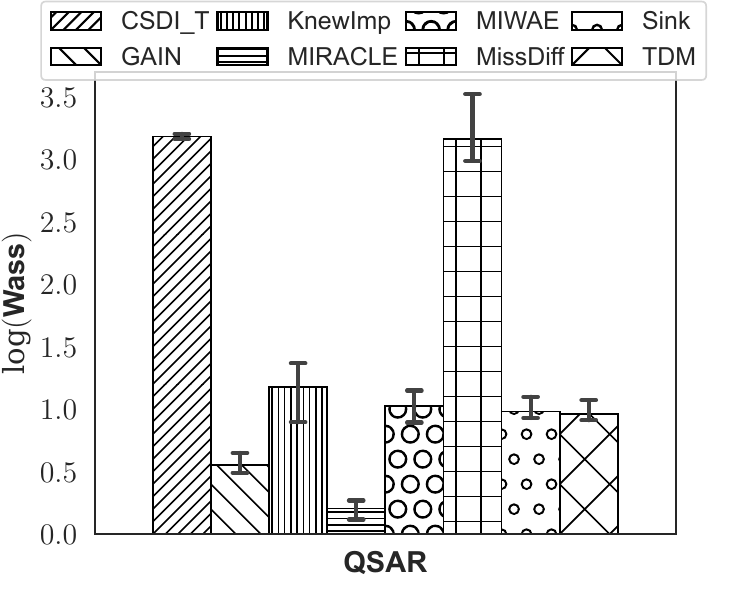}}
\subfigure{\includegraphics[width=0.32\linewidth]{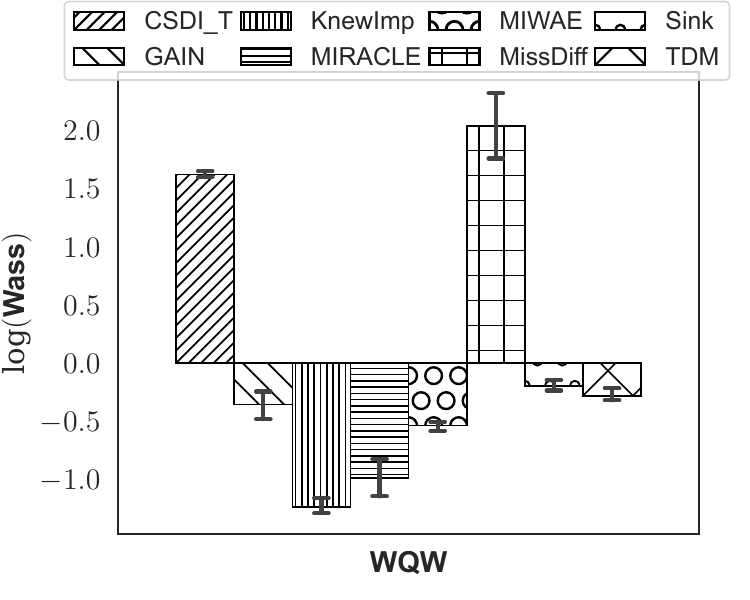}}
\vspace{-0.5cm}
  \caption{Imputation accuracy comparison for MAR scenario at 10\% missing rate. The error bars indicate the 100\% confidence intervals.}
  \label{fig:extraExperMAR10}
\end{figure}

\begin{figure}[htbp]
\vspace{-0.5cm}
\centering
\subfigure{\includegraphics[width=0.32\linewidth]{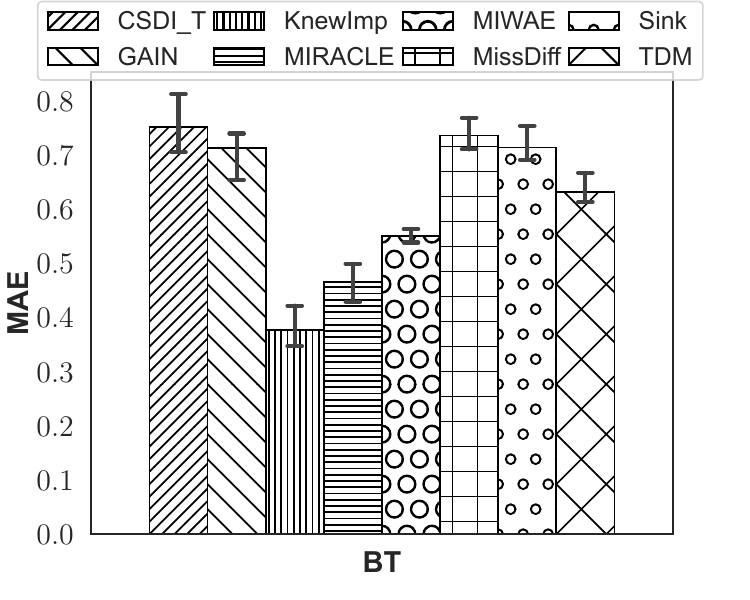}}
\subfigure{\includegraphics[width=0.32\linewidth]{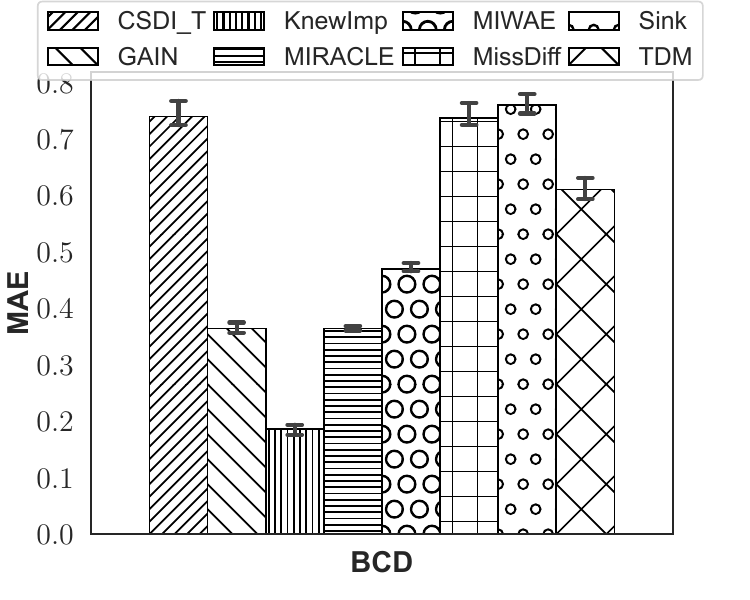}}
\subfigure{\includegraphics[width=0.32\linewidth]{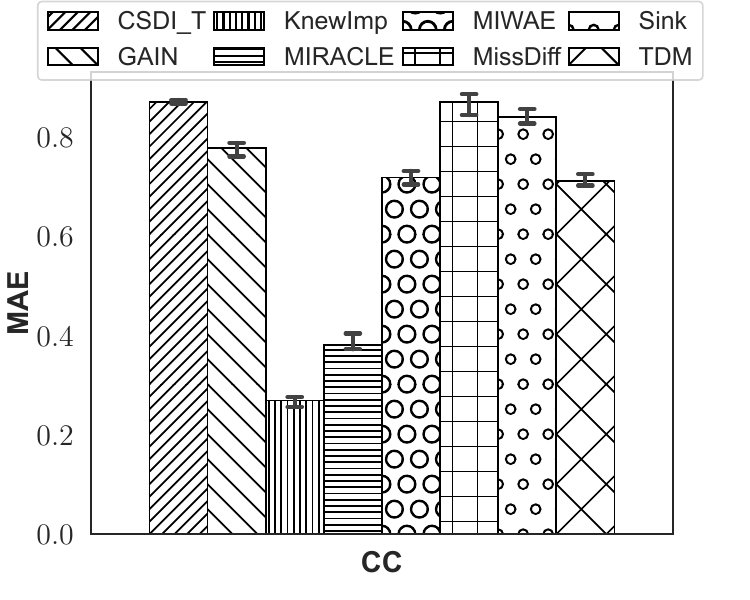}}
\subfigure{\includegraphics[width=0.32\linewidth]{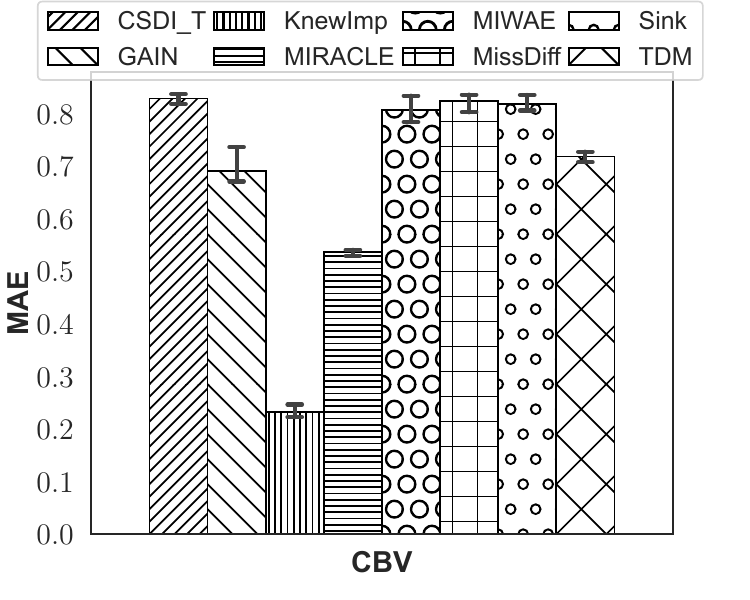}}
\subfigure{\includegraphics[width=0.32\linewidth]{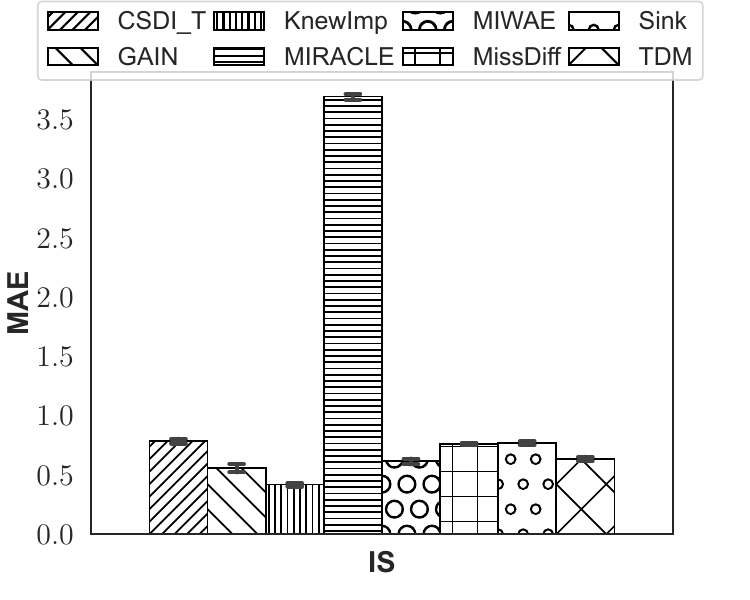}}
\subfigure{\includegraphics[width=0.32\linewidth]{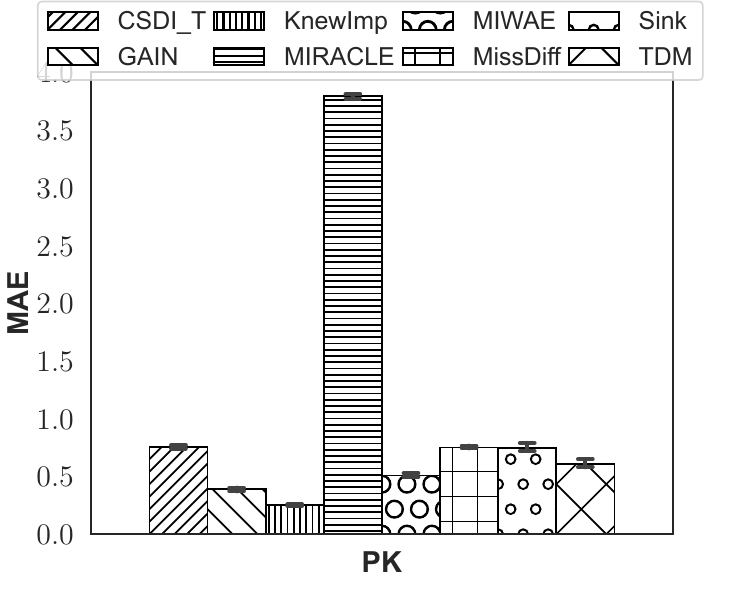}}
\subfigure{\includegraphics[width=0.32\linewidth]{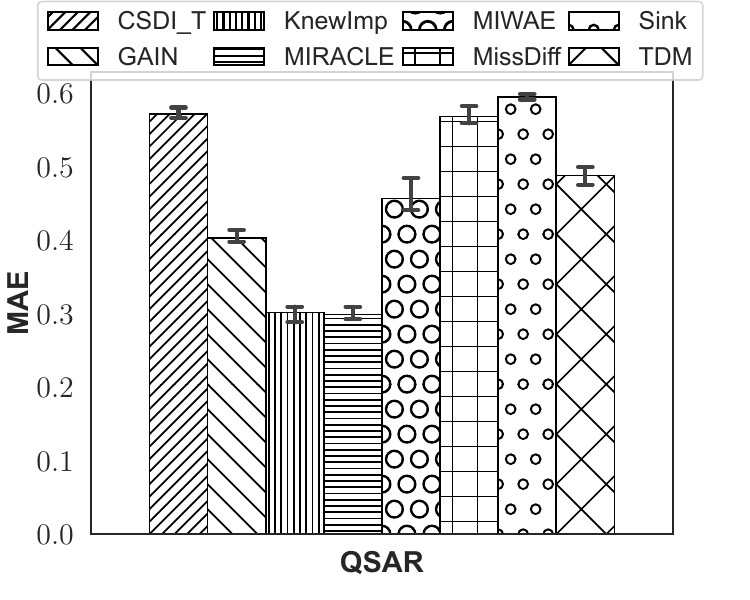}}
\subfigure{\includegraphics[width=0.32\linewidth]{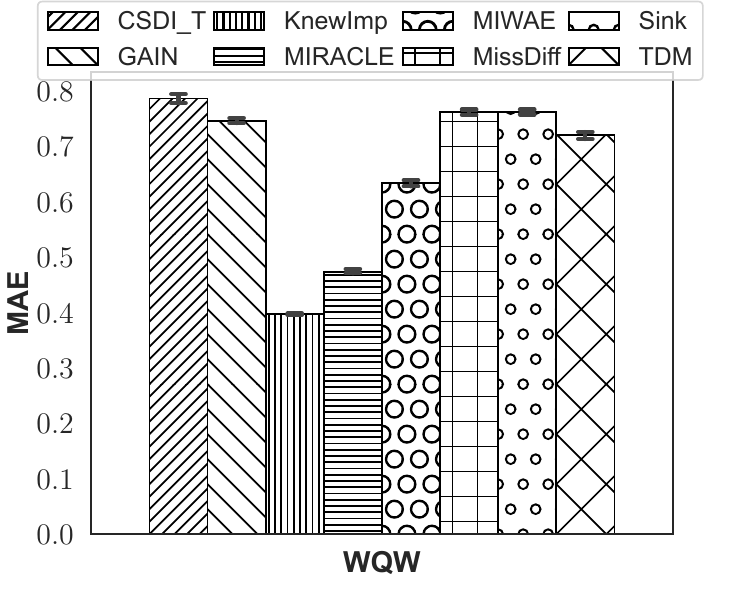}}
\subfigure{\includegraphics[width=0.32\linewidth]{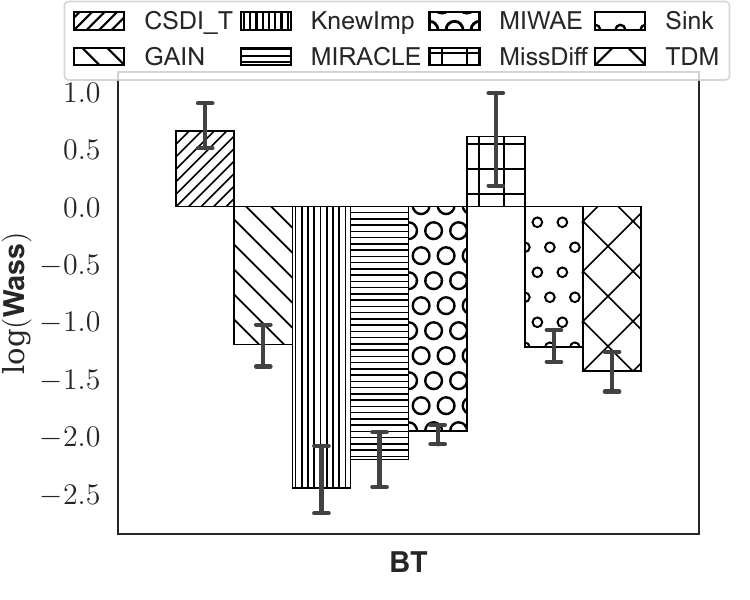}}
\subfigure{\includegraphics[width=0.32\linewidth]{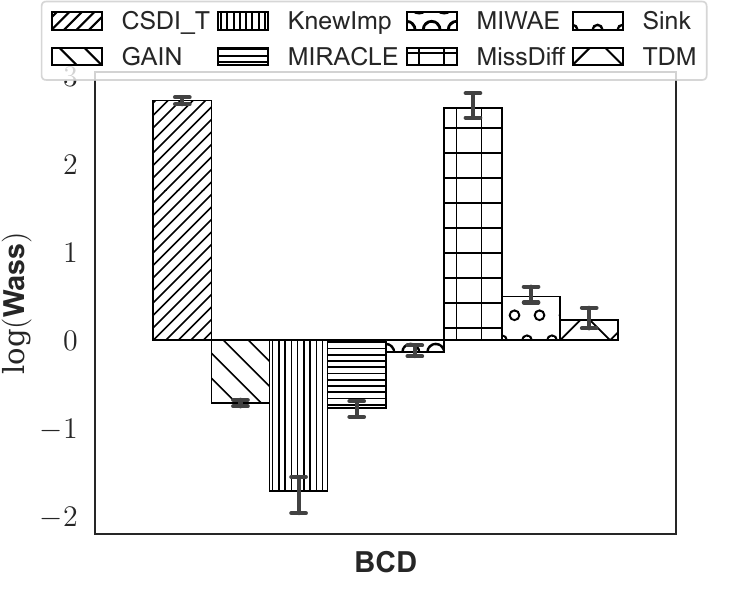}}
\subfigure{\includegraphics[width=0.32\linewidth]{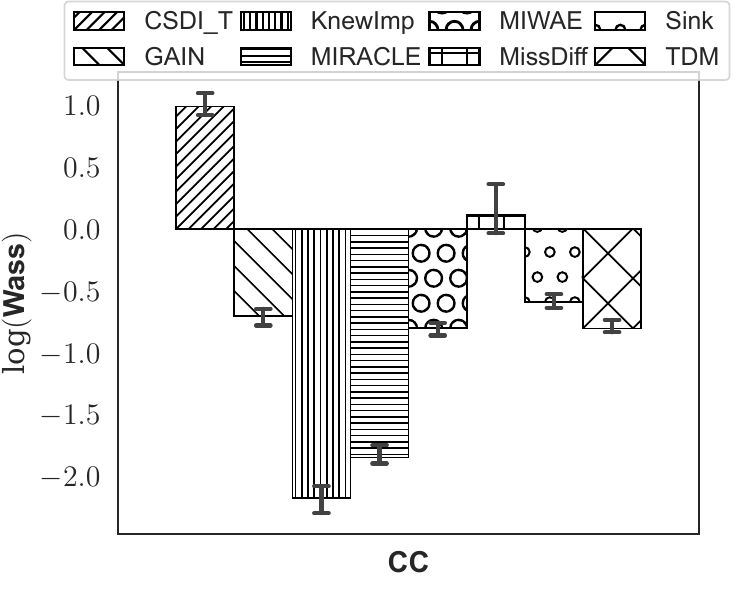}}
\subfigure{\includegraphics[width=0.32\linewidth]{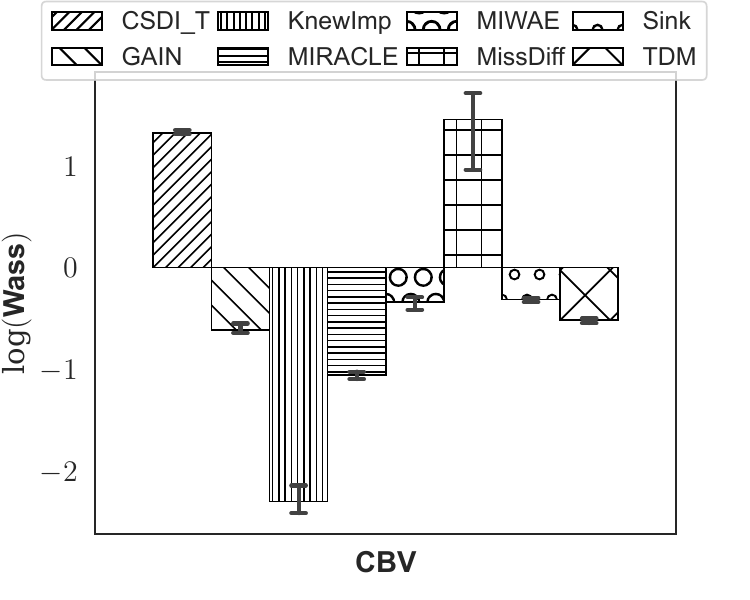}}
\subfigure{\includegraphics[width=0.32\linewidth]{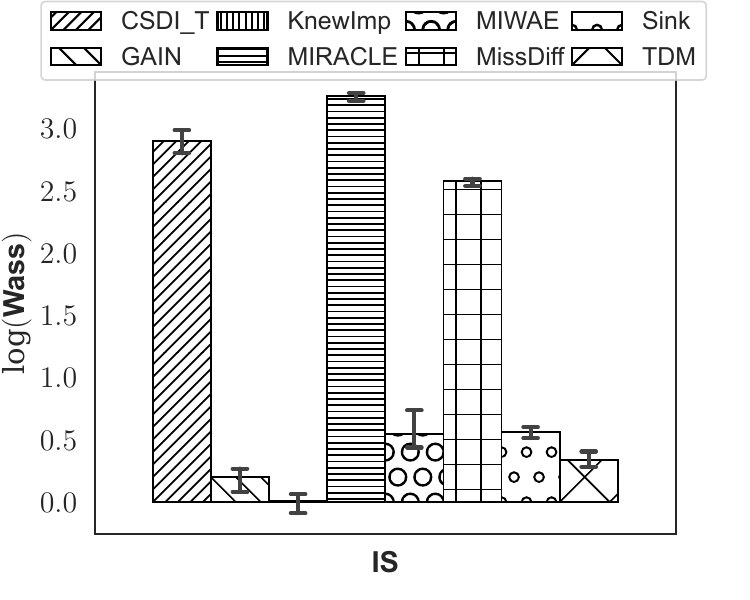}}
\subfigure{\includegraphics[width=0.32\linewidth]{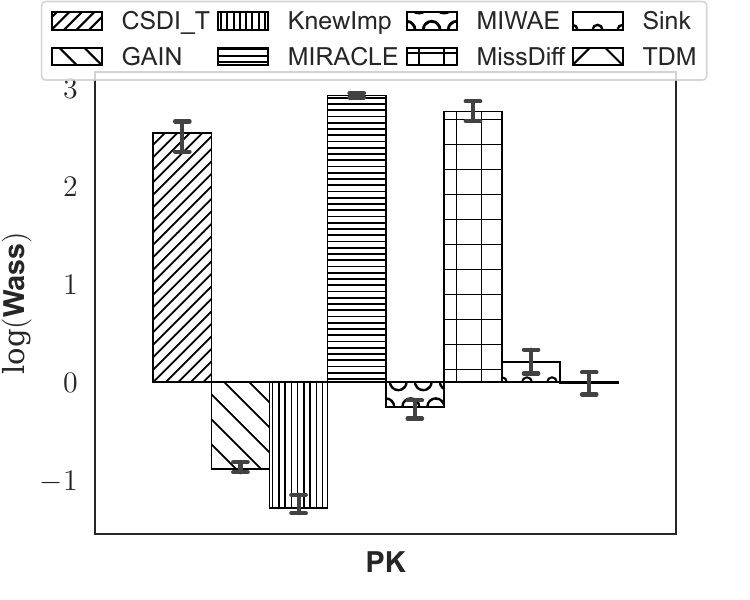}}
\subfigure{\includegraphics[width=0.32\linewidth]{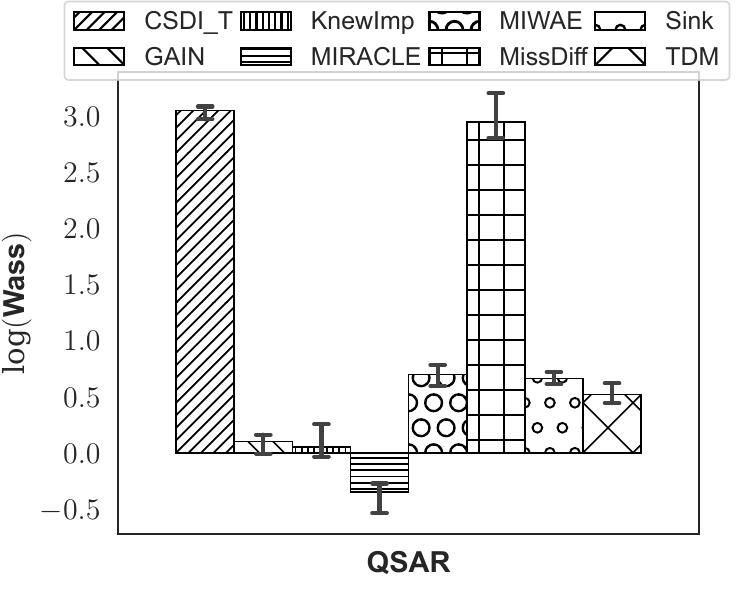}}
\subfigure{\includegraphics[width=0.32\linewidth]{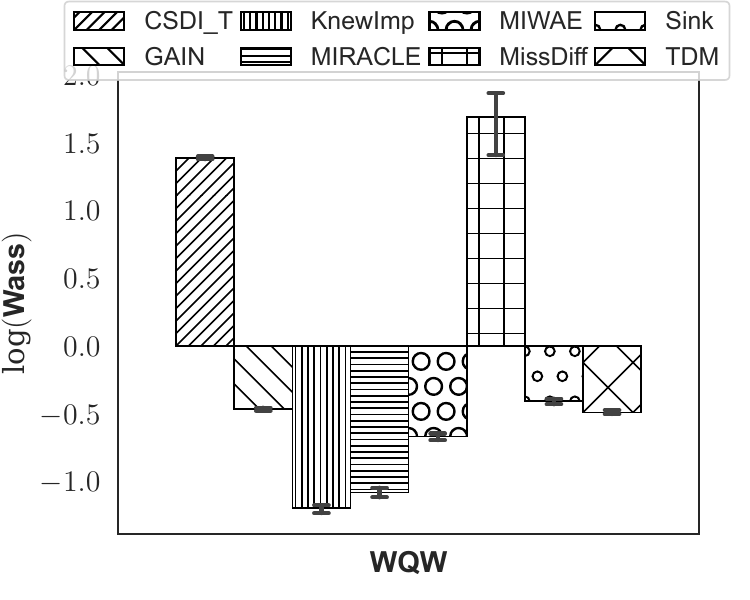}}
  \vspace{-0.5cm}
\caption{Imputation accuracy comparison for MCAR scenario at 10\% missing rate. The error bars indicate the 100\% confidence intervals.}
\label{fig:extraExperMCAR10}
\vspace{-0.5cm}
\end{figure}
\begin{figure}[htbp]
\vspace{-0.5cm}
\centering
\subfigure{\includegraphics[width=0.32\linewidth]{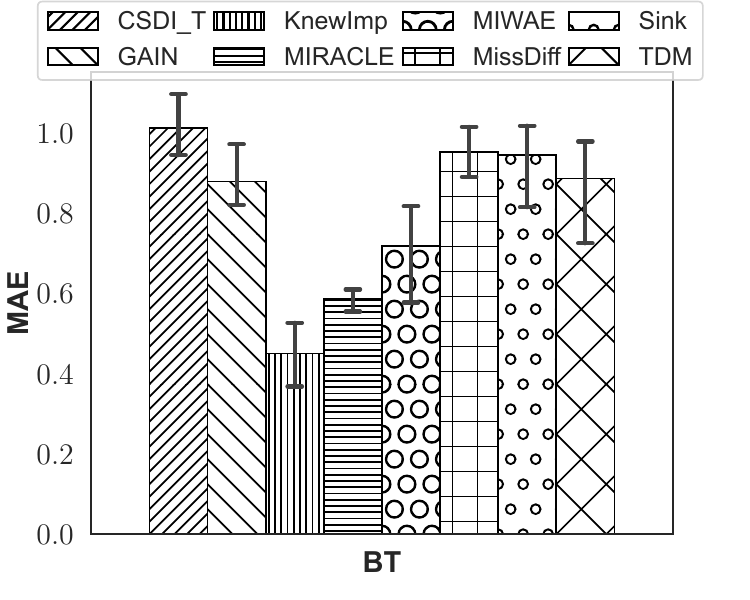}}
\subfigure{\includegraphics[width=0.32\linewidth]{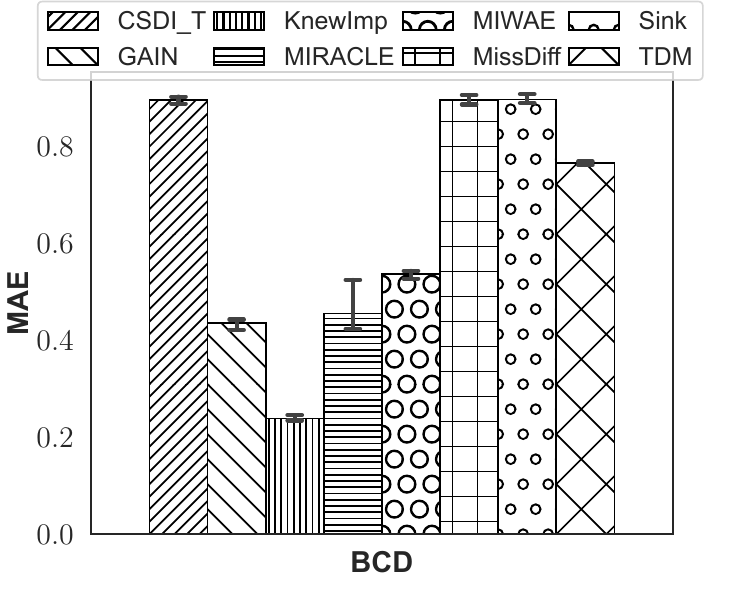}}
\subfigure{\includegraphics[width=0.32\linewidth]{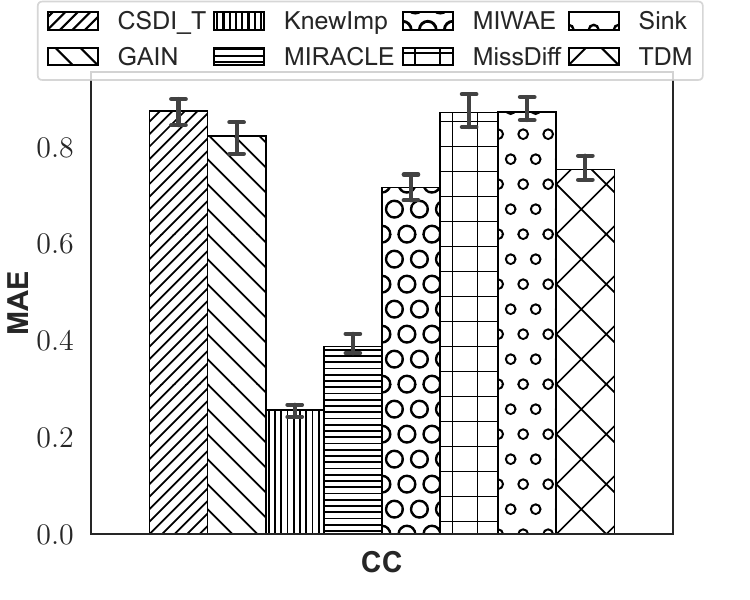}}
\subfigure{\includegraphics[width=0.32\linewidth]{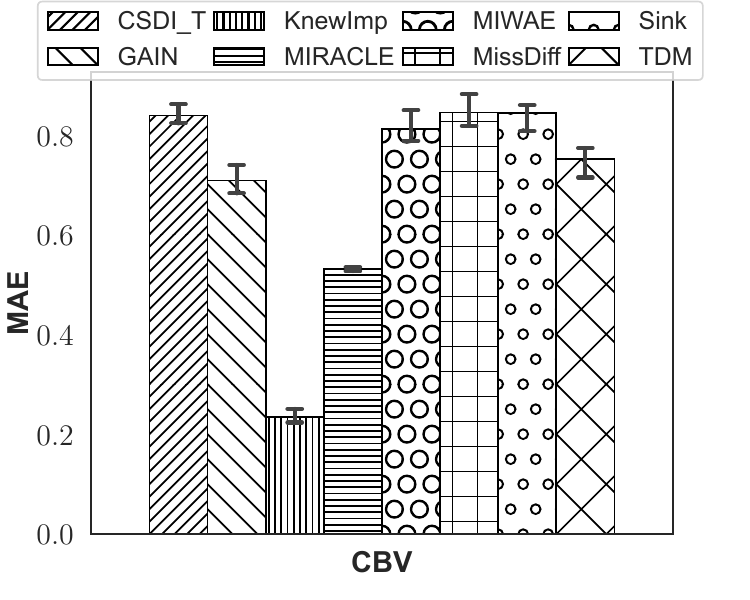}}
\subfigure{\includegraphics[width=0.32\linewidth]{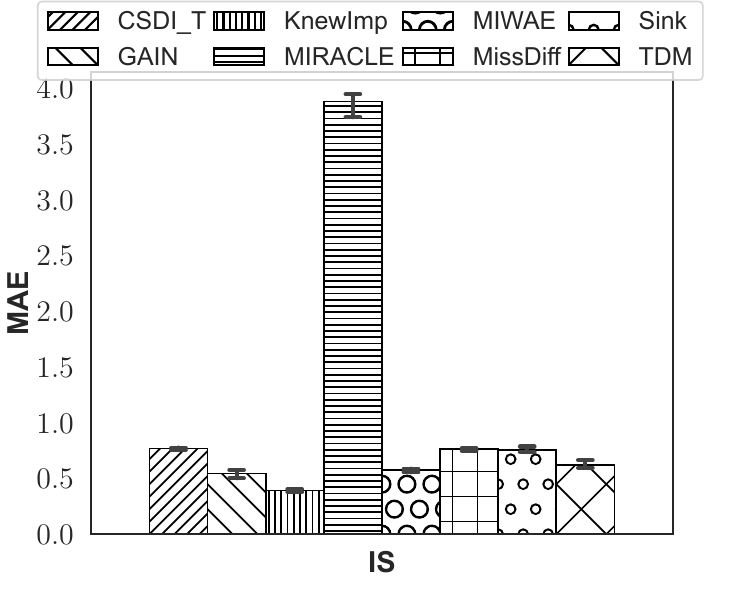}}
\subfigure{\includegraphics[width=0.32\linewidth]{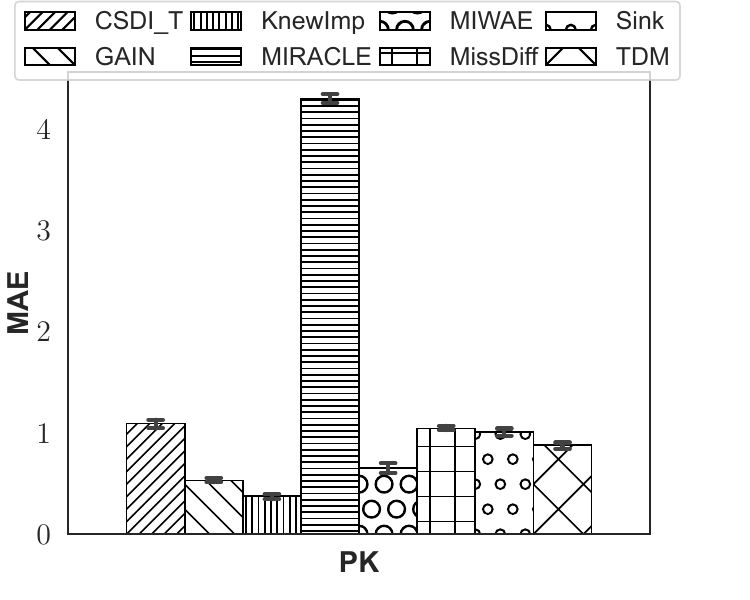}}
\subfigure{\includegraphics[width=0.32\linewidth]{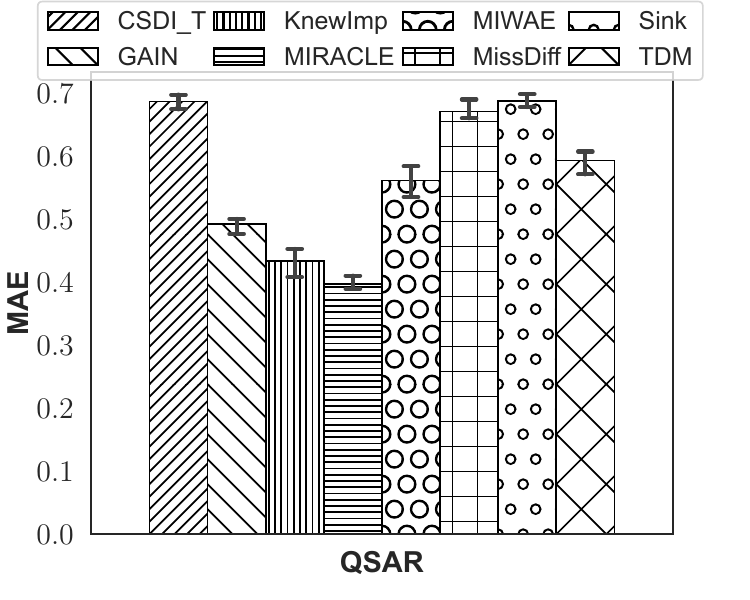}}
\subfigure{\includegraphics[width=0.32\linewidth]{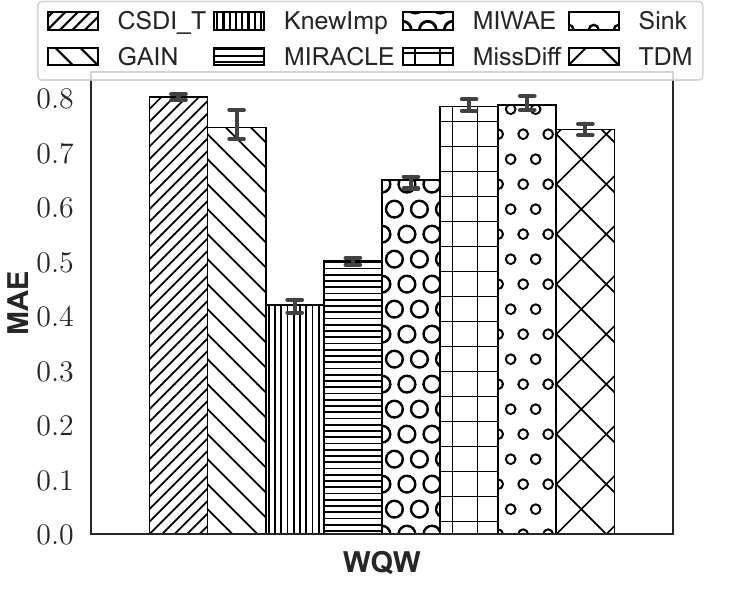}}
\subfigure{\includegraphics[width=0.32\linewidth]{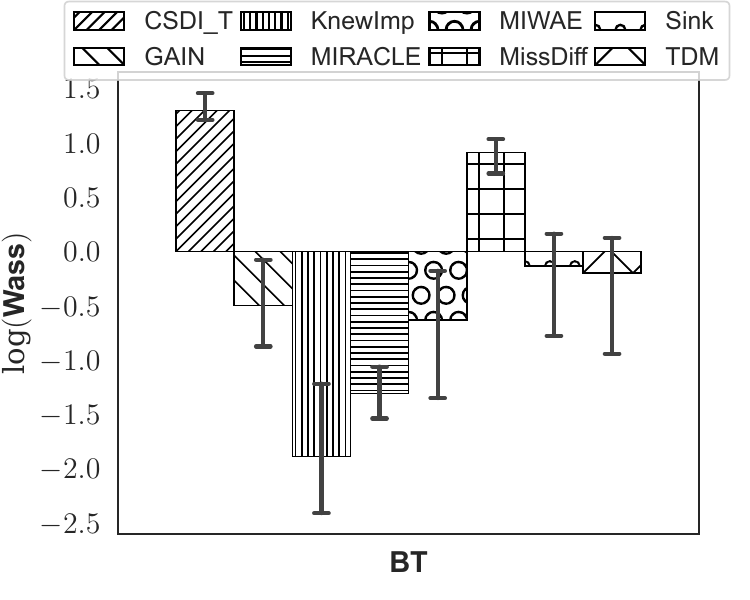}}
\subfigure{\includegraphics[width=0.32\linewidth]{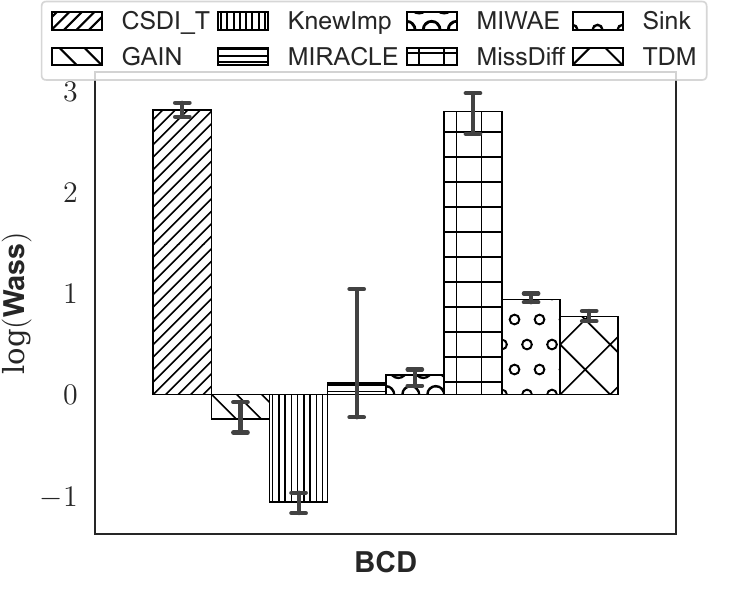}}
\subfigure{\includegraphics[width=0.32\linewidth]{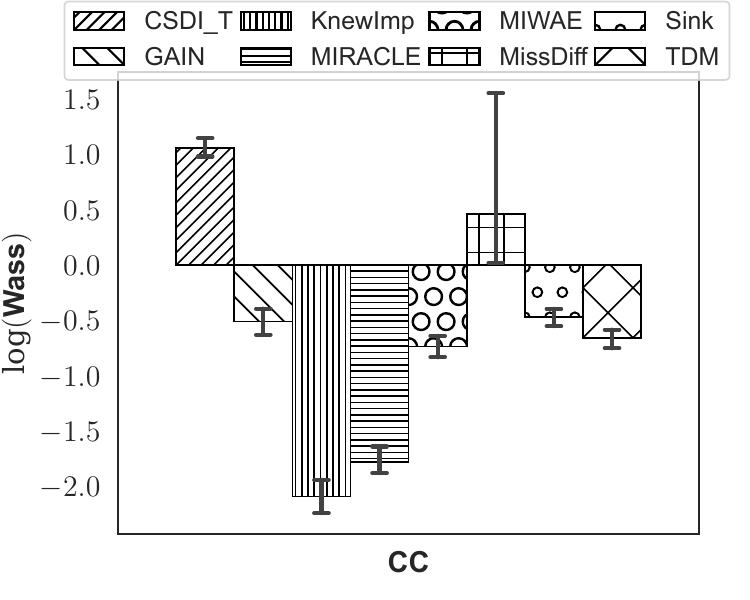}}
\subfigure{\includegraphics[width=0.32\linewidth]{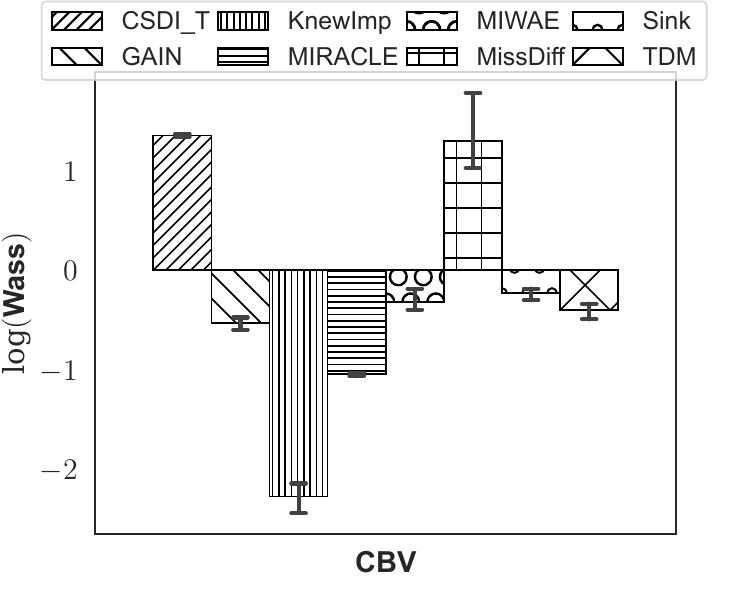}}
\subfigure{\includegraphics[width=0.32\linewidth]{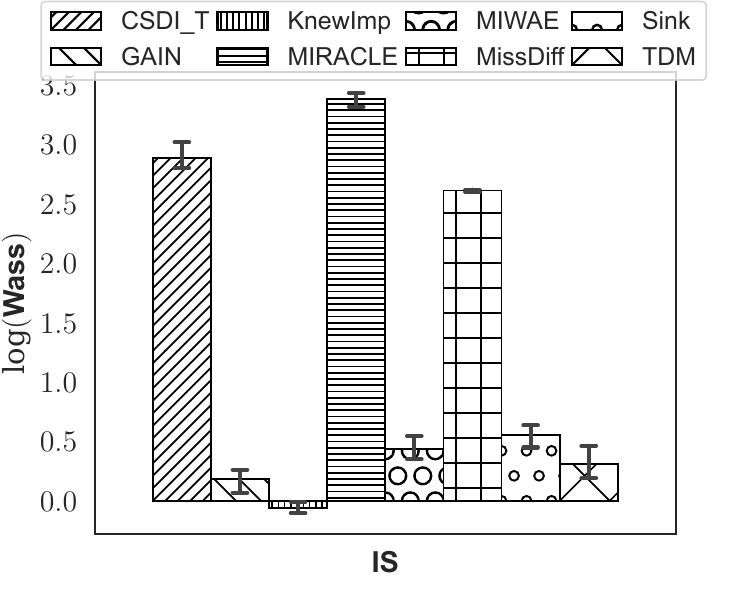}}
\subfigure{\includegraphics[width=0.32\linewidth]{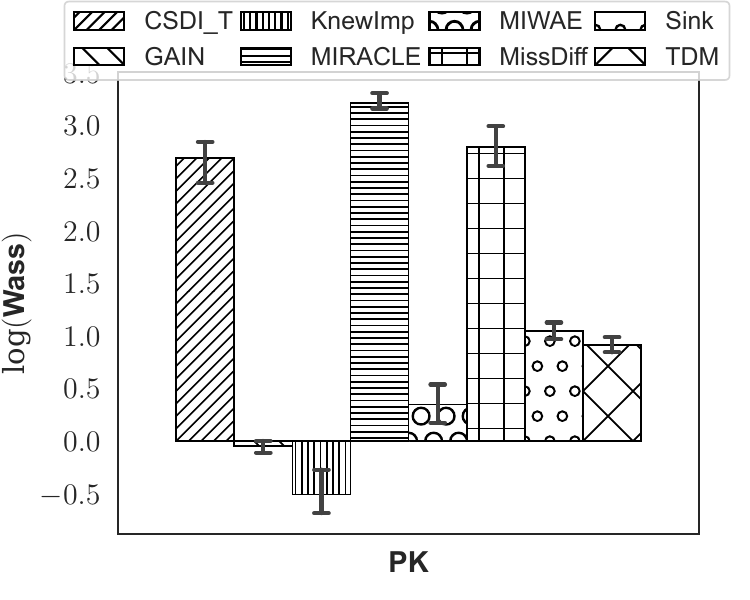}}
\subfigure{\includegraphics[width=0.32\linewidth]{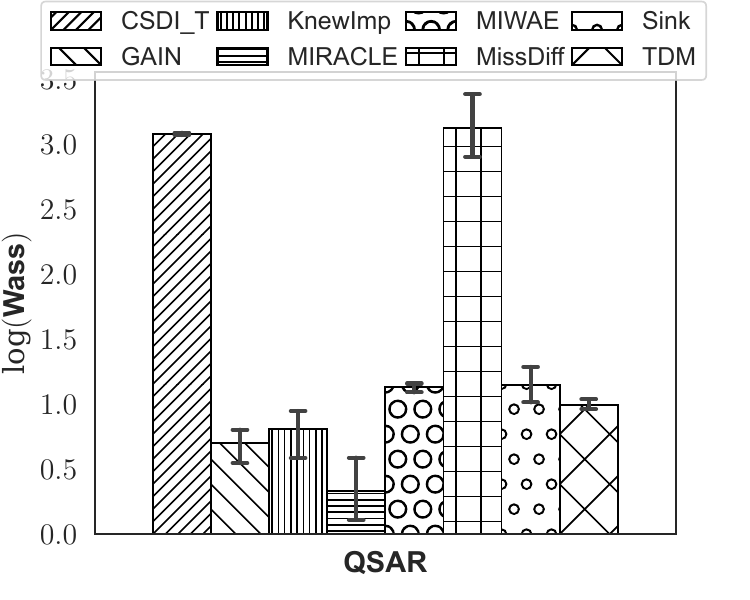}}
\subfigure{\includegraphics[width=0.32\linewidth]{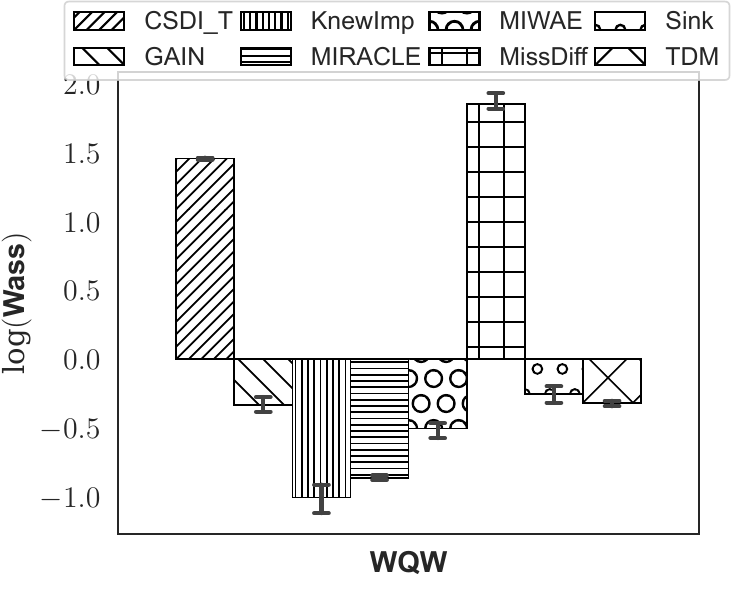}}
\vspace{-0.5cm}
\caption{Imputation accuracy comparison for MNAR scenario at 10\% missing rate. The error bars indicate the 100\% confidence intervals.}
\label{fig:extraExperMNAR10}
  \vspace{-0.5cm}
\end{figure}

\begin{figure}[htbp]
  \vspace{-0.5cm}
  \centering
\subfigure{\includegraphics[width=0.32\linewidth]{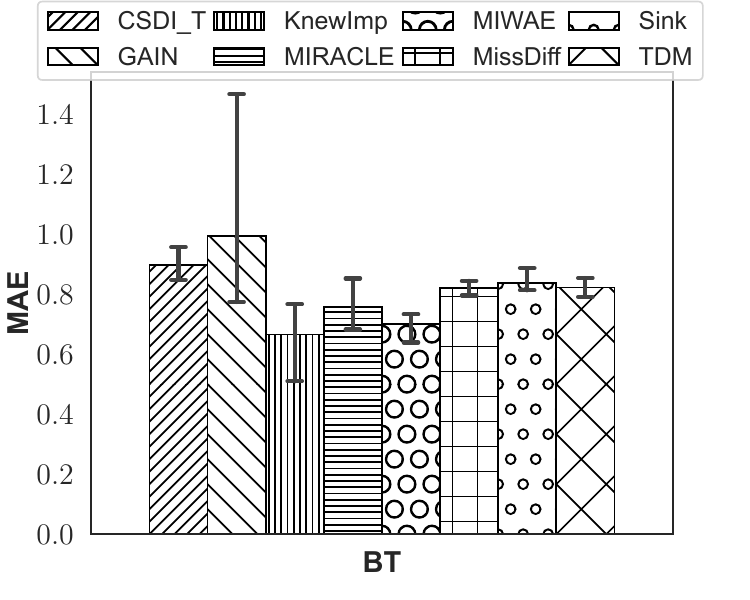}}
\subfigure{\includegraphics[width=0.32\linewidth]{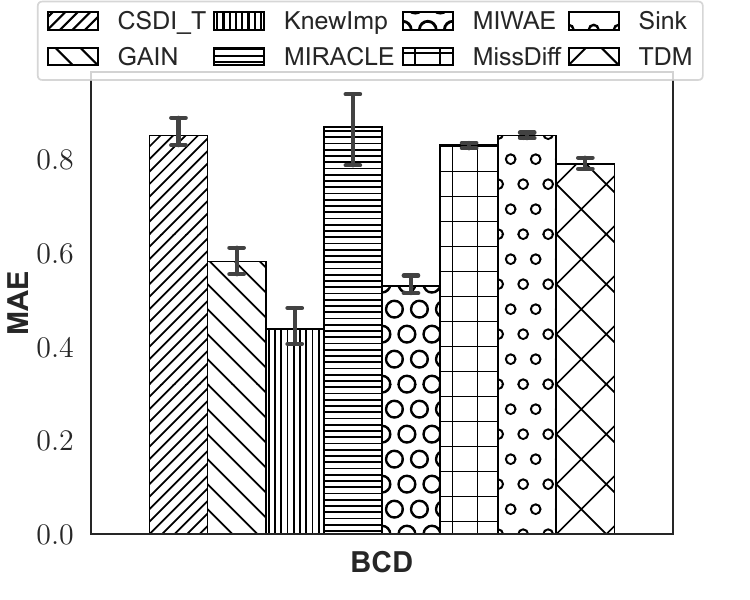}}
\subfigure{\includegraphics[width=0.32\linewidth]{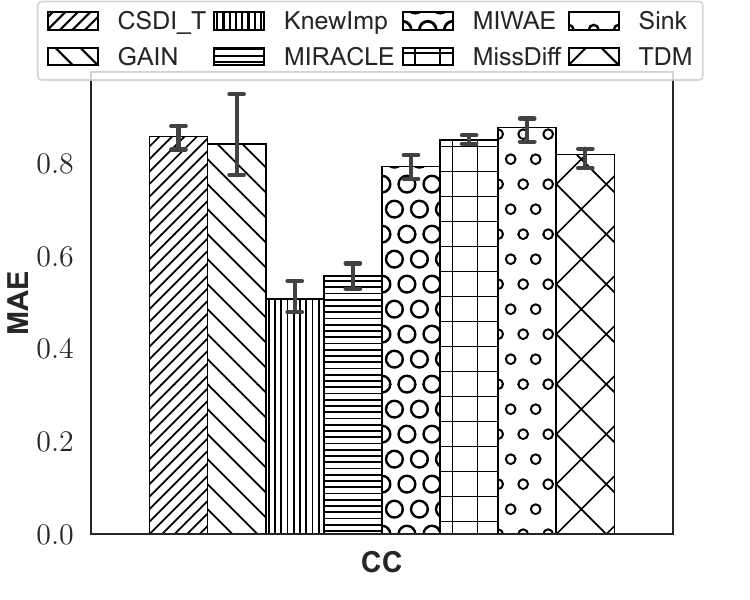}}
\subfigure{\includegraphics[width=0.32\linewidth]{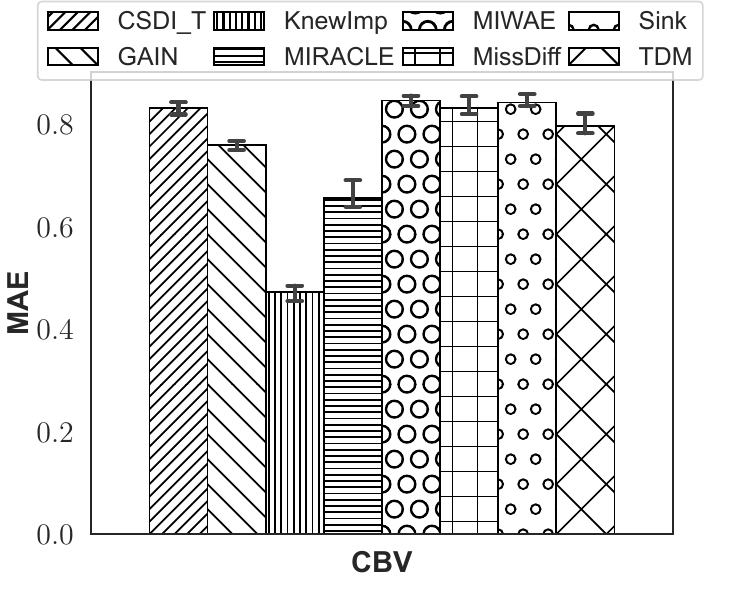}}
\subfigure{\includegraphics[width=0.32\linewidth]{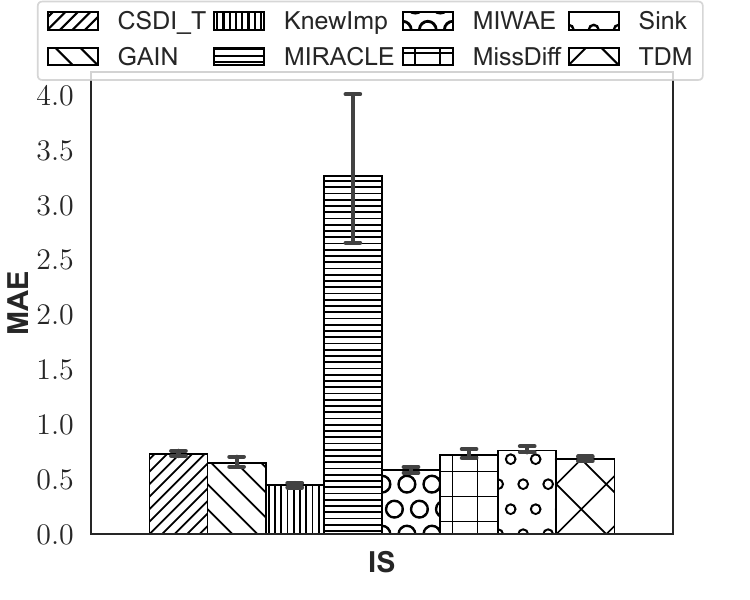}}
\subfigure{\includegraphics[width=0.32\linewidth]{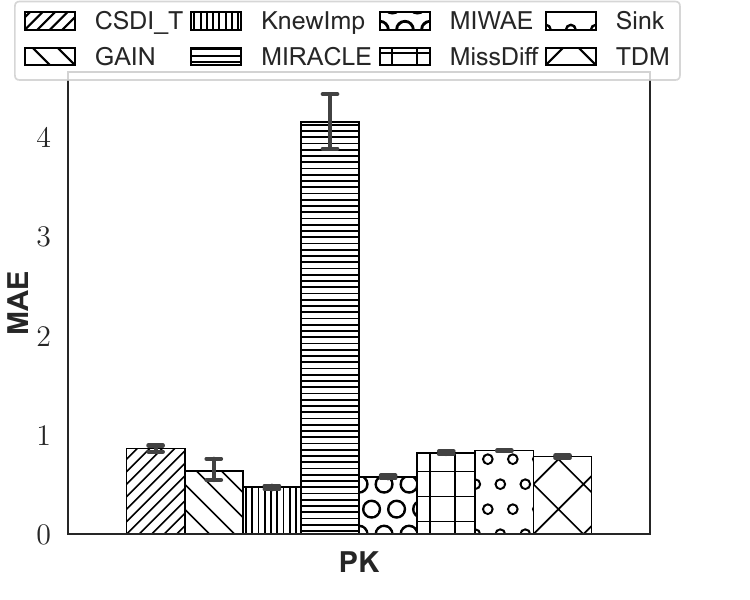}}
\subfigure{\includegraphics[width=0.32\linewidth]{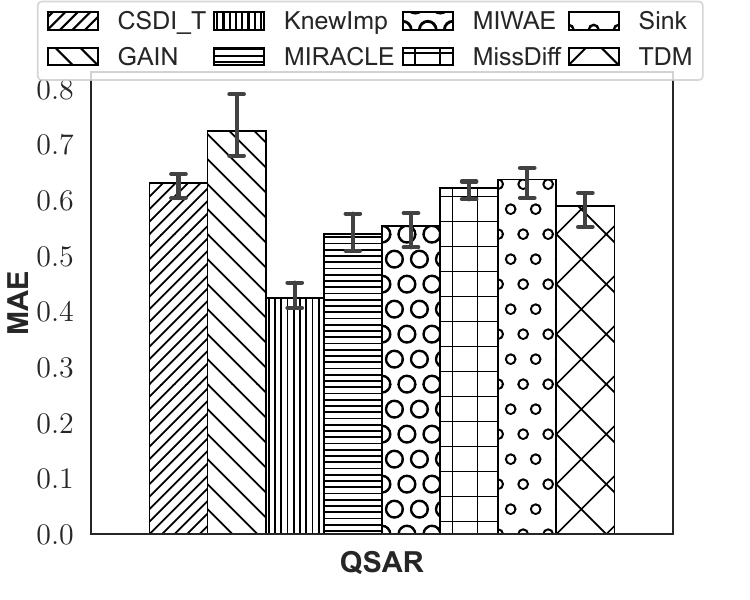}}
\subfigure{\includegraphics[width=0.32\linewidth]{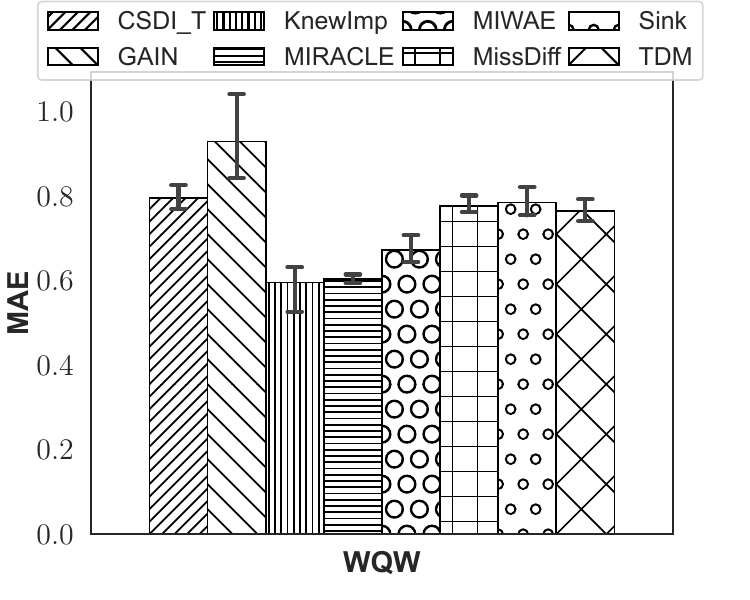}}
\subfigure{\includegraphics[width=0.32\linewidth]{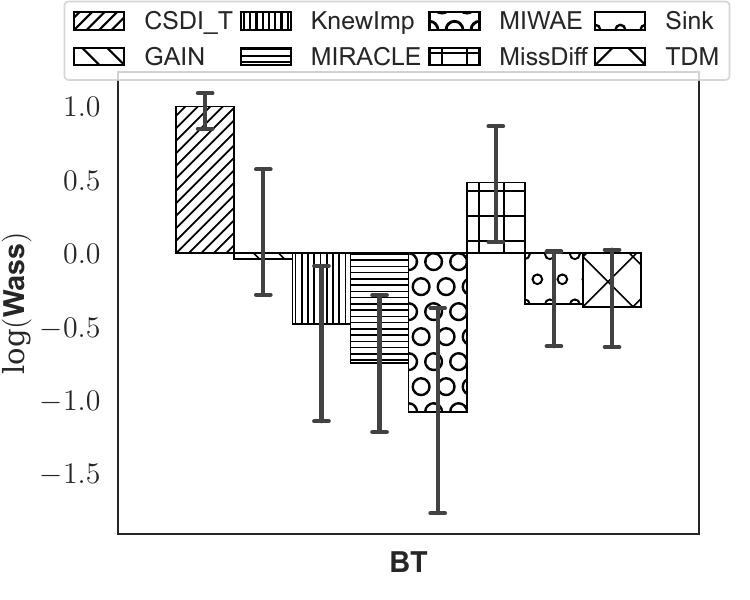}}
\subfigure{\includegraphics[width=0.32\linewidth]{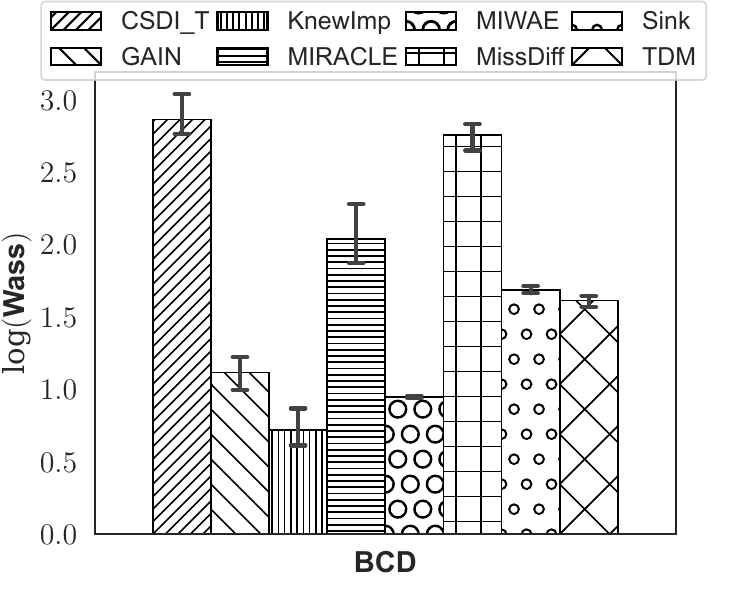}}
\subfigure{\includegraphics[width=0.32\linewidth]{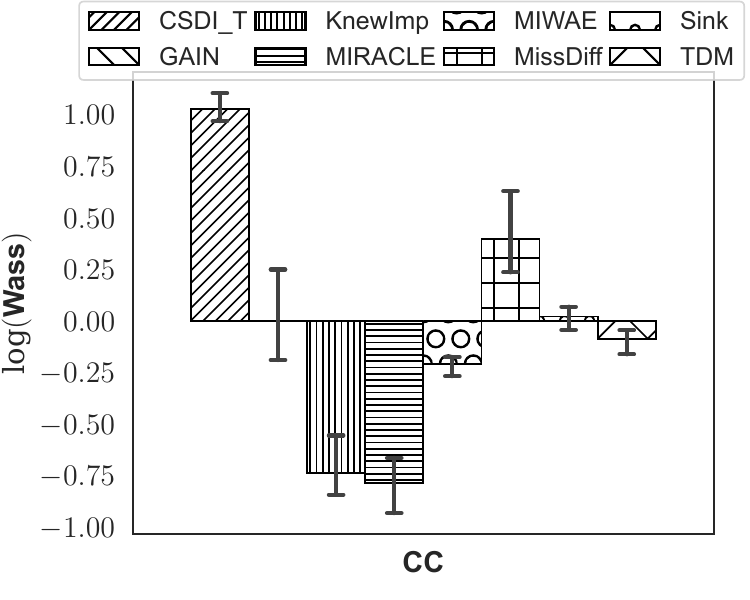}}
\subfigure{\includegraphics[width=0.32\linewidth]{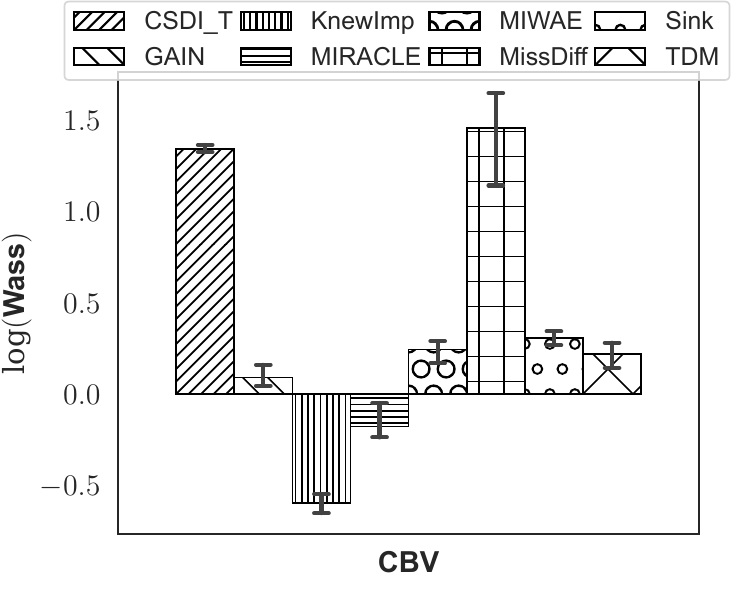}}
\subfigure{\includegraphics[width=0.32\linewidth]{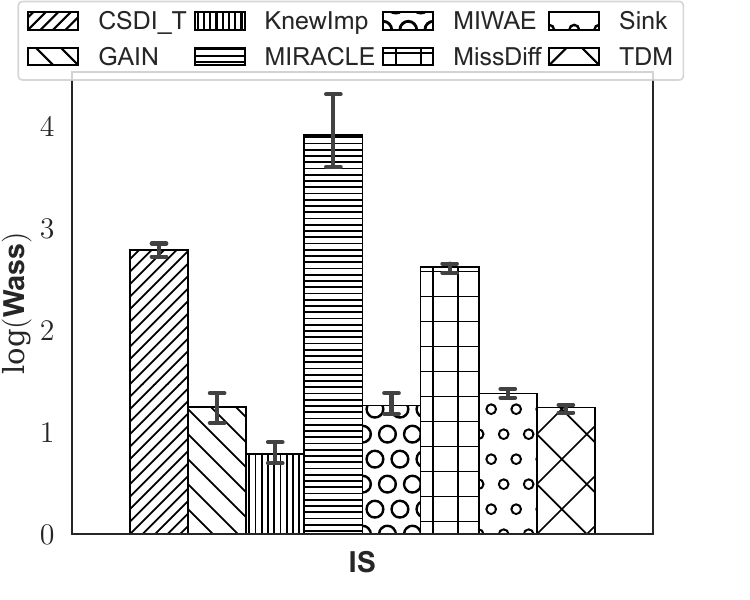}}
\subfigure{\includegraphics[width=0.32\linewidth]{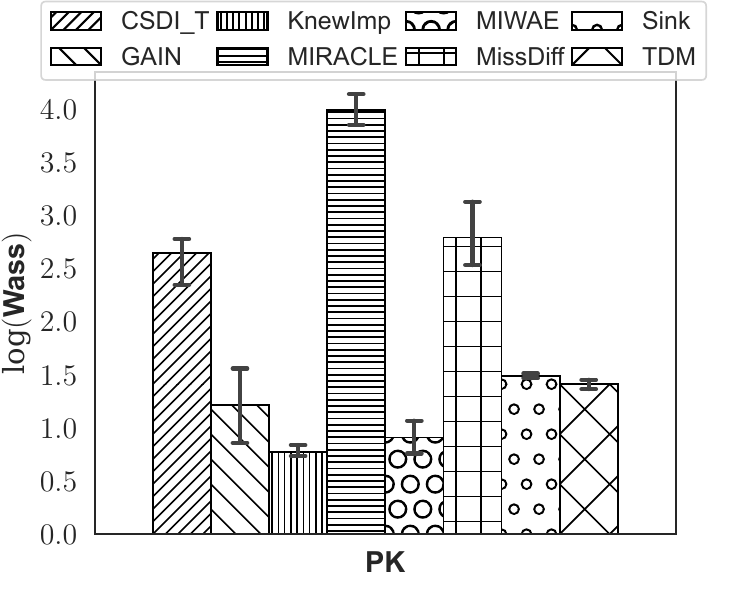}}
\subfigure{\includegraphics[width=0.32\linewidth]{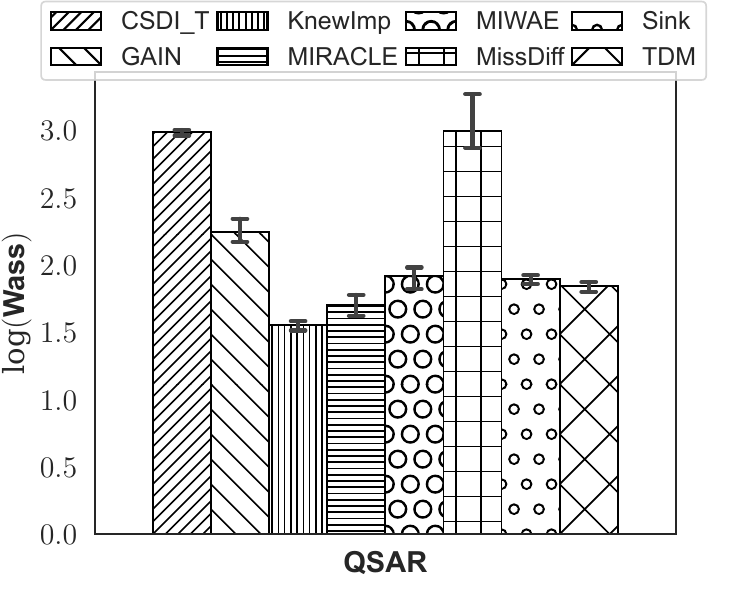}}
\subfigure{\includegraphics[width=0.32\linewidth]{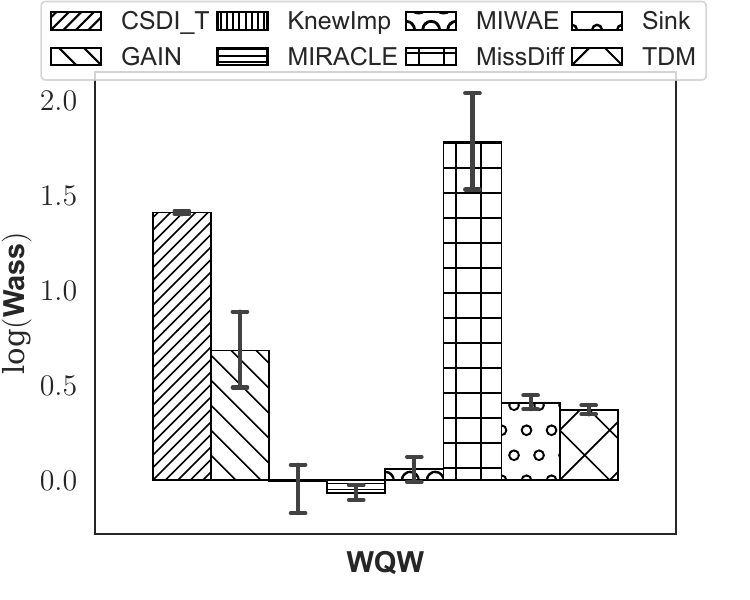}}
\vspace{-0.5cm}
  \caption{Imputation accuracy comparison for MAR scenario at 50\% missing rate. The error bars indicate the 100\% confidence intervals.}
  \label{fig:extraExperMAR50}
\end{figure}

\begin{figure}[htbp]
\vspace{-0.5cm}
\centering
\subfigure{\includegraphics[width=0.32\linewidth]{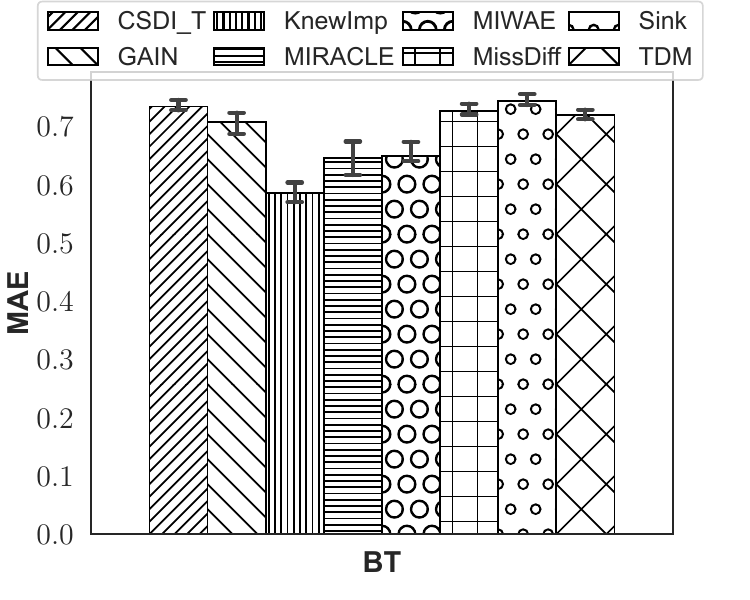}}
\subfigure{\includegraphics[width=0.32\linewidth]{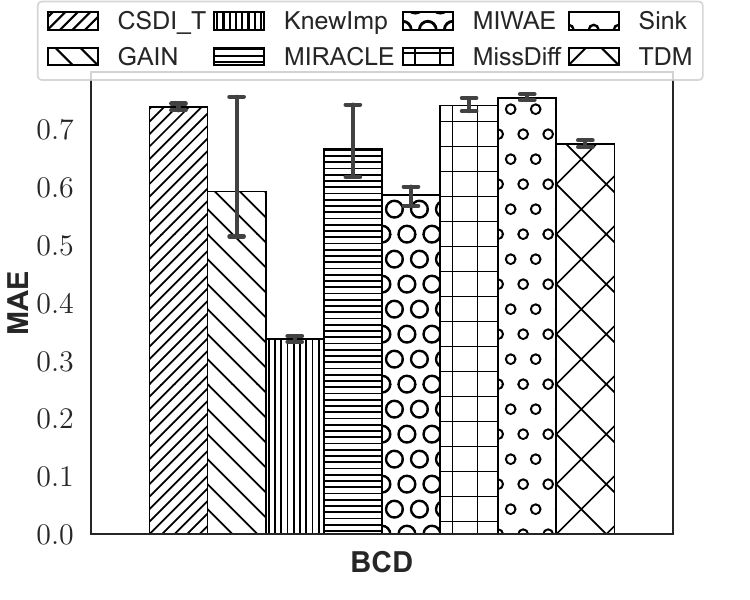}}
\subfigure{\includegraphics[width=0.32\linewidth]{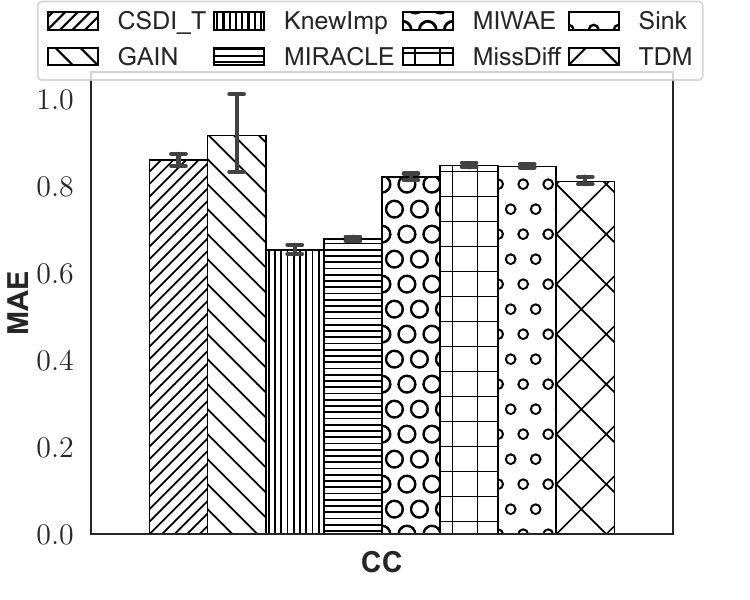}}
\subfigure{\includegraphics[width=0.32\linewidth]{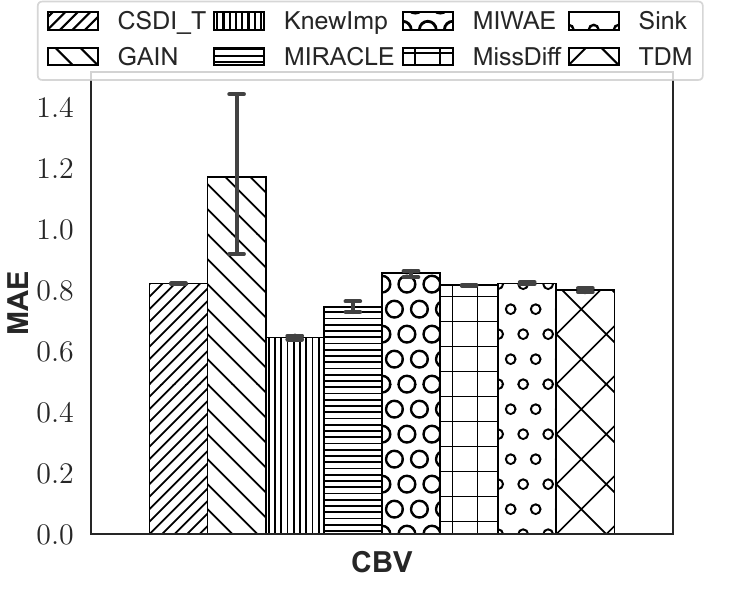}}
\subfigure{\includegraphics[width=0.32\linewidth]{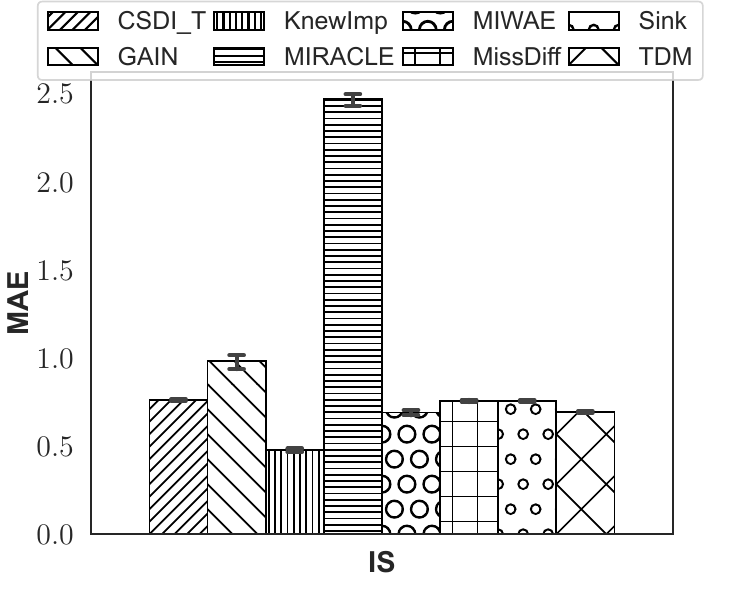}}
\subfigure{\includegraphics[width=0.32\linewidth]{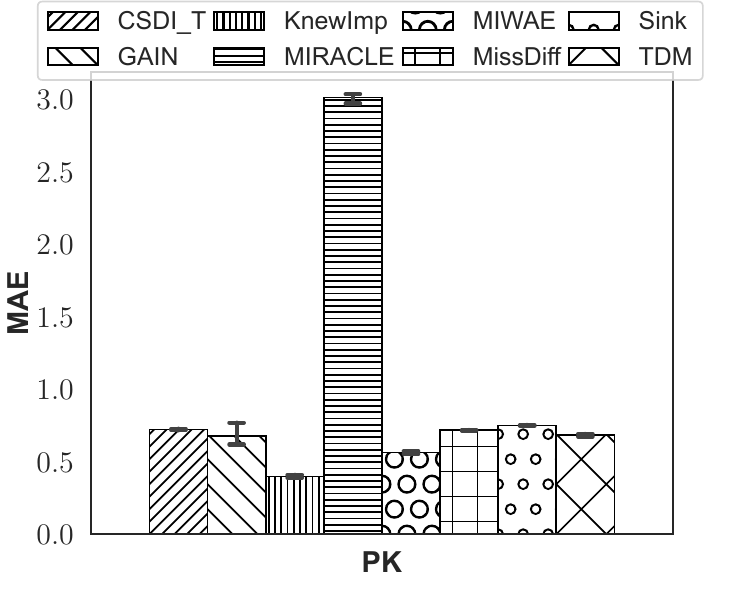}}
\subfigure{\includegraphics[width=0.32\linewidth]{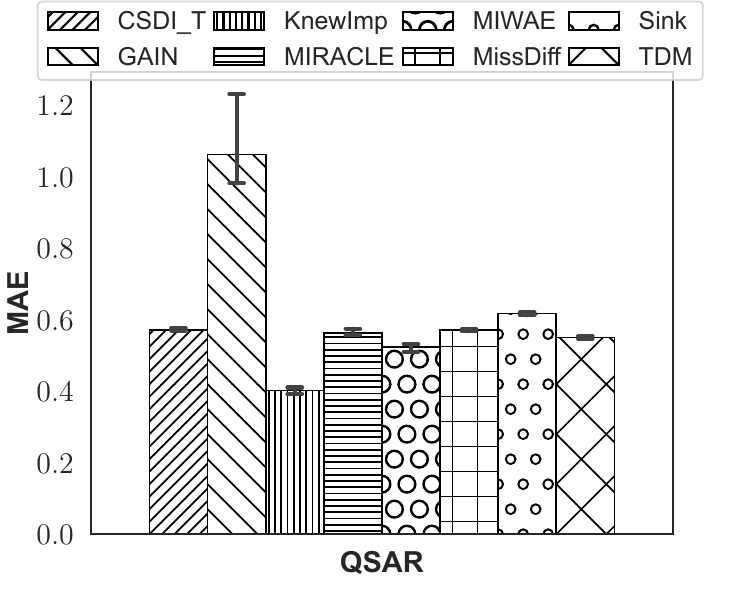}}
\subfigure{\includegraphics[width=0.32\linewidth]{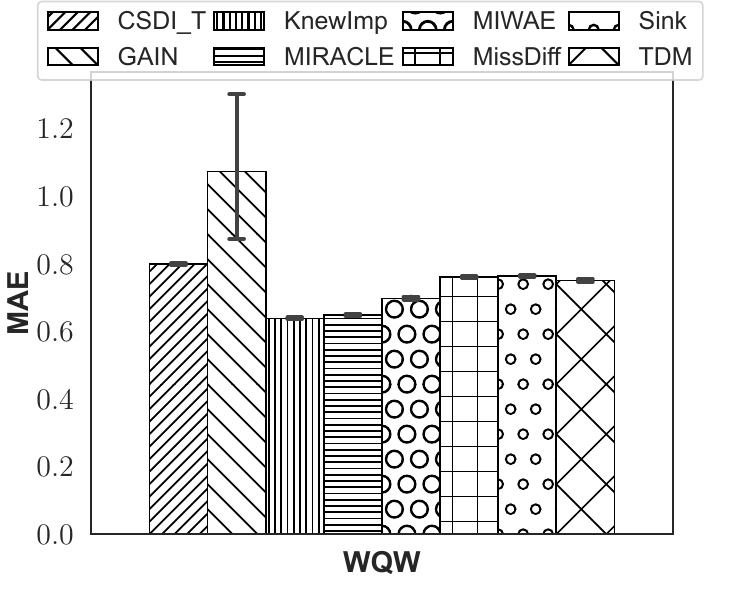}}
\subfigure{\includegraphics[width=0.32\linewidth]{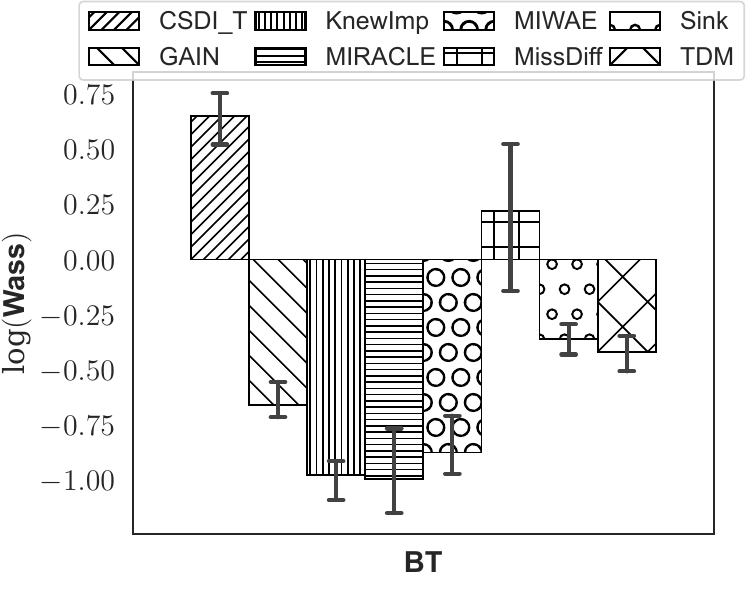}}
\subfigure{\includegraphics[width=0.32\linewidth]{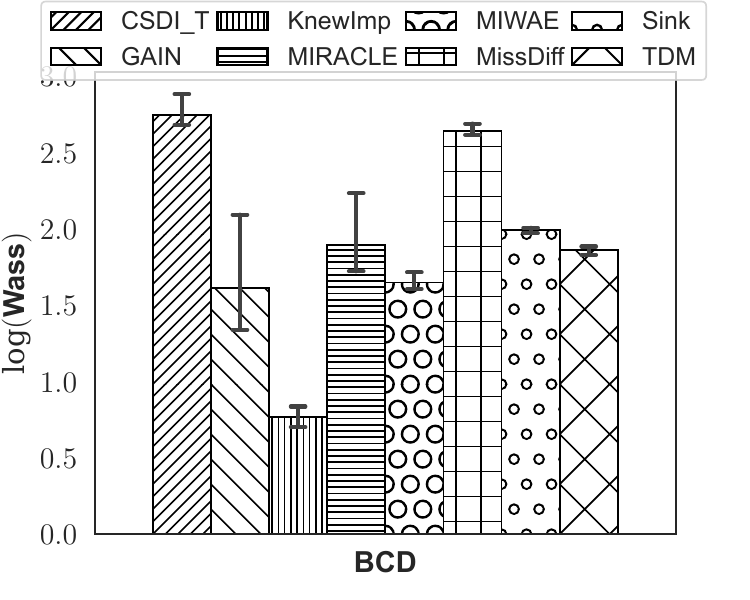}}
\subfigure{\includegraphics[width=0.32\linewidth]{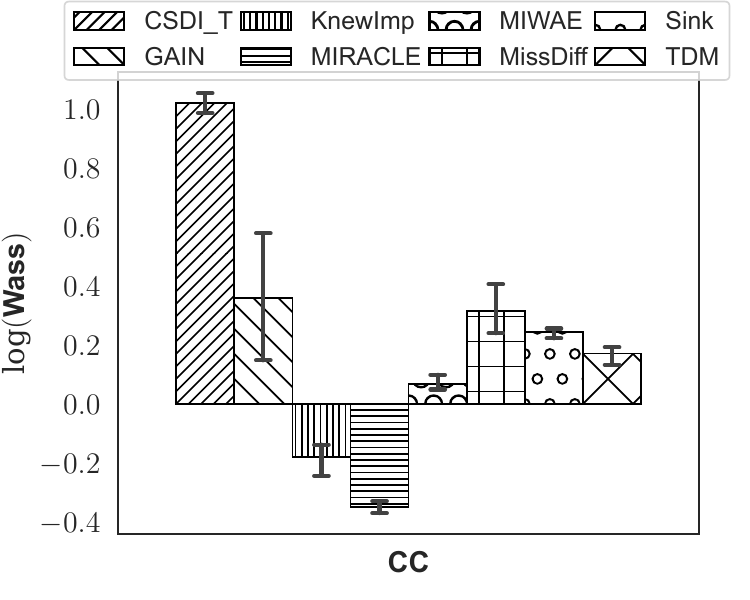}}
\subfigure{\includegraphics[width=0.32\linewidth]{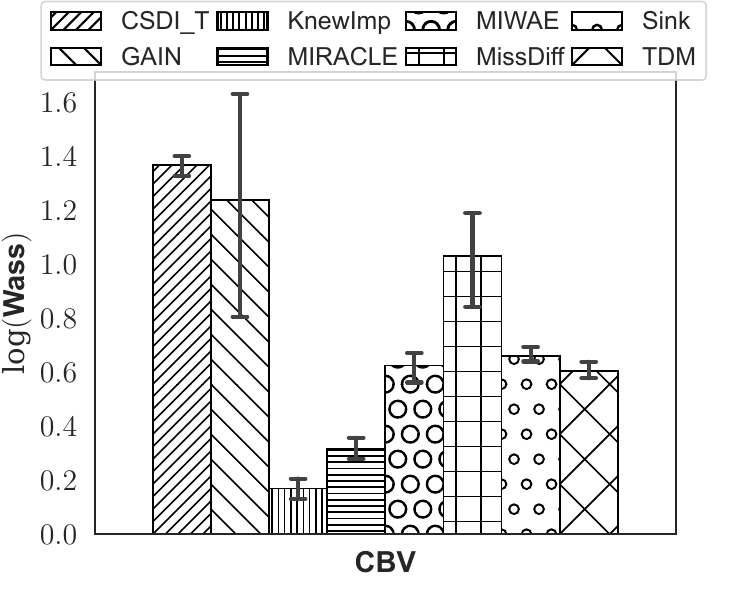}}
\subfigure{\includegraphics[width=0.32\linewidth]{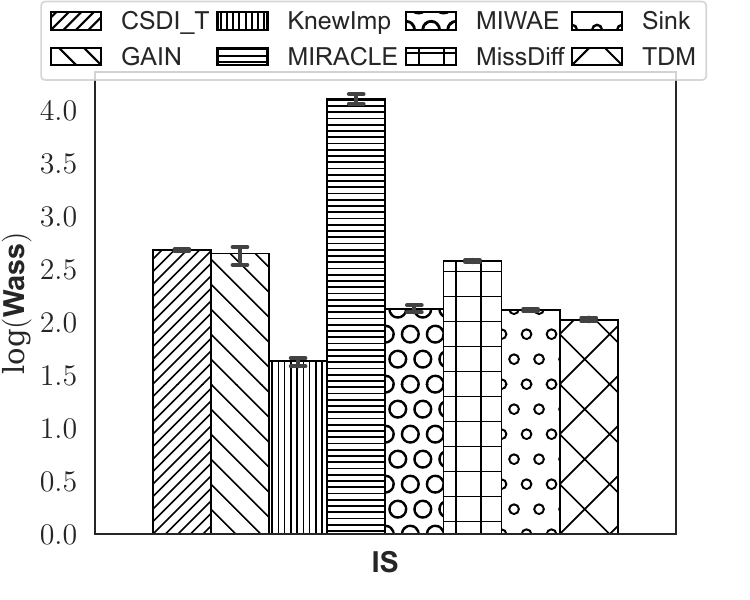}}
\subfigure{\includegraphics[width=0.32\linewidth]{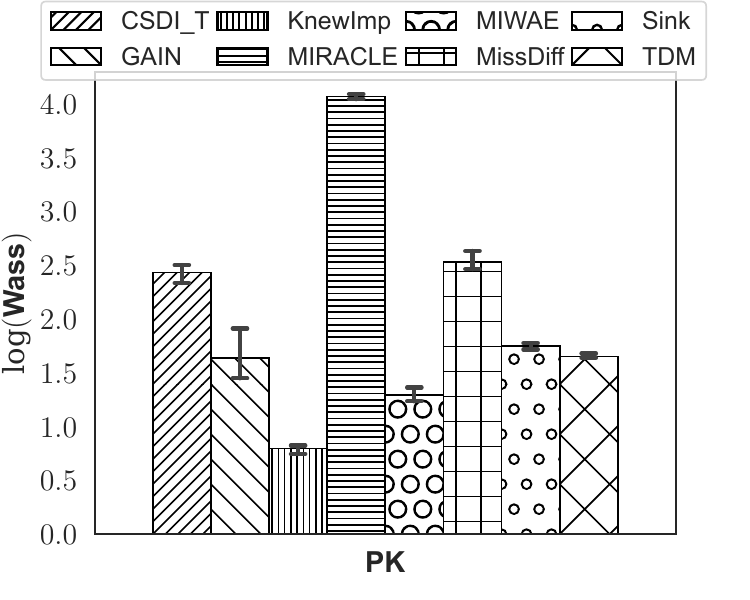}}
\subfigure{\includegraphics[width=0.32\linewidth]{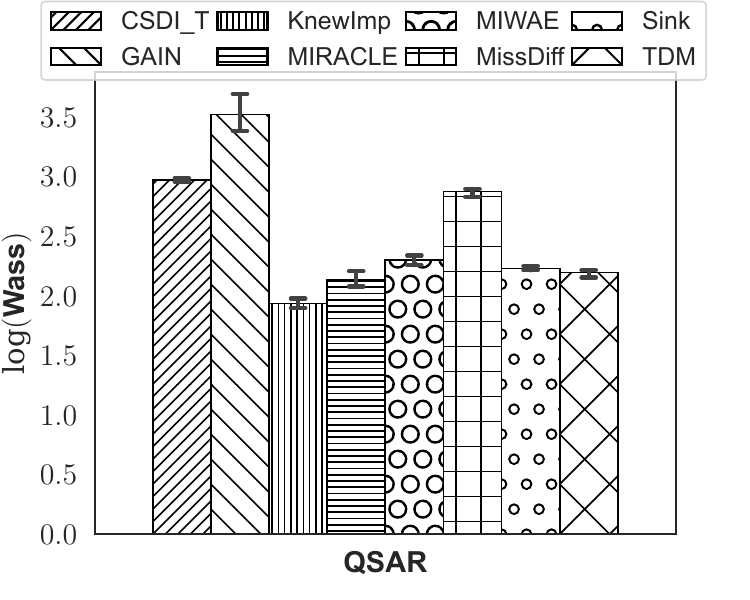}}
\subfigure{\includegraphics[width=0.32\linewidth]{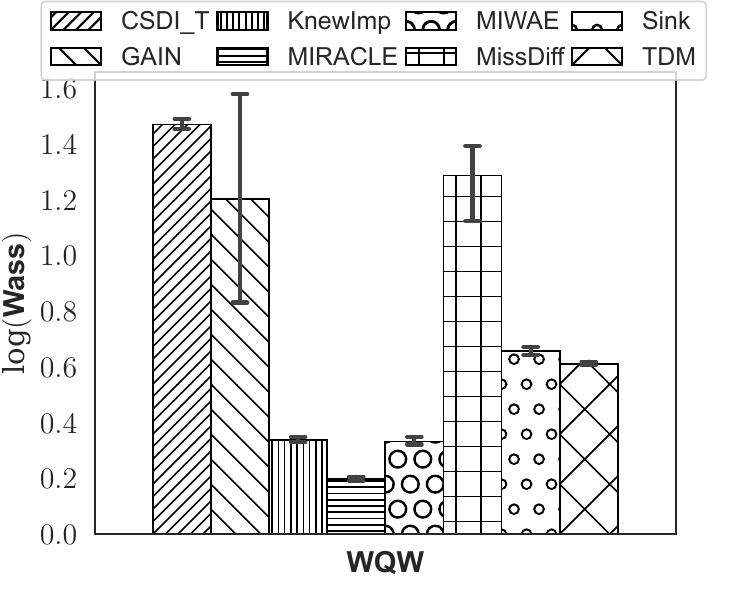}}
  \vspace{-0.5cm}
\caption{Imputation accuracy comparison for MCAR scenario at 50\% missing rate. The error bars indicate the 100\% confidence intervals.}
\label{fig:extraExperMCAR50}
\vspace{-0.5cm}
\end{figure}
\begin{figure}[htbp]
\vspace{-0.5cm}
\centering
\subfigure{\includegraphics[width=0.32\linewidth]{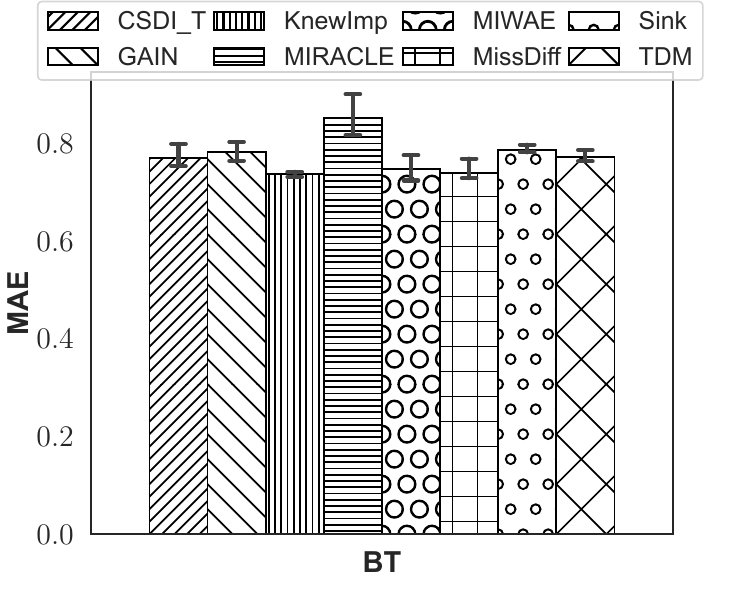}}
\subfigure{\includegraphics[width=0.32\linewidth]{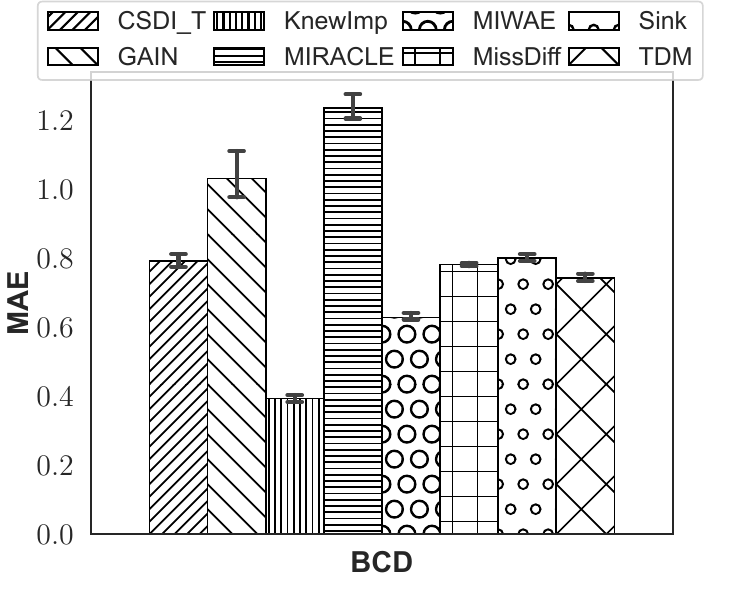}}
\subfigure{\includegraphics[width=0.32\linewidth]{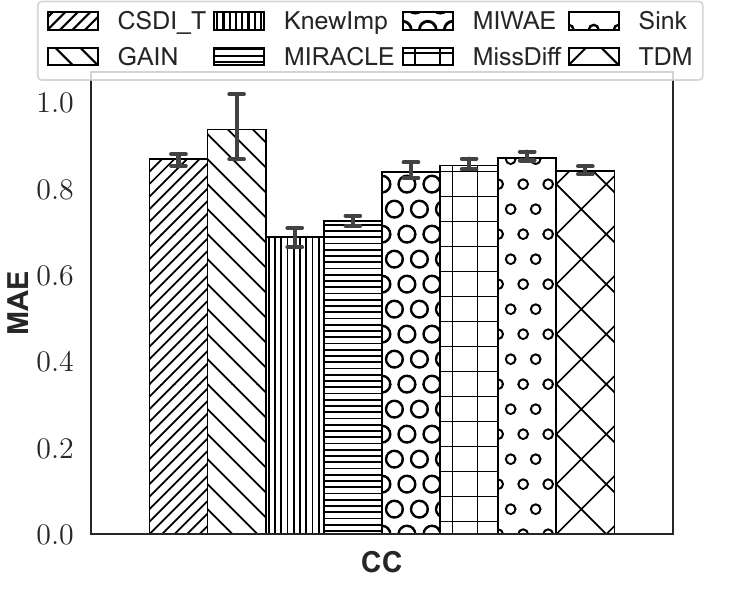}}
\subfigure{\includegraphics[width=0.32\linewidth]{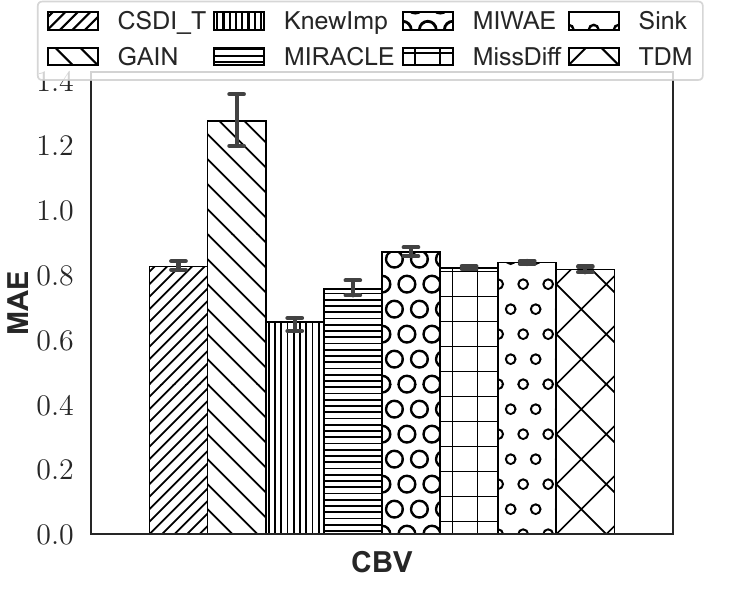}}
\subfigure{\includegraphics[width=0.32\linewidth]{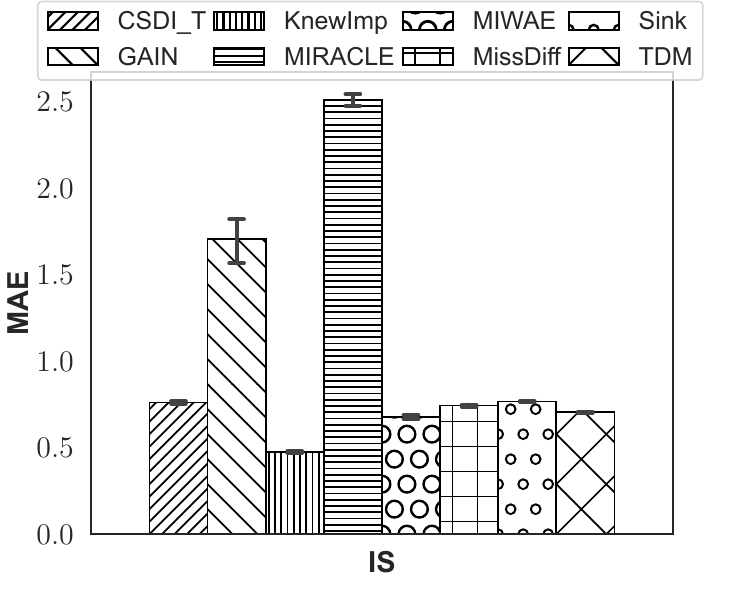}}
\subfigure{\includegraphics[width=0.32\linewidth]{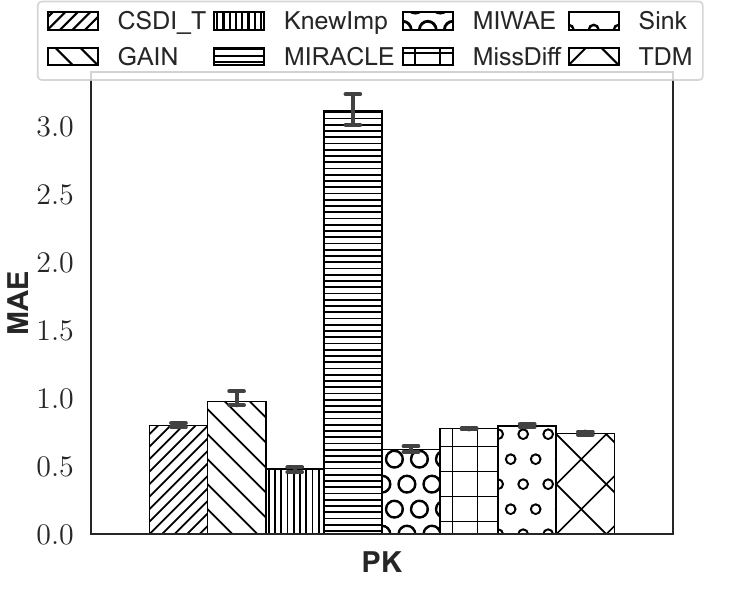}}
\subfigure{\includegraphics[width=0.32\linewidth]{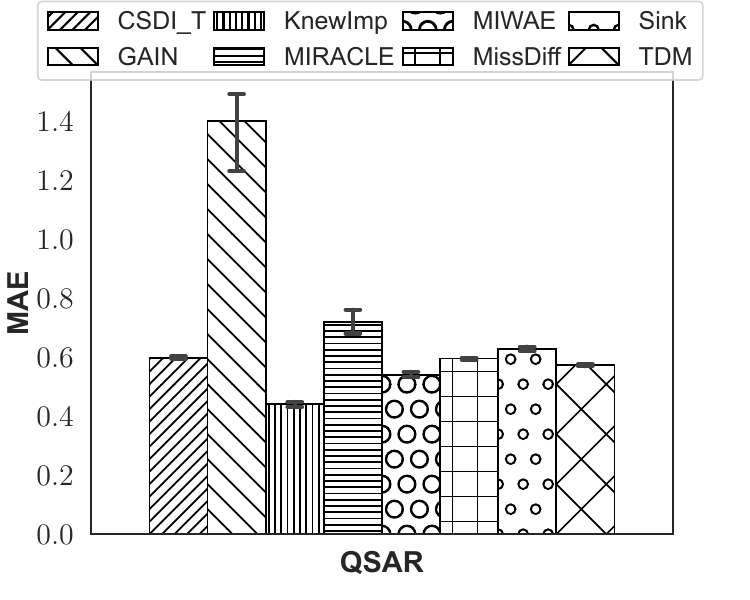}}
\subfigure{\includegraphics[width=0.32\linewidth]{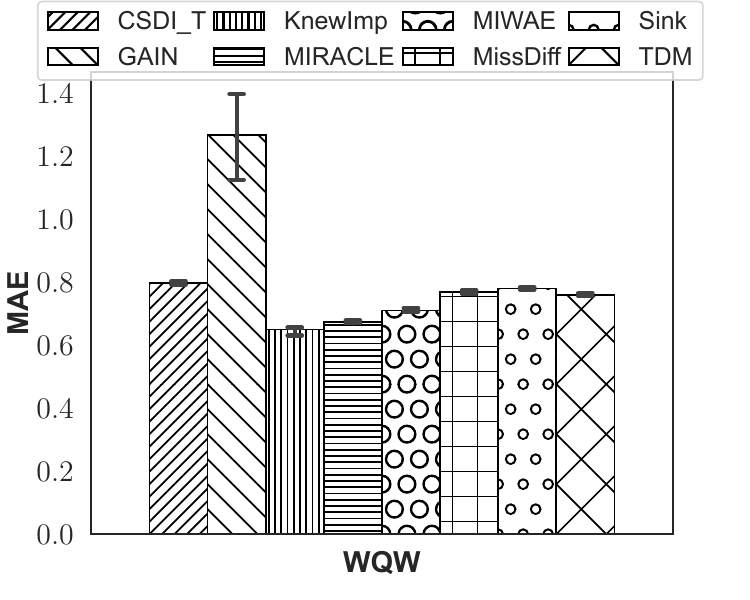}}
\subfigure{\includegraphics[width=0.32\linewidth]{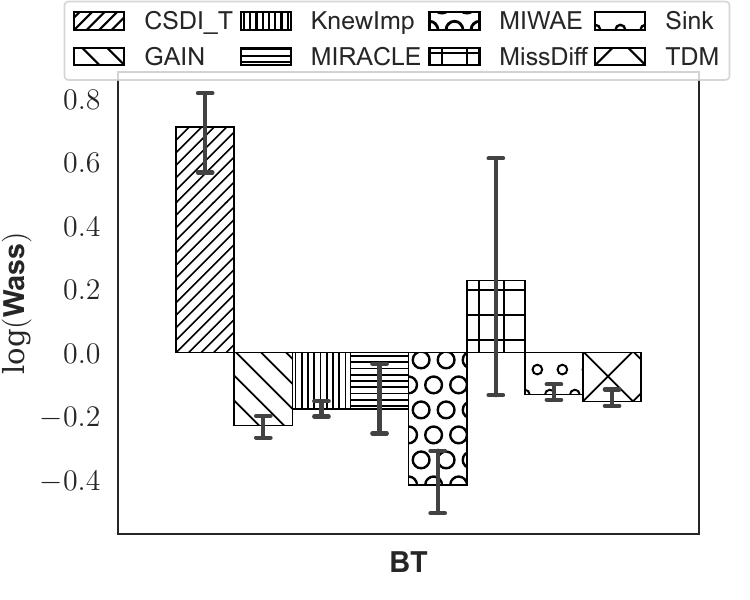}}
\subfigure{\includegraphics[width=0.32\linewidth]{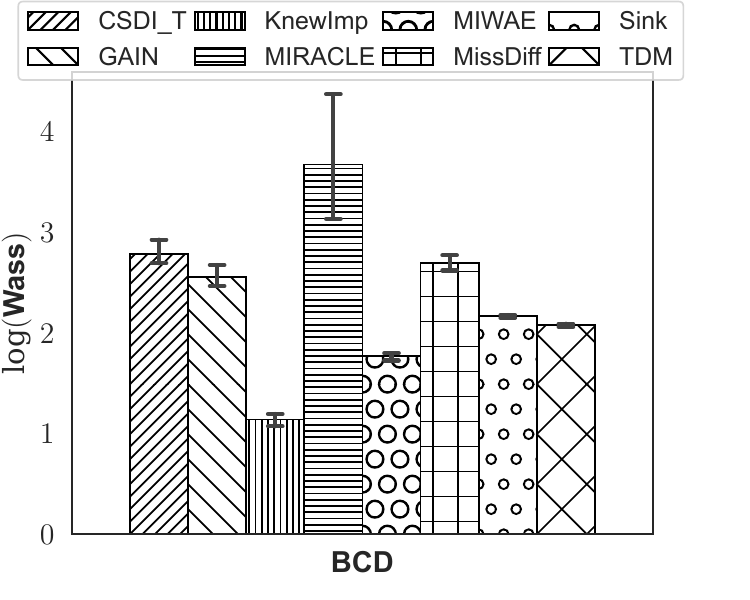}}
\subfigure{\includegraphics[width=0.32\linewidth]{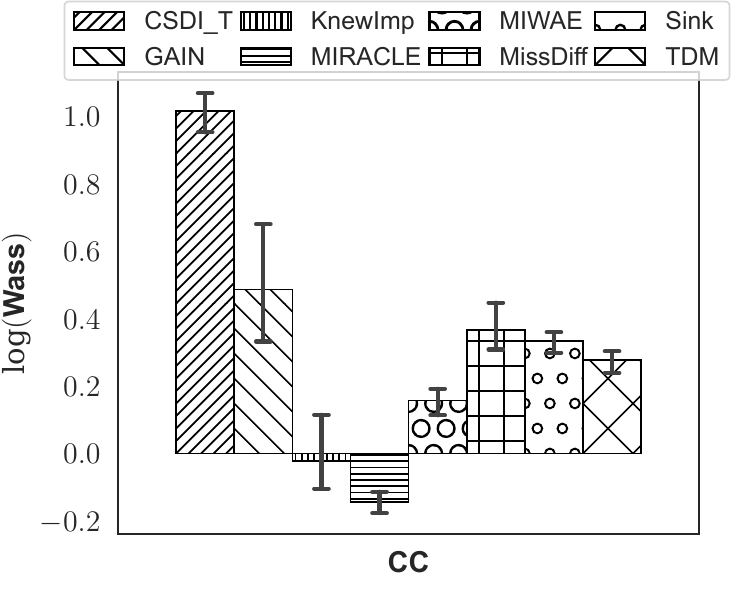}}
\subfigure{\includegraphics[width=0.32\linewidth]{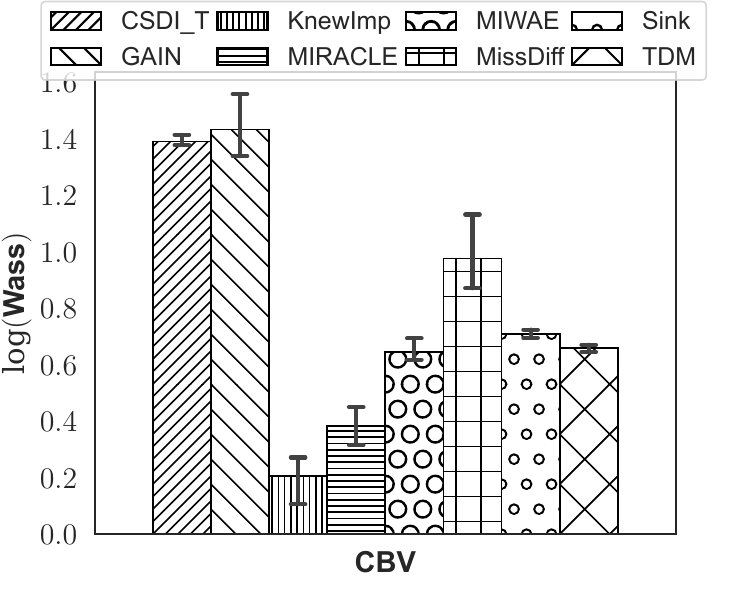}}
\subfigure{\includegraphics[width=0.32\linewidth]{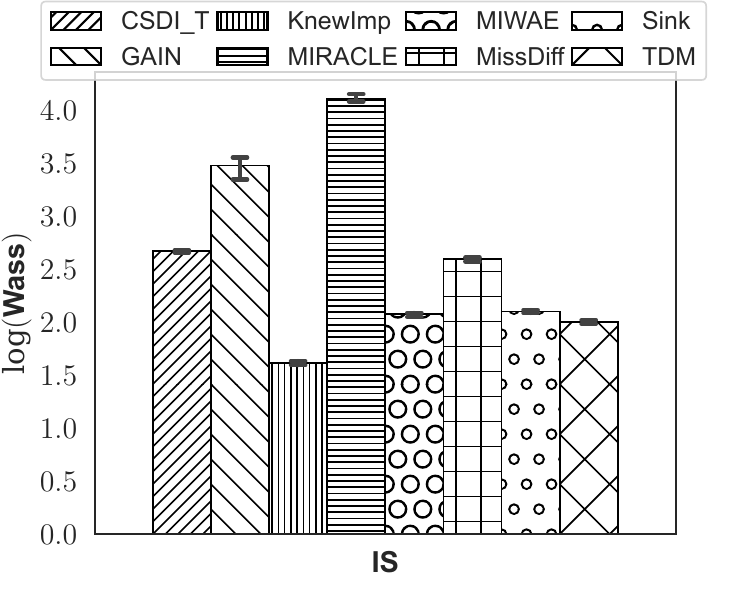}}
\subfigure{\includegraphics[width=0.32\linewidth]{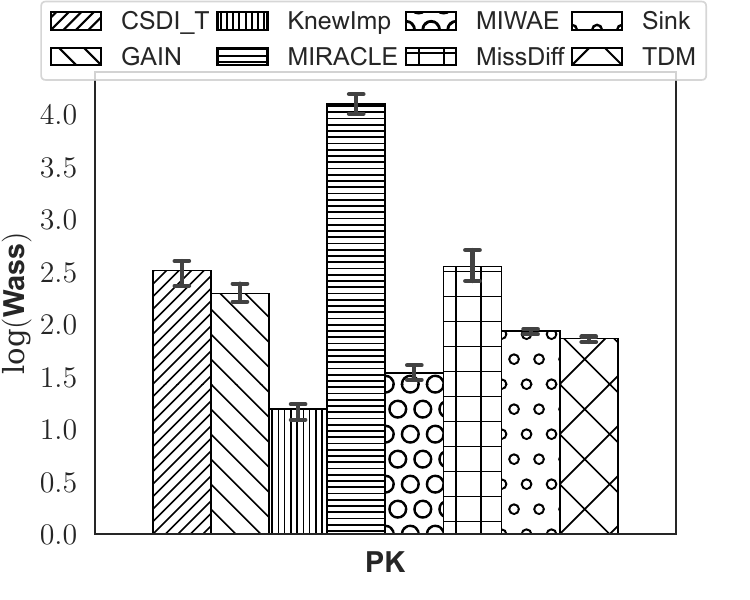}}
\subfigure{\includegraphics[width=0.32\linewidth]{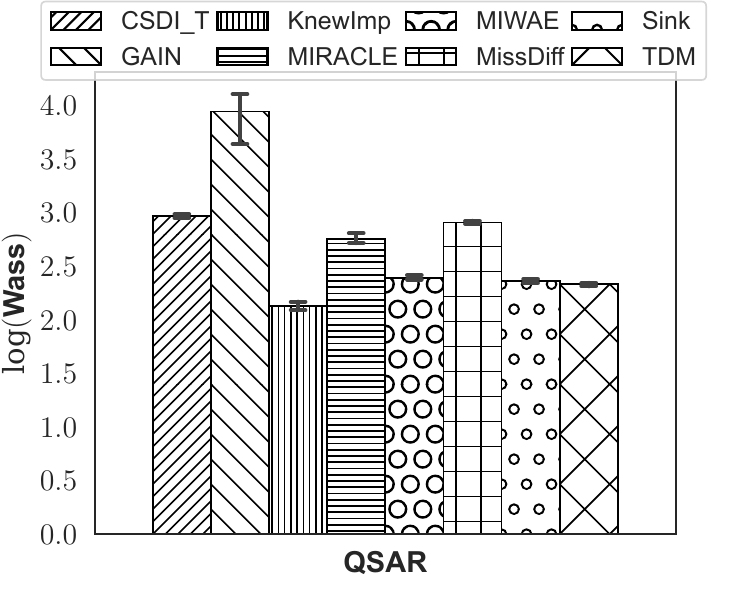}}
\subfigure{\includegraphics[width=0.32\linewidth]{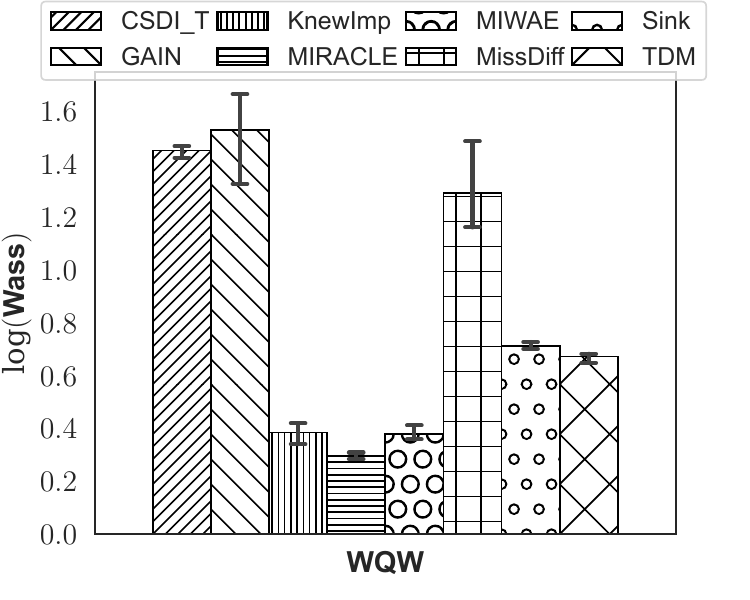}}
\vspace{-0.5cm}
\caption{Imputation accuracy comparison for MNAR scenario at 50\% missing rate. The error bars indicate the 100\% confidence intervals.}
\label{fig:extraExperMNAR50}
  \vspace{-0.5cm}
\end{figure}

\newpage
\section{Limitations \& Future Directions and Broader Impact}
\subsection{Limitations \& Future Directions}\label{appendix-subsec:limitations}
The limitations and future research directions of this work can be summarized as follows:
\begin{itemize}[leftmargin=*]
  \item{\textbf{Utilization of Kernel Function:} During the derivation of the velocity field, we employ RKHS to ensure implementation easiness. However, this regularization term may impose restrictions on the velocity field's direction, potentially limiting imputation accuracy in high-dimensional settings. Additionally, the computational complexity tends to scale quadratically with dataset size increases. Exploring alternative regularization terms~\cite{dong2022particle} to replace RKHS presents a promising direction for future research.}
  \item{\textbf{Training of Score Function:} As discussed in Section~\ref{appendix-subsec:timeComplexity}, the runtime of KnewImp is predominantly governed by the `Estimate' part. Investigating techniques to reduce the training costs of this part, such as employing sliced score matching~\cite{song2020sliced}, represents an intriguing area for future exploration.}
  \item{\textbf{WGF Framework:} The WGF framework currently operates as a first-order system where each sample is equally weighted. A critical advancement would be the incorporation of second-order systems, such as Hamiltonian dynamics~\cite{wang2024gad,wang2022accelerated}, and other gradient flows like, Fisher-Rao gradient flow~\cite{ijcai2022p679} that assign variable weights to samples. This adaptation aims to decrease computational times inherently.}
\end{itemize}
\subsection{Broader Impact Statement}\label{appendix-subsec:boarderImpact}
MDI and DMs are pivotal areas within machine learning, each boasting a wide array of real-world applications. While numerous applications exist, this paper does not single out any specific ones; instead, it focuses on addressing fundamental challenges in these fields. This study significantly advances the application of DMs for MDI by tackling prevalent issues such as inaccurate imputation and challenging training processes. We believe that the insights garnered here can be applied to related domains, such as probabilistic time-series forecasting and image inpainting, where accuracy is often more critical than diversity in results. A common challenge across these domains is the nuanced need for precision over variety, which can lead to overlooked opportunities in model application and development. Our proposed method provides a fresh perspective on these tasks through an optimization lens. It evaluates the appropriateness of directly applying existing diffusion models to these tasks and, where necessary, proposes the derivation of novel algorithms. This approach not only enhances the understanding of the underlying mechanisms but also paves the way for more targeted and effective solutions in the future.

\end{appendices}

\end{document}